\renewcommand{\@seccntformat}[1]{\csname the#1\endcsname.\quad}
\newcommand{\norm}[1]{\left\lVert#1\right\rVert}
\newcommand{\ex}{\mathbb{E}}
\newcommand{\fil}{\mathcal{F}}
\newcommand{\real}{\mathbb{R}}
\DeclareMathOperator{\Sha}{\sqcup\kern-0.18em\sqcup}
\DeclareMathOperator*{\argmin}{arg\,min}
\newtheorem{theorem}{Theorem}[section]
\newtheorem{proposition}[theorem]{Proposition}
\theoremstyle{remark}
\newtheorem{remark}[theorem]{Remark}
\theoremstyle{definition}
\newtheorem{definition}[theorem]{Definition}
\newtheorem{problem}[theorem]{Problem}
\providecommand{\paragraph}{} % define \paragraph if it doesn't exist
\renewcommand{\paragraph}{% redefine \paragraph
  \@startsection{paragraph}{4}{\z@}%
                {1.5ex \@plus 0.5ex \@minus 0.2ex}%
                {-1em}%
                {\normalsize\bfseries}% changed \bf to \bfseries
}
\date{\today}
\title[Non-parametric online market regime detection]{Non-parametric online market regime detection and regime clustering for multidimensional and path-dependent data structures}
\author{Blanka Horvath$^{1, 2, 3}$}
\author{Zacharia Issa$^{*4}$}
\email{zacharia.issa@kcl.ac.uk}
\begin{document}

    \begingroup
        \renewcommand*{\thefootnote}{\fnsymbol{footnote}}
        \footnotetext[1]{Corresponding author: \href{mailto:zacharia.issa@kcl.ac.uk}{\texttt{zacharia.issa@kcl.ac.uk}}}
    \endgroup
    \footnotetext[1]{Department of Mathematics, Oxford University, Oxford, United Kingdom.}  
    \footnotetext[2]{The Oxford Man Institute, Oxford, United Kingdom.} 
    \footnotetext[3]{The Alan Turing Institute, London, United Kingdom.} 
    \footnotetext[4]{Department of Mathematics, King's College London, London, United Kingdom.} 
	
	\begin{abstract}
	   In this work we present a non-parametric online market regime detection method for multidimensional data structures using a path-wise two-sample test derived from a maximum mean discrepancy-based similarity metric on path space that uses rough path signatures as a feature map. The latter similarity metric has been developed and applied as a discriminator in recent generative models for small data environments, and has been optimised here to the setting where the size of new incoming data is particularly small, for faster reactivity. 
        On the same principles, we also present a path-wise method for regime clustering which extends our previous work \cite{horvath2021clustering}. The presented regime clustering techniques, as in \cite{horvath2021clustering}, were designed as ex-ante market analysis tools that can identify periods of approximatively similar market activity, but the new results also apply to path-wise, high dimensional-, and to non-Markovian settings as well as to data structures that exhibit autocorrelation. 
        We demonstrate our clustering tools on easily verifiable synthetic datasets of increasing complexity, 
        and also show how the outlined regime detection techniques can be used as fast on-line automatic regime change detectors or as outlier detection tools, including a fully automated pipeline. Finally, we apply the fine-tuned algorithms to real-world historical data including high-dimensional baskets of equities and the recent price evolution of crypto assets, and we show that our methodology swiftly and accurately indicated historical periods of market turmoil. \\
   
	\end{abstract}
	\maketitle
	\begin{spacing}{0.5}
	\tableofcontents
	\end{spacing}

    \section{Introduction and problem setting}\label{sec:intro}

    Time series data derived from asset returns are known to exhibit certain properties, termed \emph{stylised facts}, that are consistently prevalent
across asset classes and markets. To name a few, asset price returns are widely accepted to be   
non-stationary and to exhibit volatility clustering.  We refer the reader to \cite{cont2001empirical} for a thorough discussion of such properties. In this article we will turn our attention to one property in particular, the heteroscedastic nature of financial time series, since it is of imminent practical relevance to financial analysts and quants
for a multitude of practical applications. In this context, one may be interested in whether a given asset returns series---or a set of series, in case of multiple assets---can be divided into periods in which the (random) asset price dynamics can be attributed to the \emph{same} (up to a small margin of imprecision) underlying distribution. Such periods are often referred to as \emph{market regimes}. The task of grouping these regimes the \emph{market regime clustering problem (MRCP)}. This article is devoted to the \emph{online detection} of changes in such regimes, i.e. to developing tools that help us recognise in real time (as data comes in) if a shift in the underlying regime is happening. We will refer to this as \emph{(online) market regime detection problem (MRDP)}.

\subsection{Relation to existing literature:}\label{sec:lit}
In classical statistics the problem we address in this work is related to problems studied under the umbrella of \emph{change point detection} (CPD) methods. Among these, our setting is closest to on-line change point detection, although our algorithm has a somewhat different flavour than many similar methods in the literature. The term refers to the localization of \emph{change points}, that is to the detection of instances of \emph{abrupt distributional changes} in ordered observations, such as time series. CPD has been studied over the last several decades in numerous scientific fields including statistics, computer science, and data mining. Applications include a broad range of important real-world problems, from medical conditioning and speech recognition to climate change detection. While each of these problems has its bespoke challenges, our method is inspired by financial applications (such as optimal investment and (deep) hedging and pricing) and is intended as an early (on-line) indicator of distributional changes in financial time series. We have fine tuned our algorithm to such applications, potentially at the expense of features that might be relevant for other fields (e.g. speech recognition).
For a strenuously thorough and comprehensive account of associated methods, see the 2016 survey \cite{aminikhanghahi2017survey}, and for a more recent review of statistical techniques related to CPD, see \cite{truong2020review}. Finally, for some of the most recent related literature we refer the reader to the literature review in \cite{londbuhlkov2022changepoint}. 

In the classical statistics literature, change point detection methods broadly fall in one of the two categories: Parametric, and non-parametric change point methods:
(i) Parametric change point methods typically assume that the observations in between change points stem from a finite dimensional family of parametric (often Gaussian or exponential) distributions.  
(ii) Non-parametric change point detection methods use measures that do not rely on parametric forms of the distribution or the nature of change.
\\
From a different point of view, change point detection algorithms can be classified as (a)  “offline” or (b) “online”. (a) Offline algorithms consider the entire data set at once, and look back in time to recognize where the change occurred.  In contrast, (b) online (or real-time), algorithms run concurrently with the process they are monitoring, processing each data point as it becomes available, with a goal of detecting a change point as soon as possible after it occurs.
In practice, no change point detection algorithm operates in perfect real time because it must inspect new data before determining if a change point occurred between the old and new data points. Different (online) algorithms require different amounts of new data before change point detection can occur. Based on this, \cite{aminikhanghahi2017survey} defines the term ``$\epsilon$ -real time algorithm'' to refer to an online algorithm which needs at least $\epsilon$ data samples in the new batch of data to detect change points. Smaller $\epsilon$ values imply more responsive (stronger) change point detection algorithms.

\textit{Multivariate setups} of change-point detection algorithms are generally considered to be  challenging both in (i) parametric scenarios, but even more so in (ii) non-parametric scenarios, see \cite{londbuhlkov2022changepoint}.  There is a limited number of contributions in the multivariate non-parametric setup available. The non-parametric multivariate change point methods we are aware of are based on one of the following approaches: 
Ranks (\cite{lungyutfong2015homogeneity}), distances (\cite{matteson2014nonparametric, chenzhang2015graph, zhang2021chen}) and some are based on kernel methods, such as kernel distances \cite{arlot2019kernelcpd, garreauarlot2018cpdkernels},  and kernel densities.\\
Classically, the idea behind these methods is a form of hypothesis testing based on a test statistic that measures the dissimilarity of distributions. 
The method we develop in this paper is similar--in its core--to this line of thoughts, but with some key differences (which we detail in section \ref{sec:Contribution}) and with the twist that we make use of path-based methods for this task.\\
In the more recent era of statistics \emph{with machine learning} (ML), a large range of non-parametric methods are available to learn complex conditional classes of probability distributions (see for example recent works on conditional Market Generators \cite{buehler2020data, issa2023non, wiese2019simulation}).
Making use of these ML-based non-parametric methods for change point detection purposes have proven (in direct comparison with the aforementioned classical methods) to often outperform simple rank and distance-based methods: 
\cite{friedman2004report} proposed to use binary classifiers for two- sample testing, \cite{heidiger2022twosample, londbuhlkov2022changepoint} applied this framework in combination with  with random forests and \cite{lopezpaz2017oquab} with deep neural networks. \\
In fact, once we have delved into the techniques used on the newly emerging field of \textit{Market Generation} or \textit{Market Simulation} (two twin-terms that were coined in the works of Kondratyev and Schwarz \cite{kondratyev2019market} and B\"uhler et al. \cite{buehler2020data}), it is clear that the very same tool---the one we use there to verify that the simulated distribution is indistinguishable from the historical one---can be used for the opposite task as well: to indicate if two samples are very \emph{unlikely} to have been drawn from the same distribution, which would indicate a \emph{regime change}. In \cite{buehler2020data}, a pathwise signature-MMD has been used to evaluate the ``quality'' of the generative model's synthetic market paths. We observe that in fact with the same reasoning as above, a ``good'' and easily computable metric on path space is a central object for a number of key financial applications, including not only market generation (and discrimination) but regime classification and clustering problems, outlier detection and--given sufficient accuracy--even calibration problems as well as an array of further applications.

Of course, the above reasoning about different applications which sounds straightforward in theory poses some intricate modelling challenges if the algorithm is tasked with picking up distributional changes in real time via a sliding window of a financial time series. As we will see later on, the challenge becomes even more intricate if the time series is not an output of a synthetic dataset (generated by the same distribution) but rather a real (historical) time series that is inherently heteroscedastic and non-stationary.

    \subsection{Our contribution in this work}\label{sec:Contribution}
A key difference between our work and the non-parametric method by \cite{londbuhlkov2022changepoint} and the contributions presented in the review article \cite{truong2020review}, lies in our pronounced aim for real-time applicability of the method. While the aforementioned methods focus on offline problems, that is, their aim is to retrospectively detect changes after all samples have been observed, our present work---in contrast--develops an algorithm that can be used as an early indicator of shifts in market regimes in real time (i.e. online), which relies on a satisfactory accuracy of the algorithm for a relatively moderate number of observations. The latter property ensures that  our algorithm can be tuned to be more \emph{reactive}, i.e. mark \emph{potential} changes in the underlying distribution of the data within a short time frame, that is, as based on a relatively low number of new incoming observations. The "price" to pay for a reactive methodology comes in terms of its accuracy\footnote{As usual, the accuracy of any such methodology can be expected to improve as the sample size (of observations) increases.}.  
One of the difficulties mentioned in the literature is that several non-parametric change point methods only detect changes that are deemed "large enough" considering the noise level and available data. This requirement is necessary to prevent numerous false positives, see \cite{arlot2019kernelcpd}.
However, with view to the applications we have in mind, we can be somewhat more lenient about ``false positives" than the majority of traditional change point detection methods: Since our goal is an automated early indicator of shifts market regimes, it is more important to pick up on \emph{potential} changes---even if they happen to ease back into the previous state as time passes---rather than to prioritise the elimination of false positives at all costs.
At the same time, we should point out that while signal-to-noise concerns are indeed (justifiably) a limiting factor in most other methods, signal strength is particularly good in our framework, which is a strength of (pathwise) signature-kernel methods in general, but it is a particularly optimised feature of our specific model design and kernel choice in this paper.
On a slightly different note, we also argue, that for a number of financial applications (deep pricing and hedging or deep trading or portfolio optimization) it is indeed advantageous to focus our regime detection efforts on \emph{large changes} (rather than small ones) and to equip the algorithms with robustification tools (to account for small levels of model uncertainty) to ensure that they remain valid in the face of small changes in the data-- for example changes that are within a noise and small-data-induced threshold.

In our previous work \cite{horvath2021clustering}, we gave an algorithm which attempts to solve the MRCP by implementing a modified version of the classical $k$-means algorithm on the space of distributions $\mathcal{P}(\mathbb{R})$. This was done by representing asset price paths by the empirical measure associated to segments of the asset's corresponding path of log returns. Equipped with a suitable metric, this methodology was able to detect changes in regime from both synthetically generated and real data. However, this approach is somewhat limited in two aspects: first, auto-correlative effects are discarded when conducting inference on distributions of returns, and second, extending to higher-dimensional price paths poses a computational challenge, as discussed in our previous work \cite{horvath2021clustering} which discusses techniques presented in \cite{nadjahi2019asymptotic} and \cite{rabin2011wasserstein} on this matter.

In this paper, we outline a different approach to market regime detection: We study distributions on path space as opposed to those on $\mathbb{R}$, and use path signatures as features together with a version of the \emph{signature maximum mean discrepancy distance} (sig-MMD) introduced in \cite{chevyrev2018signature}. Utilising the path signature in our inference approach means we are able to capture time-dependent effects which were previously not possible.

With this in mind, we aim to fine-tune and improve our sig-MMD distance as much as possible: as one can uniformly approximate functions on path space by linear functionals acting in signature space (see for instance \cite{lemercier2020distribution}, Theorem 2.1) the more signature terms one takes, the more nuanced picture we have about the observed paths. However, with the truncated signature approach, one quickly faces a trade-off between taking more terms in the truncated signature and facing the curse of dimensionality. Instead, in this paper we take a leap towards kernel methods with a \emph{kernel trick} (cf. \cite{salvi2021signature, lemercier2020distribution, salvi2021higher}) to study distributions on pathspace and thus overcome the truncation problem associated to the path signature (cf. \cite{salvi2021signature}) and to make a number of further improvements to the original sig-MMD algorithm used in \cite{buehler2020data}.	Specifically, using a variant of the kernel methods introduced in \cite{salvi2021higher} we are also able to extend dimensionality rather easily, and in fact obtain stronger results in higher-dimensional cases without facing too heavy a computational penalty (see section \ref{subsec:mmd}). In passing, we also fix a problem of the original truncated sig-MMD approaches connected to conditional distributions: As conditional distributions are closely liked with the filtration of the process, one needs a version of the sig-MMD that is fine enough to carry the filtration information of the stochastic evolution of the process in order to generate (and evaluate) conditional distributions. Without such a finer sig-MMD distance, any conditional analysis (and thus any analysis of path-wise temporal effects) is confined to the Markovian case and strictly speaking one is left to good faith when leaving the Markovian regime. As most settings in real life datasets and financial time series are typically non-Markovian by nature, this would a priori be quite a severe limitation.

To summarize, our techniques are superior to any method using the original truncated sig-MMD for the following reasons:

\begin{enumerate}
\item \textbf{Non-truncation}: Since one can employ a ``kernel trick'' to explicitly arrive at the signature kernel without needing to truncate the signature mapping at any point\footnote{Often in the literature it is assumed that due to the factorial decay of the signature mapping, very little relevant information is lost at truncation. This implicitly draws a parallel between the information contained in the higher-order signature terms and their numerical size, but the latter does not necessarily translate to their relevance for distinguishing between distributions. This is the topic of a current working paper.}.
\item \textbf{High dimensionality}: The computational cost for calculating the signature kernel between collections of paths is linear in the path dimension, making inference on higher dimensional objects possible and computationally feasible.
\item \textbf{Filtration information}: By choosing the rank $2$ maximum mean discrepancy associated to the signature kernel, we can incorporate filtration information into the inference process. Such information is crucial when wishing to ascertain more than just the finite-dimensional distributions between sets of paths.
\end{enumerate}	 

Finally, we also show that our approach is suitable to tackle both the \emph{market regime classification problem} (MRCP) and the \emph{market regime detection problem} (MRDP). The MRDP can be generalised to a one-class (contemporary) classification procedure, and thus can be framed as a method for anomaly detection. We compare our results to current pathwise techniques in the literature and show that our approach is powerful even in low data environments. 

The paper is structured as follows. In the rest of this section, we give an outline of the MRCP and MRDP. In Section \ref{sec:prelims}, we give a brief overview of path signatures (including those of higher rank), streamed data, and the signature maximum mean discrepancy. In Section \ref{sec:experiments}, we outline the objects and methodologies used in our experiments. Section \ref{sec:mrdp} and Section \ref{sec:mrcp} covers the MRDP and MRCP respectively, where paths are generated synthetically. Finally, Section \ref{sec:realdata} gives real data experiments, including a study of our approach on both a basket of equities and on cryptocurrency data.

    \subsection{Identifying and distinguishing regimes in a path-wise setting}\label{subsec:notation}
Let us start by recalling the terminology of  \emph{regimes} and \emph{change-points} of regimes in the classical (and in particular return-based) sense. Then, we proceed to recast these concepts into a path-wise setting in such a way, that reflects the essence of the return-based formulation. We start by recalling basic definitions and notations that will be central to the paper, presented in a way that they can easily be recast in the next step as path-wise operations.

\begin{definition}[Time-series data streams and sets of data streams]\label{def:set ofstreams}Let $E$ be a vector space over $\mathbb{R}$. 
Classically, a \emph{time series data stream} is an infinite sequence of elements 
\begin{equation}\label{eqn:timeseries}
    \{x_{t_1}, \dots, x_{t_i}, \ldots\}  \qquad  x_{t_i} \in E \text{ for }\quad i \in \mathbb{N}.
\end{equation}
For a finite integer $N \in \mathbb{N}$, we call a sequence
\begin{equation*}
    \mathsf{x_N} = (x_{t_1}, \dots, x_{t_N}), \quad x_{t_i} \in E \text{ for }i=1,\dots, N
\end{equation*}
an element of the \emph{set of data streams} $\mathcal{S}_N(E)$ over $E$ (cf. Def. \ref{def:streamofdata}).  If there is no danger of confusion, for ease of notation we will simply use $\mathcal{S}(E)$ and index the observations as
\begin{equation}\label{eqn:path}
    \mathsf{x} = (x_{1}, \dots, x_{N}), \quad x_{i} \in E \text{ for } i=1,\dots, N.
\end{equation}
\end{definition}
Throughout the paper, objects of the type $\mathsf{x}$ will be the main sources of data that we perform analysis on. Essentially, $\mathsf{x}$ can be thought of as a segment of a (financial) time series observed over the period $[0,T]$ with $t_0=0$ and $t_N=T$. This is typically the price evolution of an asset or index in our examples, formally it can be seen as one realisation of a random path observed at discrete time-points $t_i$,  $i=1,\dots, N$ that originates from some underlying distribution, which \emph{may or may not have changed} throughout the observation period $[0,T]$ with $t_0=0$ and $t_N=T$.

In order to formalise what we mean by a \emph{regime} or a \emph{change of regime} (that may or may not be present in such an object $\mathsf{x}$), let us first recall the following classical definitions. Recall that a \emph{stationary} time series on the horizon $[0,T]$ is a finite variance process whose statistical properties are all constant over time. This definition assumes in particular that the mean value function $\mu_t:=\mathbb{E}[x_t]$ is constant on $t\in[0,T]$ and that the auto-covariance function $\mathrm{cov}(x_s, x_t) = \mathbb{E}[(x_s - \mu_s)(x_t - \mu_t)]$ depends on the timestamps $s,t$ only through their time difference, or $|t-s|$, but not through their position within $[0,T]$.
Given an element $\mathsf{x}$ of the set of data streams of fixed length $N\in \mathbb{N}$, a stream of all possible sub-sequences of length $n$ can be built by moving a sliding window of size $n$ across the data set. A change point then represents a transition in the process that generates the time series data. For more details on the classical setting, see \cite{aminikhanghahi2017survey}.

\begin{definition}[Change point detection problems: classical setting]\label{def:regimechangedetecionproblem}
     The corresponding \emph{change point detection problem} is the hypothesis test given by 
    \begin{align*}
        H_0&: \mathbb{P}_{\mathsf{X}_1} = \mathbb{P}_{\mathsf{X}_2} = \dots = \mathbb{P}_{\mathsf{X}_M}, \text{ versus} \\
        H_1&: \text{there exists }1 \le k^* \le M \text{ st. } \mathbb{P}_{\mathsf{X}_1} = \mathbb{P}_{\mathsf{X}_2} \dots = \mathbb{P}_{\mathsf{X}_{k^*}} \ne \mathbb{P}_{\mathsf{X}_{k^*+1}} = \dots = \mathbb{P}_{\mathsf{X}_M}.
    \end{align*}
    %In this setting, for each reference point a \emph{regime} is the longest consecutive run of time points for which $H_0$ holds.
\end{definition}

At this point, we begin to start deviating somewhat from the classical nomenclature and from the classical setting to serve our purposes regarding applications better. In the specific settings we would like to use regime detection for, small fluctuations of the underlying dynamics should be treated differently than significant and lasting changes in the underlying dynamics.
As outlined in \cite{horvath2021clustering}, we are often interested in a representative (in \cite{horvath2021clustering} represented as a "barycenter") of distributions that are similar enough to one another from the perspective of our application.
The applications we primarily have in mind are quantisation,  (robust) deep hedging, and risk management considerations. In the context of deep hedging, we would like to make sure that small deviations in the data generating process used for training does not lead to large errors in the algorithm or the hedging performance, and to this end, we introduce a robustification to the hedging objective.
At the same time, if deviations from our reference model become significant and persist over time, then a retraining of the hedging engine is called for. With the tools presented here, we provide a tool-set to distinguish between these two cases.

\begin{definition}[Regime]\label{def:regime}
    Let $\mathsf{x} \in \mathcal{S}(\mathbb{R}^d)$ be a stream  over $\mathbb{R}^d$. Let $\mathsf{X}_i$ denote the subsequence associated to a \emph{windowing operation} of size $n \in \mathbb{N}$, $n \ll N$ across $\mathsf{x}$, i.e. $\mathsf{X}_i := (x_i, x_{i+1}, \ldots , x_{i+n})$ for some $i$ on the observation horizon $i=1,\dots, N$. 
     We denote by $\mathbb{P}_{\mathsf{X}_i}$ the underlying probability density of observations $\mathsf{X}_i$ associated to the window $i$%, where $i$ runs across index set in $\mathsf{x}$
     .
     A \emph{regime} (starting at $i$) is then the longest consecutive run of indices $k$, for which it holds that\footnote{Reversely, an occurrence $k^*$ where equality does not hold is classically referred to as a \emph{change point}. We chose to deviate from this nomenclature in this paper, in order to be able to distinguish between different types of changes: significant and lasting changes, changes of  small magnitude, and changes of significant magnitude but very short duration.}
     $\mathbb{P}_{\mathsf{X}_i} = \mathbb{P}_{\mathsf{X}_{i+1}} = \dots = \mathbb{P}_{\mathsf{X}_{i+k}}$.  Significant and lasting changes will be referred to as instances of \emph{regime change} $\mathbb{P}_{\mathsf{X}_i}= \dots =\mathbb{P}_{\mathsf{X}_{i+k^*}} \ll \footnote{Or similarly for $\gg$.} \  \mathbb{P}_{\mathsf{X}_{i+k^*+1}}= \dots =\mathbb{P}_{\mathsf{X}_{i+k^*+l}}$, with $k,k^*,l\gg 0$. Changes of (very) large magnitude but very short duration will be referred to as \emph{spikes or outliers}, and we will permit ourselves the discretion for changes of sufficiently small magnitude (estimation errors, or changes within a predifined tolerance\footnote{In the context of \emph{robust deep hedging} this tolerance would be a region where the hedging engine does not yet need retraining.}) to be disregarded.\end{definition}

For simplicity, we will focus on \emph{non-overlapping windowing operations} in the sequel. Let $M \in \mathbb{N}$ denote the maximum number of non-overlapping windows of size $n$ that can be extracted from the $N-$stream $\mathsf{x}$ in \eqref{eqn:path}. Then we can associate to $\mathsf{x}$ the $M$ sequence of $n$-windows, which we denote by $\mathsf{X} = (\mathsf{X}_1, \dots, \mathsf{X}_M)$.

Viewing regimes in terms of $\mathsf{X}$ means that they are represented as $\mathbb{P}_{\mathsf{X}_i} \in \mathcal{P}(E)$, where $\mathcal{P}(E)$ denotes the space of Borel probability measures over $E$. This was the approach taken in \cite{horvath2021clustering} for the case $E = \mathbb{R}$. As mentioned in Section \ref{sec:intro} this approach necessarily discards temporal effects due to aggregation; moreover, it does not scale easily to higher dimensions (and can become computationally intractable, see \cite{kolouri2019generalized}). It turns out that we can remedy both of these issues by considering elements of $\mathsf{X}$ as \emph{path segments} instead of elements of $\mathcal{S}_M(\mathbb{R}^d)$. The pathwise formulation will allow us to employ an array of recently developed powerful tools from rough path theory and (signature-) kernel methods. From now on, instead of referring to a regime as a collection of $E$-valued random variables as in Definition \ref{def:regime} above, we will observe regimes as a series of consecutive path segments of an observed realisation of a process. In particular, for a given time interval $I = [a, b]$, $a < b$, we assume that $\mathsf{x}$ is observed at points on a partition $\Delta$ of $I$, whereby
\begin{equation*}
    \Delta = \{a < t_1 < \dots < t_N = b\}, \qquad a, b \in [0, +\infty).
\end{equation*}
We can embed $\mathsf{x}$ into path space via an interpolation operation, which we denote by $\pi_\Delta$. Write $X = \pi_\Delta(\mathsf{x})$, so $X \in C_1(I, E)$, the space of continuous paths of bounded variation over $I$ into $E$. Similar in spirit to the windowing operation $\mathsf{X}$ from Definition \ref{def:regime}, it will be useful to define the set of data available to be extracted from $X$ over $I$.

\begin{definition}[Set of sub-paths]\label{def:setofsubpaths}
    Let $E$ be a vector space and $I = [a, b]$. Suppose $X \in C_1(I, E)$ is an embedding of a discretely observed path $\mathsf{x} \in \mathcal{S}(E)$  via $\pi_\Delta$. Then, the set of \emph{sub-paths} $\Pi_\Delta(\hat{\mathsf{x}})$ associated to $\mathsf{x}$ is given by
    \begin{equation}\label{eqn:subpaths}
        \Pi_\Delta(\mathsf{x}):= \left\{X_{|[u, v]} \in C_1([u, v], E) : [u, v] \subseteq [a, b], X = \pi_\Delta(\mathsf{x})\right\}.
    \end{equation} 
\end{definition} 
\begin{remark}
    Suppose that $J = [a, b']$ where $b' \ge b$. Associate to $J$ a partition $\Delta'$, a discretely-observed path $\mathsf{y} \in \mathcal{S}(E)$, and the embedded path $Y \in C_1(J, E)$. Suppose that $\Delta \subset \Delta'$, that is, all partition points in $\Delta$ and also in $\Delta'$. Further suppose that $Y_{t_i} = X_{t_i}$ for all $t_i \in \Delta$. Then, we have that $\Pi_\Delta(\mathsf{x}) \subseteq \Pi_{\Delta'}(\mathsf{y})$. 
\end{remark}
\begin{figure}[ht]
    \centering
    \includegraphics[scale=0.5]{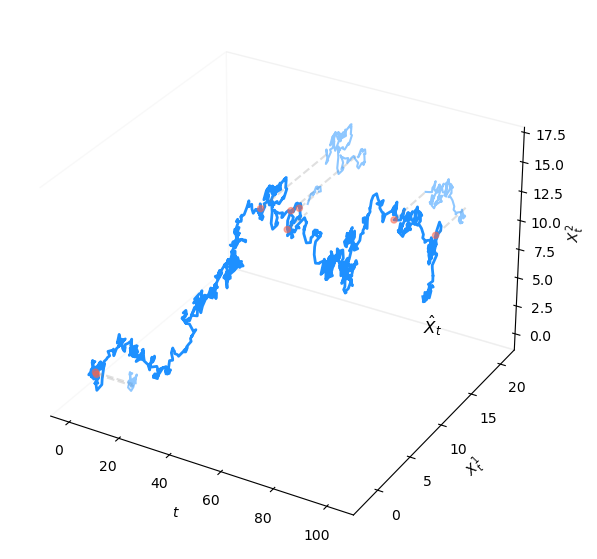}
    \caption{Illustration of extraction of sub-paths from a path evolving in $\mathbb{R}^2$. Given an interval $[u, v] \subset [0, T]$, one can always extract the path corresponding to $X$ restricted to $[u, v]$.}
\end{figure}

Collections of sub-paths found in $\Pi_\Delta(\mathsf{x})$ over the interval partitioned by $\Delta$ can be thought of as samples drawn from a regime $\mathbb{P}$, which prescribes dynamics over $I$. Moreover, any collection $\mathcal{X} \subset \Pi_\Delta(\mathsf{x})$ is a compact subset of $C_1(I, E)$. Thus, for a given $\mathcal{X}$ of cardinality $N \in \mathbb{N}$, one can define the empirical measure
\begin{equation}\label{eqn:empiricalmeasuresubpaths}
    \delta_\mathcal{X} = \frac{1}{N}\sum_{j=1}^{N}\delta_{X^j}, \quad X^j \in \mathcal{X},
\end{equation}
so in particular $\delta_\mathcal{X} \in \mathcal{P}(\mathcal{X})$. In this way, we can re-frame the regime detection problem as one over collections of path objects, which can be viewed as compactly supported measures on path space.

    \subsection{Market regime detection and clustering  problems}\label{subsec:mrcpmrdp}

Let $I = [0, T]$ and $\Delta= \{0 = t_0 < t_1 <\dots< t_N = T\}$ be a partition of $I$, representing observation instances of a data stream $\mathsf{x} \in \mathcal{S}(E)$ Let $X \in C_1(I, E)$ be the path obtained via interpolating $\mathsf{x}$ through $\pi_\Delta$. Finally, suppose $(\tau_i)_{i=1}^M$ is a sequence of times with $\tau_0 = 0$ and $\tau_M = T$ with $M \ll N$. Write $\mathrm{T}_i = (\tau_{i-1}, \tau_i]$.

One can consider a sequence of collections $\mathcal{X}^i \subset \Pi_{\Delta_{|\mathrm{T}_i}}(\mathsf{x})$, where each $\mathcal{X}^i$ is comprised of (sub)-path samples drawn from the restriction of $X$ to $\mathrm{T}_i$. Write $\mathcal{X} = \cup_{i\ge 1}\mathcal{X}^i$. Associate to each $\mathcal{X}^i$ the empirical measure $\delta_{\mathcal{X}^i}$ for $i=1,\dots, N$, which we call $\delta^i$ for short. The sequence of measures $(\delta^i)_{i=1}^M$ can be thought of as noisy estimates of regime dynamics over $\mathrm{T}_i$. Hence, we are interested in the following questions:
\begin{enumerate}
    \item \textbf{Market regime clustering and classification:} How many distinct groups (clusters) are present within the set $\{\delta^1, \dots, \delta^M\}$? And given new data, which of these clusters can it be assigned to?
    \item \textbf{Market regime detection:} Suppose we observe $\delta^1, \dots, \delta^M$ sequentially, in an online setting. Then, given $\delta^j$, have we observed a regime change (in the sense of Definition \ref{def:regime})?
\end{enumerate}

We start with a discussion of the latter, which as mentioned in Section \ref{sec:intro}, if we are only interested in detecting two kinds of states: ``normal'' and ``different''. 

The set-up is as follows. Suppose we observe $\mathsf{x} \in \mathcal{S}(E)$ and construct a sequence of sets of paths $\mathcal{X}^1, \dots, \mathcal{X}^M$ with associated empirical distributions $\delta^1, \dots, \delta^M$. As we observe each measure, we want to know (in an on-line setting) whether each observation is anomalous or not. This requires us to define what we consider normal, and what we consider to be anomalous. We codify this with the following. 

\begin{definition}[Beliefs]
    Let $I=[0, T], T > 0$ and $E \subset \mathbb{R}^d$. We call a compact set of paths $\mathfrak{P} \subset C(I, E)$ an agent's \emph{beliefs}. 
    
    Furthermore, suppose $(U, \Sigma, \mathbb{U})$ is a probability space and $G_\theta: U \to C(I, E)$ is a generator with $\theta \in \Theta$, and $\Theta \subset \mathbb{R}^d$. Write $\mathbb{P}_\theta  = {G_\theta}_{\#}\mathbb{U}$ for the push-forward measure of $\mathbb{U}$ through $G_\theta$. Then, if $\mathfrak{P}$ is comprised of samples $p \sim \mathbb{P}_\theta$, we say the beliefs $\mathfrak{P}$ are \emph{parametric}. Else, they are \emph{non-parametric}. 
\end{definition}

A natural way to derive beliefs is by setting the generator $G_\theta$ to be the action of an SDE solver over a given stochastic differential equation parameterized by $\theta \in \Theta$, which are often calibrated according to observed market data. A less prescriptive, but more computationally expensive, procedure involves modelling beliefs as the output of a generative machine learning model \cite{buehler2020data, ni2021sigwasserstein, wiese2019simulation, issa2023non}. Here, the generator has been trained on observed market data, synthetic data, or a combination of the two. Note that beliefs obtained from this method could also be prescribed as ``parametric'' since the generator architecture is often comprised of highly-parameterized neural networks. Thus these approaches are often called ``data-driven'' to distinguish them from the classical approach. Non-parametric beliefs can be obtained by sampling from observed market data. 

We now turn to how beliefs are used to detect changes in regime. As mentioned at the start of Section \ref{subsec:mrcpmrdp}, this problem can be thought of on the space of probability measures on path space. What is of concern, then, is deciding how to determine if (in an online setting) whether each $\delta^1, \dots, \delta^M$ is different to any of our prescribed beliefs. We formalise this statement with the following.

\begin{problem}[Market regime detection problem]\label{prob:regimedetectionproblem}
    Let $\alpha \in [0, 1)$ be an acceptance threshold. Let $I = [0, T]$ and suppose $\mathsf{x} \in \mathcal{S}(E)$ is a stream of data observed over a partition $\Delta$ of $I$. 
    
    Let $\Pi_{\Delta}(\mathsf{x})$ be the set of all sub-paths extracted from $\mathsf{x} \in \mathcal{S}(E)$. Define $\mathrm{T}_i = (\tau_{i-1}, \tau_i]$ where $(\tau_i)_{i=0}^M$ is a sequence of times with $\tau_0 = 0, \tau_M = T$. Let $\mathcal{X}^i\subset \Pi_{\Delta_{|\mathrm{T}_i}}(\mathsf{x})$, and write $\mathcal{X} = \cup_{i\ge 1} \mathcal{X}^i$. Denote by $\delta^i$ the empirical measure associated to $\mathcal{X}^i$. Let $\mathcal{K} \subset C_1(I, E)$ be compact.

    Then, given beliefs $\mathfrak{P} = (\mathfrak{P}_1,\dots,\mathfrak{P}_k)$, the \emph{market regime detection problem (MRDP)} is to find a statistic  
    
    \begin{equation*}
        \Gamma: \mathcal{P}(\mathcal{K}) \times \mathcal{P}(\mathcal{K}) \to \mathbb{R}
    \end{equation*}

    that minimises the average Type II error $\beta_\Gamma: \mathcal{P}(\mathcal{K})  \times \mathcal{P}(\mathcal{K}) \to [0, 1]$ along the sequence $\delta^1, \dots, \delta^M$ for each of our beliefs $\mathfrak{P}_j \in \mathfrak{P}$. That is, we wish to find

    \begin{equation*}
        \Gamma^* = \argmin_{\Gamma} \frac{1}{M}\sum_{i=1}^M\mathbb{E}_{\mu \sim \mathfrak{P}_j}[\beta_\Gamma(\mu, \delta^i)] \quad \text{for }j=1,\dots,k.
    \end{equation*}
    
\end{problem}

We conclude this section with a formulation of the market regime clustering problem, 
which (as an unsupervised learning problem) does not require a consideration of one's beliefs.

\begin{problem}[Market regime clustering problem]\label{prob:regimeclassificaitonproblem}
    With the notation the same as Problem \ref{prob:regimedetectionproblem}, the \emph{market regime clustering problem} is to find a function $F: \mathcal{P}(\mathcal{K}) \to \{1, \dots, k\}$ which suitably clusters the measures $\delta^1, \dots, \delta^M$ into $\mathcal{C}_1,\dots, \mathcal{C}_k$.
\end{problem}

Problem \ref{prob:regimeclassificaitonproblem} does contain a rather large ambiguity - the definition suitability in regards to the choice of the function $F$. This is a classic problem in unsupervised learning, as without data labels, goodness-of-fit evaluations are often made either qualitatively or with metrics not used in the learning procedure. Oftentimes the output of a given clustering algorithm can be thought of as assigning a \emph{class label} to the elements of a given cluster. This approach makes sense if the clusters themselves have sufficient interpretability. Yet, cluster assignments are relative by construction, and more work is required to validate whether the clusters themselves are distinct from each other whilst remaining \emph{self-similar}. In Section \ref{sec:mrcp}, we evaluate our clustering methodology on labelled data so as to verify our class assignments before moving onto unlabelled data.

    \section{Preliminaries} \label{sec:prelims}
	
	Here we give the theoretical background and tools used in our experiments. We cover the essentials here and refer the reader to the appropriate section of the Appendix for further details.

    \subsection{The signature of a path}\label{subsec:pathsignatures}

In this section we introduce the path signature and state its main properties. We skip much of the rigorous detail, which can be found in Appendix \ref{appendix:signatures}.

In what follows we take $(\Omega, \mathcal{A}, \mathbb{P})$ to be the underlying probability space. For $0\le a < b$, set $I = [a, b]$ and suppose that $X: I \to E$ is a path evolving in a Banach space $E$ (usually, we take $E = \mathbb{R}^d$). In what follows let
\begin{equation*}
    T((E)) := \prod_{k\ge 0} E^{\otimes k}
\end{equation*}
denote the algebra of formal tensor series over $E$. Later, it will be relevant to consider objects in the space 
\begin{equation*}
    T^N(E) := \{(a_i)^{N}_{i=0} : a_i \in E^{\otimes i} \},
\end{equation*}
which we call the truncated tensor algebra of order $N \in \mathbb{N}$. For convention set $E^{\otimes 0} = \mathbb{R}$.

Denote by $C_p(I, E)$ the space of continuous paths of finite $p$-variation (see Definition \ref{def:pvariationappendix}). We now introduce the \emph{signature} map $S: C_p(I, E) \to T((E))$, which one can think of as the canonical feature map over path space.

\begin{definition}[Signature, \cite{chen1957integration, lyons1998differential}, Definition 2.1]\label{def:signature}
    Let $1 \le p <  2$ and suppose $I=[a, b]$. Then, for $X \in C_p(I, E)$, the \emph{signature} $S(X)_{[a, b]} \in T\left((E)\right)$ of $X$ is given by
    \begin{equation}\label{eqn:signature}
        S(X)_{a, b} := (1, \mathbb{X}^1_{[a,b]}, \dots, \mathbb{X}^N_{[a,b]}, \dots),
    \end{equation}
    where
    \begin{equation}\label{eqn:signatureintegral}
        \mathbb{X}^k_{[a,b]} = \left(\ \ \idotsint\limits_{a<t_1<\dots< t_k < b} dX^{i_1}_{u_1} \otimes \dotsm \otimes dX^{i_k}_{u_k}\right)_{(i_1, \dots, i_k) \in \{1, \dots, d\}^k}.
    \end{equation}
    Thus, each $\mathbb{X}^k_{[a,b]} \in E^{\otimes k}$. Furthermore, the \emph{truncated signature} $S^N(X) \in T^N(E)$ is given by
    \begin{equation*}
        S^N(X)_{[a, b]} := (1, \mathbb{X}^1_{[a,b]}, \dots, \mathbb{X}^N_{[a,b]}, 0, 0, \dots).
    \end{equation*}
\end{definition}
\begin{remark}[Existence of signature integrals]\label{rmk:signatureintegrals}
    The condition on the regularity of the inputs paths at the beginning of Definition \ref{def:signature} is necessary as it determines the theory of integration required to define the iterated integrals that comprise the signature. For instance, if $p \in [1, 2)$ then the integrals in eq. (\ref{eqn:signatureintegral}) can be thought of in the sense of Young. Larger values of $p$ require the theory of rough paths. 
    
    If $X$ is a path of bounded variation, the integrals can be understood in the Riemann-Stieltjes sense. Given that the paths we consider are streamed datum (which can always be embedded in $C_p(I, E)$ via interpolation), we are always in this setting and thus the regularity requirements are satisfied.
\end{remark}

If the interval over which the signature of a path $X$ is being calculated is obvious from the context, we suppress the subscript $[a, b]$ and will write $S(X)$. We now briefly outline some of the key analytic properties of the signature mapping.

\begin{proposition}[Signature properties, \cite{hambly2010uniqueness, levin2013learning, lemercier2020distribution}]\label{prop:signatureproperties}
   Suppose $S(X)_{[a, b]}$ is the path signature of $X: [a, b] \to E$ where $X \in C_p([a, b], E)$ for $1 \le p < 2$. Then, we have the following: 
   \begin{itemize}
       \item \textbf{Uniqueness.} $S(X)_{[a,b]}$ is unique up to the equivalence relation $\sim_{\tau}$ which denotes \emph{tree-like equivalence} (equality up to retracings), 
       \item \textbf{Invariance to time reparameterisation.} For any non-decreasing surjection $\lambda: [c, d]\to [a, b]$, one has that $S(X)_{[a,b]} = S(X \circ \lambda)_{[c, d]}$, and 
       \item \textbf{Universality.} Suppose $\mathcal{K} \subset C_p(I, E)$ is compact and $f: \mathcal{K} \to \mathbb{R}$ is continuous. Then, for every $\varepsilon > 0$ there exists $N \in \mathbb{N}$ and a linear functional $L \in T((E))^*$ such that $\norm{f(X) - \langle L, S^N(X) \rangle_{T(E))}}_\infty < \varepsilon$ for all $X \in \mathcal{K}$.
   \end{itemize}
\end{proposition}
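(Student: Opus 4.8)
The plan is to handle the three assertions by quite different means: invariance to time reparameterisation is an elementary change of variables, uniqueness up to tree-like equivalence is imported wholesale from the literature, and universality is a Stone--Weierstrass argument resting on the shuffle identity for iterated integrals. For \emph{invariance to time reparameterisation} I would argue level by level: fix $k\ge 1$ and a multi-index $(i_1,\dots,i_k)\in\{1,\dots,d\}^k$, and note that the corresponding entry of $\mathbb{X}^k$ for the reparameterised path $X\circ\lambda$ over $[c,d]$ is $\idotsint_{c<s_1<\dots<s_k<d} d(X^{i_1}\circ\lambda)_{s_1}\otimes\dots\otimes d(X^{i_k}\circ\lambda)_{s_k}$. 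Since $\lambda:[c,d]\to[a,b]$ is a continuous non-decreasing surjection, the substitution $u_j=\lambda(s_j)$ carries the simplex $\{c<s_1<\dots<s_k<d\}$ onto $\{a<u_1<\dots<u_k<b\}$ up to a null set, the Riemann--Stieltjes (Young) integral is invariant under such substitutions, and the flat stretches of $\lambda$ contribute nothing because $d(X\circ\lambda)$ vanishes there. Summing over $k$ then yields $S(X\circ\lambda)_{[c,d]}=S(X)_{[a,b]}$.

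For \emph{uniqueness} I would invoke the Hambly--Lyons theorem \cite{hambly2010uniqueness}: two paths have the same signature precisely when they are tree-like equivalent. The easy direction follows from Chen's identity $S(X\ast Y)_{[a,b]}=S(X)\otimes S(Y)$ together with $S(\overleftarrow{X})=S(X)^{-1}$ in the group-like elements of $T((E))$, which lets one cancel any out-and-back excursion without changing the signature, so a tree-like reduction leaves $S$ invariant. The converse --- that a trivial signature forces the path to be tree-like --- is the genuine analytic content of \cite{hambly2010uniqueness} (and of its subsequent rough-path extensions); I would state it as a black box, noting that streamed data always embed in $C_1(I,E)$, so the bounded-variation case is all that is needed here.

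For \emph{universality} let $\mathcal{K}\subset C_p(I,E)$ be compact and consider $\mathcal{A}=\mathrm{span}\{\,X\mapsto\langle\ell,S(X)_{[a,b]}\rangle:\ell\in T((E))^*\,\}\subseteq C(\mathcal{K},\mathbb{R})$. I would verify the three Stone--Weierstrass hypotheses: (i) $\mathcal{A}$ is a subalgebra, which is exactly the shuffle identity $\langle\ell_1,S(X)\rangle\langle\ell_2,S(X)\rangle=\langle\ell_1\Sha\ell_2,S(X)\rangle$; (ii) $\mathcal{A}$ contains the constants, since the zeroth tensor level of $S(X)$ is always $1$; (iii) $\mathcal{A}$ separates points. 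Point (iii) is the only place where care is needed, since by the uniqueness property $S$ separates paths only up to $\sim_\tau$: the clean statement holds either for $\sim_\tau$-invariant $f$ or, as is standard, after time-augmenting $X\mapsto(t,X_t)$, which renders every path in the (augmented, still compact) family non-tree-like and pairwise inequivalent, is harmless for our applications, and makes reparameterisation-invariance of $f$ automatic. Stone--Weierstrass then produces, for each $\varepsilon>0$, some $L\in T((E))^*$ with $\norm{f-\langle L,S(\cdot)\rangle}_\infty<\varepsilon$ on $\mathcal{K}$; any such $L$ involves only finitely many tensor levels, so $\langle L,S(X)\rangle=\langle L,S^N(X)\rangle$ for $N$ large, and continuity of $X\mapsto S^N(X)$ on the compact $\mathcal{K}$ makes the estimate uniform, which is the claim.

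The main obstacle is the converse half of the uniqueness statement (trivial signature $\Rightarrow$ tree-like): it is the one genuinely deep ingredient, and rather than reproving it I would cite \cite{hambly2010uniqueness} and its rough-path refinements. A minor secondary point is phrasing the point-separation hypothesis in the universality step correctly --- which is why I would introduce the time-augmentation convention there --- and checking that the shuffle product of two linear functionals again lies in the dual $T((E))^*$, which is routine.
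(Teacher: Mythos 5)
Your proposal is correct: the paper itself offers no proof of this proposition, stating it as a known result with citations to \cite{hambly2010uniqueness, levin2013learning, lemercier2020distribution}, and your sketch is exactly the standard argument those references contain (change of variables for reparameterisation invariance, Chen's identity plus the Hambly--Lyons theorem as a black box for uniqueness, and Stone--Weierstrass via the shuffle identity for universality, with truncation at a finite level recovering the $S^N$ formulation). Your care over point separation via time augmentation also matches the paper's own remark following the proposition that it works exclusively with time-augmented, tree-like reduced paths, so no gap remains.
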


The first property is the key result which motivates our use of the signature mapping as a way of encoding path data. Again, paths we consider will be linear interpolants of streamed data with an added monotonically increasing channel (corresponding to time), and thus the uniqueness property of $S(X)_{[a,b]}$ will always hold, as we are working exclusively with tree-like reduced paths. The second property is crucial as it removes dependence on sampling rate, a key consideration when working with financial data which is often available at different tick frequencies. Related to universality of the signature is injectivity of following mapping, which we introduce now.

\begin{definition}[Expected signature, \cite{lemercier2020distribution}, Definition 3.1]\label{def:expectedsignature}
    Let $1 \le p < 2$ and suppose $\mathcal{K} \subset C_p([a, b], E)$ is a compact set containing tree-like reduced paths. Suppose $\mathbb{P} \in \mathcal{P}(\mathcal{K})$. Then, the mapping
    
    \begin{equation}\label{eqn:expectedsignature}
        \mathbb{E}S: \mathcal{P}(\mathcal{K}) \to T\left((E)\right), \quad \mathbb{E}S(\mathbb{P}) = \left(\mathbb{E}_\mathbb{P}[\mathbb{X}^k_{[a, b]}]\right)_{k\ge 0} 
    \end{equation}
    is called the \emph{expected signature}.
\end{definition}

The expected signature mapping is the analogue (on path space) to the moment map $x \mapsto (\mathbb{E}[x^{\otimes m}/m!])_{m\ge 0}$ for random variables in $\mathbb{R}^d$, as the following Theorems will illustrate.

\begin{theorem}[\cite{lemercier2020distribution}, Theorem 3.1]\label{thm:expectedsignature}
    Let $1 \le p < 2$. Suppose that $\mathcal{K} \subset C_p(I, E)$ be a compact set of tree-like reduced paths. Then, for any $\mathbb{P} \in \mathcal{P}(\mathcal{K})$, the mapping $\mathbb{P} \mapsto \mathbb{E}S(\mathbb{P})$ is injective.
\end{theorem}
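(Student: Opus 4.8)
The plan is to run the classical ``moment problem on a compact set'' argument, transplanted to path space, with linear functionals of the signature playing the role of polynomials and the expected signature coordinates playing the role of moments. Concretely, for $\ell \in T((E))^*$ (involving only finitely many tensor levels) consider the map $e_\ell : X \mapsto \langle \ell, S(X)_{[a,b]}\rangle$. Since $X \mapsto S^N(X)_{[a,b]}$ is continuous in $p$-variation, $e_\ell$ is a continuous real-valued function on $\mathcal{K}$, and it is bounded there by compactness, so $e_\ell \in L^1(\mathbb{P})$ for every $\mathbb{P} \in \mathcal{P}(\mathcal{K})$; the same boundedness shows $\mathbb{E}S(\mathbb{P})$ is well defined and $\mathbb{E}_{\mathbb{P}}[e_\ell] = \langle \ell, \mathbb{E}S(\mathbb{P})\rangle$. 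Hence $\mathbb{E}S(\mathbb{P}) = \mathbb{E}S(\mathbb{Q})$ is equivalent to $\mathbb{E}_{\mathbb{P}}[e_\ell] = \mathbb{E}_{\mathbb{Q}}[e_\ell]$ for all $\ell$, because $T((E))^*$ separates points of $T((E))$ level by level.

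Next I would record the three structural properties of the family $\mathcal{A} := \{ e_\ell : \ell \in T((E))^* \} \subset C(\mathcal{K})$. First, $\mathcal{A}$ contains the constants (take $\ell$ supported on $E^{\otimes 0} = \mathbb{R}$). Second, $\mathcal{A}$ is a subalgebra: by the shuffle identity $e_{\ell_1}\cdot e_{\ell_2} = e_{\ell_1 \Sha \ell_2}$, and $\ell_1 \Sha \ell_2$ is again an element of $T((E))^*$ with finitely many nonzero levels, so products of signature-linear functionals are signature-linear functionals. Third, $\mathcal{A}$ separates points of $\mathcal{K}$: since $\mathcal{K}$ consists of tree-like reduced paths, the uniqueness property in Proposition \ref{prop:signatureproperties} gives that $S(\cdot)_{[a,b]}$ is injective on $\mathcal{K}$, so $X \ne Y$ in $\mathcal{K}$ implies $S(X) \ne S(Y)$, and some coordinate functional $e_\ell$ separates them.

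Then the Stone--Weierstrass theorem applies: $\mathcal{K}$ is a compact metric space and $\mathcal{A}$ is a point-separating subalgebra of $C(\mathcal{K})$ containing constants, so $\mathcal{A}$ is dense in $(C(\mathcal{K}), \norm{\cdot}_\infty)$. Now assume $\mathbb{E}S(\mathbb{P}) = \mathbb{E}S(\mathbb{Q})$, so that $\int_{\mathcal{K}} g\, d\mathbb{P} = \int_{\mathcal{K}} g\, d\mathbb{Q}$ for every $g \in \mathcal{A}$. Given $f \in C(\mathcal{K})$ and $\varepsilon > 0$, choose $g \in \mathcal{A}$ with $\norm{f-g}_\infty < \varepsilon$; then $|\int f\, d\mathbb{P} - \int f\, d\mathbb{Q}| \le 2\varepsilon$, and letting $\varepsilon \to 0$ yields $\int f\, d\mathbb{P} = \int f\, d\mathbb{Q}$ for all $f \in C(\mathcal{K})$. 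Since $\mathcal{K}$ is a compact metric space, the Riesz representation theorem (uniqueness of the representing Radon measure) forces $\mathbb{P} = \mathbb{Q}$, which is injectivity of $\mathbb{P} \mapsto \mathbb{E}S(\mathbb{P})$.

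The conceptually nontrivial inputs are exactly the ingredients behind the three properties above, and these are where I expect the only real work to lie. The shuffle identity requires the iterated integrals — and, crucially, their pointwise products — to be well defined, which is precisely why the regularity hypothesis $1 \le p < 2$ (Young integration) is imposed, as flagged in Remark \ref{rmk:signatureintegrals}; and the injectivity of $S$ on $\mathcal{K}$ is the Hambly--Lyons uniqueness theorem, which is the reason $\mathcal{K}$ is assumed to consist of tree-like reduced paths. Once those are invoked, the remainder is a routine Stone--Weierstrass/Riesz combination, and the write-up's main effort is bookkeeping on the shuffle product (verifying $\ell_1 \Sha \ell_2 \in T((E))^*$ with finitely many nonzero levels) rather than anything deep.
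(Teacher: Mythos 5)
Your argument is correct, and it is essentially the standard proof of this result: the paper itself does not reprove the theorem but imports it from \cite{lemercier2020distribution} (see also \cite{chevyrev2016characteristic}), where the proof is exactly your combination of the shuffle identity (making signature linear functionals a point-separating subalgebra of $C(\mathcal{K})$ containing constants, using tree-reducedness for injectivity of $S$), Stone--Weierstrass, and uniqueness in the Riesz representation on the compact metric space $\mathcal{K}$. No gaps worth flagging.
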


\begin{theorem}[\cite{chevyrev2016characteristic}, Proposition 6.1]\label{thm:expectedsignaturelaws}
    Denote by $\mathbb{P}_Z = \mathbb{P} \circ Z^{-1}$ as the law of a random variable $Z$. Suppose $X, Y \in C_p(I, \mathcal{K})$ are path-valued random variables taking values in a compact set $\mathcal{K}\subset \mathbb{R}^d$. Then, $\mathbb{P}_X = \mathbb{P}_Y$ if and only if $\mathbb{E}S(\mathbb{P}_X) = \mathbb{E}S(\mathbb{P}_Y)$. 
\end{theorem}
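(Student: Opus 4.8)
The plan is to prove the two implications separately, with essentially all the content in the ``if'' direction. The forward implication is immediate: by Definition~\ref{def:expectedsignature}, $\mathbb{E}S(\mathbb{P}_Z)$ is a deterministic function of the law $\mathbb{P}_Z$, so $\mathbb{P}_X = \mathbb{P}_Y$ at once gives $\mathbb{E}S(\mathbb{P}_X) = \mathbb{E}S(\mathbb{P}_Y)$. For the converse the idea is the classical route for pinning down the law of a random element of a compact metric space: show $\mathbb{E}[f(X)] = \mathbb{E}[f(Y)]$ for all $f$ in a measure-determining class, and produce that class as the uniform closure of the linear functionals of the signature. First I would reduce to a common compact path-space domain: after augmenting with a strictly increasing time channel (so every path in play is tree-like reduced and Proposition~\ref{prop:signatureproperties} applies verbatim), the laws $\mathbb{P}_X, \mathbb{P}_Y$ are supported on a common compact set $\mathcal{L} \subset C_p(I,\mathbb{R}^d)$ of tree-like reduced paths; in the streamed-data setting of this paper this is automatic, since linear interpolants on a fixed partition valued in a compact $\mathcal{K}\subset\mathbb{R}^d$ form a set homeomorphic to a power of $\mathcal{K}$. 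It then suffices to test against $C(\mathcal{L})$, which is measure-determining for Borel probability measures on the compact metric space $\mathcal{L}$.

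The second step is a Stone--Weierstrass argument. Let $\mathcal{A} \subset C(\mathcal{L})$ be the set of maps $X \mapsto \langle \ell, S(X)_I\rangle$ with $\ell$ a linear functional on $T((\mathbb{R}^d))$ supported on finitely many tensor levels. Then $\mathcal{A}$ is a subalgebra: it is a linear space, it contains the constants (the level-$0$ functional), and it is closed under pointwise products by the classical shuffle identity $\langle \ell_1, S(X)\rangle\langle \ell_2, S(X)\rangle = \langle \ell_1 \Sha \ell_2, S(X)\rangle$; moreover $\mathcal{A}$ separates points of $\mathcal{L}$, since distinct tree-like reduced paths have distinct signatures and hence differ in at least one iterated-integral coordinate. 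Stone--Weierstrass then gives $\overline{\mathcal{A}} = C(\mathcal{L})$ in the uniform norm.

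Finally I would transfer the hypothesis through this density. The assumption $\mathbb{E}S(\mathbb{P}_X) = \mathbb{E}S(\mathbb{P}_Y)$ says exactly that $\mathbb{E}[\langle \ell, S(X)\rangle] = \mathbb{E}[\langle \ell, S(Y)\rangle]$ for every admissible $\ell$, hence $\mathbb{E}[g(X)] = \mathbb{E}[g(Y)]$ for all $g \in \mathcal{A}$ by linearity (each such $g$ is bounded on the compact $\mathcal{L}$, so it is integrable). For $f \in C(\mathcal{L})$ and $\varepsilon > 0$, choose $g \in \mathcal{A}$ with $\norm{f - g}_\infty < \varepsilon$; a $3\varepsilon$-estimate gives $|\mathbb{E}[f(X)] - \mathbb{E}[f(Y)]| \le 2\varepsilon$, and letting $\varepsilon \downarrow 0$ yields $\mathbb{E}[f(X)] = \mathbb{E}[f(Y)]$ for all $f \in C(\mathcal{L})$, i.e.\ $\mathbb{P}_X = \mathbb{P}_Y$.

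I expect the only genuine obstacle to be bookkeeping about the hypotheses rather than anything deep: one must check that the ``values in a compact $\mathcal{K}\subset\mathbb{R}^d$'' assumption, together with the standing bounded-variation/streamed-data convention, really forces the two laws onto a common compact subset of $C_p(I,\mathbb{R}^d)$ --- compactness there, not merely boundedness of the range, is what delivers both the measure-determining property of $C(\mathcal{L})$ and the uniform factorial bound $\norm{\mathbb{X}^k_I} \lesssim C^k/k!$ that makes every signature coordinate integrable --- and one must neutralise the tree-like ambiguity, which is precisely the role of the time augmentation. I would also note that, since this argument is essentially the proof that $\mathbb{P}\mapsto\mathbb{E}S(\mathbb{P})$ is injective, in the present setting Theorem~\ref{thm:expectedsignaturelaws} can alternatively be read off in one line from Theorem~\ref{thm:expectedsignature} applied on the common compact support $\mathcal{L}$.
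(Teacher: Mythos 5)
The paper offers no proof of this statement: it is imported verbatim from the literature (Chevyrev--Lyons, Proposition 6.1), and within the paper its role is played by the adjacent Theorem \ref{thm:expectedsignature}, the injectivity of $\mathbb{P}\mapsto\mathbb{E}S(\mathbb{P})$ on compact sets of tree-like reduced paths. Your argument is the standard proof of exactly that injectivity statement --- Stone--Weierstrass applied to the algebra of finitely-supported linear functionals of the signature, with the shuffle identity $\langle \ell_1, S(X)\rangle\langle \ell_2, S(X)\rangle = \langle \ell_1\Sha \ell_2, S(X)\rangle$ giving closure under products and uniqueness of the signature on tree-like reduced paths giving point separation --- and it is sound on a compact subset $\mathcal{L}\subset C_p(I,\mathbb{R}^d)$ of tree-like reduced paths; your closing remark that the theorem is then a one-line corollary of Theorem \ref{thm:expectedsignature} on the common compact support is the cleanest way to read it in context.

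Two caveats you raise deserve to be stated as necessary corrections to the hypotheses rather than bookkeeping. First, compactness must indeed be imposed in path space, not just on the state space: the set of bounded-variation (or finite $p$-variation) paths with values in a compact $\mathcal{K}\subset\mathbb{R}^d$ is not compact in any relevant path topology and carries no uniform variation bound, so without compact support in $C_p(I,\mathbb{R}^d)$ the signature coordinates need not even be bounded or integrable, and the expected signature of Definition \ref{def:expectedsignature} is itself only defined for measures on compact path sets; the cited Proposition 6.1 assumes compact support in (rough) path space, which is how the statement should be read. Second, tree-like reducedness is not cosmetic: as literally stated the equivalence fails, e.g.\ for $X$ a deterministic tree-like excursion and $Y$ the constant path, whose signatures both equal $(1,0,0,\dots)$ while $\mathbb{P}_X\neq\mathbb{P}_Y$. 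Your time-augmentation fix repairs this only if the hypothesis is read as equality of the expected signatures of the \emph{augmented} paths (equivalently, that the laws are supported on tree-like reduced paths), since $\mathbb{E}S$ of the augmented path is not a function of $\mathbb{E}S$ of the original one; this is consistent with the paper's standing convention, but it is a re-reading of the stated hypothesis, not a consequence of it.
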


This result is akin to characteristicness of $S$ on compact sets $\mathcal{K} \subset C_p(I, E)$.

    \subsection{Streamed data}\label{subsec:streamed_data}

As mentioned in Subsection \ref{subsec:notation}, the main objects of focus in this work are collections of discretely observed time series data. We provide a formal definition here.

\begin{definition}[Set of data streams, \cite{cochrane2020anomaly}, Definition 2.1]\label{def:streamofdata}
    Let $E$ be a non-empty set. The set of \emph{streams of data} $\mathcal{S}$ over $E$ is given by 
    \begin{equation}\label{eqn:setofstreams}
        \mathcal{S}(E) = \{\mathsf{x} = (x_1, \dots, x_n) : x_i \in E, n \in \mathbb{N} \}.
    \end{equation}
\end{definition}

In practice, elements $\mathsf{x} \in \mathcal{S}(E)$ come with an associated sequence of timestamps $t = (t_i)_{i=1}^n$ at which each $x_i$ is observed. This leads to the following definition. 
\begin{definition}[Time-augmented stream]
    Let $\mathsf{x} \in \mathcal{S}(E)$ be a stream of data over $E$. Further, let 
    \begin{equation*}
        \Delta = \{0 = t_1 < \dots < t_n = T\}
    \end{equation*}
    be a partition of $I=[0, T]$ corresponding to the observation times associated to each element of $\mathsf{x}$. The set of \emph{time-augmented streams} over the interval $I$ with respect to $\Delta$ is given by
    \begin{equation}\label{eqn:timeaugmentedstream}
        \mathcal{T}_\Delta (I, E) = \{\hat{\mathsf{x}} = \{(t_i, x_i)\}_{i=1}^n: x_i \in E, n\in\mathbb{N}\}.
    \end{equation}
\end{definition}
As mentioned in Section \ref{subsec:pathsignatures}, we can directly embed elements of $\mathcal{T}_\Delta (I, E)$ into a continuous path $X \in C(I, E)$ of bounded variation via linear interpolation, which we will denote the operation of as $\pi_\Delta: \mathcal{S}(E) \to C(I, E)$. If the grid over which the interpolation is taken is obvious from the context, we suppress the subscript $\Delta$. Refer to Appendix \ref{appendix:streamingdata} for more details. 

Often it is advantageous to first transform a stream of data $\hat{\mathsf{x}} \in \mathcal{T}_\Delta(I, \mathbb{R}^d)$ before calculating its corresponding signature. For example, in financial applications, one is often more concerned with \emph{relative} changes as opposed to absolute ones, implying a type of path normalisation is necessary when conducting inference with path signatures. In the literature, path transformations have been used to correctly evaluate discrete-time stochastic integrals in the It\^o sense \cite{Flint_2016}, directly scale path values \cite{cass2021general} or provide sensitivity to translation after applying the signature \cite{morrill2020generalised}. Briefly, the most common choice of transformations used in our experiments are as follows:

\begin{itemize}
    \item \textbf{Time normalisation transform}: If the raw value of the time channel is not of importance (for example, just placeholders representing the order of observations), then the transformation $$\phi_{\text{time}}: \mathcal{T}_\Delta([a, b], E) \to \mathcal{T}_\Delta([0, 1], E)$$ can be used. 
    \item \textbf{State space normalisation transform}: If the values a path takes are only important relative to where the path began, one can apply the transformation $$\phi_{\text{norm}}: \mathcal{T}_\Delta([a, b], E) \to \mathcal{T}_\Delta([a, b], E)$$ whereby $\phi_{\text{norm}}(\hat{\mathsf{x}})_i = (t_i, x_i/x_0)$ for $i=1,\dots,n$.
    \item \textbf{Absolute increment transformation}: If one wishes to capture path volatility as a first-order effect, the transformation $$\phi_{\text{incr}}: \mathcal{T}_\Delta([a, b], E) \to \mathcal{T}_\Delta([a, b], E)$$ can be used, whereby $\phi_{\text{norm}}(\hat{\mathsf{x}})_i = (t_i, \sum_{j=1}^{i}|x_j - x_{j-1}| + x_1)$ for $i=2,\dots,n$, and $\phi_{\text{norm}}(\hat{\mathsf{x}})_1 = (t_1, x_1)$.
    \item \textbf{Scaling transformation}. Often it is useful to directly scale the (non-time) components of a path. Given $\lambda \in E$, the function $$\phi^\lambda_{\text{scale}}: \mathcal{T}_\Delta([a, b], E) \to \mathcal{T}_\Delta([a, b], E),$$ given by $\phi^\lambda_{\text{scale}}(\hat{\mathsf{x}})_i = (t_i, \lambda_i \odot x_i)$ for $i=1,\dots,n$ scales each of the non-time components by the components of the vector $\lambda$. Here, $\odot: \mathbb{R}^d \to \mathbb{R}^d$ denotes the Hadamard product.
\end{itemize}

We leave the more detailed definitions of the above (other useful transformations) in Appendix \ref{appendix:streamingdata}. It is straightforward to consider applying multiple (ordered) stream transformations to some $\hat{\mathsf{x}} \in \mathcal{T}_\Delta(I, \mathbb{R}^d)$.

\begin{definition}[Stream transformer]\label{def:streamtransformer}
    Suppose $\hat{\mathsf{x}} \in \mathcal{T}_\Delta(I, \mathbb{R}^d)$ and let $(\phi_i)_{i=1}^N$ be a sequence of stream transformations. For $p \in \mathbb{N}$, define the \emph{stream transformer}
    \begin{equation}\label{eqn:streamtransformer}
        \Phi: \mathcal{S}(\mathbb{R}^d) \to \mathcal{S}(\mathbb{R}^p), \quad \Phi(\hat{\mathsf{x}}) = \left(\phi_1 \circ \dots \circ \phi_N\right)(\hat{\mathsf{x}}).
    \end{equation}
\end{definition}

\begin{remark}
    We note that $\mathcal{T}_\Delta([0, T], \mathbb{R}^d) \cong \mathcal{S}(\mathbb{R}^{d+1})$, so in particular all compositions of stream transformations are well-defined. This does not imply that the order in which the transformations are taken is not important.
\end{remark}

    \subsection{Inference in reproducing kernel Hilbert spaces}\label{subsec:mmd}

In this section we introduce the basics of inference with kernels. We define the signature kernel (including its higher-rank extension) and their associated maximum mean discrepancies (MMD). Finally, we introduce the signature kernel scoring rule, which can be used to perform inference in a set-to-point experimental framework. Again, we leave the majority of the details to Appendix \ref{appendix:rkhsmaterial}. For further details on kernel methods we direct the reader to \cite{gretton2006kernel, gretton2009fast, gretton2012kernel}.

\subsubsection{Reproducing kernel Hilbert spaces and the maximum mean discrepancy}

We begin this section with a brief introduction to kernels and reproducing kernel Hilbert spaces. In this section, $\mathcal{X}$ denotes a non-empty set and $\mathcal{H} = \{f: \mathcal{X} \to \mathbb{R}\}$ a Hilbert space of $\mathbb{R}$-functions on $\mathcal{X}$. In general $\mathcal{X}$ is not required to have any structure (it need not even be a topological space). The following quantity does give us a notion of distance between Borel measures on $\mathcal{X}$.

\begin{definition}[Maximum mean discrepancy, \cite{gretton2012kernel}, Definition 2]\label{def:maximummeandiscrepancy}
    Suppose $\mathcal{X}$ is a topological space. Let $\mathcal{F}$ be a class of bounded, measurable functions $f: \mathcal{X} \to \mathbb{R}$ and let $\mathbb{P}, \mathbb{Q} \in \mathcal{P}(\mathcal{X})$. Then, the \emph{maximum mean discrepancy} (MMD) between $\mathbb{P}$ and $\mathbb{Q}$ is given by
    \begin{equation}\label{eqn:mmd_general}
        \mathcal{D}^\mathcal{F}(\mathbb{P}, \mathbb{Q}) := \sup_{f \in \mathcal{F}}\Bigg(\ex_{x \sim \mathbb{P}}[f(x)] - \ex_{y \sim \mathbb{Q}}[f(y)] \Bigg).
    \end{equation}
    If $x = (x_1, \dots, x_n)$ and $y = (y_1, \dots, y_m)$ are i.i.d samples, $x_i \sim \mathbb{P}$ and $y_j \sim \mathbb{Q}$, then a \emph{biased empirical estimate} of the MMD is given by
    \begin{equation}\label{eqn:biased-mmd-general}
        \mathcal{D}_b^\mathcal{F}(x, y) := \sup_{f \in \mathcal{F}}\Bigg[\frac{1}{n}\sum_{i=1}^n f(x_i) - \frac{1}{m}\sum_{j=1}^m f(y_j) \Bigg].
    \end{equation}
\end{definition}

Equation (\ref{eqn:mmd_general}) refers to a more general class of distances, the integral probability metrics, which naturally depend on the choice of $\mathcal{F}$. For example, when $\mathcal{X} = \mathbb{R}^d$, identifying $\mathcal{F}$ with the class of continuous, bounded functions on $\mathbb{R}^d$ such that $\text{Lip}(f) \le 1$ recovers the 1-Wasserstein distance. 

Since we are looking to extend this definition to measures on compact sets $\mathcal{K} \subset C_p(I, E)$, we require a different approach. In particular we would like to choose $\mathcal{F}$ to be rich enough to induce metric-like properties on $\mathcal{D}^\mathcal{F}$ without sacrificing tractability, in the sense that evaluating the supremum on the RHS of eq. (\ref{eqn:mmd_general}) is computationally feasible. We begin with a fundamental definition.
\begin{definition}[Reproducing kernel Hilbert space, \cite{aronszajn1950theory}, Section 1.1]\label{def:reproducingkernelhilbertspace}
    Suppose $\mathcal{X}$ is a non-empty set and let $(\mathcal{H}, \langle \cdot, \cdot \rangle_\mathcal{H})$ be a Hilbert space of functions $f: \mathcal{X} \to \mathbb{R}$. We call a positive definite function $\kappa: \mathcal{X} \times \mathcal{X} \to \real$ a \emph{reproducing kernel} of $\mathcal{H}$ if 
    \begin{enumerate}[label=(\roman*)]
        \item For all $x \in \mathcal{X}$, we have that $\kappa(\cdot, x) \in \mathcal{H}$, and
        \item For all $x \in \mathcal{X}$ and $f \in \mathcal{H}$, one has that 
        \begin{equation}
            f(x) = \langle f(\cdot), \kappa(\cdot, x) \rangle_\mathcal{H},
        \end{equation}
        referred to as the \emph{reproducing property}.
    \end{enumerate}
    We call the Hilbert space $\mathcal{H}$ associated to $\kappa$ a \emph{reproducing kernel Hilbert space} (RKHS). 
\end{definition}
We can associate to each RKHS $\mathcal{H}$ the \emph{canonical feature map} $\varphi(x) = \kappa(\cdot, x)$. By the reproducing property one has that
\begin{equation*}
    \kappa(x,y) = \langle \kappa(\cdot, x), \kappa(\cdot, y) \rangle_\mathcal{H} = \langle \varphi(x), \varphi(y) \rangle_\mathcal{H} \qquad \text{for all }x, y \in \mathcal{X}.
\end{equation*}

The canonical feature map $\varphi$ gives a representation of elements in $\mathcal{X}$ in the linear inner product space $\mathcal{H}$. A simple extension allows one to represent measures $\mathbb{P} \in \mathcal{P}(\mathcal{X})$ via the \emph{mean embedding} $m(\mathbb{P}) = \mathbb{E}_{\mathbb{P}}[\varphi(X)]$. 

Naturally this is only a useful representation if the function $m: \mathcal{P}(\mathcal{X}) \to \mathcal{H}$ is injective. If so, a candidate distance between measures on $\mathcal{P}(\mathcal{X})$ becomes apparent: the distance (induced by the respective inner product) between their mean embeddings on $\mathcal{H}$. This injectivity property is referred to as \emph{characteristicness}, which is a fundamental requirement in this setting for the following reason.

\begin{theorem}[\cite{fukumizu2007kernel, gretton2012kernel}, Theorem 5]\label{theorem:mmdmetric}
    Let $\mathcal{F}$ be the unit ball of a RKHS $(\mathcal{H}, \kappa)$ comprised of $\mathbb{R}$-functions on a compact space $\mathcal{X}$. Assume $\kappa$ is characteristic to the space of Borel probability measures on $\mathcal{X}$. Then, for all $\mathbb{P}, \mathbb{Q} \in \mathcal{P}(\mathcal{X})$, one has that $$\mathcal{D}^\mathcal{F}(\mathbb{P}, \mathbb{Q}) = 0 \qquad \text{if and only if} \qquad \mathbb{P} = \mathbb{Q}.$$
\end{theorem}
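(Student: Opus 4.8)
The plan is to reduce the statement to the injectivity of the mean embedding $m:\mathcal{P}(\mathcal{X})\to\mathcal{H}$, $m(\mathbb{P})=\ex_{x\sim\mathbb{P}}[\varphi(x)]$ with canonical feature map $\varphi(x)=\kappa(\cdot,x)$, since that injectivity is exactly the hypothesis that $\kappa$ is characteristic. The bridge between the two is the identity $\mathcal{D}^\mathcal{F}(\mathbb{P},\mathbb{Q})=\norm{m(\mathbb{P})-m(\mathbb{Q})}_\mathcal{H}$, valid whenever $\mathcal{F}$ is the closed unit ball of $\mathcal{H}$; once this is established the theorem is immediate.

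First I would check that $m(\mathbb{P})$ is a genuine element of $\mathcal{H}$. Because $\mathcal{X}$ is compact and $\kappa$ is continuous (or at least bounded and measurable --- this mild regularity is implicit and should be recorded), $\norm{\varphi(x)}_\mathcal{H}^2=\kappa(x,x)\le C:=\sup_{x\in\mathcal{X}}\kappa(x,x)<\infty$, so $\varphi$ is Bochner integrable against any Borel probability measure on $\mathcal{X}$ and $m(\mathbb{P})\in\mathcal{H}$ with $\norm{m(\mathbb{P})}_\mathcal{H}\le\sqrt{C}$. The reproducing property then gives, for every $f\in\mathcal{H}$, \[\ex_{x\sim\mathbb{P}}[f(x)]=\ex_{x\sim\mathbb{P}}[\langle f,\varphi(x)\rangle_\mathcal{H}]=\langle f,m(\mathbb{P})\rangle_\mathcal{H},\] where the second equality is the defining property of the Bochner integral. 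Subtracting the analogous identity for $\mathbb{Q}$ and using linearity, \[\ex_{x\sim\mathbb{P}}[f(x)]-\ex_{y\sim\mathbb{Q}}[f(y)]=\langle f,m(\mathbb{P})-m(\mathbb{Q})\rangle_\mathcal{H}.\] Taking the supremum over $\norm{f}_\mathcal{H}\le 1$: Cauchy--Schwarz bounds the right-hand side by $\norm{m(\mathbb{P})-m(\mathbb{Q})}_\mathcal{H}$, and --- using that the unit ball is symmetric, so the one-sided supremum in \eqref{eqn:mmd_general} coincides with the supremum of the absolute value --- equality is attained at the normalised witness $f^\star=(m(\mathbb{P})-m(\mathbb{Q}))/\norm{m(\mathbb{P})-m(\mathbb{Q})}_\mathcal{H}$ when $m(\mathbb{P})\ne m(\mathbb{Q})$, and both sides are $0$ otherwise. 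This proves the bridging identity.

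It then only remains to chain equivalences: $\mathcal{D}^\mathcal{F}(\mathbb{P},\mathbb{Q})=0\iff\norm{m(\mathbb{P})-m(\mathbb{Q})}_\mathcal{H}=0\iff m(\mathbb{P})=m(\mathbb{Q})\iff\mathbb{P}=\mathbb{Q}$, the last step being precisely the assumed characteristicness of $\kappa$ on the Borel probability measures of $\mathcal{X}$; the direction $\mathbb{P}=\mathbb{Q}\Rightarrow\mathcal{D}^\mathcal{F}(\mathbb{P},\mathbb{Q})=0$ is clear directly from the definition. The only point that demands any care is the first step --- that $m(\mathbb{P})$ exists in $\mathcal{H}$ and that expectation commutes with $\langle\cdot,\cdot\rangle_\mathcal{H}$ --- which rests entirely on boundedness of the kernel; on a compact $\mathcal{X}$ with continuous $\kappa$ this is automatic, so I would simply state that assumption and move on. Everything after that is Cauchy--Schwarz plus the definition of ``characteristic'', with no substantive obstacle.
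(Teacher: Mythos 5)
Your proposal is correct and follows essentially the same route the paper (and its cited references) take: identify $\mathcal{D}^\mathcal{F}(\mathbb{P},\mathbb{Q})$ with $\norm{m(\mathbb{P})-m(\mathbb{Q})}_\mathcal{H}$ via the reproducing property, Cauchy--Schwarz and the normalised witness function $f^* = (m(\mathbb{P})-m(\mathbb{Q}))/\norm{m(\mathbb{P})-m(\mathbb{Q})}_\mathcal{H}$, exactly as in eq.~\eqref{eqn:squaredmmd} and Appendix~\ref{appendix:rkhsmaterial}, and then invoke characteristicness as injectivity of the mean embedding. Your explicit attention to Bochner integrability of $\varphi$ (boundedness of $\kappa$ on the compact $\mathcal{X}$) is a point the paper leaves implicit but raises no substantive issue.
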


If we choose $\mathcal{F}$ as in Theorem \ref{theorem:mmdmetric}, we can write the squared population MMD between two measures $\mathbb{P}, \mathbb{Q} \in \mathcal{P}(\mathcal{X})$ as (dropping the superscript $\mathcal{F}$ and instead replacing with $\kappa$ for emphasis):

\begin{equation}\label{eqn:squaredmmd}
    \mathcal{D}^\kappa(\mathbb{P}, \mathbb{Q})^2 = \norm{m(\mathbb{P}) - m(\mathbb{Q})}^2_\mathcal{H} = \langle m(\mathbb{P}) - m(\mathbb{Q}), m(\mathbb{P}) - m(\mathbb{Q}) \rangle_{\mathcal{H}}.
\end{equation}

Thus, given i.i.d samples $X = (x_1, \dots, x_n)$ and $Y = (y_1, \dots, y_m)$ where $x_i \sim \mathbb{P}$ for $i=1,\dots,n$ and $y_j \sim \mathbb{Q}$ for $j =1,\dots,m$, a biased estimate of eq. (\ref{eqn:squaredmmd}) is given by

\begin{equation}\label{eqn:biasedsamplemmd}
    \mathcal{D}^\kappa_b(\mathbb{P}, \mathbb{Q})^2 = \frac{1}{n^2}\sum_{i, j = 1}^n \kappa(x_i, x_j) - \frac{2}{mn}\sum_{i=1}^n\sum_{j=1}^m\kappa(x_i, y_j) + \frac{1}{m^2}\sum_{i, j=1}^m\kappa(y_i, y_j).
\end{equation}

and an unbiased estimator is given by 

\begin{equation}\label{eqn:unbiasedsamplemmd}
   \mathcal{D}^\kappa_u(\mathbb{P}, \mathbb{Q})^2 = \frac{1}{n(n-1)}\sum_{i\ne j = 1}^n \kappa(x_i, x_j) - \frac{2}{mn}\sum_{i=1}^n\sum_{j=1}^m\kappa(x_i, y_j) + \frac{1}{m(m-1)}\sum_{i \ne j}^m\kappa(y_i, y_j).
\end{equation}

Asymptotic consistency of these estimators is shown in \cite{gretton2012kernel}, Lemma 6.

\subsubsection{The two sample test, and consistency}

In the literature, the MMD has been used in the context of change-point detection \cite{https://doi.org/10.48550/arxiv.1202.3878, sinn2012detecting}, minimum distance estimation  \cite{briol2019statistical, cherief2022finite}, and generative modelling \cite{li2017mmd, mroueh2021convergence}. Each of these implementations utilizes the power of the MMD as a generalized two-sample test between empirical measures. 
\begin{definition}[MMD two sample test]
     Suppose $X = (x_1, \dots, x_n), x_i \sim \mathbb{P}$ for $i=1,\dots, n$ and $Y = (y_1, \dots, y_m), y_j \sim \mathbb{Q}$ for $j=1,\dots, m$ are i.i.d. Then, the \emph{MMD two-sample test problem} is to design an appropriate threshold $c_\alpha$ such that if $\mathcal{D}^\kappa(X, Y) > c_\alpha$, one can reject the null hypothesis $H_0$ that $\mathbb{P} = \mathbb{Q}$ at a confidence level $\alpha$.
\end{definition}
\begin{remark}
    Regarding the i.i.d. assumption: when studying (real) path data, or in certain synthetic cases, one cannot reasonably assume that successive samples are independent of each other. We refer the reader to \cite{cherief2022finite}, Section 3 for results regarding the convergence in probability of the sample MMD to the population MMD in the case that sames are not strictly independent. In particular, Theorem 3.2. gives a (distribution-free) bound which uses a version of McDiarmind's \cite{mcdiarmid1989method} inequality to bound the rate of convergence in probability in the case where samples are dependent.
\end{remark}

A consistent two-sample test achieves a Type I error of $\alpha$ (rejecting $H_0$ when it is true) and an asymptotic Type II error (accepting $H_0$ when $\mathbb{P} \ne \mathbb{Q}$) of $0$. It was shown in \cite{gretton2012kernel}, Theorem 7 that empirical (sample) MMD converges to the population MMD at a rate $\mathcal{O}((m+n)^{-1/2})$, where $n,m \in \mathbb{N}$ are the number of samples comprising $X$ and $Y$ respectively. As a consequence, a test at the level $(1-\alpha)$ for the biased empirical statistic has associated distribution-free bound $\sqrt{2K/n}(1+2\sqrt{\log \alpha^{-1}})$ (when $n=m$). However, this bound is much too conservative for sample sizes used in practice, and data-dependent bounds are often required to achieve a powerful test (and a less conservative $c_\alpha$).

To this, there are several choices one can use, see \cite{gretton2006kernel}, Theorem 8, or Appendix \ref{appendix:rkhsmaterial}. The most useful for our purposes involves estimating the null distribution via a Gamma approximation, which is $\mathcal{O}(n^2)$ to calculate in the number of empirical samples $n$. However, in our setting we are often confined to using very few samples and thus a simple bootstrapping technique is often sufficient. We outline further in Section \ref{subsec:mrcpmethods}.

\subsubsection{The (higher-rank) signature MMD}

In this section we introduce a kernel defined on compact sets $\mathcal{K} \subset C(I, E)$. In particular we would like to leverage the signature map $S$ from Definition \ref{def:signature}. Given that $S$ can be thought of as the canonical feature map on path space, it makes sense to define a kernel $k(\cdot, x) = S(x)$ through $S$ in the following manner.
\begin{definition}[Signature kernel, \cite{chevyrev2018signature}, Theorem 7.7]\label{def:signaturekernel}
    Suppose $1 \le p < 2$. Recall $C_p(I, E)$ is the set of paths of bounded $p$-variation evolving in a Hilbert space $V$ over the interval $I = [a, b]$. We call the bounded, positive semi-definite function
    \begin{equation}\label{eqn:signaturekernel}
        k_{\text{sig}}: C_p(I, V) \times C_p(I, V) \to \mathbb{R}, \qquad k_{\text{sig}}(x, y) = \langle S(x)_{[a,b]}, S(y)_{[a, b]}\rangle_{T((V))}
    \end{equation}
    the \emph{signature kernel}. For $\mathcal{K} \subset C_p(I, E)$ compact, $k_{\text{sig}}$ is universal to $C_b(\mathcal{K}, \mathbb{R})$ equipped with the strict topology. It is characteristic to finite, signed Borel measures on $\mathcal{K}$.
\end{definition}

Since the signature kernel (\ref{eqn:signaturekernel}) is characteristic (and, equivalently, universal), the MMD with respect to the RKHS $(\mathcal{H}, k_{\text{sig}})$ will be a metric. The authors in \cite{chevyrev2018signature} included a tensor normalisation in order to extend the definition to non-compact spaces. This is not required in our setting and we omit the inclusion to simplify the definition.

In order to define an MMD through $(\mathcal{H}, k_\text{sig})$, we must be able to evaluate the signature kernel $k_{\text{sig}}(x,y)$ where $x, y \in C_p(I, V)$. As of writing, until recently there has not been an explicit kernel trick associated to $k_{\text{sig}}$. Instead, the \emph{truncated signature kernel} $k_{\text{sig}}^N(x,y) = \langle S^N(x), S^N(y) \rangle_{T^N(V)}$ was used as an approximation to the true value of $k_\text{sig}(x, y)$. This technique proved popular, motivated by the factorial decay property exhibited by terms in the signature, see eq. \eqref{eqn:factorialdecay} in the Appendix.

However, in truncation one necessarily loses information embedded in the path that may be relevant. The fact that signature terms exhibit factorial decay does not alleviate this. In our calculations, we employ the following kernel trick which gives an explicit evaluation of the signature kernel between two paths without requiring truncation. 
\begin{theorem}[\cite{salvi2021signature}, Theorem 2.5]\label{thm:kerneltrick}
    Suppose $V$ is a $d$-dimensional Banach space. Let $I = [0, T]$ and $J = [0, T]$ be compact intervals and let $X, Y \in C_1(I, V)$. The signature kernel $k_{\text{sig}}$ from Definition \ref{def:signaturekernel} is the solution of the following linear, second order hyperbolic PDE
    \begin{equation}\label{eqn:sigpde}
        k_{\text{sig}}(x,y) = f(T, T), \qquad \text{where } f(s, t) = 1 + \int_0^s \int_0^t f(u, v) \, \langle dx_u, dy_v \rangle_V. 
    \end{equation}
\end{theorem}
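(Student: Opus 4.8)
The plan is to compute the endpoint-dependent quantity $K(s,t) := \langle S(x)_{[0,s]}, S(y)_{[0,t]}\rangle_{T((V))}$, show it satisfies the integral equation in \eqref{eqn:sigpde}, and then read off $k_{\text{sig}}(x,y) = K(T,T) = f(T,T)$. I interpret ``solution of the PDE'' in the weak/integral sense of \eqref{eqn:sigpde}; when $x,y$ are genuinely $C^1$ one recovers the classical Goursat form $\partial_s\partial_t f = \langle \dot x_s, \dot y_t\rangle_V\, f$ simply by differentiating the integral equation twice, so there is no loss in working with the integral form throughout. Uniqueness, which is what makes ``the solution'' meaningful, I would establish at the end by a two-dimensional Grönwall estimate: if $f_1,f_2$ both solve \eqref{eqn:sigpde} and $g=f_1-f_2$, then $|g(s,t)|\le \int_0^s\!\int_0^t |g(u,v)|\, d|x|_u\, d|y|_v$ forces $g\equiv 0$.

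\textbf{Step 1 (series representation).} Expanding $S(x)_{[0,s]}=(1,\mathbb{X}^1_{[0,s]},\dots)$ and likewise for $y$ via Definition \ref{def:signature}, and using that the inner product on $T((V))$ is the orthogonal sum of the inner products on the $V^{\otimes k}$ together with $\langle v_1\otimes\cdots\otimes v_k,\, w_1\otimes\cdots\otimes w_k\rangle_{V^{\otimes k}} = \prod_{\ell=1}^k\langle v_\ell,w_\ell\rangle_V$, the sum over all pairs of multi-indices collapses and one gets
\[
K(s,t) \;=\; \sum_{k\ge 0}\ \idotsint\limits_{\substack{0<u_1<\cdots<u_k<s\\ 0<v_1<\cdots<v_k<t}}\ \prod_{\ell=1}^k \langle dx_{u_\ell}, dy_{v_\ell}\rangle_V,
\]
where $\langle dx_u,dy_v\rangle_V = \sum_i dx^i_u\,dy^i_v$ is a finite signed measure on $[0,s]\times[0,t]$ (finite because $x,y$ have bounded variation, and basis-independent because the inner product is). I would first record the standard factorial-decay bound: the $k$-th term is dominated in absolute value by $\|x\|_{1\text{-var};[0,T]}^k\,\|y\|_{1\text{-var};[0,T]}^k/(k!)^2$, since the relevant domain is a product of two $k$-simplices; hence the series converges absolutely and uniformly in $(s,t)$.

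\textbf{Step 2 (peeling off the last integration).} In the $k$-th summand isolate the integration in $(u_k,v_k)$ over all of $[0,s]\times[0,t]$; the remaining $(k-1)$-fold iterated integral over $\{0<u_1<\cdots<u_{k-1}<u_k\}\times\{0<v_1<\cdots<v_{k-1}<v_k\}$ is exactly the $(k-1)$-th term of the series with $(s,t)$ replaced by $(u_k,v_k)$. Summing over $k\ge 1$ and using the uniform absolute convergence from Step 1 to interchange the sum with the double integral (dominated convergence), one obtains $K(s,t) = 1 + \int_0^s\!\int_0^t K(u,v)\,\langle dx_u,dy_v\rangle_V$, which is precisely \eqref{eqn:sigpde}. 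Evaluating at $s=t=T$ and recalling $k_{\text{sig}}(x,y)=\langle S(x)_{[0,T]}, S(y)_{[0,T]}\rangle = K(T,T)$ finishes existence, and Step~0's Grönwall argument finishes uniqueness. A slicker alternative I would likely present instead of the reindexing: use Chen's fixed-point form $S(x)_{[0,s]} = 1 + \int_0^s S(x)_{[0,u]}\otimes dx_u$, the compatibility $\langle a\otimes v,\, b\otimes w\rangle = \langle a,b\rangle\langle v,w\rangle$, and orthogonality of distinct tensor degrees (which kills the two cross terms $\langle 1,\int S(y)\otimes dy\rangle$), yielding $K(s,t)=1+\int_0^s\!\int_0^t \langle S(x)_{[0,u]},S(y)_{[0,v]}\rangle\,\langle dx_u,dy_v\rangle$ directly.

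\textbf{Main obstacle.} In both routes the one genuinely non-bookkeeping point is the same: rigorously moving the inner product on the infinite-dimensional graded space $T((V))$ through the (double) integral sign. This is exactly where the factorial-decay estimates and a Fubini/dominated-convergence argument are needed; once that interchange is licensed, the rest — the tensor-inner-product identities, Chen's identity for bounded-variation paths, and the final Grönwall uniqueness step — is routine.
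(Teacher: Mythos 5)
Your argument is correct and is essentially the standard proof of this result: the paper itself states Theorem \ref{thm:kerneltrick} by citation only, and the proof in \cite{salvi2021signature} proceeds exactly as you do, via the graded series expansion of $\langle S(x)_{[0,s]},S(y)_{[0,t]}\rangle_{T((V))}$ (equivalently Chen's fixed-point identity), factorial-decay estimates to justify exchanging the sum with the double integral, and a two-parameter Gr\"onwall/Picard iteration for uniqueness of the Goursat problem \eqref{eqn:sigpde}. No gaps worth flagging beyond the interchange-of-limits point you already identified and handled.
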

In this way, the kernel trick for the signature kernel is reduced to numerically solving the hyperbolic PDE (\ref{eqn:sigpde}). As $k_{\text{sig}}$ is bounded (\cite{salvi2021signature}, Lemma 4.5) the associated MMD two-sample test is consistent by Theorem \ref{thm:mmdconsistency}. 

Since $k_{\text{sig}}$ is approximated via a finite-difference scheme, there exists a discrepancy between the resulting solution $\tilde{k}_{\text{sig}}(x, y)$ and the true value $k_{\text{sig}}(x, y)$. The global error of approximation is given in \cite{salvi2021signature} as being $\mathcal{O}(2^{-2\lambda})$, where $\lambda$ is a chosen level of dyadic refinement over the time grid $I \times I$. The coarsest grid is given by $P_0 := I \times I$. As noted, for suitably normalised paths $X, Y$, even coarse partitions give well-approximated results.

Finally, as noted in \cite{chevyrev2018signature}, Section 5.5, one may wish to lift paths evolving in a general topological space $\mathcal{T}$ to those in a linear space $V$ via a feature map $\varphi: \mathcal{T} \to V$. Such transformations form the fundamental basis for kernel methods and are known to assist the process of learning non-linear relationships on sets of data. In our experiments we often make use of the following lifted kernel, corresponding to $\varphi$ being the canonical feature map associated to the radial basis function (RBF) kernel on $\mathbb{R}^d$.
\begin{definition}[Lifted signature kernel, \cite{lemercier2020distribution}, Theorem 3.3]\label{def:liftedsignaturekernelrank1}
    Let $X, Y \in C(I, E)$ and suppose $\sigma > 0$. Denote by $\varphi(x) = \kappa_\sigma(x, \cdot)$ as the canonical feature map associated to the RBF kernel with scaling parameter $\sigma$. 
    
    One has that the \emph{RBF-lifted signature kernel}
    \begin{equation}\label{eqn:liftedsignaturekernel}
        k_{\text{sig}}^\varphi(x,y) = \langle S\circ \varphi(x), S\circ \varphi(y) \rangle_{T((E))}
    \end{equation}
    is universal, that is, for compact sets $\mathcal{X} \subset C(I, E)$ the associated RKHS $(H, k^\varphi_{\text{sig}})$ is dense in the space of bounded continuous functions from $\mathcal{X}$ to $\mathbb{R}$. 
\end{definition}
\begin{remark}
    Because we only ever work directly with pairwise evaluations of paths, infinite-dimensional feature maps associated to static kernels $\kappa$ do not need to be calculated (due to the kernel trick).
\end{remark}

The fact that the MMD corresponding to $(\mathcal{H}, k_{\text{sig}}^\varphi)$ is a metric on $\mathcal{P}(\mathcal{K})$ is a direct consequence of Theorem 3.3 in \cite{lemercier2020distribution}. In what follows we will suppress the dependence on the lift and explicitly state which we are applying given the context, which will be either the identity mapping (corresponding to the classical signature kernel) or the RBF-lifted kernel.

In the final part of this section, we briefly introduce the concept of the \emph{higher-rank} signature and the associated \emph{higher-rank signature kernel}. The majority of the details can be found in the Appendix \ref{appendix:adaptedprocesses}, and further specifics can be found in \cite{salvi2021higher}. We summarize the main ideas here. From Theorem \ref{thm:expectedsignaturelaws}, given a filtered probability space $(\Omega, \mathcal{F}, (\mathcal{F}_t)_{t \in I}, \mathbb{P})$ and a $\mathcal{F}$-adapted stochastic process $X$, the expected signature of $X$ is enough to characterize its law $\mathcal{L}(X)$. It turns out that this is not the case for \emph{conditional} distributions of $X$ with respect to $\mathcal{F}_t$. For a filtered process $\boldsymbol{X} = (\Omega, \mathcal{F}, (\mathcal{F}_t)_{t\in I},\mathbb{P}, X)$, the object we are interested in studying is the \emph{prediction process} 

\begin{equation}\label{eqn:predictionprocess}
    \hat{X}_t = \mathbb{P}\left[X \in \cdot | \mathcal{F}_t \right],
\end{equation}

or the distribution of $X$ conditional on $\mathcal{F}_t$ at time $t \in I$. One can repeat this process of iterated conditioning to build a prediction process of the prediction process, and so on. The number of iterates determines the \emph{rank} of the prediction process. We call eq. \eqref{eqn:predictionprocess} the \emph{rank 1} adapted process with associated law $\mathcal{L}(\hat{X}^1)$. The classical law of $X$ can be identified with the rank 0 adapted process, $\mathcal{L}(X) = \mathcal{L}(\hat{X}^0)$. For two filtered processes $\boldsymbol{X}, \boldsymbol{Y}$, we write $\boldsymbol{X} \sim_r \boldsymbol{Y}$ if $\mathcal{L}(\hat{X}^r) = \mathcal{L}(\hat{Y}^r)$, and $\boldsymbol{X} \sim \boldsymbol{Y}$ if $\boldsymbol{X} \sim_r \boldsymbol{Y}$ for all $r \ge 0$. Again one can associate the classical (weak) equality in law between two stochastic processes $X, Y$ with $\boldsymbol{X} \sim_0 \boldsymbol{Y}$.

We know that the mean embedding of the signature mapping $S$ is able to distinguish between compactly supported laws of stochastic processes, i.e., we have that $$\mathbb{E}S(X) = \mathbb{E}S(Y) \iff \boldsymbol{X} \sim_0 \boldsymbol{Y}.$$ The classical signature map is referred to as the \emph{rank 1 signature}, and we write $S^1$. We can thus term the expected signature as the \emph{rank 1 kernel mean embedding (KME)}. If we wish to be able to conclude when we have that $\boldsymbol{X} \sim_1 \boldsymbol{Y}$ (i.e., equality in conditional law), we are necessarily required to use a higher-order KME, and thus a higher-order signature mapping (see Definition \ref{def:rank2signature}), which we call $S^2$. It turns out that the difference in the KME between two filtered processes $\boldsymbol{X}, \boldsymbol{Y}$ with respect to the rank 2 signature is enough to conclude whether $\boldsymbol{X} \sim_1 \boldsymbol{Y}$. Thus in much the same way as we were able to define the (rank 1) signature kernel $k_{\text{sig}}^1:= k_{\text{sig}}$, we can introduce the \emph{rank 2} equivalent 
\begin{equation*}
    k_{\text{sig}}^2(x, y) = \langle S^2(x), S^2(y) \rangle_{T^2((E))},
\end{equation*}
where $T^2((E)) = T((T((E))))$. Again, a kernel trick does exist for $k_{\text{sig}}^2$ in much the same way that it did for $k_{\text{sig}}^1$, see \cite{salvi2021higher}, Theorem 1. We formalise the statement of maximum mean discrepancy considered in this paper with the following.
\begin{definition}[Rank-$r$ signature MMD, \cite{salvi2021higher}]\label{def:rankrmmd}
    For $1 \le p < 2$, let $\mathcal{K} \subset C_p(I, V)$ be compact. Suppose that $\mathbb{P}, \mathbb{Q} \in \mathcal{P}(\mathcal{K})$ are empirical measures. Then, the \emph{rank $r$ maximum mean discrepancy} is given by 
    \begin{equation}\label{eqn:rankrmmdrbf}
        \mathcal{D}^r_{\text{sig}}(\mathbb{P}, \mathbb{Q}) = \norm{\mathbb{E}S^r(\mathbb{P}) - \mathbb{E}S^r(\mathbb{Q})}_{T^r((E))}.
    \end{equation}
\end{definition}

In the same way that we composed the rank 1 signature kernel with state-space kernels, we can compose each iterated rank with a static kernel $\kappa^r(x, y)$. In our case, this will again be the RBF kernel, and thus there are $r$ smoothing parameters $\sigma \in \mathbb{R}^r$ to be chosen when this approach is taken.

Finally, as alluded to in the preamble to Definition \ref{def:rankrmmd}, we have the following, crucial result.

\begin{theorem}[\cite{salvi2021higher}, Theorem 4]\label{thm:highrankmmd}
    The rank $r$ MMD metrizes the rank $r-1$ adapted topology. That is for filtered processes $\boldsymbol{X}, \boldsymbol{Y}\in \mathcal{FP}_I$,
    \begin{equation}
        \mathcal{D}^r_{\text{sig}}(X, Y) = 0 \iff \boldsymbol{X} \sim_{r-1}\boldsymbol{Y}.
    \end{equation}
\end{theorem}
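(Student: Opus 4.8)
The plan is to prove the equivalence by induction on the rank $r$, exploiting the recursive ``signature-of-a-signature'' structure of the higher-rank signature (Definition \ref{def:rank2signature}) to strip off one layer of iterated conditioning per step, and to close each step with characteristicness of the expected signature (Theorems \ref{thm:expectedsignature}--\ref{thm:expectedsignaturelaws}). Writing $\iota := \mathbb{E}S$ for the rank-$1$ kernel mean embedding --- which by characteristicness of $k_{\text{sig}}$ in Definition \ref{def:signaturekernel} embeds $\mathcal{P}(\mathcal{K})$ injectively into $T((E))$ for every compact $\mathcal{K}\subset C_p(I,V)$ --- I would first record the identification $T^r((E)) = T((T^{r-1}((E))))$ and argue that, for a filtered process $\boldsymbol{X}$ with prediction process $\hat{\boldsymbol{X}}^1$ as in \eqref{eqn:predictionprocess}, the rank-$r$ signature of $\boldsymbol{X}$ coincides with the rank-$(r-1)$ signature of the $T((E))$-valued filtered process $\iota\circ\hat{\boldsymbol{X}}^1$ obtained by pushing the measure-valued prediction process through $\iota$. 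Unwinding Definition \ref{def:rankrmmd}, this would yield the key recursion $\mathcal{D}^r_{\text{sig}}(\boldsymbol{X},\boldsymbol{Y}) = \mathcal{D}^{r-1}_{\text{sig}}(\iota\circ\hat{\boldsymbol{X}}^1,\ \iota\circ\hat{\boldsymbol{Y}}^1)$ for $r\ge 2$.

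The base case $r=1$ is immediate: $\mathcal{D}^1_{\text{sig}}(\boldsymbol{X},\boldsymbol{Y}) = \norm{\mathbb{E}S(\mathbb{P}_X)-\mathbb{E}S(\mathbb{P}_Y)}_{T((E))}$ vanishes iff $\mathbb{E}S(\mathbb{P}_X)=\mathbb{E}S(\mathbb{P}_Y)$, which by Theorem \ref{thm:expectedsignaturelaws} is $\mathbb{P}_X=\mathbb{P}_Y$, i.e. $\boldsymbol{X}\sim_0\boldsymbol{Y}$. For the inductive step I would assume the claim at rank $r-1$ (so $\mathcal{D}^{r-1}_{\text{sig}}$ metrizes $\sim_{r-2}$) and, given $\boldsymbol{X},\boldsymbol{Y}\in\mathcal{FP}_I$, first check that all conditional laws $\hat{X}^1_t$ lie in the weakly compact $\mathcal{P}(\mathcal{K})$, so that continuity of $\iota$ confines the lifted prediction paths to a fixed compact class on which the rank-$(r-1)$ theory (which is stated for paths valued in a Banach space) applies to $\iota\circ\hat{\boldsymbol{X}}^1$ and $\iota\circ\hat{\boldsymbol{Y}}^1$. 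Combining the recursion with the inductive hypothesis gives $\mathcal{D}^r_{\text{sig}}(\boldsymbol{X},\boldsymbol{Y})=0 \iff \iota\circ\hat{\boldsymbol{X}}^1\sim_{r-2}\iota\circ\hat{\boldsymbol{Y}}^1$. I would then invoke a measure-theoretic lemma --- since $\iota$ is injective and continuous between Polish spaces it is, by Lusin--Souslin, a Borel isomorphism onto its image, and applying it path-wise is therefore an isomorphism of filtered processes that commutes with regular conditional distributions --- to conclude $\iota\circ\hat{\boldsymbol{X}}^1\sim_{r-2}\iota\circ\hat{\boldsymbol{Y}}^1 \iff \hat{\boldsymbol{X}}^1\sim_{r-2}\hat{\boldsymbol{Y}}^1$. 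Finally, since iterating the conditioning gives $\widehat{(\hat{\boldsymbol{X}}^1)}^{\,r-2}=\hat{\boldsymbol{X}}^{\,r-1}$, the last equivalence unwinds to $\mathcal{L}(\hat{X}^{r-1})=\mathcal{L}(\hat{Y}^{r-1})$, i.e. $\boldsymbol{X}\sim_{r-1}\boldsymbol{Y}$, which closes the induction.

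The hard part, I expect, will be making the recursion $\mathcal{D}^r_{\text{sig}}(\boldsymbol{X},\boldsymbol{Y}) = \mathcal{D}^{r-1}_{\text{sig}}(\iota\circ\hat{\boldsymbol{X}}^1,\iota\circ\hat{\boldsymbol{Y}}^1)$ rigorous: one has to verify from the precise construction in Definition \ref{def:rank2signature} that the rank-$r$ signature genuinely is the rank-$(r-1)$ object attached to the $\iota$-lifted prediction process, and simultaneously that measurability of the prediction-process map, existence of regular conditional distributions on the relevant path spaces, and compactness (including $p$-variation control) of every intermediate space are preserved under each lift, so that the characteristicness input can honestly be reused at every level. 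The ``$\Leftarrow$'' direction is by contrast easy --- if $\boldsymbol{X}\sim_{r-1}\boldsymbol{Y}$ then $\hat{X}^{r-1}$ and $\hat{Y}^{r-1}$ share a law, hence so does any functional of them, in particular the rank-$r$ expected signature --- and the base case, the Borel-isomorphism lemma, and the iterated-conditioning identity are comparatively routine.
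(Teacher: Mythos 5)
This theorem is not proved in the paper at all: it is imported verbatim from \cite{salvi2021higher} (Theorem 4), so there is no internal argument to compare against. Judged on its own terms, your sketch follows essentially the same strategy as the source literature (\cite{salvi2021higher, bonnier2020adapted}): induction on the rank, using the fact that by Definition \ref{def:rank2signature} the rank-$r$ expected signature of $\boldsymbol{X}$ is the rank-$(r-1)$ expected signature of the prediction process pushed through the rank-$1$ embedding $\iota=\mathbb{E}S$, closing the base case with characteristicness of the expected signature on compact sets (Theorems \ref{thm:expectedsignature}--\ref{thm:expectedsignaturelaws}), and transferring $\sim_{r-2}$ back through the injective Borel map $\iota$. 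The ``$\Leftarrow$'' direction is indeed immediate since $\mathbb{E}S^r(X)$ is a functional of $\mathcal{L}(\hat{X}^{r-1})$ alone. The caveats you flag are the genuine ones, and in the discrete-time setting of $\mathcal{FP}_I$ used here they are where all the real work sits: (i) the recursion $\mathcal{D}^r_{\text{sig}}(\boldsymbol{X},\boldsymbol{Y})=\mathcal{D}^{r-1}_{\text{sig}}(\iota\circ\hat{\boldsymbol{X}}^1,\iota\circ\hat{\boldsymbol{Y}}^1)$ must be checked against the precise normalisations in the definition of $S^r$; (ii) characteristicness at each level needs not only weak compactness of $\mathcal{P}(\mathcal{K})$ and of the lifted path family, but also that the $T((E))$-valued lifted paths are signature-injective (tree-like reduced, in practice time-augmented) and of controlled variation, since Theorem \ref{thm:expectedsignature} is stated only under those hypotheses; and (iii) one must verify that the norm on $T^r((E))$ in Definition \ref{def:rankrmmd} actually separates the expected signatures (a normalisation/weighting issue in the cited works). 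None of these looks fatal, but until (i) and (ii) are written out the argument is a faithful outline of the known proof rather than a complete one.
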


In this way, the rank 2 signature kernel MMD $\mathcal{D}^2_\text{sig}$ can be used in a two-sample testing framework if we wish to incorporate filtration information. In fact, it follows that if $\mathcal{D}^2_\text{sig}(X, Y) = 0$, one must have that $\mathcal{D}^1_\text{sig}(X, Y) = 0$ (\cite{salvi2021higher}, Theorem 2), making the rank 2 signature kernel MMD a strictly better metric to use when considering compactly supported measures on path space. The main drawback (which will be shown in Subsection \ref{subsec:higherrankdetection}) is the computational time associated to the rank 2 MMD is some levels of magnitude greater than its rank 1 equivalent.

\subsubsection{Signature kernel scores}

In this section we introduce a quantity which allows us to perform set-to-point comparisons in path space, that is, comparisons from measures on compact sets of paths to single paths objects.

The class of \emph{scoring rules} have typically been employed in the context of evaluating a probabilistic forecast $\mathbb{P} \in \mathcal{P}(\mathcal{X})$ against a single evaluation $y \in\mathcal{X}$, see \cite{gneiting2007strictly} for an introduction and \cite{merkle2013choosing, gneiting2005weather, pacchiardi2021probabilistic} for applications in economics, weather forecasting, and supervised learning. 

\begin{definition}[Scoring rule, \cite{gneiting2007strictly}]
    Let $\mathcal{P}$ be a convex class of measures on a probability space $(\mathcal{X}, \mathcal{A})$. A \emph{scoring rule} $s: \mathcal{P} \times \mathcal{X} \to [-\infty, \infty]$ is any function such that $s(\mathbb{P}, \cdot)$ is $\mathcal{P}$\emph{-quasi integrable} for all $\mathbb{P} \in \mathcal{P}$. 
\end{definition}
The quasi-integrable condition is necessary to ensure the existence of the \emph{expected scoring rule}
\begin{equation*}
    s(\mathbb{P}, \mathbb{Q}) = \int_{\Omega} s(\mathbb{P}, \omega)\, d\mathbb{Q}(\omega) \qquad \text{for }\mathbb{P}, \mathbb{Q} \in \mathcal{P},
\end{equation*}
which represents the average penalty applied to the prediction $\mathbb{P} \in \mathcal{P}$ assuming the true underlying measure is $\mathbb{Q}$. An important property that a given scoring rule must possess is the following. 
\begin{definition}[Properness, \cite{gneiting2007strictly}]
    A scoring rule $s: \mathcal{P} \times \mathcal{X} \to [-\infty, +\infty]$ is called \emph{proper} (relative to the class $\mathcal{P}$) if $s(\mathbb{P}, \mathbb{P}) \le s(\mathbb{Q}, \mathbb{P})$ for all $\mathbb{P}, \mathbb{Q} \in \mathcal{P}$. It is called \emph{strictly proper} if $\mathbb{Q}=\mathbb{P}$ is the unique minimiser.
\end{definition}
If a scoring rule is strictly proper, one can define a statistical divergence $D(\mathbb{P}||\mathbb{Q}) = s(\mathbb{Q},\mathbb{P}) - s(\mathbb{P}, \mathbb{P})$. For a general strictly proper scoring rule, this divergence is not guaranteed to be a metric - it is not even necessarily symmetric. However, we will focus on a special class of scores where this is indeed the case. 
\begin{definition}[Kernel scores, \cite{gneiting2007strictly}, Section 5]
    Suppose $k: \mathcal{X} \times \mathcal{X} \to \mathbb{R}$ is a continuous, positive semi-definite kernel on $\mathcal{X}$. The associated \emph{kernel scoring rule} 
    \begin{equation*}
        s_k(\mathbb{P}, y) = \mathbb{E}_{X, X' \sim \mathbb{P}}[k(X, X')] - 2\mathbb{E}_{X \sim \mathbb{P}}[k(X, y)]
    \end{equation*}
    is proper relative to $\mathcal{P}$, the set of of Borel probability measures on $\mathcal{X}$ for which $\mathbb{E}_{\mathbb{P}}[k(X, X')]$ is finite for all $\mathbb{P} \in \mathcal{P}$.
\end{definition}
If we choose $k = k^r_{\text{sig}}$ to be the rank-$r$ signature kernel, where $\mathcal{X} \subset C_p([0, T]; \mathbb{R}^d)$ is compact, then we arrive at the following. 
\begin{definition}[\cite{issa2023non}, Proposition 3.3]\label{def:signaturekernelscore}
    Let $\mathcal{K}\subset C_p([0, T];V)$ be a compact set of paths. The \emph{signature kernel score} $s_{k^r_{\text{sig}}}: \mathcal{P}(\mathcal{K}) \times \mathcal{K}\to \mathbb{R}$ is strictly proper relative to $\mathcal{P}(\mathcal{K})$. 
\end{definition}

Much the same as the MMD, given observations $\{x_i\}_{i=1}^N$ where $x_i \sim \mathbb{P}$ and $y\sim \mathbb{Q}$, an unbiased estimator for $s_k(\mathbb{P}, y)$ is given by 

\begin{equation}\label{eqn:scoringunbiased}
    \hat{s}_k(\mathbb{P}, y) = \frac{1}{N(N-1)}\sum_{i\ne j} k(x_i, x_j) - \frac{2}{N}\sum_{i=1}^N k(x_i, y).
\end{equation}

For a proof of this result, see \cite{issa2023non}, Proposition 3.4.

    \section{Experimental methods}\label{sec:experiments}

    \subsection{Partitioning of path data}\label{subsec:partitioning}

In this section we outline the data preprocessing procedure required to perform our experiments. In what follows we suppose $\hat{\mathsf{s}} \in \mathcal{T}_\Delta(I, \mathbb{R}^d)$ is a time-augmented $d$-dimensional stream of data over an interval $[0, T]$. For simplicity we assume this is representing (close) prices of $d$ financial instruments. We note here that our methodology is not restricted to just prices: in theory, one could incorporate any data deemed relevant to the regime detection or classification problem at hand.

The unbiased estimator of the maximum mean discrepancy (cf. eq (\ref{eqn:unbiasedsamplemmd}) evaluates sets of i.i.d samples $(x_1, \dots, x_n)$ and $(y_1, \dots, y_m)$, where  $x_i \sim \mathbb{P}$ and $y_m \sim \mathbb{Q}$. Recalling Problem \ref{prob:regimedetectionproblem}, we wish to compare data extracted from $\hat{\mathsf{s}}$ against either a set of prior beliefs $\mathfrak{P} = (\mathfrak{P}_1, \dots, \mathfrak{P}_k)$, or directly against the observed path itself. The method of extraction needs to be order-preserving. Thus, we seek to decompose $\hat{\mathsf{s}}$ into a collection of \emph{ordered sub-paths} $(s_j)_{j\ge 0} \subset \Pi_{\Delta}(\hat{\mathsf{s}})$, (recall from Def. \ref{def:setofsubpaths} that $\Pi_{\Delta}(\mathsf{s})$ is the set of all collections of sub-paths obtainable from the embedded observation $S = \pi(\mathsf{s})$ over the mesh grid $\Delta$) in the following manner.
\begin{definition}[Sub-paths]\label{def:subpaths}
    Let $\hat{\mathsf{s}} \in \mathcal{T}_\Delta(I, \mathbb{R}^d)$ and let $h = (h_1, h_2)$ be a vector of hyperparameters. Then, the sequence of \emph{ordered sub-paths} $\mathcal{SP}_h(\hat{\mathsf{s}})$ associated to $\hat{\mathsf{s}}$ is given by
    \begin{equation}\label{eqn:subpathsexperiment1}
        \mathcal{SP}_h(\hat{\mathsf{s}}) = \left(s_j = (\hat{\mathsf{s}}_{jh_1}, \dots, \hat{\mathsf{s}}_{h_1(j + 1)-1})\right)_{j=0}^{N_1} \subset \Pi_{\Delta}(\hat{\mathsf{s}}),
    \end{equation}
    where $N_1 = \lfloor N/h_1\rfloor$ is the maximum number of sub-paths of length $h_1$ that can be generated from a time-augmented path comprised of $N$ observations. 
\end{definition}
The set $\mathcal{SP}_h(\hat{\mathsf{s}}) \in \mathcal{S}\left(\mathcal{T}_\Delta(I, \mathbb{R}^d)\right)$ can be viewed as an (ordered) stream of paths, much in the same spirit as the windowing operation $\mathsf{X}$ from Definition \ref{def:regime}, except we re-sample from $\mathsf{X}$ so as to exclude all overlapping segments.

Should we wish to apply the MMD, we now need to aggregate these path samples into ensembles, which we do via the following.
\begin{definition}[Ensemble paths]\label{def:ensemblepaths}
    With the same notation as Definition \ref{def:subpaths}, the set of \emph{(ordered) ensemble paths} $\mathcal{EP}_h(\hat{\mathsf{s}})$ relative to $\mathcal{SP}_h(\hat{\mathsf{s}})$ is given by		
    \begin{equation}\label{eqn:ensemblepathsexperiment1}
        \mathcal{EP}_h(\hat{\mathsf{s}}) = \left\{\boldsymbol{s}^k = \left(s_j\right)_{j=k}^{k+h_2} : s_j \in \mathcal{SP}^\Phi_h(\hat{\mathsf{s}}), \, k = 0, \dots, N_2\right\},
    \end{equation}
    where $N_2= N_1 - h_2$ is the maximum number of collections of size $h_2$ that can be extracted from $\mathcal{SP}_h(\hat{\mathsf{s}})$. 
\end{definition}
\begin{remark}
    Computationally, the set $\mathcal{SP}_h(\hat{\mathsf{s}})$ is represented as a $(N_1 \times h_1 \times d+1)$ tensor, and $\mathcal{EP}_h(\hat{\mathsf{s}})$ as a $(N_2 \times h_2 \times h_1 \times d+1)$ tensor.
\end{remark}
We incorporate the application of successive path transformations in the following manner. Recalling Definition \ref{def:streamtransformer}, for a given stream transformer $\Phi: \mathcal{S}(\mathbb{R}^d) \to \mathcal{S}(\mathbb{R}^{p})$, we can obtain the set of \emph{transformed sub-paths} 
\begin{equation}\label{eqn:transformedsubpaths}
    \mathcal{SP}^\Phi_h(\hat{\mathsf{s}}) = \left\{\Phi(s) : s \in \mathcal{SP}_h(\hat{\mathsf{s}}) \right\}.
\end{equation}
Applying no transformations is equivalent to setting $\Phi \equiv \mathrm{Id}$. Using (\ref{eqn:transformedsubpaths}) we can then naturally calculate the set of transformed ensemble paths $\mathcal{EP}^\Phi_h(\hat{\mathsf{s}})$. We will drop the superscript $\Phi$ if no stream transformations are applied or if it is obvious from the context.
\begin{remark}[Choice of $h$]
    We note here that the choice of $h = (h_1, h_2)$ is largely dependent on the characteristics of the data one is working with. The choice of $h_1$, for instance, might be limited due to the frequency of available tick data. In general, the lower the value of $h_1$ (assuming all price data is equally spaced), the more reactive the detection method will be, at the cost of increased sensitivity to noise. In \cite{morrill2020generalised}, several partitionings based on hierarchical dyadic windowing was suggested, which can be a useful framework for determining robustness to scale of the detection and clustering algorithms. Finally we note that in theory $h_1$ need not be constant: paths can be comprised of many evaluations as long as the period between the first and last observation remains the same. 

    Regarding the parameter $h_2$ determining the size of the path ensembles: larger values reduce the variance of the associated estimator, at the cost of being less reactive. New, potentially anomalous samples form a lesser portion of the evaluated ensemble, meaning that short-lived regime ``spikes'' may not be captured if they persist for only a short period of time. Again, the choice of $h_2$ is dependent on the modeller's preference. 
\end{remark}

    \subsection{Methods for the market regime detection problem}\label{subsec:mrdpmethods}

\subsubsection{$k$-class detection with prior beliefs}\label{subsubsec:kclassprior}

In this section, we outline our experimental methods in the context of the MRDP. In particular, given $\hat{\mathsf{s}} \in \mathcal{T}_\Delta([0, T]; \mathbb{R}^d)$, $h \in \mathbb{N}^2$ and a stream transformer $\Phi: \mathcal{S}(\mathbb{R}^{d+1}) \to \mathcal{S}(\mathbb{R}^{p})$, we give details as to how one evaluates elements drawn from the set of (transformed) ensemble paths $\mathcal{EP}^\Phi(\hat{\mathsf{s}})$. 

The first case we explore is where the modeller holds (parametric) beliefs $\mathfrak{P} = (\mathfrak{P}_1, \dots, \mathfrak{P}_k)$. In this setting, each belief class $\mathfrak{P}_i$ is represented via a model $\mathbb{P}_i$ with associated parameter vector $\theta_i \in \mathbb{R}^{d_i}$ for $i=1,\dots, k$. We call $(\mathbb{P}_i, \theta_i)$ a \emph{model pair}, and sometimes write $\mathbb{P}_{\theta_i}$. 

For each model pair, we construct $\mathfrak{P}_i$ by simulating $N \in \mathbb{N}$ time-augmented paths (denoted by $x$, for consistency of notation with Definition \ref{def:subpaths}) of length $h_1$ over the interval $[0, T^*]$, where $T^* = Th_1|\Delta|$. Choosing $T^*$ in this way ensures that the time-step used to simulate paths $p \sim (\mathbb{P}, \theta)$ corresponds to that from $\Delta$. Naturally we are also required to apply the same stream transformer $\Phi$ over each path in $x^i \in \mathfrak{P}_i$ in the same way that we did to the set of sub paths. In practice, this means that each $\mathfrak{P}_i$ is represented as a $(N \times h_1 \times d')$ tensor, and $\mathfrak{P}$ is thus a $(k \times N \times h_1 \times d')$ tensor.

Now that we have constructed $\mathfrak{P}$, we need to build a statistical threshold (within each class) in order to determine if our beliefs are being violated or not. For a given confidence level $\alpha \in [0, 1]$, we do so by estimating the $(1-\alpha)$ quantile(s) $c^i_\alpha$ of the null distribution associated to the rank $r$ maximum mean discrepancy $\mathcal{D}^r_{\text{sig}}$ for each set of beliefs $\mathfrak{P}_i, i=1,\dots, k$. One method for obtaining an estimation of the null distribution (and thus, the associated quantiles) is via a bootstrapping method which we outline in Definition \ref{def:bootstrappedmmd}.
\begin{definition}[Bootstrapped $\mathcal{D}^r_{\text{sig}}$ distribution under $\mathfrak{P}$]\label{def:bootstrappedmmd}
    Suppose $I = [0, T]$ and let $\mathfrak{P} \subset \mathcal{T}_\Delta(I, E)$ be a collection of time-augmented paths of size $N \in \mathbb{N}$. Write $\boldsymbol{x}^{\mathsf{i}} = (x^{\mathsf{i}_1}, \dots, x^{\mathsf{i}_{h_2}})$ to denote a set of samples extracted from $\mathfrak{P}$ according to the indexes $\mathsf{i} = (\mathsf{i}_1, \dots, \mathsf{i}_{h_2})$, where each $\mathsf{i}_i \sim U_d(N)$ are drawn from the discrete uniform distribution over $[0, N]$ without replacement. Thus, for $M \in \mathbb{N}, M << N$, we can construct the (random) set of \emph{pairwise path samples}
    \begin{equation*}
        \mathsf{P} = \left\{\left(\boldsymbol{x}^{\mathsf{i}^i}, \boldsymbol{x}^{\mathsf{j}^j}\right)  : \mathsf{i}^i, \mathsf{j}^j \sim U_d(N)^{h_2}, \quad i, j=1,\dots, M\right\},
    \end{equation*}
    and the associated \emph{bootstrapped distribution $\mathfrak{D}$ of $\mathcal{D}_\text{sig}^r$ under $\mathfrak{P}$}
    \begin{equation*}
        \mathfrak{D} := \left\{\mathcal{D}^r_{\text{sig}}(\boldsymbol{x}^i, \boldsymbol{y}^i) : (\boldsymbol{x}^i, \boldsymbol{y}^i) \in \mathsf{P}, i = 1,\dots, M \right\}.
    \end{equation*}
\end{definition}

For $i=1,\dots, k$, we can obtain a bootstrapped distribution $\mathfrak{D}_i$ for each class $\mathfrak{P}_i$. Then, if $\alpha \in (0, 1)$ is a given confidence level, we can extract the associated critical value $c_\alpha^i$ by considering the $(1-\alpha)\%$ quantile of $\mathfrak{D}_i$. It is this value that we will use to gauge how anomalous observed paths from the set $\mathcal{EP}^\Phi_h(\hat{\mathsf{s}})$. Write the collection of bootstrapped distributions as $\mathfrak{D} = (\mathfrak{D}_1, \dots, \mathfrak{D}_k)$. 

Bootstrapping can become computationally expensive as the hyperparameters $h_1, h_2$ and $M$ become large: as mentioned in Section \ref{subsec:mmd}, the order of computation of the rank 1 MMD is quadratic in both $h_1$ and $h_2$. Another (faster) approach is to fit a parametric distribution to each $\mathfrak{P}_i$ for each $i=1,\dots,k$ as given by Definition \ref{def:gammaapprox}. 

Whether we bootstrap the null distribution of the test statistic or approximate it via a gamma distribution, we can derive the critical value(s) $c_\alpha = (c_\alpha^1, \dots, c_\alpha^k)$ and use them to detect changes in regime according to the following.

\begin{proposition}[MMD detector]
    Let $\mathfrak{P} = (\mathfrak{P}_1, \dots, \mathfrak{P}_k)$ be beliefs with associated null distributions $\mathfrak{D} = (\mathfrak{D}_1, \dots, \mathfrak{D}_k)$ constructed under the test statistic $\mathcal{D}_\text{sig}^r$. Associate to $\mathfrak{D}$ the vector of $(1-\alpha)\%$ critical values $c_\alpha = (c_\alpha^1, \dots, c_\alpha^k)$.
    
    Given a stream of data $\mathsf{x} \in \mathcal{S}(E)$ and its associated time augmentation $\hat{\mathsf{x}} \in \mathcal{T}_\Delta([0, T]; E)$, we call the object which compares sets of paths $\boldsymbol{x} \in \mathcal{EP}^\Phi_h(\hat{\mathsf{x}})$ to $\mathfrak{P}$ via $\mathfrak{D}$ the \emph{MMD-Detector} (MMD-DET).
\end{proposition}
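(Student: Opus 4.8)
As phrased, this ``proposition'' is really a definition: it attaches the name \emph{MMD-DET} to a map rather than asserting a property of it, so the work consists in (a) writing the map down explicitly and verifying it is well-defined from the earlier objects, and (b) recording the one genuinely mathematical fact hidden in the set-up, namely that the resulting test has (asymptotic) level $\alpha$ within each belief class. I would organise the write-up around these two points.

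First I would make the object explicit. Given beliefs $\mathfrak{P}=(\mathfrak{P}_1,\dots,\mathfrak{P}_k)$, null distributions $\mathfrak{D}=(\mathfrak{D}_1,\dots,\mathfrak{D}_k)$ and critical values $c_\alpha=(c_\alpha^1,\dots,c_\alpha^k)$ as in Definition~\ref{def:bootstrappedmmd}, for an incoming ensemble path $\boldsymbol{x}\in\mathcal{EP}^\Phi_h(\hat{\mathsf{x}})$ define the detector to be the map
\[
\boldsymbol{x}\;\longmapsto\;\big(\ind\{\mathcal{D}^r_{\text{sig}}(\mathfrak{P}_i,\boldsymbol{x})>c_\alpha^i\}\big)_{i=1}^{k}\in\{0,1\}^k,
\]
with a regime change declared relative to the $i$-th belief exactly when the $i$-th coordinate equals $1$ (and an ``overall'' alarm when this holds for all $i$). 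Well-definedness is then a short chase through earlier results: after the stream transformer $\Phi$ and linear interpolation, every $\boldsymbol{x}$ and every simulated path in $\mathfrak{P}_i$ lies in a common compact $\mathcal{K}\subset C_1(I,E)$, so the empirical measures $\delta_{\mathfrak{P}_i},\delta_{\boldsymbol{x}}\in\mathcal{P}(\mathcal{K})$ are well defined; $\mathcal{D}^r_{\text{sig}}(\mathfrak{P}_i,\boldsymbol{x})$ is then a finite nonnegative number by Definition~\ref{def:rankrmmd} together with boundedness of $k^r_{\text{sig}}$ (\cite{salvi2021signature}, Lemma~4.5, and its rank-$r$ analogue); and $c_\alpha^i$ is the empirical $(1-\alpha)$-quantile of the finite set $\mathfrak{D}_i$, which always exists. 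Crucially the definition does not depend on whether $\mathfrak{D}_i$ came from the bootstrap of Definition~\ref{def:bootstrappedmmd} or the Gamma fit of Definition~\ref{def:gammaapprox}, so MMD-DET is unambiguous.

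The substantive claim I would then prove is the size guarantee: if $\boldsymbol{x}$ is itself a size-$h_2$ sample from $\mathfrak{P}_i$, then $\mathbb{P}\big[\mathcal{D}^r_{\text{sig}}(\mathfrak{P}_i,\boldsymbol{x})>c_\alpha^i\big]\to\alpha$ as the bootstrap budget $M\to\infty$. The argument has two ingredients. First, consistency of the empirical MMD: the sample statistic converges to the population MMD at rate $\mathcal{O}((h_1h_2)^{-1/2})$ (the consistency statement following eq.~\eqref{eqn:unbiasedsamplemmd}, and Theorem~\ref{thm:mmdconsistency}), so the law of $\mathcal{D}^r_{\text{sig}}(\mathfrak{P}_i,\boldsymbol{x})$ under the null is well approximated by the resampled law used to build $\mathfrak{D}_i$. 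Second, a Glivenko--Cantelli step: the empirical quantile of $\mathfrak{D}_i$ converges almost surely to the true $(1-\alpha)$-quantile of that null law as $M\to\infty$, which combined with the first ingredient yields the claimed level.

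The main obstacle is that the i.i.d.\ MMD machinery does not apply verbatim here: the non-overlapping sub-paths comprising $\boldsymbol{x}$ are successive and hence typically dependent, and the bootstrap indices in Definition~\ref{def:bootstrappedmmd} are drawn without replacement, creating finite-population dependence. To close this I would replace the i.i.d.\ concentration bounds by the dependent-data versions for the MMD (the McDiarmid-type inequality of \cite{cherief2022finite}, Theorem~3.2), which controls the deviation of the empirical MMD from its mean under weak dependence at only a constant cost in the rate, and I would compare the without-replacement resampling scheme with the with-replacement one via a standard coupling. I would also be explicit that the level statement is asymptotic in $M$ (and, with the dependence correction, requires the mixing hypotheses of \cite{cherief2022finite}) -- which is consistent with the paper's stated stance that reactivity is prioritised over tight false-positive control.
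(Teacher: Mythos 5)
You have read the statement correctly: in the paper this ``proposition'' carries no proof at all --- it is a naming convention for the detection object built from $\mathfrak{P}$, $\mathfrak{D}$ and $c_\alpha$, and the surrounding text (eqs.~\eqref{eqn:scorevector}--\eqref{eqn:scorematrix}) simply describes how it is used. Your proposal therefore does not conflict with anything in the paper; it goes beyond it, first by writing the detector as an explicit indicator map and checking well-definedness (compactness of the transformed ensembles, finiteness of $\mathcal{D}^r_{\text{sig}}$ via boundedness of the kernel, existence of the empirical quantile), and second by adding an asymptotic level-$\alpha$ guarantee that the paper never claims. The well-definedness chase is sound and matches the paper's framework. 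For the added level statement, two details would need care if you wanted it to hold for the detector as the paper actually uses it: (i) the statistic compared to $c_\alpha^i$ in the text is not a single MMD evaluation but the score $\alpha(\boldsymbol{s})_i$ of eq.~\eqref{eqn:scorevector}, an average of $n$ repeated MMD evaluations against independently resampled belief ensembles, whose null law is not the same as the law bootstrapped in Definition~\ref{def:bootstrappedmmd} (ensemble-vs-ensemble, both drawn from $\mathfrak{P}_i$); averaging shrinks the variance, so the nominal $\alpha$ is only an upper bound unless you recompute the null for the averaged statistic; and (ii) the paper's conformance convention is $\alpha(\boldsymbol{s})_j < c_\alpha^j$ for at least one $j$, so your per-coordinate indicator should be stated with the matching quantile convention. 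Your use of the dependent-data MMD bounds of \cite{cherief2022finite} and a coupling for the without-replacement resampling is a reasonable way to close the i.i.d.\ gap, and flagging the result as asymptotic in $M$ is appropriate; just note that these are additions to, not reconstructions of, anything the paper proves.
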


Given a collection of transformed sub-paths $\mathcal{EP}^\Phi_h(\hat{\mathsf{s}})$ and a chosen MMD statistic $\mathcal{D}_\text{sig}^r$, we evaluate collections $\boldsymbol{s} \in \mathcal{EP}^\Phi_h(\hat{\mathsf{s}})$ by calculating the \emph{score vector}
\begin{equation}\label{eqn:scorevector}
    \alpha(\boldsymbol{s}) = \left[\frac{1}{n}\sum_{l=1}^n \mathcal{D}^r_{\text{sig}}(\boldsymbol{s}, \boldsymbol{x}^i_l) \right]_{i=1,\dots,k},
\end{equation}
where $\boldsymbol{x}^i \subset \mathfrak{P}_i$ is a collection of path samples, and $n\in\mathbb{N}$ is the number of repeat evaluations. Writing the ordered set of (transformed) ensemble paths $\mathcal{EP}^\Phi_h(\hat{\mathsf{s}})$ as $(\boldsymbol{s}^1, \dots, \boldsymbol{s}^{N_2})$, we can obtain the progressive \emph{score matrix} $\Lambda \in \mathbb{R}^{k \times N_2}$, which is given by
\begin{equation}\label{eqn:scorematrix}
    \Lambda(\hat{\mathsf{s}}) = \begin{bmatrix}
        \alpha(\boldsymbol{s}^1)_1 & \dotsm & \dotsm & \alpha(\boldsymbol{s}^{N_2})_1 \\
        \vdots & \ddots & & \vdots \\
        \vdots & & \ddots & \vdots \\
        \alpha(\boldsymbol{s}^1)_k & \dotsm & \dotsm & \alpha(\boldsymbol{s}^{N_2})_k \\
    \end{bmatrix}
\end{equation}
Thus, real-time detection is a matter of comparing the most recent score vector value to the vector of critical values $c_\alpha = (c_\alpha^1, \dots, c_\alpha^k)$. If there exists at least one $c^\alpha_j$ such that $\alpha(\boldsymbol{x}^i)_j < c^\alpha_j$, then we can claim that the path ensemble $\boldsymbol{x}^i$ is not anomalous for our given confidence level. If we wish to make a statement about $k$-class conformity, we can obtain the quantile value of the score $\alpha(\boldsymbol{x}^i)_j$ by applying the appropriate quantile function $F^{-1}_{\mathfrak{D}_j} : \mathbb{R} \to [0, 1]$; lower values are indicative of higher degrees of conformance to beliefs $\mathfrak{P}_j$. 

\begin{remark}
    As mentioned in Section \ref{subsec:mrcpmrdp}, detection in a single belief setting is a candidate generalized framework for anomaly detection. One chooses the model class $\mathfrak{P}$ to be sufficiently general so as to build a robust estimate of what constitutes an anomaly. 
\end{remark}
\begin{remark}
    The response time of the regime detector is the amount of time required to pass til a new observation $s \in \mathcal{SP}_h(\hat{\mathsf{s}})$ is observed. Due to time reparamitarisation invariance of the signature mapping, one can theoretically increase the response time with no theoretical drawbacks. For simplicity in this work, we do not discuss these modifications.
\end{remark}

\subsubsection{Non-parametric evaluation}\label{subsubsec:nonparametricevaluation}

In this section we outline how one can perform online regime detection in absence of any model. This involves building confidence thresholds from observed data, which requires defining an analogue to the set $\mathfrak{D}$ (so as to obtain critical values), and by extension to the score vector (\ref{eqn:scorevector}). We first introduce the data-driven analogue to the score vector and set of MMD values $\mathfrak{D}$.

\begin{definition}[$L$-lag auto evaluation score]\label{def:autoevaluationscore}
    Let $h \in \mathbb{R}^2$ be a vector of hyperparameters, $\hat{\mathsf{s}} \in \mathcal{T}_\Delta(I, \mathbb{R}^d)$ a time-augmented path, and $\Phi: \mathcal{S}(\mathbb{R}^{d+1}) \to \mathcal{S}(\mathbb{R}^{p})$ a stream transformer. Associate to $(\hat{\mathsf{s}}, h, \Phi)$ the set of transformed ensemble paths $\mathcal{EP}_h^\Phi(\hat{\mathsf{s}})$ with $N_2 = |\mathcal{EP}_h^\Phi(\hat{\mathsf{s}})|$. Let $\mathcal{D}_\text{sig}^r$ be the $r$-rank maximum mean discrepancy.
    
    Then, for $L \subset \{1, \dots, N_2\}$ and a vector of weights $w = (w_{l})_{l \in L}$, the $L$-\emph{lag auto evaluation score} associated to the $i^{\text{th}}$ path ensemble $\boldsymbol{s}^i \in \mathcal{EP}_h^\Phi(\hat{\mathsf{s}})$ is given by
    \begin{equation}\label{eqn:autoevaluationscore}
        A_L(\hat{\mathsf{s}})_i = \sum_{l \in L}w_{l}\mathcal{D}^r_{\text{sig}}\left( \boldsymbol{s}^{i-l}, \boldsymbol{s}^i\right) \qquad \text{for }n= N_2 - \max(L), \dots, N_2. 
    \end{equation}
\end{definition}

The simplest auto-evaluation score is the $1$-lagged version, given by
\begin{equation}\label{eqn:onelaggedscore}
    A_{\{1\}}(\hat{\mathsf{s}}) = \left(\mathcal{D}^r_{\text{sig}}(\boldsymbol{s}^{i-1}, \boldsymbol{s}^i)\right)_{i=2}^{N_2} \qquad\text{for } \boldsymbol{s}^i \in \mathcal{EP}^\Phi_h(\hat{\mathsf{s}}),
\end{equation}

which is quicker to calculate at the cost of providing a potentially noisier local estimate than one averaged over several lag levels. One can derive an explicit formula for the right-hand side of eq. (\ref{eqn:onelaggedscore}) for a fixed $i \in \{2, \dots, N_2\}$ and $r=1$. Writing $\boldsymbol{s}^{i-1} = (s_1, \dots, s_N)$ and $\boldsymbol{s}^i = (s_2, \dots, s_{N+1})$ one has 
\begin{equation*}
    \mathcal{D}^1_{\text{sig}}(\boldsymbol{s}^{i-1}, \boldsymbol{s}^i) = \frac{2}{N^2(N-1)}\left(\sum_{i=1}^N\sum_{j=2}^{N+1}k_{\text{sig}}(s_i, s_j) - N\left( k_{\text{sig}}(s_1, s_{N+1}) + \sum_{i=2}^N k_{\text{sig}}(s_i, s_i)\right) \right),
\end{equation*}
where $k_{\text{sig}}$ is the rank-1 signature kernel (corresponding to the rank-1 signature mapping). 

To determine whether a new path ensemble $\boldsymbol{s} \in \mathcal{EP}^\Phi_h(\hat{\mathsf{s}})$ can be deemed anomalous in an online setting (absent of any prior beliefs held by the modeller), we can study the past distribution of $A_L(\hat{\mathsf{s}})$ over a sliding window $W \in \mathbb{N}$. New observations can be evaluated relative to this distribution which we call $\overline{\mathfrak{D}}_t$, where $t \in [0, T]$ ranges over the course of the path. Naturally a burn-in period would be required to initially populate $\overline{\mathfrak{D}}_t$.

The distribution of $\overline{\mathfrak{D}}_t$ can be approximated via a parametric distribution, which as per Definition \ref{def:gammaapprox} we take to be a Gamma which for $t \in [0, T]$ is parameterised by the location and scale quantities
\begin{equation*}
    \alpha_t = \frac{\mathbb{E}[\overline{\mathfrak{D}}_t]^2}{\mathrm{Var}(\overline{\mathfrak{D}}_t)} \text{ and }\beta_t = \frac{W\mathrm{Var}(\overline{\mathfrak{D}}_t)}{\mathbb{E}[\overline{\mathfrak{D}}_t]}.
\end{equation*}
The appropriate critical threshold is again taken to be the $(1-\alpha)\%$ quantile of the corresponding approximated distribution. We can then define the following. 

\begin{definition}[Auto evaluator]\label{def:autoevaluator}
    Suppose $\hat{\mathsf{s}} \in \mathcal{T}_\Delta([0, T], \mathbb{R}^d)$ is a time-augmented stream of data. We call the object that detects regime changes in $\hat{\mathsf{s}}$ via the score function from eq. (\ref{eqn:autoevaluationscore}), and obtains a critical threshold at the $(1-\alpha)\%$ confidence level $\overline{c}^\alpha_t$ from the temporal prior distribution $\overline{\mathfrak{D}}_t$ the \emph{auto evaluator}.
\end{definition}

Again as in the parametric setting, paths which score higher than $\overline{c}^\alpha_t$ are considered anomalous.

\subsubsection{Switching at random times}\label{subsec:randomtimeswitch}

The first experiment we are interested in is a study of the MRDP in the case where regime changes happen at random times $(\tau_i)_{i=0}^M$ over an interval $I = [0, T]$ for $T > 0$. This experiment can be performed on both real and synthetic data; here, we will work within the context of synthetic data, where we explicitly know the regime-switching times, before extending the approach to real data, which can be found in Section \ref{sec:realdata}.

We begin our experiment by defining the intervals $\mathrm{T}_i = (\tau_{i-1}, \tau_i]$, with $\tau_0 := 0$. Associate to each $\mathrm{T}_i$ a model $\mathbb{P}_{\theta_i}$ for $i=1,\dots, M$. In order to perform our experiments, for a given partition $\Delta$ we wish to generate a path
\begin{equation}\label{eqn:stockpriceprocesstime}
    \hat{\mathsf{s}} = \left\{(t_i, s_i): s_i \in \mathbb{R}^d, \ \ i=1, \dots, N\right\}, 
\end{equation}  
where $s_0 = 1$. First, we specify how the sequence of stopping times $(\tau_i)_{i=0}^M$ are generated. This can be done in one of two ways. The first is by randomly sampling points from $\Delta$ and prescribing the regime change persists for a fixed period of time $a \in \mathbb{R}_+$. In this way $\mathrm{T}_i = [\tau_i, \tau_i + a)$ and we have that $\tau_i = \tau_{i-1} + a$, except for the last interval $\mathrm{T}_M = [\tau_M, T]$. 

Else, we allow the regime change (i.e., the length of $\mathrm{T}_i$) to persist for a random period of time. We iterate over each point on the mesh $\Delta$ in the following way: we start at $t=0$ with model dynamics $\mathbb{P}_{\theta_1}$. Then, after $h_1$ time points, we first draw a random variable $Z_1 \sim \mathrm{Po}(\lambda_1)$ which tells us if we change regime or not. If $Z_2 > 0$, then we set $\tau_i := t_k$ and flag that we are in a regime change. We then simulate the path  until we reach the next time-step that is a multiple of $h_1$, plus one. If we are not in a regime change then we draw $Z_1 \sim \mathrm{Po}(\lambda_1)$ again; otherwise, simulate a second random variable $Z_2 \sim \mathrm{Po}(\lambda_2)$ which tells us whether we exit the regime change or not. If $Z_2 > 0$ we set $\tau_{i+1}:= t_{k} + h_1 + 1$; otherwise, we move to the next time step. We end up with a sequence of times $(\tau_i)_{i\ge 0}$ whereby $|\tau_i - \tau_{i-1}|$ is a random variable. Moreover, the number of times is also random. If the number of regime changes is greater than $M$, we simply begin with $\mathbb{P}_{\theta_1}$ again.

\begin{remark}
    The choice to institute regime changes at multiples of $h_1$ ensures that each sub-path within $\mathcal{EP}^\Phi_h(\hat{\mathsf{s}})$ is generated by just one model pair, as opposed to being a concatenation of two (or more) pairs.
\end{remark}

Once we have built our regime change points and chosen our models $(\mathbb{P}_{\theta_i})_{i=1}^M$, we build $\hat{\mathsf{s}}$ by solving the corresponding SDE over $\mathrm{T}_i$, stitching the solutions together. Once we have simulated our regime changed path $\hat{\mathsf{s}}$, given a vector of hyperparameters $h\in\mathbb{R}^2$ and a path transformer $\Phi$ we create the sequence of sub paths $\mathcal{SP}_h(\hat{\mathsf{s}})$ from (\ref{eqn:subpaths}), transform them to obtain $\mathcal{SP}^\Phi_h(\hat{\mathsf{s}})$ from (\ref{eqn:transformedsubpaths}), and then build the set of transformed ensemble paths $\mathcal{EP}_h^\Phi(\hat{\mathsf{s}})$. 

Regarding evaluating elements of $\boldsymbol{s}_i \in \mathcal{EP}_h^\Phi(\hat{\mathsf{s}})$ for $i=1,\dots,N_2$: in the parametric (beliefs) case, after specifying $\mathfrak{P} = (\mathfrak{P}_1, \dots, \mathfrak{P}_k)$ we first build the associated empirical distributions $\mathfrak{D} = (\mathfrak{D}_1, \dots, \mathfrak{D}_k)$ and obtain the vector of $(1-\alpha)\%$ quantiles $c_\alpha = (c_\alpha^1, \dots, c_\alpha^k)$ via the estimation or bootstrapping method. Then, for a given number of evaluations $n \in \mathbb{N}$ we progressively calculate the score matrix from eq. (\ref{eqn:scorematrix}) for a given number of evaluations $n\in\mathbb{N}$. Detection is then a matter of comparing elements of $\alpha(\boldsymbol{s}^i)$ with their corresponding $(1-\alpha)\%$ critical values. For $j=1,\dots, k$, if $\alpha(\boldsymbol{s}^i)_j < c^\alpha_j$, then the path collection $\boldsymbol{s}^i$ conforms to the model pair represented by the beliefs $\mathfrak{P}_j$. If $\alpha(\boldsymbol{s}^i)$ fails to conform to any of our beliefs, then we can interpret the given path segment as being anomalous. 

In the non-parametric case, we calculate the (weighted) $L$-lag auto evaluation score (\ref{eqn:autoevaluationscore}) and report this. Theory tells us that the MMD score between path ensembles drawn from the same distribution will be asymptotically $0$, and conform to a null distribution for a given number of finite samples. During a regime change, newly distributed samples enter the most recent ensemble, causing the MMD score to become larger relative to previous values. It will then decay to a lower level as each path ensemble again contains paths drawn from the new regime. 

\subsubsection{Non-ensemble evaluation}\label{subsubsec:nonensembleevaluation}

In this section, we give an experimental methodology for the market regime detection problem in the case where only a single path sample is to be evaluated. That is, we are interested in performing regime detection over the set of transformed sub paths $\mathcal{SP}_h^\Phi(\hat{\mathsf{s}})$ where $\hat{\mathsf{s}}$ is a time-augmented discrete asset price path. As we cannot directly apply the MMD in this situation, we instead seek to use the rank-$r$ signature kernel scoring rule $s_{k_{\text{sig}}^r}$ as a way of detecting conformance from paths $s \in \mathcal{SP}_h^\Phi(\hat{\mathsf{s}})$ to our beliefs $\mathfrak{P}$. As we are not able to directly apply detection based on two-sample testing, we require a slightly different approach which makes use of the following function. 

\begin{definition}[Similarity score]\label{def:conformancescore}
    Suppose $\mathcal{K} \subset C(I, E)$ is a compact set of bounded variation paths. Let $s_{k_{\text{sig}}^1}: \mathcal{P}(\mathcal{K}) \times \mathcal{K} \to \mathbb{R}$ be the \emph{signature kernel scoring rule} (cf. Def \ref{def:signaturekernelscore}). Write $s:= s_{k^1_{\text{sig}}}$. Suppose $\mathbb{P}, \mathbb{Q} \in \mathcal{P}(\mathcal{K})$. Then, we call the function 
    \begin{equation}\label{eqn:similarityscore}
        \Sigma^{\mathbb{P}, \mathbb{Q}} : \mathcal{K} \to \mathbb{R}, \quad \Sigma^{\mathbb{P}, \mathbb{Q}}(x) = s(\mathbb{P}, x) - s(\mathbb{Q}, x)
    \end{equation}
    the \emph{(signature) similarity score} function between the measures relative to $\mathbb{P}, \mathbb{Q}\in \mathcal{P}(\mathcal{K})$. 
\end{definition}
Directly evaluating $\Sigma^{\mathbb{P}, \mathbb{Q}}$ requires using the unbiased estimator from eq. (\ref{eqn:scoringunbiased}), which means we need to specify the number of paths $N \in \mathbb{N}$ sampled from both $\mathbb{P}$ and $\mathbb{Q}$. 

We now introduce some important facts regarding the conformance scoring function. 
\begin{proposition}
    Suppose that $\mathbb{Z} \in \mathcal{P}(\mathcal{K})$. Then, we have that 
    \begin{equation}\label{eqn:expectedconformancescore} 
        \mathbb{E}_{\mathbb{Z}}[\Sigma^{\mathbb{P}, \mathbb{Q}}(X)]  = \mathcal{D}^r_{\text{sig}}(\mathbb{P}, \mathbb{Z})^2 - \mathcal{D}^r_{\text{sig}}(\mathbb{Q}, \mathbb{Z})^2.
    \end{equation}
    It follows directly from the above that 
        \begin{enumerate}
        \item $\mathbb{E}_{\mathbb{P}}[\Sigma^{\mathbb{P}, \mathbb{Q}}(X)] \le 0$, and
        \item $\mathbb{E}_{\mathbb{Q}}[\Sigma^{\mathbb{P}, \mathbb{Q}}(X)] \ge 0$.
    \end{enumerate}
\end{proposition}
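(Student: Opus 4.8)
The plan is to unfold the definition of the kernel scoring rule, integrate against $\mathbb{Z}$, and recognise the outcome as a difference of squared maximum mean discrepancies; the two consequences then follow by specialising $\mathbb{Z}$. First I would write, for the (rank-$r$) signature kernel $k = k^r_{\text{sig}}$ and any $y \in \mathcal{K}$,
\[
    s(\mathbb{P}, y) = \mathbb{E}_{X, X' \sim \mathbb{P}}[k(X, X')] - 2\,\mathbb{E}_{X \sim \mathbb{P}}[k(X, y)],
\]
and the analogous expression with $\mathbb{Q}$ in place of $\mathbb{P}$. Since $k^r_{\text{sig}}$ is bounded (cf. the discussion after Definition \ref{def:signaturekernel} and \cite{salvi2021signature}, Lemma 4.5) and the measures are supported on the compact set $\mathcal{K}$, every integral appearing below is finite and Fubini's theorem lets me interchange the order of integration freely. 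Integrating $s(\mathbb{P}, \cdot)$ against $\mathbb{Z}$ then gives
\[
    \mathbb{E}_{Y \sim \mathbb{Z}}[s(\mathbb{P}, Y)] = \mathbb{E}_{X, X' \sim \mathbb{P}}[k(X, X')] - 2\,\mathbb{E}_{X \sim \mathbb{P},\, Y \sim \mathbb{Z}}[k(X, Y)].
\]

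Next I would invoke the population version of the expansion in eq. (\ref{eqn:biasedsamplemmd}), which follows from bilinearity of $\langle \cdot, \cdot \rangle_\mathcal{H}$ together with the reproducing property, namely
\[
    \mathcal{D}^r_{\text{sig}}(\mathbb{P}, \mathbb{Z})^2 = \mathbb{E}_{X, X' \sim \mathbb{P}}[k(X, X')] - 2\,\mathbb{E}_{X \sim \mathbb{P},\, Y \sim \mathbb{Z}}[k(X, Y)] + \mathbb{E}_{Y, Y' \sim \mathbb{Z}}[k(Y, Y')].
\]
Comparing the two displays yields $\mathbb{E}_{Y \sim \mathbb{Z}}[s(\mathbb{P}, Y)] = \mathcal{D}^r_{\text{sig}}(\mathbb{P}, \mathbb{Z})^2 - \mathbb{E}_{Y, Y' \sim \mathbb{Z}}[k(Y, Y')]$, and the identical computation gives $\mathbb{E}_{Y \sim \mathbb{Z}}[s(\mathbb{Q}, Y)] = \mathcal{D}^r_{\text{sig}}(\mathbb{Q}, \mathbb{Z})^2 - \mathbb{E}_{Y, Y' \sim \mathbb{Z}}[k(Y, Y')]$. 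Subtracting these, the term $\mathbb{E}_{Y, Y' \sim \mathbb{Z}}[k(Y, Y')]$ — which depends only on $\mathbb{Z}$ — cancels, and I am left with $\mathbb{E}_{\mathbb{Z}}[\Sigma^{\mathbb{P}, \mathbb{Q}}(X)] = \mathcal{D}^r_{\text{sig}}(\mathbb{P}, \mathbb{Z})^2 - \mathcal{D}^r_{\text{sig}}(\mathbb{Q}, \mathbb{Z})^2$, i.e. eq. (\ref{eqn:expectedconformancescore}).

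For the two consequences I would simply specialise $\mathbb{Z}$ and use that the squared MMD is non-negative and vanishes on the diagonal. Taking $\mathbb{Z} = \mathbb{P}$ gives $\mathbb{E}_{\mathbb{P}}[\Sigma^{\mathbb{P}, \mathbb{Q}}(X)] = \mathcal{D}^r_{\text{sig}}(\mathbb{P}, \mathbb{P})^2 - \mathcal{D}^r_{\text{sig}}(\mathbb{Q}, \mathbb{P})^2 = -\,\mathcal{D}^r_{\text{sig}}(\mathbb{Q}, \mathbb{P})^2 \le 0$, and taking $\mathbb{Z} = \mathbb{Q}$ gives $\mathbb{E}_{\mathbb{Q}}[\Sigma^{\mathbb{P}, \mathbb{Q}}(X)] = \mathcal{D}^r_{\text{sig}}(\mathbb{P}, \mathbb{Q})^2 - \mathcal{D}^r_{\text{sig}}(\mathbb{Q}, \mathbb{Q})^2 = \mathcal{D}^r_{\text{sig}}(\mathbb{P}, \mathbb{Q})^2 \ge 0$. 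I do not expect any genuine obstacle: the only points needing (minor) care are the justification for swapping expectations, supplied by boundedness of the signature kernel, and keeping the roles of the three measures $\mathbb{P}, \mathbb{Q}, \mathbb{Z}$ straight. I would also remark that although $\Sigma^{\mathbb{P}, \mathbb{Q}}$ was defined in Definition \ref{def:conformancescore} via the rank-$1$ score, the computation above is agnostic to the rank and goes through verbatim for $k^r_{\text{sig}}$, which is the version recorded in the statement.
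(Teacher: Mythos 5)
Your proposal is correct and follows essentially the same route as the paper: expand the expected kernel score under $\mathbb{Z}$, add and cancel the $\mathbb{E}_{Z,Z'\sim\mathbb{Z}}[k^r_{\text{sig}}(Z,Z')]$ term to recognise the difference of squared MMDs, then specialise $\mathbb{Z}=\mathbb{P}$ and $\mathbb{Z}=\mathbb{Q}$ for the two consequences. The added remarks on Fubini (via boundedness of $k^r_{\text{sig}}$ on the compact set $\mathcal{K}$) and on rank-agnosticism are harmless refinements of what the paper leaves implicit.
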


\begin{proof}
     To prove eq. (\ref{eqn:expectedconformancescore}), apply the definition of the signature kernel scoring rule: 
    \begin{align*}
        \mathbb{E}_{\mathbb{Z}}[\Sigma^{\mathbb{P}, \mathbb{Q}}(X)]  &= s(\mathbb{P}, \mathbb{Z}) - s(\mathbb{Q}, \mathbb{Z}) \\
        &= \mathbb{E}_\mathbb{P}[k_{\text{sig}}^r(X, X')] - 2\mathbb{E}_{\mathbb{P}, \mathbb{Z}}[k_{\text{sig}}^r(X, Z)] - \mathbb{E}_\mathbb{Q}[k_{\text{sig}}^r(Y, Y')] + 2\mathbb{E}_{\mathbb{Q}, \mathbb{Z}}[k_{\text{sig}}^r(Y, Z)] \\ 
        &=\left(\mathbb{E}_\mathbb{P}[k_{\text{sig}}^r(X, X')] - 2\mathbb{E}_{\mathbb{P}, \mathbb{Z}}[k_{\text{sig}}^r(X, Z)] + \mathbb{E}_\mathbb{Z}[k_{\text{sig}}^r(Z, Z')] \right) \\
        &\qquad- \left(\mathbb{E}_\mathbb{Q}[k_{\text{sig}}^r(Y, Y')] - 2\mathbb{E}_{\mathbb{Q}, \mathbb{Z}}[k_{\text{sig}}^r(Y, Z)] + \mathbb{E}_\mathbb{Z}[k_{\text{sig}}^r(Z, Z')]\right) \\
        &= \mathcal{D}^r_{\text{sig}}(\mathbb{P}, \mathbb{Z})^2 - \mathcal{D}^r_{\text{sig}}(\mathbb{Q}, \mathbb{Z})^2.
    \end{align*}
\end{proof}
(1) and (2) follow directly from eq. (\ref{eqn:expectedconformancescore}). We highlight them separately because they are key edge cases explored in the upcoming experiments. Consider the following example: From eq. (\ref{eqn:expectedconformancescore}), we know that if $\mathbb{E}_\mathbb{Z}[\Sigma^{\mathbb{P}, \mathbb{Q}}] \ge 0$, then $\mathcal{D}^r_{\text{sig}}(\mathbb{P}, \mathbb{Z})^2 \ge \mathcal{D}^r_{\text{sig}}(\mathbb{Q}, \mathbb{Z})^2$. The interpretation of this is the observation $\mathbb{Z}$ is closer (conforms more) to $\mathbb{P}$ than to $\mathbb{Q}$. 

In the context of regime detection, if $z_1, \dots, z_{N_1} \sim \mathbb{Z}$ are observed market trajectories extracted from $\mathcal{SP}_h(\hat{\mathsf{z}})$, then studying the value of the score function relative to two measures $\mathbb{P}, \mathbb{Q}$ gives an indication as to how much the data conforms to either $\mathbb{P}$ or $\mathbb{Q}$. Negative values indicate greater similarity to $\mathbb{P}$, whereas positive values indicate greater similarity to $\mathbb{Q}$. In our setting, the measures $\mathbb{P}$ and $\mathbb{Q}$ are component samples from a set of modelling beliefs $\mathfrak{P} = (\mathfrak{P}_1, \dots, \mathfrak{P}_k)$ as per Section \ref{subsubsec:kclassprior}. Given $s \in \mathcal{SP}_h^\Phi(\hat{\mathsf{s}})$, we can calculate the $\mathbb{R}^{k\times (k-1)}$ similarity matrix 
\begin{equation}\label{eqn:conformancematrix}
    \Sigma^{\mathfrak{P}}(s) = \begin{bmatrix}
        \Sigma^{\mathfrak{P}_1, \mathfrak{P}_2}(s) & \dots & \dots & \Sigma^{\mathfrak{P}_1, \mathfrak{P}_k}(s) \\
        \vdots & \ddots & & \vdots \\ 
        \vdots & & \ddots &\vdots \\ 
        \Sigma^{\mathfrak{P}_k, \mathfrak{P}_1}(s) & \dots  & \dots & \Sigma^{\mathfrak{P}_k, \mathfrak{P}_{k-1}}(s)
    \end{bmatrix}
\end{equation}
and study the sign of each of the terms. We present three examples using this method: a synthetic, Markovian example (with $k=2$) in Subsection \ref{subsec:toysinglepath}, a non-Markovian detection problem in Subsection \ref{subsec:nonmarkovian}, and in Subsection \ref{subsec:realdatapipeline} a real-data, data-driven pipeline, showing how $\Sigma^{\mathfrak{P}}$ can be used to test for bull/bear regimes, or more bespoke market conditions.

    \subsection{A method for the market regime classification problem}\label{subsec:mrcpmethods}

Suppose $(E, d)$ is a metric space and suppose $\mathsf{s} \in \mathcal{S}(E)$ is a stream of data over $E$ comprised of $N \in \mathbb{N}$ elements. The problem of grouping data points $x_i \in X, i=1,\dots,N$ into a fixed number of partitions $\mathcal{C}_l, l \in\mathbb{N}$ such that elements of each $\mathcal{C}_l$ share similar characteristics (usually defined under $d$) is referred to as \emph{clustering} the vector $X$. This process is a classic unsupervised learning technique which has many popular approaches: $k$-means \cite{macqueen1967some}, $k$-medoids \cite{kaufman2009finding}, fuzzy clustering and its relatives \cite{dunn1973fuzzy} or density-based models such as DBSCAN \cite{ling1972theory}. Each of these methods has their own benefits and drawbacks, which depending on the context make some algorithms more suitable than others. We refer the reader to \cite{horvath2021clustering} for a more in-depth summary of various clustering techniques and in particular their application to mathematical finance and regime classification. 

Another classical technique is called \emph{hierarchical clustering}, first given in \cite{johnson1967hierarchical}. The algorithm employs either a top-down (divisive) or bottom-up (agglomerative) approach to partitioning data into $k \in \mathbb{N}$ distinct clusters where $k$ is chosen by the user. In the top-down approach, data is assumed to belong to one cluster, whereas the agglomerative approach assumes the opposite: all values $x_i \in X$ belong to their own cluster and are progressively agglomerated. In both cases, the algorithm requires a pairwise distance matrix $D \in \mathbb{R}^{N \times N}$, with elements given by
\begin{equation}\label{eqn:distancematrix}
    D_{ij} = d(x_i, x_j) \qquad \text{for }i, j  = 1,\dots, N.
\end{equation}

We refer the reader to Appendix \ref{appendix:agglomerativeclustering} for further details. Importantly, we are able to leverage this technique in our framework by setting our clustering space to be $\mathcal{P}(\mathcal{K})$. In our case this will be given by the set of (transformed) sub paths $\mathcal{SP}_h^\Phi(\hat{\mathsf{s}})$ associated to the piecewise-linearly interpolated path $S$ obtained from a time-augmented stream of data $\hat{\mathsf{s}} \in \mathcal{T}_\Delta(I, E)$. 

We can equip $\mathcal{P}(\mathcal{K})$ with the rank-$r$ MMD $\mathcal{D}^r_{\text{sig}}$ and thus create the matrix $D$ via evaluating elements of the set of transformed ensemble paths $\mathcal{EP}_h^\Phi(\hat{\mathsf{s}})$. It is this stream of paths that we are aiming to cluster. We then calculate the pairwise distance matrix $D \in \mathbb{R}^{N_2\times N_2}$ where
\begin{equation}\label{eqn:pairwisedistancematrixmmd}
    D_{ij} = \mathcal{D}^r_\text{sig}(\boldsymbol{s}^i, \boldsymbol{s}^j), \qquad i,j=1,\dots,N_2.
\end{equation}
Once we have obtained $D$, for a given $k \in \mathbb{N}$ we pass $(D, k)$ to an agglomerative hierarchical clustering algorithm and (with a specified linkage criterion). After the algorithm has completed, we obtain labels for each element of $\mathcal{EP}_h^\Phi(\hat{\mathsf{s}})$ which we take to be the associated class classification. Sub-paths $s \in \mathcal{SP}_h(\hat{\mathsf{s}})$ can be assigned a cluster label, which is given by the average of each label assigned to each set of ensemble paths $\boldsymbol{s}$ that $s$ was a member of.

    \subsection{Overview of experiments}\label{subsec:experimentsoverview}

The rest of the paper is devoted to experiments showing how one can successfully apply the different versions of the MMD\footnote{cf. Figures 15, 16, and 17.} (truncated MMD, kernel MMD, and in particular the higher-rank MMD which is our method of choice), and the signature kernel scoring rule, to regime detection and classification problems on path space. Here we will briefly summarize each of the experiments and in particular how we evaluate the performance of our approach. 

Section \ref{sec:mrdp} is devoted to the market regime detection problem and its variants. In the entire section we focus synthetically generated data, before moving on to real data experiments in Section \ref{sec:realdata}. We do this separation so that we can first establish the accuracy and validity of our methods, by evaluating our algorithms output against known (synthetic) solutions. \\
Subsection \ref{subsec:toyexample} showcases our approach on a first simple toy example that can be easily validated: There we consider a regime-switching problem where the underlying process is a univariate geometric Brownian motion (gBm) and the different regimes correspond to different parameter choices, which are flipped at random times. Within this framework we first provide examples of experiments in the case where the modeller holds a given (parametric) belief $\mathfrak{P}$, and then proceed to case where the modeller holds a priori no beliefs, but infers these from observations of the market in an evolving manner (this is done by evaluating the observed path against itself as per Definition \ref{def:autoevaluationscore}). This first experiment demonstrates that regime changes at random times can be detected with a high confidence.  \\
In order do add further information about \emph{direction} and \emph{magnitude} of such changes we consider a case of multiclass prior beliefs which is demonstrated in Subsection \ref{subsec:toymulticlass}. Here we extend this first toy example to the modeller holding multiple beliefs $\mathfrak{P} = (\mathfrak{P}_1, \dots, \mathfrak{P}_k)$ to add granularity about the direction and magnitude of changes. \\
An extreme case of the the first simple toy example would be if we want to detect regime changes based on single-path observations. To demonstrate that this is possible in our framework, we demonstrate in Subsection \ref{subsec:toysinglepath} an experiment, that takes as a starting point the first motivating toy example. However, in this new setting we restrict the detection problem to a path-by-path setting, as opposed to using ensembles. 
Initial experiments have shown that naive methods (such as truncation) may struggle to yield reliable detection performance on non-Markovian data, therefore we show that our method is designed to work for such data as well.
In order to showcase that our method also works on data that exhibits an autocorrelative structure and non-Markovian properties, we demonstrate experiments on synthetic data generated by rough stochastic volatility models in Subsection \ref{subsec:rbergomi}.  Within the same section also demonstrate in passing how the detection problem can be extended to multidimensional processes. \\ Subsection \ref{subsec:higherrankdetection} shows how utilizing the higher-rank signature kernel MMD method can lead to better accuracy in regime detection (than its lower rank counterparts), however this comes at a computational cost and we conclude the section by leaving it to the modeller to choose the method according to her preferences between accuracy and speed of the method. Subsection \ref{subsec:mrdpcomparisons} showcases how our method can be applied as an anomaly detection tool. We benchmark our technique against an existing signature-based method for anomaly detection developed in \cite{cochrane2020anomaly}. Finally, Subsection \ref{subsec:nonmarkovian} gives an experiment where non-Markovianity of sample trajectories is handled via the similarity score function from Definition \ref{def:conformancescore}.

Section \ref{sec:mrcp} is concerned with the market regime classification problem. In contrast to the previous section, here we are interested in finding regimes on path space via clustering (ensembles of) paths extracted from a (simulated) path observation.
Again, we focus on synthetic data examples in the entire section and refer to Section \ref{sec:realdata} for real data experiments. As before, the starting point is a simple gBm example with two regimes (that change at random times), shown in Section \ref{sec:mrcp}.  We draw the comparison of this regime classification method to our previous results (see \cite{horvath2021clustering}) and demonstrate how our kernel-based method helps go beyond dimensionality limitations that we were facing in the returns-based setting, previously in \cite{horvath2021clustering}. For this we compare three clustering methods against each other: the Wasserstein $k$-means approach explored in \cite{horvath2021clustering}, a naive truncated-signature-based algorithm, and our kernel-based clustering, see Figure \ref{fig:toyclustering}. 
%We also explore the effect of increasing the dimensionality in this setting. 
Specifically, within a high-dimensional setting, we show how our kernel-based method extends the other two considered clustering methods, and in particular, outperforms the truncated signature-MMD based method.

Finally, after establishing the validity of both our MRDP and MRCP methods on verifiable synthetic data in the previous two Sections (\ref{sec:mrdp} and \ref{sec:mrcp}), we demonstrate the performance of our method on a variety of real data examples in the final Section \ref{sec:realdata}.
The real data examples we consider are organised as follows: Section \ref{subsec:basket} explores both the MRDP and MRCP for a multidimensional setting comprised of a basket of equities. We verify that our methodologies successfully identify periods of known historical market turmoils (such as the recent coronavirus pandemic or the \textrm{dot-com} crisis). We also observe that the obtained MMD scores tracks the VIX fairly accurately, which further highlights the the speed of detection ability at a good accuracy.\\
Subsection \ref{subsec:crypto} applies our regime detection- and regime classification methods on (multidimensional) cryptocurrency data. Again, known periods of instability are tracked efficiently and accurately though our MMD scores even in high-dimensional cases. It is worth noting that accuracy is not compromised by higher dimensional settings, which is expected in our methodological design.\\
Finally, Subsection \ref{subsec:realdatapipeline} showcases again an extreme example in a single-path-observation setting: This example demonstrates how our MRD and MRC methodologies can be combined in a setting where the practitioner holds non-parametric beliefs, to create a wholly data-driven detection pipeline, evaluated (path-by-path) on single-observations.

All experiments were run on a NVIDIA GeForce RTX 3070 Ti GPU with 8GB of GPU memory. Code is available at \href{https://github.com/issaz/signature-regime-detection/}{\texttt{https://github.com/issaz/signature-regime-detection/}}.

    \section{Online regime detection on path space}\label{sec:mrdp}

    \subsection{Toy example: switching at random times}\label{subsec:toyexample}

In the first motivating example we implement simple regime switching dynamics from within the same model family, being a univariate geometric Brownian motion (gBm) governed by the stochastic differential equation 
\begin{equation}\label{eqn:gbm}
    dX_t = \mu X_t dt + \sigma X_t dW_t, \quad X_0  = 1.
\end{equation}
Moreover, we write $\theta = (\mu, \sigma) \in \mathbb{R}^2$. 

We begin by describing how we generate the regime-switching path. For simplicity here, we assume only two possible regimes: either a low or high volatility period. This defines our sequence of model pairs $\mathbb{M} = (\mathbb{P}_{\theta_1}, \mathbb{P}_{\theta_2})$, with $\theta_1 = (0, 0.2)$ and $\theta_2 = (0, 0.3)$. For a given $T > 0$ and partition $\Delta$ of $[0, T]$, we generate a path $\hat{\mathsf{s}} \in \mathcal{T}_\Delta([0, T], \mathbb{R})$ where at the random times $\tau_1, \dots, \tau_M$ regime dynamics cycle between elements of $\mathbb{M}$ as outlined in Section \ref{subsec:mrdpmethods}. Initially we take $T=4$ and set $|\Delta| = \tfrac{1}{252\times 7}$, so each path value roughly represents hourly tick data. Our regime switching times are determined by the random variable $Z_1 \sim \mathrm{Po}(2)$ and regime exit times by $Z_2 \sim \mathrm{Po}(1/49)$. An example of such a regime-changed path is given in Figure \ref{fig:regimechangedpathtoyexample} and the corresponding path of log-returns is given alongside it. 

\begin{figure}[ht]
    \centering
    \begin{subfigure}{0.5\linewidth}
        \centering
        \includegraphics[width=\textwidth]{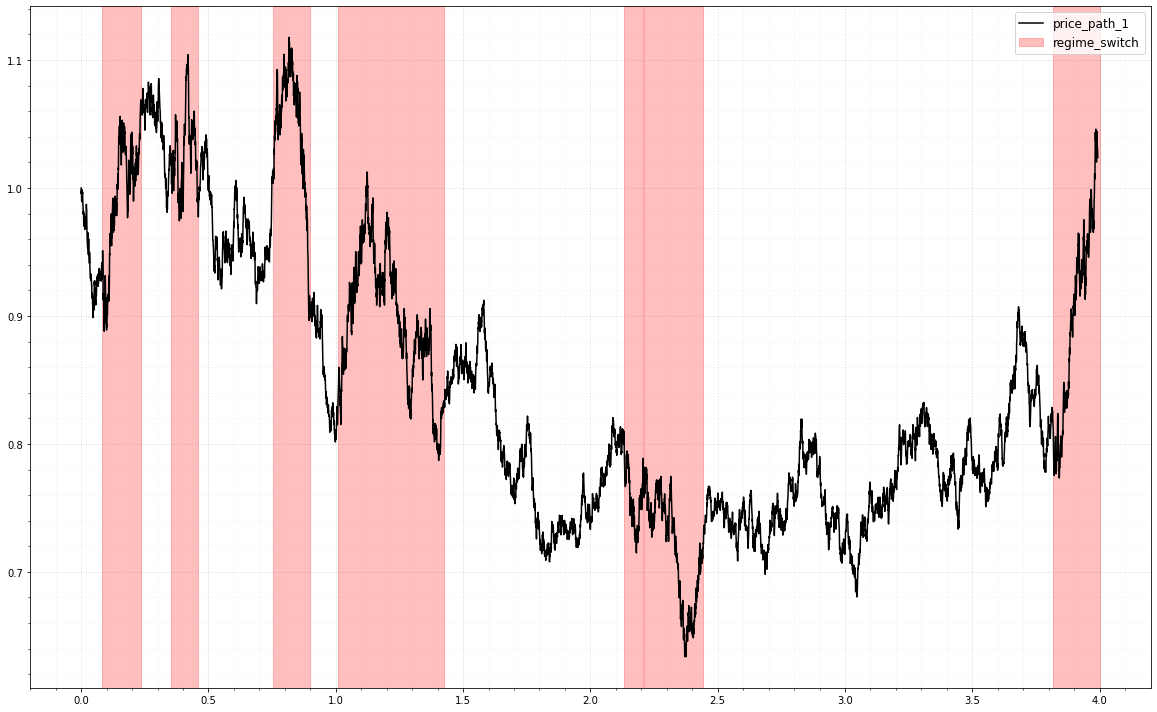}
        \caption{Price path.}
        \label{fig:toyexamplepath}
    \end{subfigure}%
    \begin{subfigure}{0.5\linewidth}
        \centering
        \includegraphics[width=\textwidth]{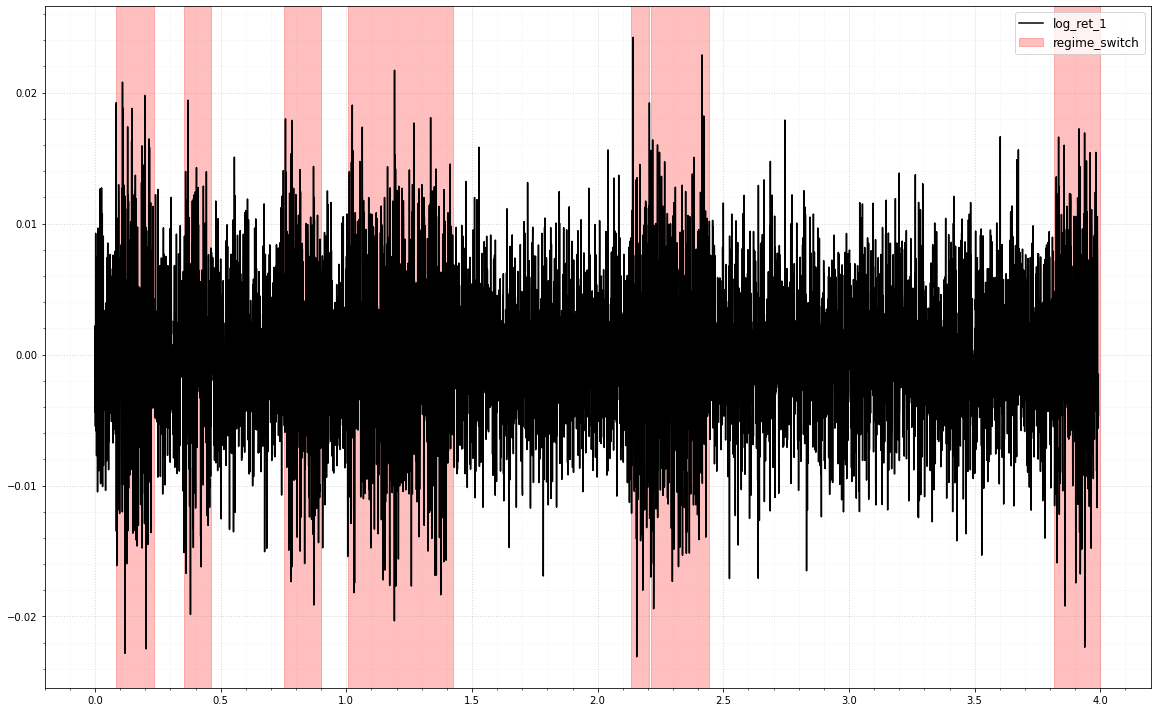}
        \caption{Log returns.}
        \label{fig:toyexamplelogret}
    \end{subfigure}
    \caption{Sample regime-changed path, toy example. Red shaded areas indicate periods of regime change, where $\mathbb{P}_{\theta_2}$ is the governing model.}
    \label{fig:regimechangedpathtoyexample}
\end{figure}
    
Our hyperparameter choice $h$ is given by $(h_1, h_2) = (7, 10)$, which partitions paths into roughly a day's worth of returns in progressive ensembles of $10$. Given the regime-changed path $\hat{\mathsf{s}}$, we calculate the set of sub-paths $\mathcal{SP}_h(\hat{\mathsf{s}})$ and transform them via the stream transformer $\Phi = \phi_{\text{incr}} \circ \phi_{\text{time}} \circ \phi_\text{norm}$. This allows us to then calculate the set of transformed ensemble paths $\mathcal{EP}^\Phi_h(\hat{\mathsf{s}})$ relative to $\mathcal{SP}^{\Phi}_h(\hat{\mathsf{s}})$. 

We evaluate the resulting path assuming that the modeller holds one set of static beliefs. In particular we set $\mathfrak{P}$ to be paths simulated from $\mathbb{P}_{\theta_1}$, so the modeller's beliefs match the standard regime. We populate $\mathfrak{P}$ with $N = 100,000$ solutions to the SDE from eq. (\ref{eqn:gbm}) over the mesh $\Delta = \{ndt : n = 0, \dots, 6\}$. The bootstrapped distribution $\mathfrak{D}$ of $\mathcal{D}^1_{\text{sig}}$ under $\mathfrak{P}$ from Definition \ref{def:bootstrappedmmd} was generated with $M = 1000$ pairwise ensembles of size $10$ drawn from $\mathfrak{P}$. We obtain the associated critical value $c_\alpha$ to $\mathfrak{D}$ with $\alpha = 0.05$. 

Regarding the choice of metric, the rank $1$ MMD is appropriate for this example due to Theorem \ref{thm:rankmarkov}. Utilizing a higher-rank version of the MMD would incur increased computational cost for marginal improvements in inference. Finally, we applied the RBF lift in the calculation of the signature kernel and chose the smoothing hyperparameter $\sigma = 0.025$. The associated maximum mean discrepancy is given by eq. (\ref{eqn:rankrmmdrbf}). We determined $\sigma$ via a bootstrapping technique and note here that the process of determining an optimal scaling in a finite-sample setting is the topic of current research. 

\begin{figure}[ht]
    \centering
    \begin{subfigure}{0.5\linewidth}
        \centering
        \includegraphics[width=\textwidth]{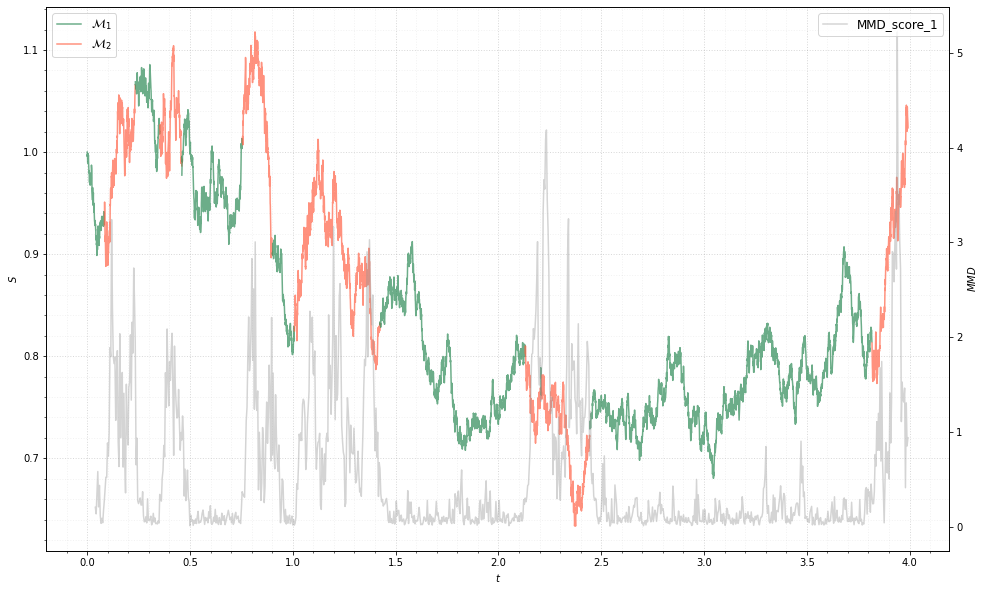}
        \caption{Regime-changed path and score vector $\Lambda(\hat{\mathsf{s}})$.}
        \label{fig:mmdscoretoy}
    \end{subfigure}%
    \begin{subfigure}{0.5\linewidth}
        \centering
        \includegraphics[scale=0.2275]{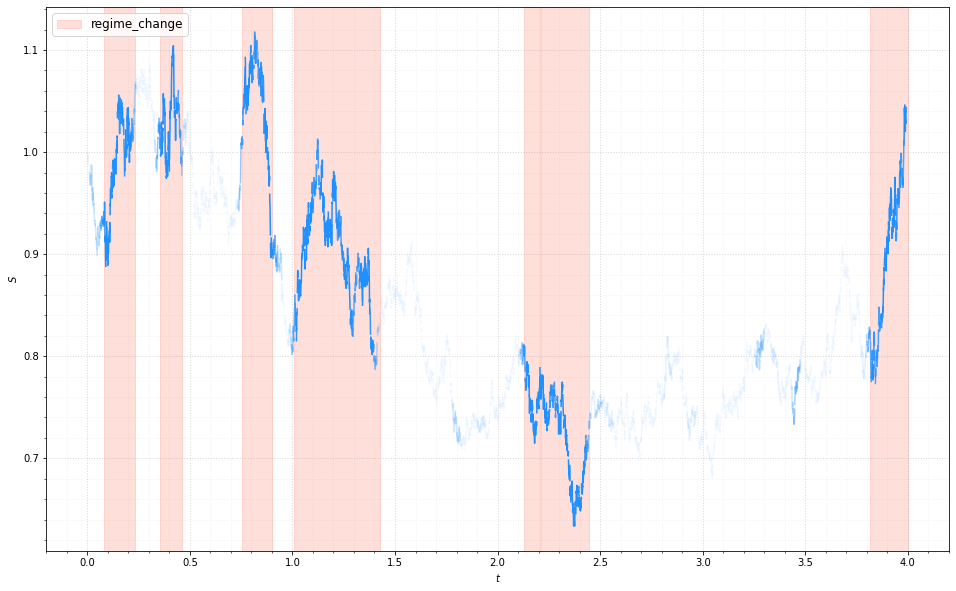}
        \caption{$\mathcal{D}_\text{sig}^1 > c_\alpha$ plot.}
        \label{fig:alphascoretoy}
    \end{subfigure}
    \caption{MMD score and accompanying threshold plot, sample run of toy example.}
    \label{fig:toyexamplemmd}
\end{figure}

We take a moment here to explain the plots shown in Figure \ref{fig:toyexamplemmd} which we will continue to display over the course of the paper. In Figure \ref{fig:mmdscoretoy} we have the same regime-changed path from Figure \ref{fig:toyexamplepath}. Green shading indicates that section of the path was generated according to $\mathbb{P}_{\theta_1}$, and red shading corresponds to $\mathbb{P}_{\theta_2}$. In grey we have the score matrix $\Lambda(\hat{\mathsf{s}})$ from eq. (\ref{eqn:scorematrix}) which in this case (as we have just one set of beliefs) is the vector $(\alpha(\boldsymbol{s}^i))_{i=1}^{N_2}$. Figure \ref{fig:alphascoretoy} quantifies the degree of regime change by individually plotting each element $s \in \mathcal{SP}_h(\hat{\mathsf{s}})$. The intensity of shading is given by the percentage of the time a given path $s$ was part of an ensemble $\boldsymbol{s} \in \mathcal{EP}_h(\hat{\mathsf{s}})$ such that $\alpha(\boldsymbol{s}) > c_\alpha$, i.e., part of an ensemble such that the two-sample test concluded that $\boldsymbol{s}$ is distributed differently to the underlying beliefs of the modeller.

Hence, in Figure \ref{fig:mmdscoretoy}, one can visually see that the parametric MMD detector is able to discriminate between paths drawn from the first model pair and those from the second. This is verified by the $\alpha$-plot Figure \ref{fig:alphascoretoy}. One can see that in the red shaded regime-changed regions the detector is able to identify periods of anomalous behaviour. 

We close this section with an illustration of detection with non-parametric beliefs. Recalling the definition of the auto evaluator from Definition \ref{def:autoevaluationscore}, we provide a plot of the 1-lag unweighted auto-evaluator score $A_{\{1\}}(\hat{\mathsf{s}})$ from (\ref{eqn:onelaggedscore}). Here, we see how the MMD score sharply increases at the regime-change points $(\tau_i)_{i\ge 0}$. To illustrate the effect of taking more lags, we plot the vector of scores associated to $A_{\{1,2,3,4,5\}}(\hat{\mathsf{s}})$ in Figure \ref{fig:5lagtoy}. Here, the averaging process has the effect of smoothing the score vector at the cost of less prominent change point signals.

\begin{figure}[ht]
    \centering
    \begin{subfigure}{0.5\linewidth}
        \centering
        \includegraphics[width=\textwidth]{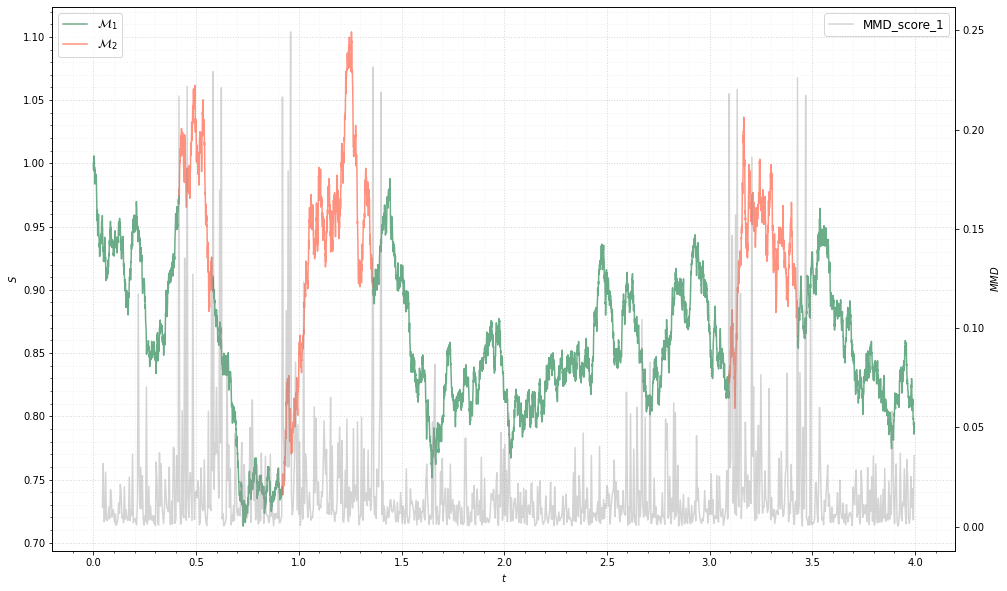}
        \caption{1-lag.}
        \label{fig:1lagtoy}
    \end{subfigure}%
    \begin{subfigure}{0.5\linewidth}
        \centering
        \includegraphics[width=\textwidth]{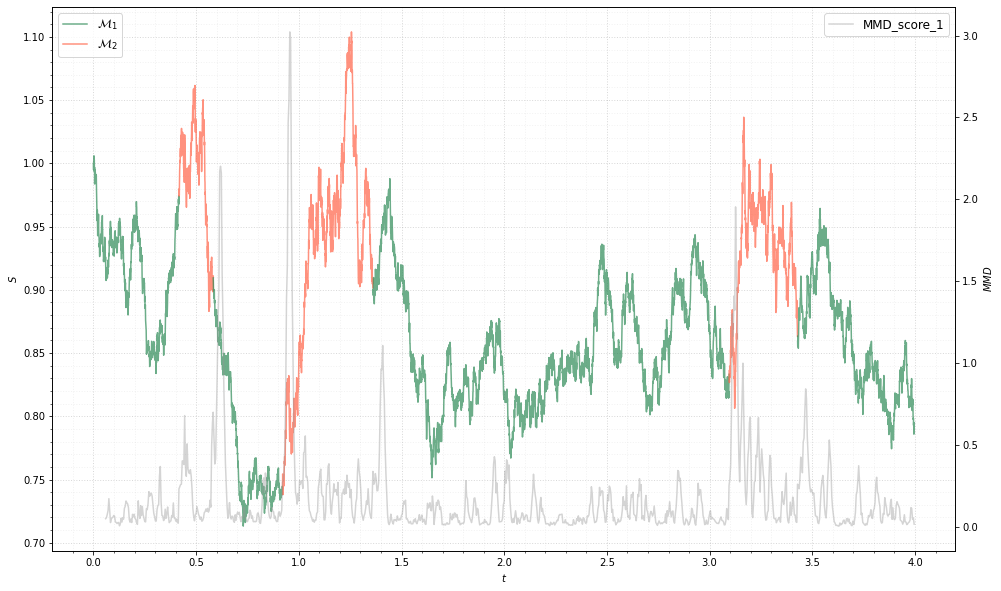}
        \caption{5-lag.}
        \label{fig:5lagtoy}
    \end{subfigure}
    \caption{Sample auto-evaluator scores, toy example. Using more lags results in a smoother rolling MMD score. However, signals can appear less prominent, as evidenced by the final regime change.}
    \label{fig:toyexampleautoevaluator}
\end{figure}

    \subsection{Multiclass beliefs}\label{subsec:toymulticlass}

In most instances, holding one set of parametric beliefs will not provide the modeller a descriptive enough picture of the componentry surrounding a 
regime change, should one occur. In practice one might choose to hold multiple beliefs $\mathfrak{P} = (\mathfrak{P}_1, \dots, \mathfrak{P}_k)$ and evaluate observations against each of these in order to characterize a regime change should one occur. 

Here, we illustrate an example where we again restrict ourselves to the modelling universe of geometric Brownian motion for simplicity. Model pairs are given by $\mathbb{P}_\theta$ for $\theta = (\mu, \sigma)$. To generate the regime-changing path, we define the model pair sequence $\mathbb{M} = (\mathbb{P}_{\theta_1}, \dots, \mathbb{P}_{\theta_{20}})$ where $\theta_i = (0, 0.2)$ for $i=1,3,\dots,19$ and $\theta_j = (0, 0.1 + (j-2)/60)$ for $j=2, 4, \dots, 20$. In this way, the regime switching path alternates between a base state and an increasingly volatile state. We set our beliefs $\mathfrak{P} = (\mathfrak{P}_1, \mathfrak{P}_2, \mathfrak{P}_3)$ to be generated by a geometric Brownian motion, with parameters given by $(0, \sigma_i)$ where $\sigma_1 = 0.1, \sigma_2 = 0.2$ and $\sigma_3 = 0.4$. All other parameters remain the same as per Subsection \ref{subsec:toyexample}. 

Figure \ref{fig:multiclassregimechanged} gives a plot of a sample regime changed path $\hat{\mathsf{s}} \in \mathcal{T}_\Delta([0, T]; \mathbb{R})$. We first present a plot of the regime change path along with the corresponding log-returns. One can clearly observe the increasing volatility with each successive regime change. 

\begin{figure}[ht]
    \centering
    \begin{subfigure}{0.5\linewidth}
        \centering
        \includegraphics[width=\textwidth]{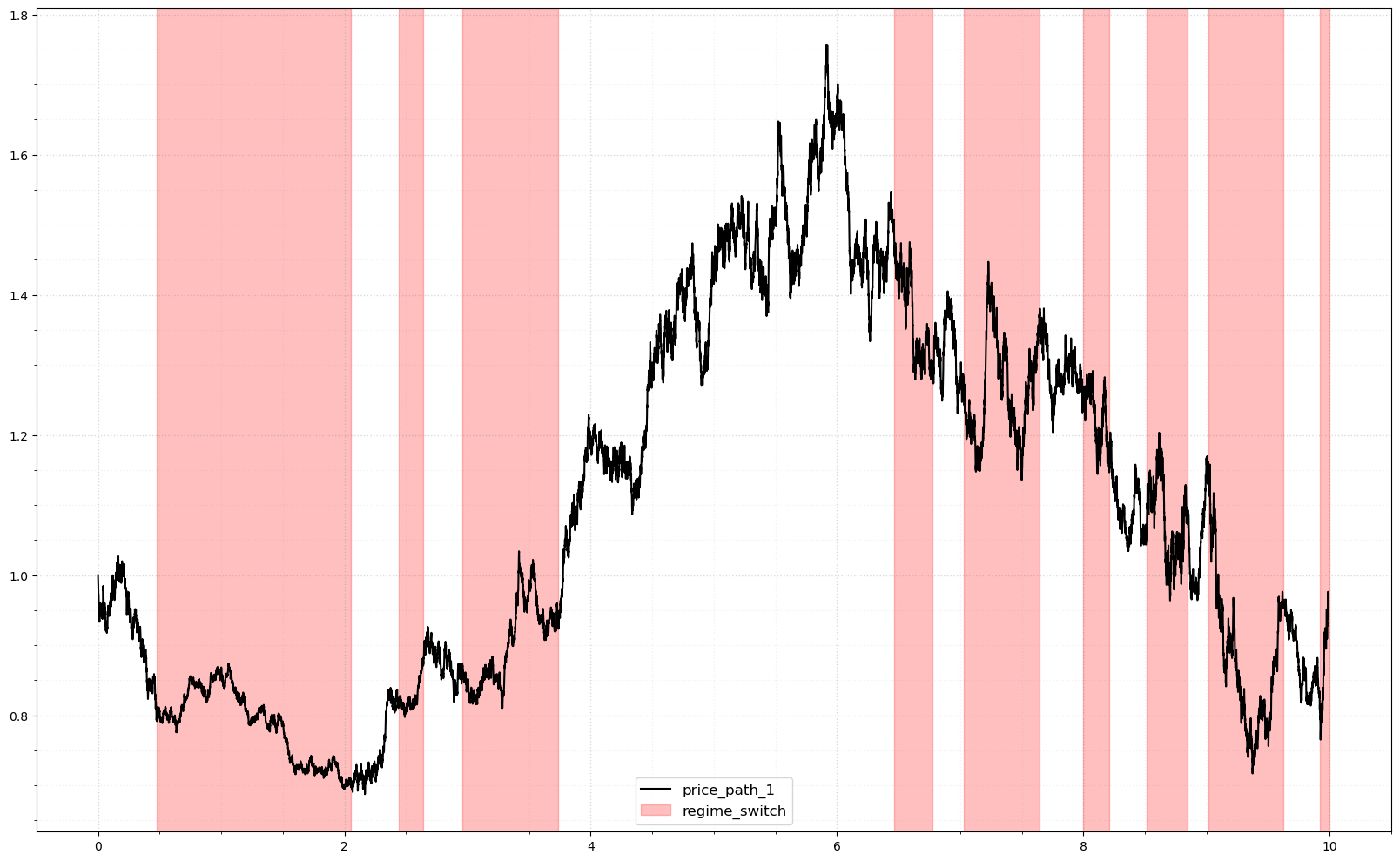}
        \caption{Price path.}
        \label{fig:multiclasspath}
    \end{subfigure}%
    \begin{subfigure}{0.5\linewidth}
        \centering
        \includegraphics[width=\textwidth]{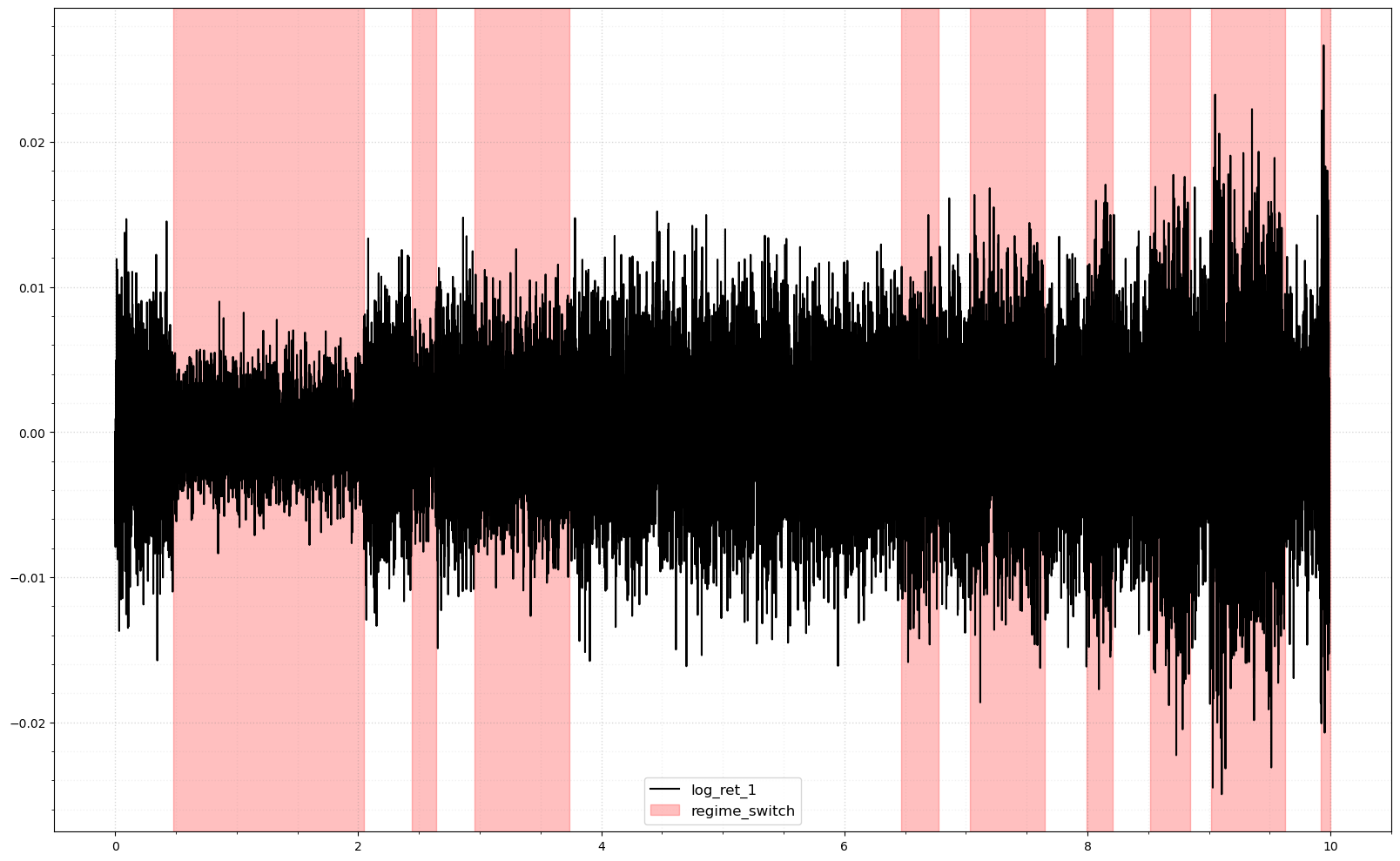}
        \caption{Log returns.}
        \label{fig:multiclassrets}
    \end{subfigure}
    \caption{Sample regime changed path, multiclass example. Each successive regime change (given by the shaded region in red) results in increasingly volatile path values, which is displayed clearly on the log returns plot.}
    \label{fig:multiclassregimechanged}
\end{figure}

We choose the hyperparameter vector $h = (8, 16)$ and instantiate our metric as the rank $1$ signature kernel MMD with the linear kernel, as per Definition \ref{def:rankrmmd}. We use the same path transformations as per the previous section and calculate the score matrix from eq. (\ref{eqn:scorematrix}) from the set of transformed ensemble paths relative to the generated regime-change path. 

\begin{figure}[ht]
    \centering
    \begin{subfigure}{0.33\linewidth}
        \centering
        \includegraphics[width=\textwidth]{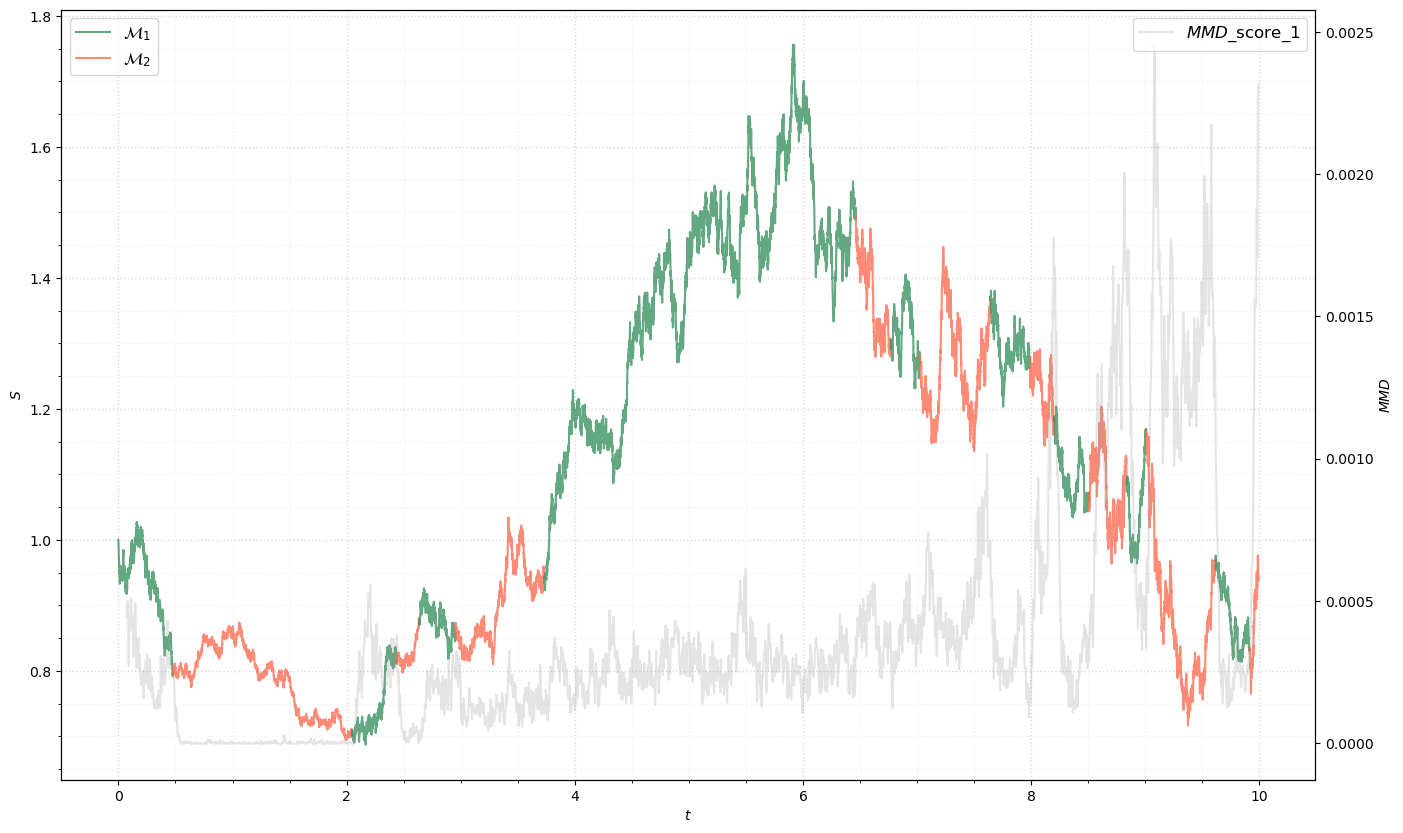}
        \caption{$\mathfrak{P}_1: \sigma=0.1$.}
        \label{fig:multiclassscore1}
    \end{subfigure}%
    \begin{subfigure}{0.33\linewidth}
        \centering
        \includegraphics[width=\textwidth]{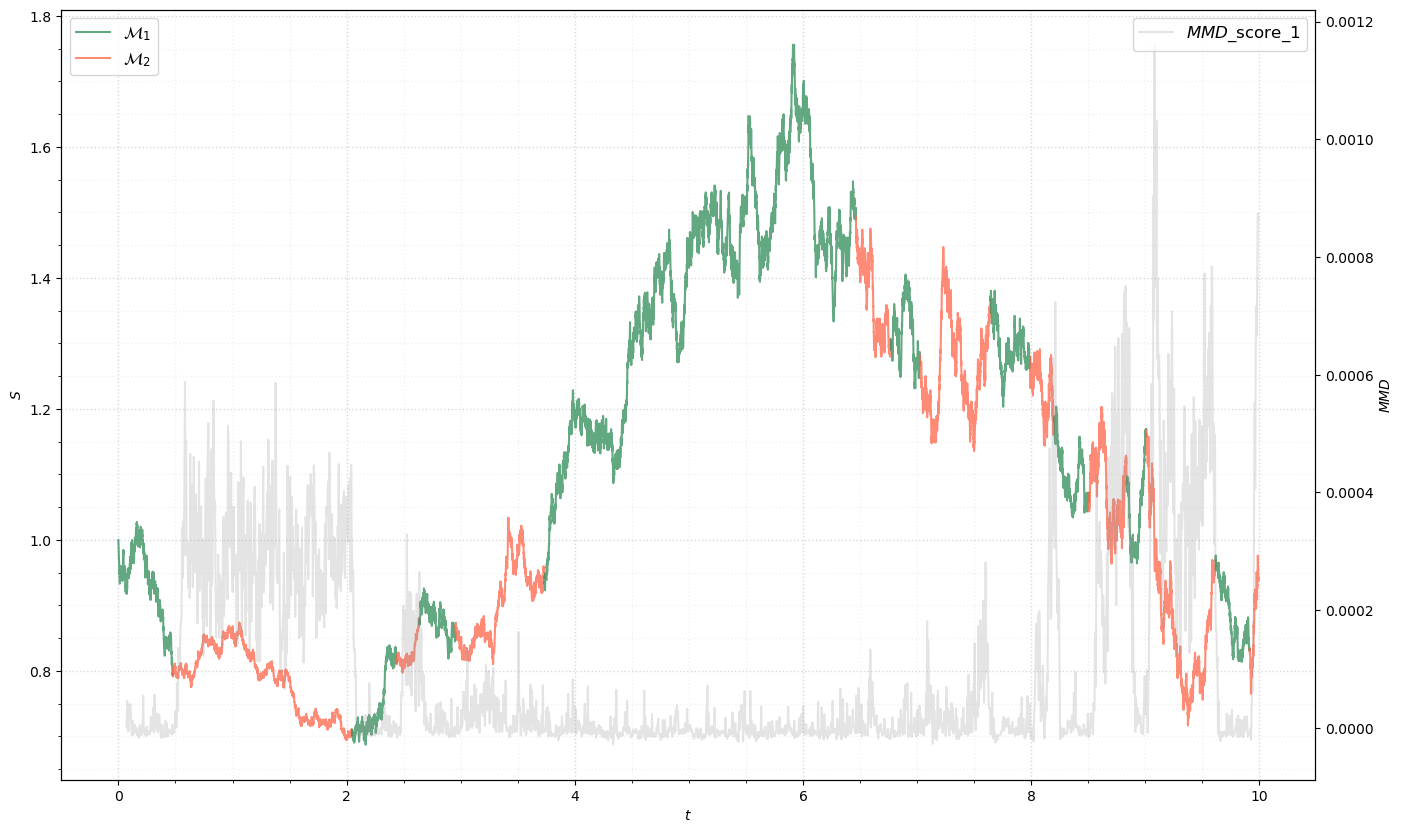}
        \caption{$\mathfrak{P}_2: \sigma =0.2$.}
        \label{fig:multiclassscore2}
    \end{subfigure}
    \begin{subfigure}{0.33\linewidth}
        \centering
        \includegraphics[width=\textwidth]{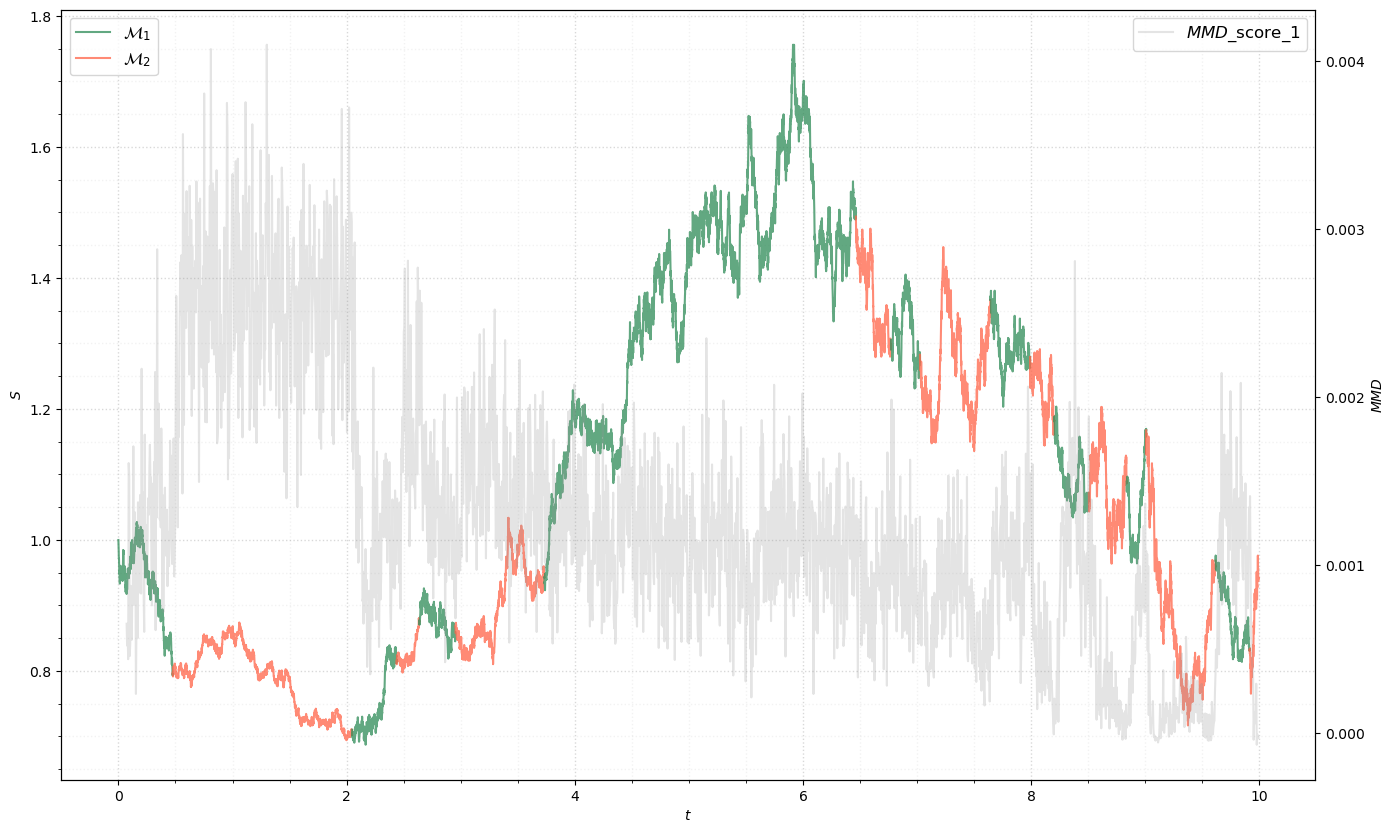}
        \caption{$\mathfrak{P}_3 : \sigma=0.4$.}
        \label{fig:multiclassscore3}
    \end{subfigure}
    \caption{MMD scores associated to each belief, gBm.}
    \label{fig:multiclassmmdscores}
\end{figure}

The MMD scores evolve as one expects: in the lowest volatility case, they increase during each successive regime change, and the reverse is true for the belief corresponding to the highest volatility. The middle ``base case'' belief exhibits a U-shaped MMD score as one would expect. This illustrates how holding multiple beliefs can be useful in determining directional change in regimes. According to the first two beliefs, the latter intervals constitute a regime change, though it may be initially unclear as to what that regime change represents. By including a high-volatility belief, we can see that the associated score vector during these latter intervals is low, characteristic the corresponding regime.

    \subsection{Single path (non-ensemble) evaluation}\label{subsec:toysinglepath}

In this section we present a toy example where online regime detection can be performed on a path-by-path basis, as opposed to ensembles of paths against ensembles of paths. This may be required in instances where data of the desired structure is scarce, or a more sensitive detection algorithm is required.

In our framework, for a given path $\hat{\mathsf{s}} \in \mathcal{T}_\Delta([0, T]; \mathbb{R}^d)$, single path evaluation requires working successively on the space of sub-paths $\mathcal{SP}(\hat{\mathsf{s}})$ as opposed to $\mathcal{EP}_h(\hat{\mathsf{s}})$. Thus we require a function that operates directly on $\mathcal{SP}_h(\hat{\mathsf{s}})$. This necessitates the use of the similarity score $\Sigma^{\mathbb{P}, \mathbb{Q}}$ (cf Def. \ref{def:conformancescore}) where $\mathbb{P}, \mathbb{Q}$ are distributions on path space. Recall that $\Sigma^{\mathbb{P}, \mathbb{Q}}$ gives a measure of conformance a path $s \in \mathcal{SP}_h(\hat{\mathsf{s}})$ has to either $\mathbb{P}$ or $\mathbb{Q}$. Negative values indicate greater conformance to $\mathbb{P}$; positive values indicate greater conformance to $\mathbb{Q}$.

\begin{figure}[h]
    \centering
    \begin{subfigure}{0.5\linewidth}
        \centering
        \includegraphics[width=\textwidth]{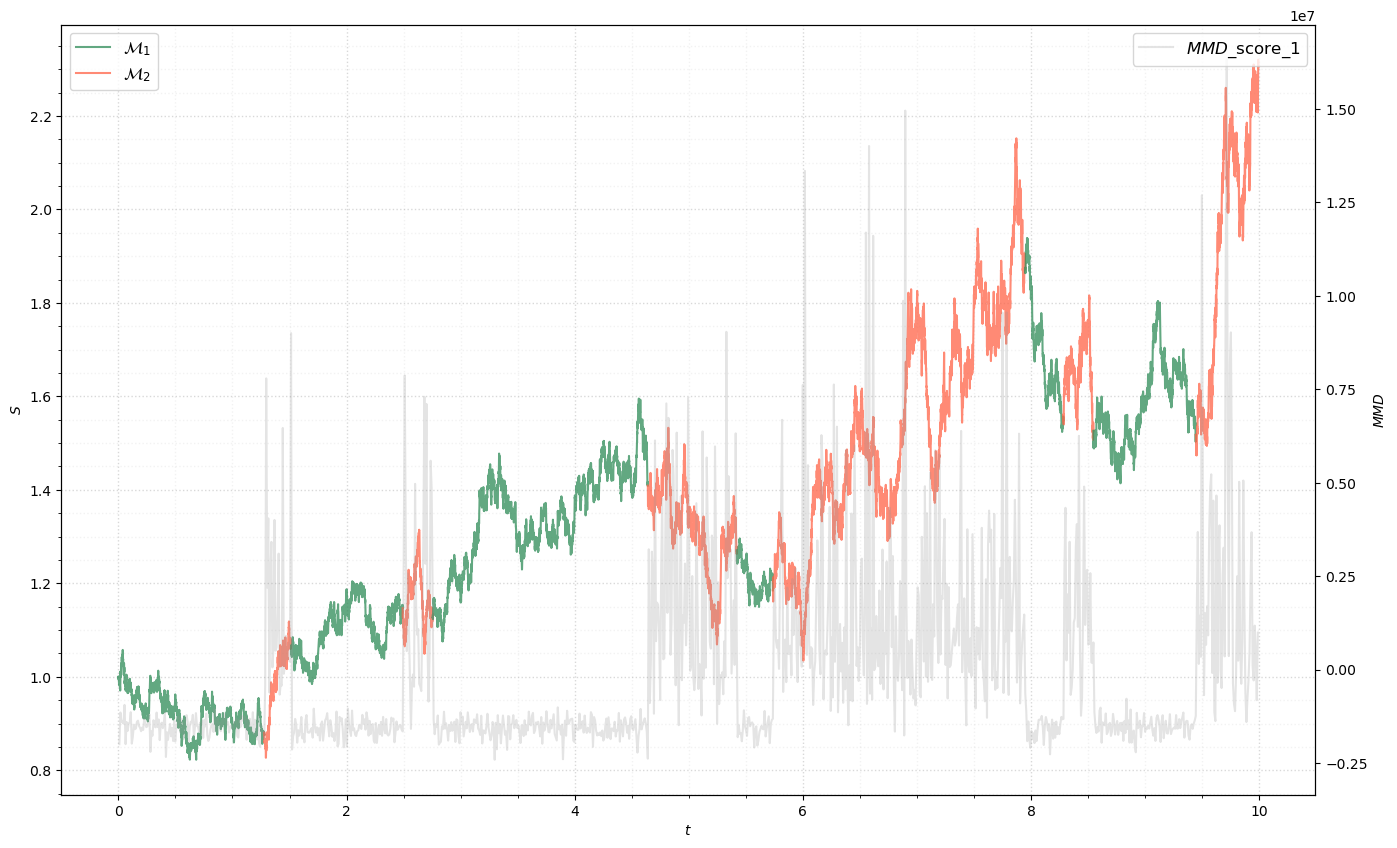}
        \caption{Value of $\Sigma^{\mathfrak{P}_1, \mathfrak{P}_2}(s)$.}
        \label{fig:scoringvals}
    \end{subfigure}%
    \begin{subfigure}{0.5\linewidth}
        \centering
        \includegraphics[scale=0.2275]{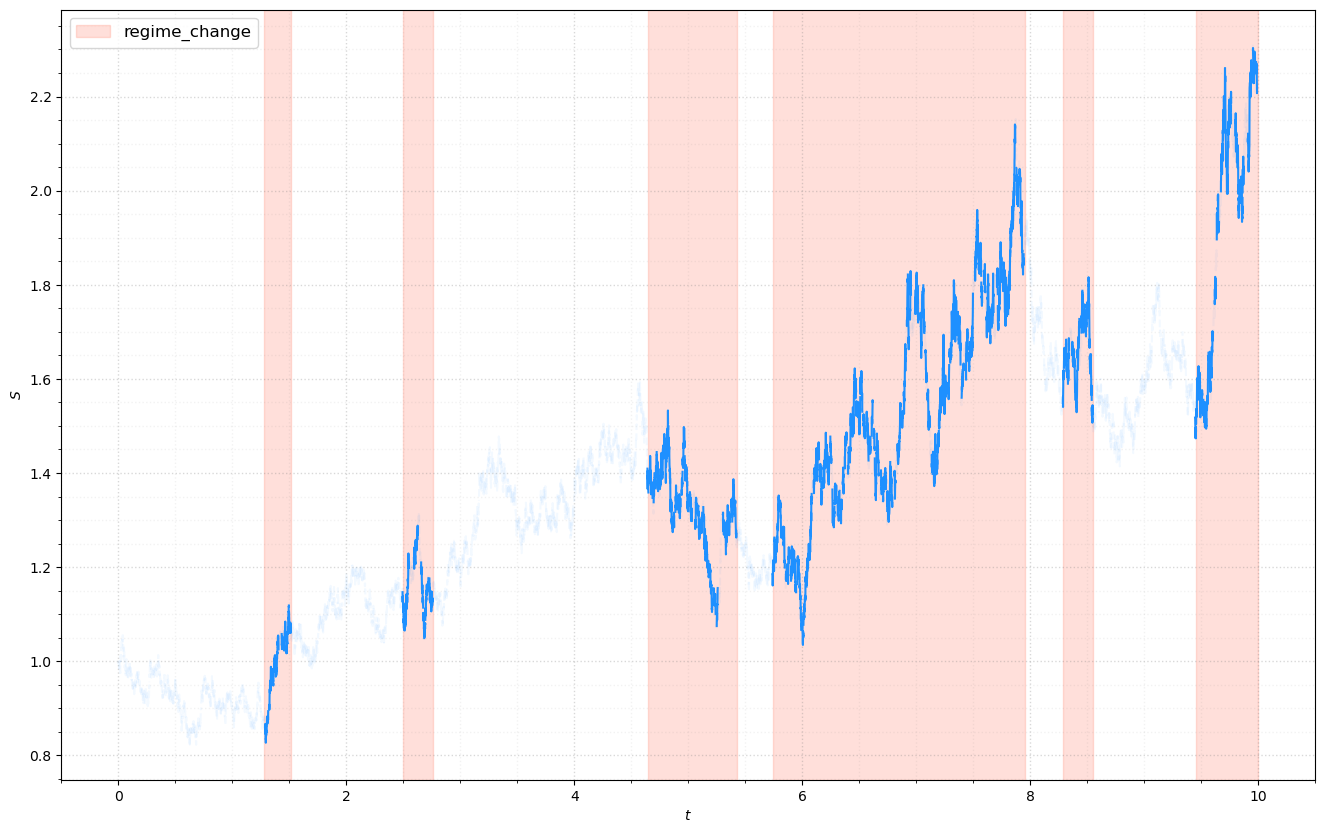}
        \caption{$\Sigma^{\mathfrak{P}_1, \mathfrak{P}_2}(s) \ge 0$ plot.}
        \label{fig:scoringthresh}
    \end{subfigure}
    \caption{Regime detection with conformance scoring function, toy example.}
    \label{fig:scoringexample}
\end{figure}

In what follows we use the same experimental setup as Subsection \ref{subsec:toyexample}, including the same choice of hyperparameters for division and transformation of an asset price path $\hat{\mathsf{s}} \in \mathcal{T}_\Delta([0, T]; \mathbb{R})$ which is again a regime-switching geometric Brownian motion. We set our beliefs to be $\mathfrak{P} = (\mathfrak{P}_1, \mathfrak{P}_2)$ where the generating model for each belief is a geometric Brownian motion with parameters $(\mu_1, \sigma_1) = (0, 0.2)$ and $(\mu_2, \sigma_2) = (0, 0.3)$. For evaluation, we now calculate the similarity score matrix $\Sigma^{\mathfrak{P}}(s)$ from eq. (\ref{eqn:conformancematrix}) associated to each sub-path extracted from the regime change path. We sample 64 paths from each belief to calculate the unbiased estimators of the constituent scoring rules.

In Figure \ref{fig:scoringexample}, we see that the similarity scoring function behaves as expected: when we are in the first regime, the sample score is negative, indicating proximity to the first belief over the second. When the regime changes, the sample score increases, indicating closer conformance to the second belief pair. \\
We note that this example is highly idealized as our beliefs again perfectly reflect the two types of regimes observed. We will see in Section \ref{sec:realdata} that this not need be true in order to utilise this approach for regime detection with a real data example.

    \subsection{Rough and higher-dimensional processes}\label{subsec:rbergomi}

The general detection problem associated to the toy example presented in Section \ref{subsec:toyexample} can be solved with more rudimentary tools that those proposed by us. This section showcases that the method can be used in more realistic (synthetic) models which include autocorrelative effects such as the rough Bergomi model. It also showcases that the the method can be used both for regime detection and market regime clustering in high-dimensional examples (dimension $d=20$).\\
Here, we give an example where the underlying model is given by the strong solution to the SDE 
\begin{equation}\label{eqn:RoughBergomi}
    dX_t = -\frac{1}{2}V_t dt + \sqrt{V_t} dW_t \quad \text{where} \quad d\xi_t^u = \xi_t^u \eta \sqrt{2\alpha + 1}(u-t)^\alpha dB_t, \quad X_0 = 1.
\end{equation}
Eq. (\ref{eqn:RoughBergomi}) was introduced in \cite{bayer2016pricing} and is called the \emph{rough Bergomi (rBergomi) model}. It is parametrized by the vector $\theta = (\xi_0, \nu, \rho, H)$, where $\xi_0$ is the initial (forward) variance, $\nu$ controls the level of the at-the-money (ATM) skew of the associated volatility surface, $\rho$ the correlation between volatility and price moves, and $H$ the Hurst exponent governing the roughness of the volatility process. 

In the example that follows, regime-switching dynamics are given by  
\begin{equation*}
    \mathbb{M}_1 = \left(\text{rBergomi}, \theta_1\right), \text{ and }\mathbb{M}_2 = \left(\text{rBergomi}, \theta_2\right),
\end{equation*}
with
\begin{equation*}
    \theta_1 = (0.03, 0.5, -0.7, 0.4), \quad \theta_2 = (0.03, 0.5, -0.7, 0.3).
\end{equation*}
In this way we seek to detect a change in the Hurst exponent. Again we set our beliefs to be one-class, so samples in $\mathfrak{P}$ are generated according to $\mathbb{P}_{\theta_1}$. Path samples drawn from a given model pair include both the stock $\mathsf{s}$ and volatility processes $\mathsf{v}$. Our regime-changed path is thus given by the pair $\hat{\mathsf{s}}_v = (t, \mathsf{s}, \mathsf{v}) \in \mathcal{T}_\Delta([0, T]; \mathbb{R}^{2d})$ where $d \in \mathbb{N}$ is the dimensionality of the stock price process. Initially we take $d=1$. Again we define the mesh so that each time step roughly represents an hourly return. 

\begin{remark}[Observation of volatility process]
    In practice $\mathsf{v}$ is not directly observable; however, there exist proxies which can be used instead, see for instance the Oxford Man Institute's realised volatility library\footnote{The original library, at time of publication, seems to have been discontinued, although it was available at the time of writing.}.
\end{remark}

The regime-change dynamics are again given as in Section \ref{subsec:toyexample}: our regime change signal is given by $Z_1 \sim \mathrm{Po}(2)$ and our exit simple by $Z_2 \sim \mathrm{Po}(1/30)$. We give here an example of such a path in Figure \ref{fig:rbergomi1dpath}.

\begin{figure}[h]
    \centering
    \begin{subfigure}{0.5\linewidth}
        \centering
        \includegraphics[width=\textwidth]{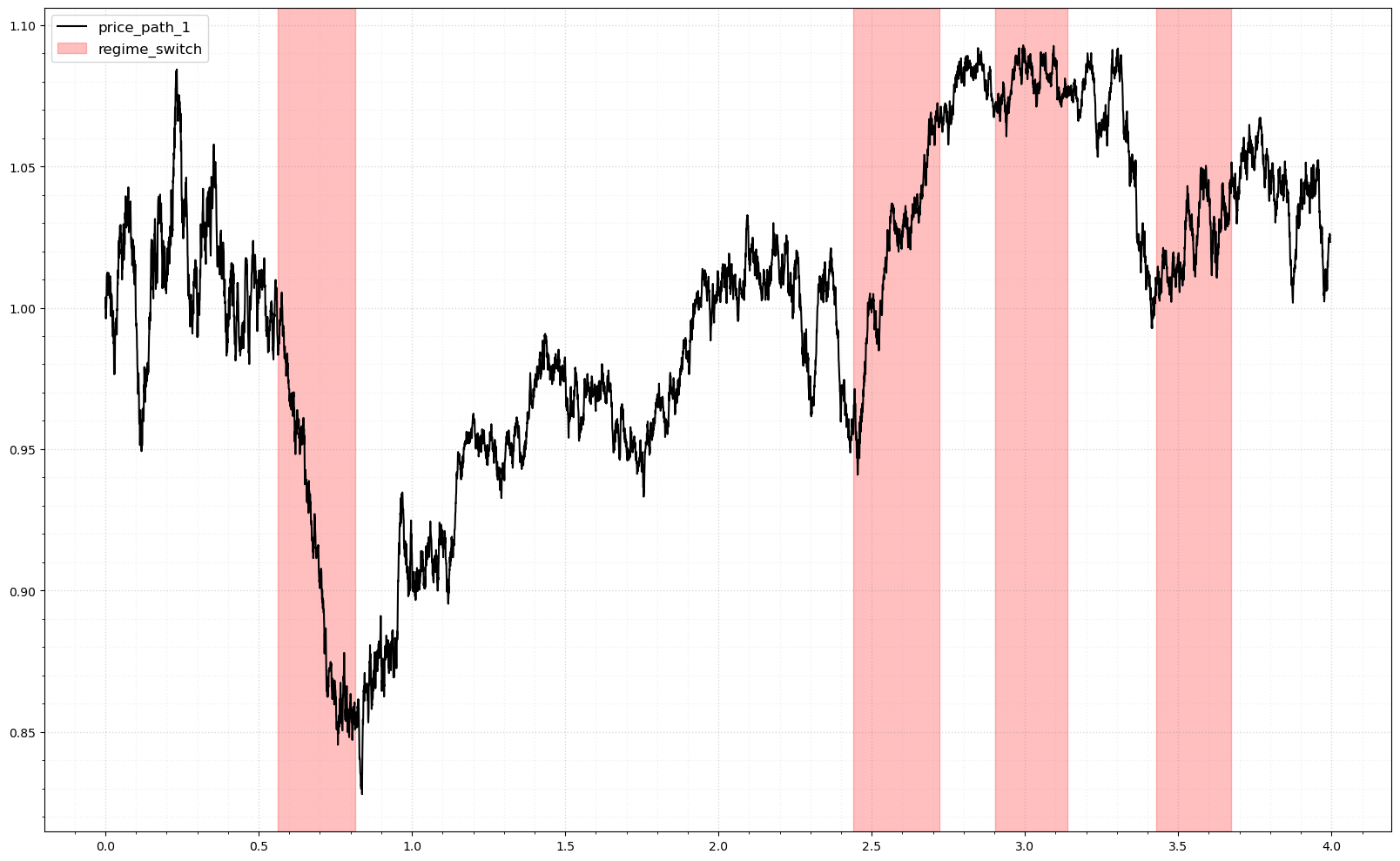}
        \caption{Price path.}
        \label{fig:rbergomi1dpath}
    \end{subfigure}%
    \begin{subfigure}{0.5\linewidth}
        \centering
        \includegraphics[width=\textwidth]{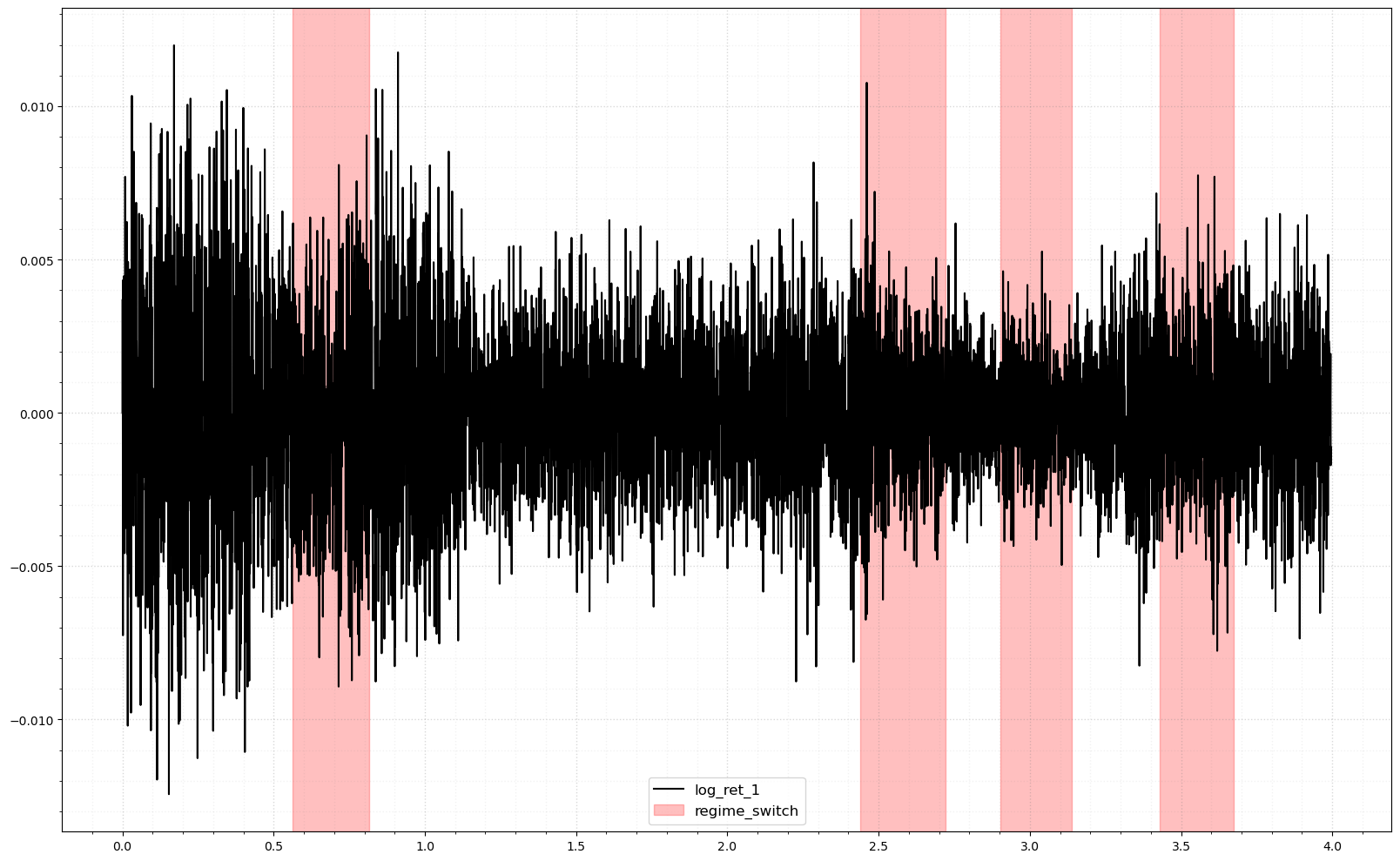}
        \caption{Log returns.}
        \label{fig:rbergomi1dret}
    \end{subfigure}
    \caption{Sample regime changed path, rBergomi model.}
    \label{fig:rbergomipathandret}
\end{figure}

We process a given regime-changed path $\hat{\mathsf{s}}_v$ by choosing $h=(16, 8)$ in order to generate $\mathcal{SP}_h(\hat{\mathsf{s}}_v)$. We take the path transformer $\Phi = \phi_{\mathrm{norm}} \circ \phi_{\mathrm{time}}$ to obtain $\mathcal{SP}_h^\Phi(\hat{\mathsf{s}}_v)$ and thus $\mathcal{EP}^\Phi_h(\hat{\mathsf{s}}_v)$. As filtration information is relevant, we choose our evaluation metric to be the rank $1$ MMD $\mathcal{D}^1_{\text{sig}}$ with associated RBF kernel smoothing hyperparameter $\sigma_1 = 0.025$. A sample result is given in Figure \ref{fig:rbergomi1dresults}, where one can visually identify the MMD score increasing during periods of regime change and decreasing when the path returns to conforming to our specified beliefs. The threshold plot confirms our visual intuition, with the MMD score being greater than the prior threshold more often during periods of regime change. 

\begin{figure}[h]
    \centering
    \begin{subfigure}{0.5\linewidth}
        \centering
        \includegraphics[width=\textwidth]{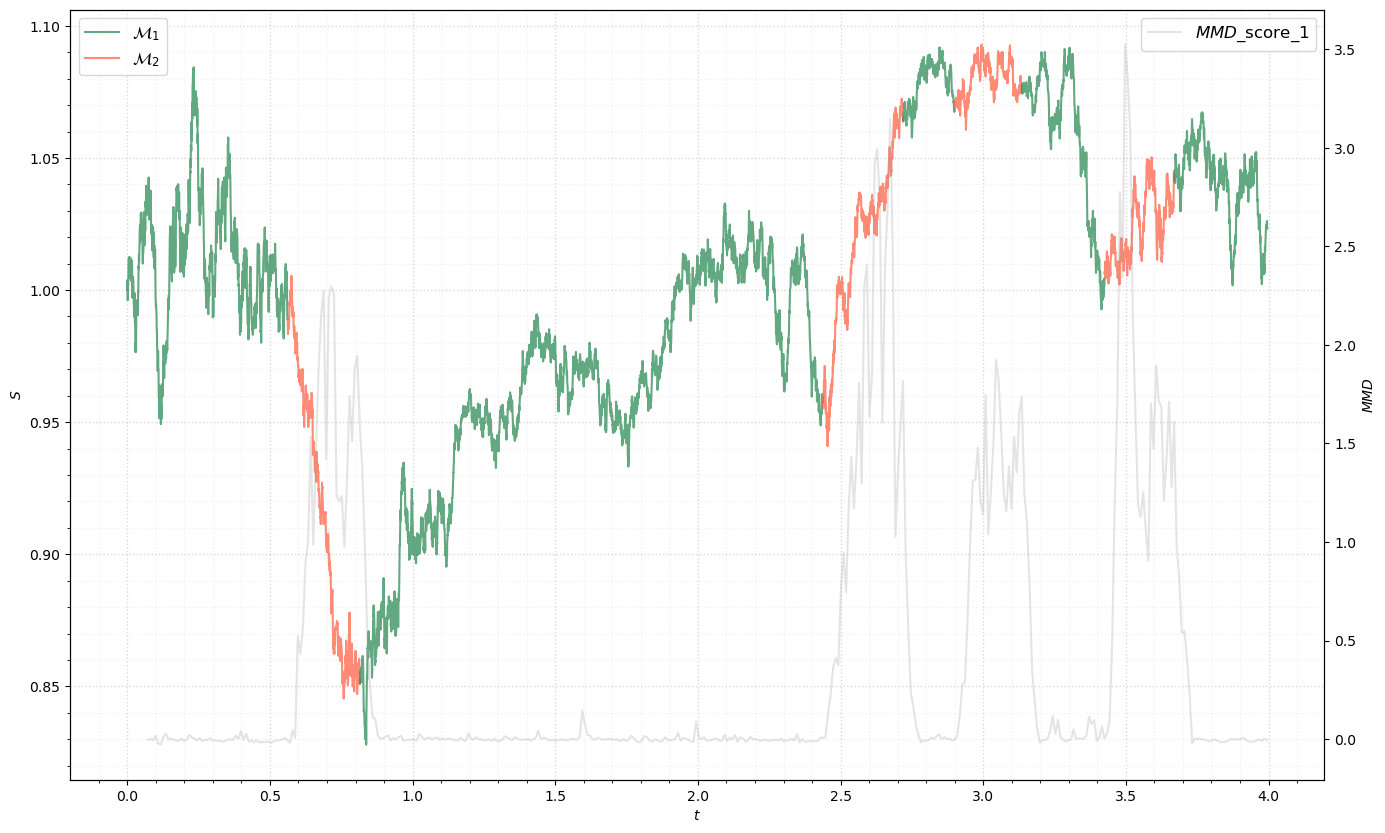}
        \caption{$\mathcal{D}_\text{sig}^1$ score.}
        \label{fig:mmdscorerbergomi1d}
    \end{subfigure}%
    \begin{subfigure}{0.5\linewidth}
        \centering
        \includegraphics[scale=0.2275]{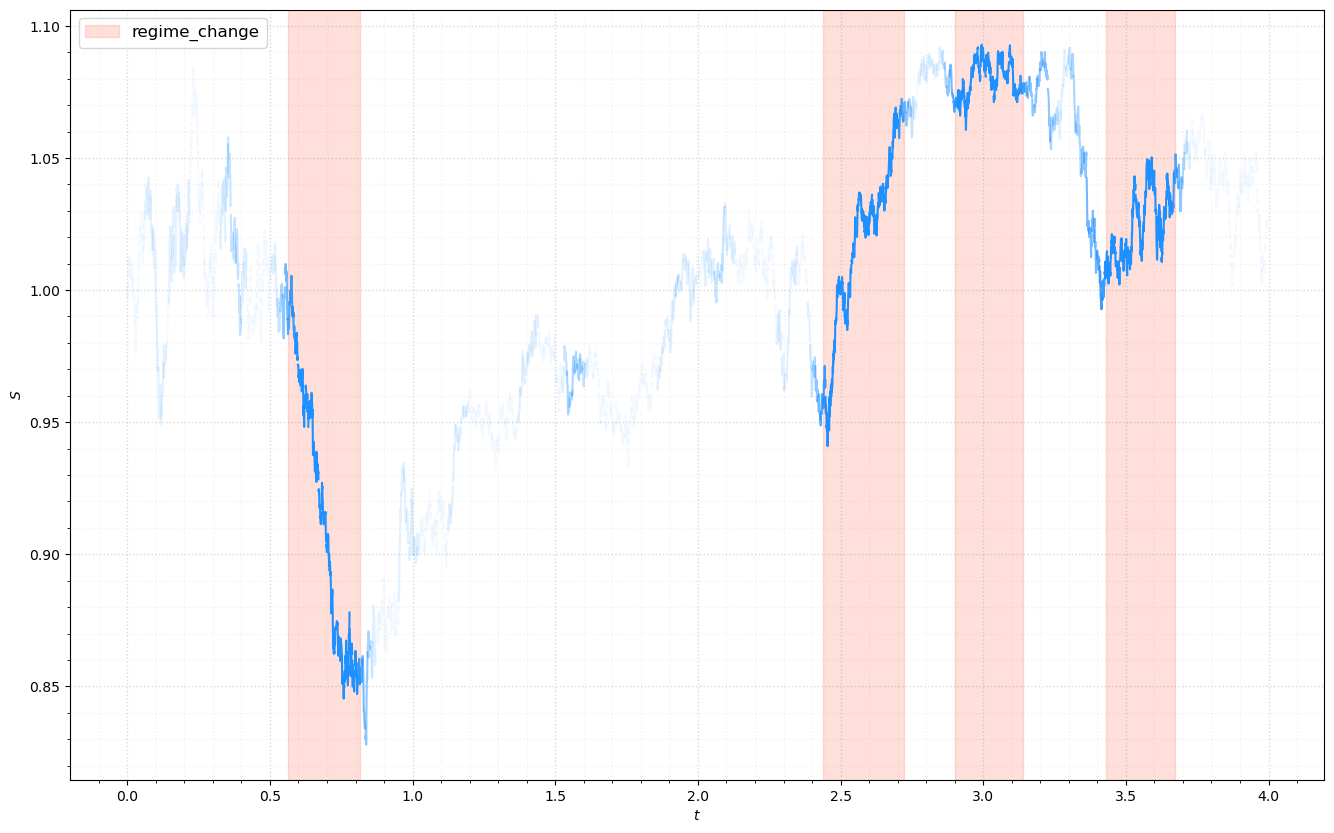}
        \caption{$\mathcal{D}_\text{sig}^1 > c_\alpha$ plot.}
        \label{fig:alphascorerbergomi1d}
    \end{subfigure}
    \caption{MMD score and accompanying threshold plot, rBergomi model, $d=1$.}
    \label{fig:rbergomi1dresults}
\end{figure}

As outlined in Section \ref{subsec:mmd}, the cost of calculating the MMD under the signature kernel $k_{\text{sig}}$ from (\ref{eqn:signaturekernel}) is linear in the state-space dimension of $\hat{\mathsf{s}}_v$. Thus it is computationally feasible to perform the same experiment as above with rBergomi paths of considerably higher dimensionality. In this way we leverage the fact that we can both perform higher-dimensional MMD computations with ease. In what follows we set $d=10$ so our regime-changed path $\hat{\mathsf{s}}_v \in \mathcal{T}_\Delta([0, T], \mathbb{R}^{20})$ is comprised of $10$ rBergomi stock price processes with their accompanying volatility processes. Each stock-volatility pair $(\mathsf{s}^i, \mathsf{v}^i), i=1,\dots,10$ is initially distributed according to $\theta_1$ and is suitably regime-changed as outlined at the beginning of the section. Again we choose $h=(16, 8)$ and our metric is given by $\mathcal{D}_\text{sig}^1$, with $\sigma_1 = 0.025$.
 
\begin{figure}[h]
    \centering
    \begin{subfigure}{0.5\linewidth}
        \centering
        \includegraphics[width=\textwidth]{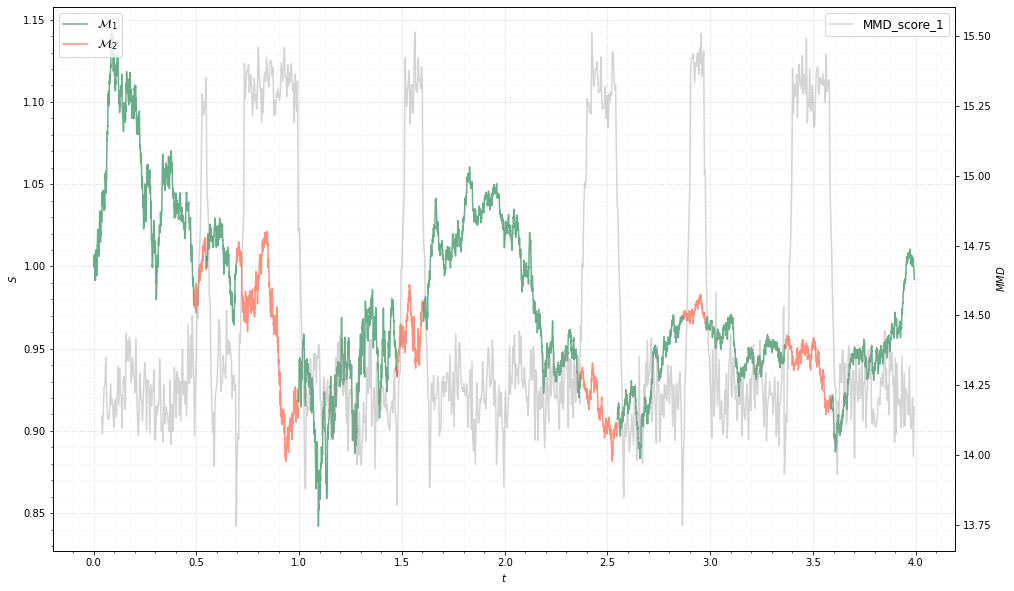}
        \caption{$\mathcal{D}_\text{sig}^1$ score.}
        \label{fig:mmdscorerbergomi10d}
    \end{subfigure}%
    \begin{subfigure}{0.5\linewidth}
        \centering
        \includegraphics[scale=0.2275]{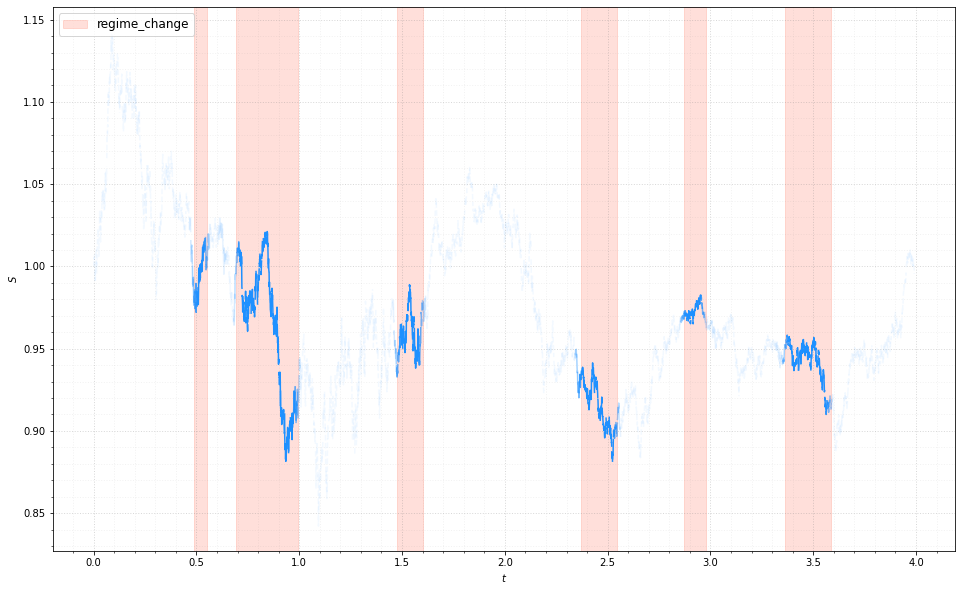}
        \caption{$\mathcal{D}_\text{sig}^1 > c_\alpha$ plot.}
        \label{fig:alphascorerbergomi10d}
    \end{subfigure}
    \caption{MMD score and accompanying threshold plot, rBergomi model, $d=10$.}
    \label{fig:rbergomi10dresults}
\end{figure}

In Figure \ref{fig:rbergomi10dresults}, we give a sample stock price process $\mathsf{s}^1$ from the regime-changed path $(\mathsf{s}, \mathsf{v})$. We again display the running MMD score and the threshold plot. As one can see from Figure \ref{fig:mmdscorerbergomi10d}, the addition of more paths does indeed have the effect of making the regime change signals stronger than compared to those seen in Figure \ref{fig:mmdscorerbergomi1d}. This is verified by the threshold plot seen in Figure \ref{fig:alphascorerbergomi10d} whereby periods of regime change are isolated and identified much more clearly than those in Figure \ref{fig:alphascorerbergomi1d}. 

    \subsection{The higher rank MMD}\label{subsec:higherrankdetection}

As given in Theorems \ref{theroem:mmdmetric} and \ref{thm:rankmarkov} in the Appendix, we have seen from a simple synthetic example with geometric Brownian motions that the rank 1 MMD is capable of capturing differences between Markovian processes. Due to Theorem 2 in \cite{salvi2021higher}, we know the rank 2 MMD is capable of distinguishing between such sets of paths as well. From Subsection \ref{subsec:rbergomi}, we have also seen that the rank 1 MMD is also able to distinguish between non-Markovian processes $X, Y$ if their marginal distributions are sufficiently different. However, if differences between the marginal distributions of $X$ and $Y$ are more subtle, and if there is some information contained within $\mathcal{F}_X$ or $\mathcal{F}_Y$ (or both), then one would expect the regime detector equipped with the rank 2 MMD to outperform that with the rank 1 MMD.

To illustrate this, we provide an experiment where a regime-switching detection problem with prior beliefs is first performed using the rank-1 MMD, and then with the rank-2 MMD associated to the rank-2 signature mapping from Definition \ref{def:rank2signature}. Here our two cycling measures are given by a geometric Brownian motion with parameters $\theta_1 = (0, 0.2)$ and an rBergomi model with parameters $\theta_2 = (0.1, 0.1, -0.7, 0.3)$. We set our beliefs in both settings to that given by the geometric Brownian motion. The hyperparameters for the rank-1 MMD detector was chosen to be $\sigma = 0.5$, where for the rank-2 MMD detector (MMD2-DET), we set $\sigma_1 = 0.5, \sigma_2 = 1.$. These parameters were determined via bootstrapping. Null distributions $\mathfrak{D}^1, \mathfrak{D}^2$ were obtained by further bootstrapping with $(h_1, h_2) = (21, 10)$. Regarding the regime-changed path: we set the grid mesh to mirror daily returns, so $|\Delta| = 1/252$. We simulated a path $X \in \mathcal{T}_\Delta([0, T]; \mathbb{R}^d)$, where here we took $d=5$ and $T=20$. Regime changes were given by $Z_1 \sim \text{Po}(0.25)$ and exits given by $Z_2 \sim \text{Po}(5)$. 

\begin{figure}[h]
    \centering
    \begin{subfigure}[t]{0.4\linewidth}
        \centering
        \includegraphics[width=\textwidth]{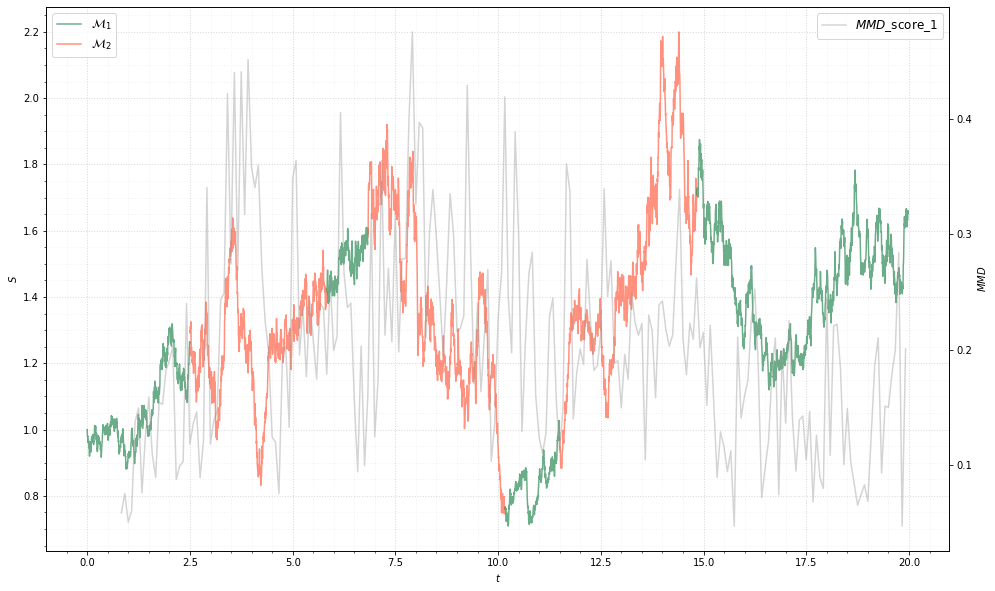}
        \caption{$\mathcal{D}^1_\text{sig}$ scores.}
        \label{fig:mmd0scores}
    \end{subfigure}
    \begin{subfigure}[t]{0.4\linewidth}
        \centering
        \includegraphics[width=\textwidth]{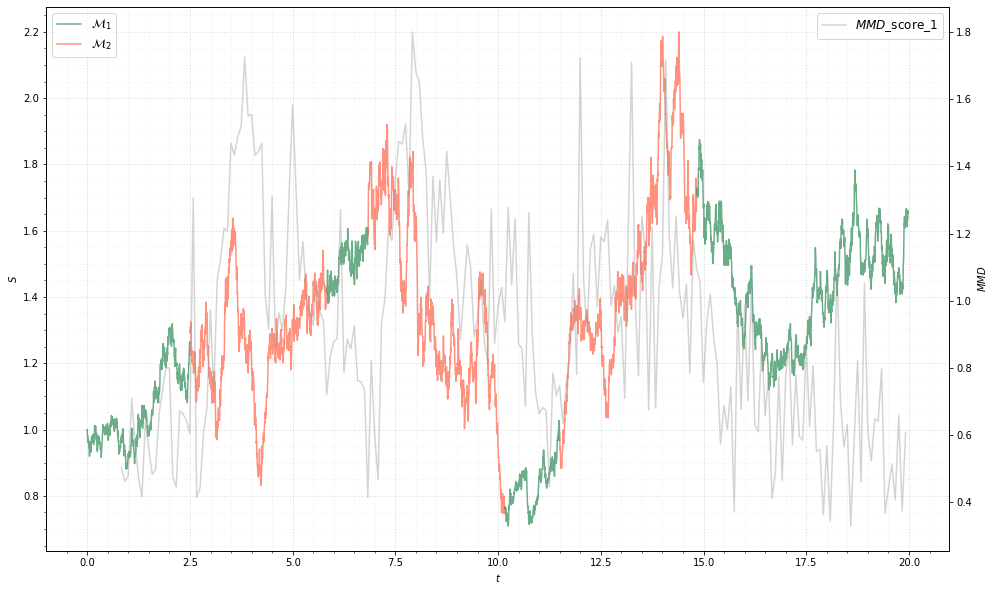}
        \caption{$\mathcal{D}^2_\text{sig}$ scores.}
        \label{fig:mmd1scores}
    \end{subfigure}
    \medskip
    \begin{subfigure}[t]{0.4\linewidth}
        \centering
        \includegraphics[width=\textwidth]{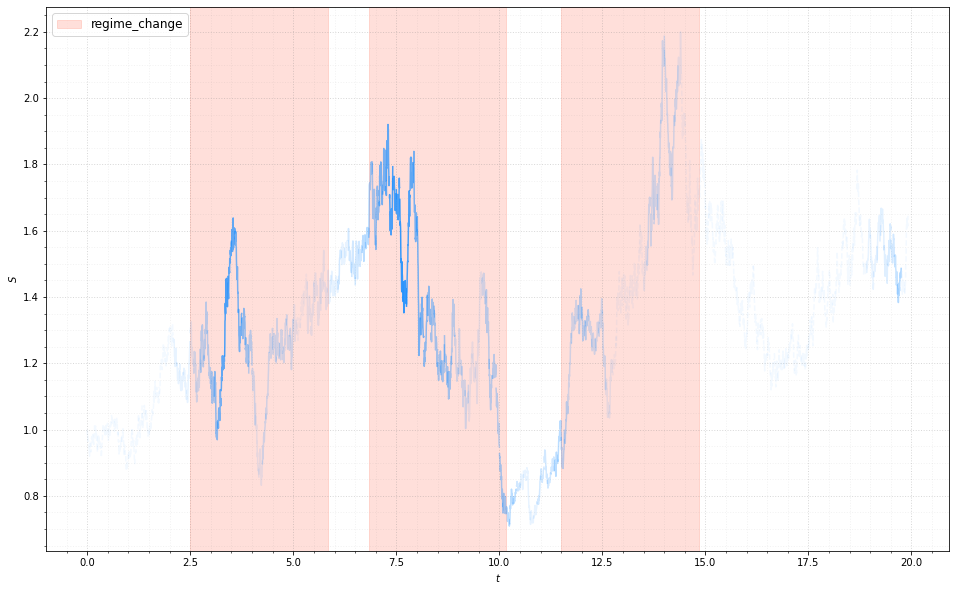}
        \caption{$\mathcal{D}^1_\text{sig} > c_\alpha$.}
        \label{fig:mmd0alphas}
    \end{subfigure}
    \begin{subfigure}[t]{0.4\linewidth}
        \centering
        \includegraphics[width=\textwidth]{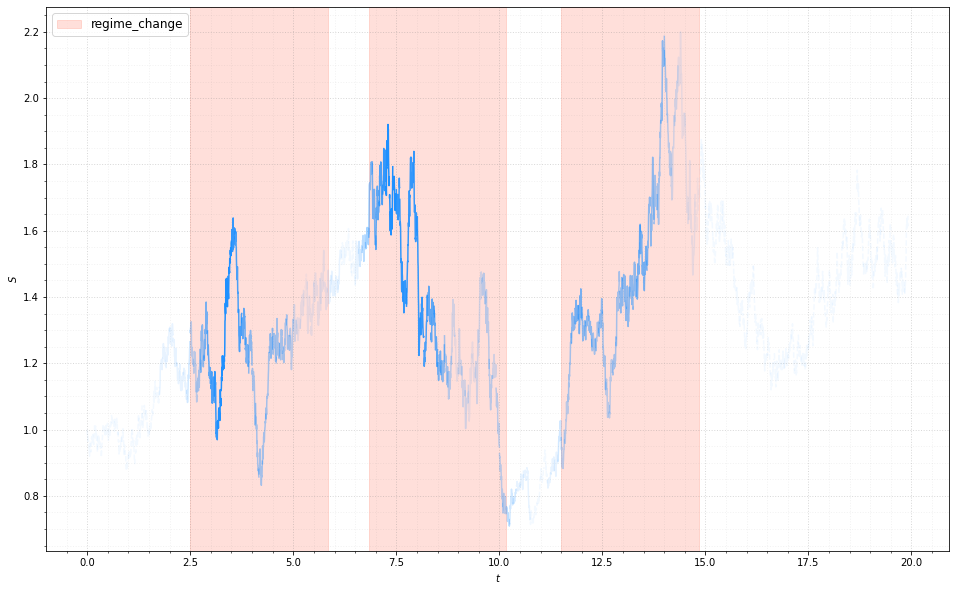}
        \caption{$\mathcal{D}^2_\text{sig} > c_\alpha$.}
        \label{fig:mmd1alphas}
    \end{subfigure}
    \caption{MMD scores and associated critical threshold plots, $\mathcal{D}_{\text{sig}}^1$ vs $\mathcal{D}_{\text{sig}}^2$. On average the online detector using $\mathcal{D}_{\text{sig}}^2 $ performs better; notice from the sample run here that the final regime change is better detected by the rank-2 detector.}
    \label{fig:mmd1mmd2results}
\end{figure}

Figure \ref{fig:mmd1mmd2results} gives an example run of both detectors on the given switching problem. Although both are somewhat able to distinguish each regime, it is clear that the MMD2-DET is better able to identify the regime changes and their subsequent cessations. Again we provide the score plot and the $(1-\alpha)\%$ threshold plots. Finally, Table \ref{tab:mmd0mmd1} gives a summary of how each detector performed over $n=100$ runs. We directly report the accuracy, which is measured in the following way: for each path $s \in \mathcal{SP}_h(\hat{\mathsf{s}})$, assign the label $0$ if $s$ was generated by $\mathbb{P}_{\theta_1}$, and $1$ if it was generated by $\mathbb{P}_{\theta_2}$. For each $s \in \mathcal{SP}_h(\hat{\mathsf{s}})$, the detector receives a score between $0$ and $1$, given by the percentage of the time $s$ belong to an ensemble which failed the two-sample test. We condition the performance in three categories: Regime on, during the standard regime; regime off, during the regime change, and finally the total accuracy. We also report ROC AUC scores (given we hav labelled data) and the total algorithm run time.

\begin{table}[ht]	
    \centering
    \footnotesize
    \begin{tabularx}{\textwidth}{cccccc}
        \toprule
        \textbf{Algorithm}                 &                        & Accuracy               &                        & ROC AUC             & Runtime            \\ 
        & Regime on              & Regime off             & Total                  &                     &                         \\ \midrule
        $\mathcal{D}^1_\text{sig}$                    & $45.5\pm 9.7$\% & $\boldsymbol{98.4\pm 1.3}$\% & $77.9\pm 5.5$\% & $0.934 \pm 0.043$ & $\boldsymbol{0.962\pm 0.007}$s \\ \addlinespace
        $\mathcal{D}^2_\text{sig}$                    & $\boldsymbol{62.9\pm 10.1}$\% & $97.4\pm 4.1$\% & $\boldsymbol{83.3\pm 6.6}$\% & $\boldsymbol{0.952 \pm 0.078}$ & $81.356\pm 0.439$s \\  \midrule
    \end{tabularx}
    \setlength\tabcolsep{4pt}
    \caption{Algorithm performances, MMD1-DET versus MMD2-DET, $n=100$ runs. MMD1-DET more often commits Type II error during periods of regime change, making MMD2-DET the better detector. However, it is nearly 100x slower than MMD1-DET.}
    \label{tab:mmd0mmd1}
\end{table}

    \subsection{Comparison to existing techniques}\label{subsec:mrdpcomparisons}
In this section we showcase that our method can be used for anomaly detection in streamed data. We also compare the performance of our algorithm against a existing (signature-based) anomaly detection algorithms and find that it is competitive with existing methods, and even outperforms them in some case.

As mentioned in Subsection \ref{subsec:notation}, the MRDP is closely linked to the classical anomaly detection problem in time series data in the single belief setting. In this section we compare our technique to existing ones in the literature. The most relevant comparison to make is with the techniques given in \cite{cochrane2020anomaly} which also seeks to perform anomaly detection on path space. We also show how our approach compares to using the truncated signature kernel $k^N_{\text{sig}}$ and the associated MMD. We reserve the explanation of the \emph{signature conformance} (SIG-CON) method from \cite{cochrane2020anomaly} to Appendix \ref{appendix:conformance}.

We outline results of each of the algorithms on an anomaly detection problem. Again, stratified accuracy and ROC AUC scores are reported for both the MMD-DET and the two comparison methods (SIG-CON and the truncated MMD, which we call MMD-T). We note here that our method of assigning accuracy as outlined at the end of Subsection \ref{subsec:higherrankdetection} is not a perfect like-for-like comparison to the method provided in \cite{cochrane2020anomaly} as (overall) scores are not assigned ``on the fly'' - every ensemble that a given sub-path $s \in \mathcal{SP}_h(\hat{\mathsf{s}})$ was a part of must receive a score before the anomaly score associated to $s$ can be supplied. We argue that any benefits incurred from our experimental setup is an argument for studying path ensembles rather than individual paths in the context of anomaly detection; see \cite{zhang2021understanding} for more details.

The anomaly detection problem we study is again the toy example from Section \ref{subsec:toyexample}, except we study paths of dimensionality $d=5$. We set $\mathbb{P}_{\theta_1} = (\mathrm{gBm}, (0, 0.2))$ and $\mathbb{P}_{\theta_2} = (\mathrm{gBm}, (0, 0.3))$ and $\Delta = \{0 = t_0 < \dots < 4\}$ where $|\Delta| = 7 \times 252$. We denote the regime-changed path by $\hat{\mathsf{s}} \in \mathcal{T}_\Delta([0, T]; \mathbb{R}^5)$. Regarding hyperparameter selections: we set $h = (8, 16)$, so sub-paths contained $8$ observations. We used the rank $1$ signature for the detector associated to the signature kernel $k_\text{sig}$, and thus the associated metric was given by the rank 1 (RBF-lifted) MMD from eq. (\ref{eqn:rankrmmdrbf}). For MMD-T, we used the same parameters and truncated the signature mapping at $N=5$. For MMD-DET, we set the smoothing parameter $\sigma=0.0025$. Our beliefs are again distributed according to samples from $\mathbb{P}_{\theta_1}$, and we bootstrap the null distribution $\mathfrak{D}$ with $N_1 = 512$ samples. 

For the conformance-based method, we also choose the corpus of beliefs to be distributed according to $\mathbb{P}_{\theta_1}$. We truncated the signature at rank $N=2$. Larger levels of truncation were not able to be tested due to memory constraints (highlighting again the issue of directly evaluating the signature map). The null distribution is bootstrapped with $|\mathfrak{P}^1| = |\mathfrak{P}^2| = 1000$ paths. All methods had paths first transformed via the stream transformer $\Phi = \phi_{\text{norm}} \circ \phi_{\text{time}}$. 

\begin{figure}[ht]
    \centering
    \begin{subfigure}[t]{0.33\linewidth}
        \centering
        \includegraphics[width=\textwidth]{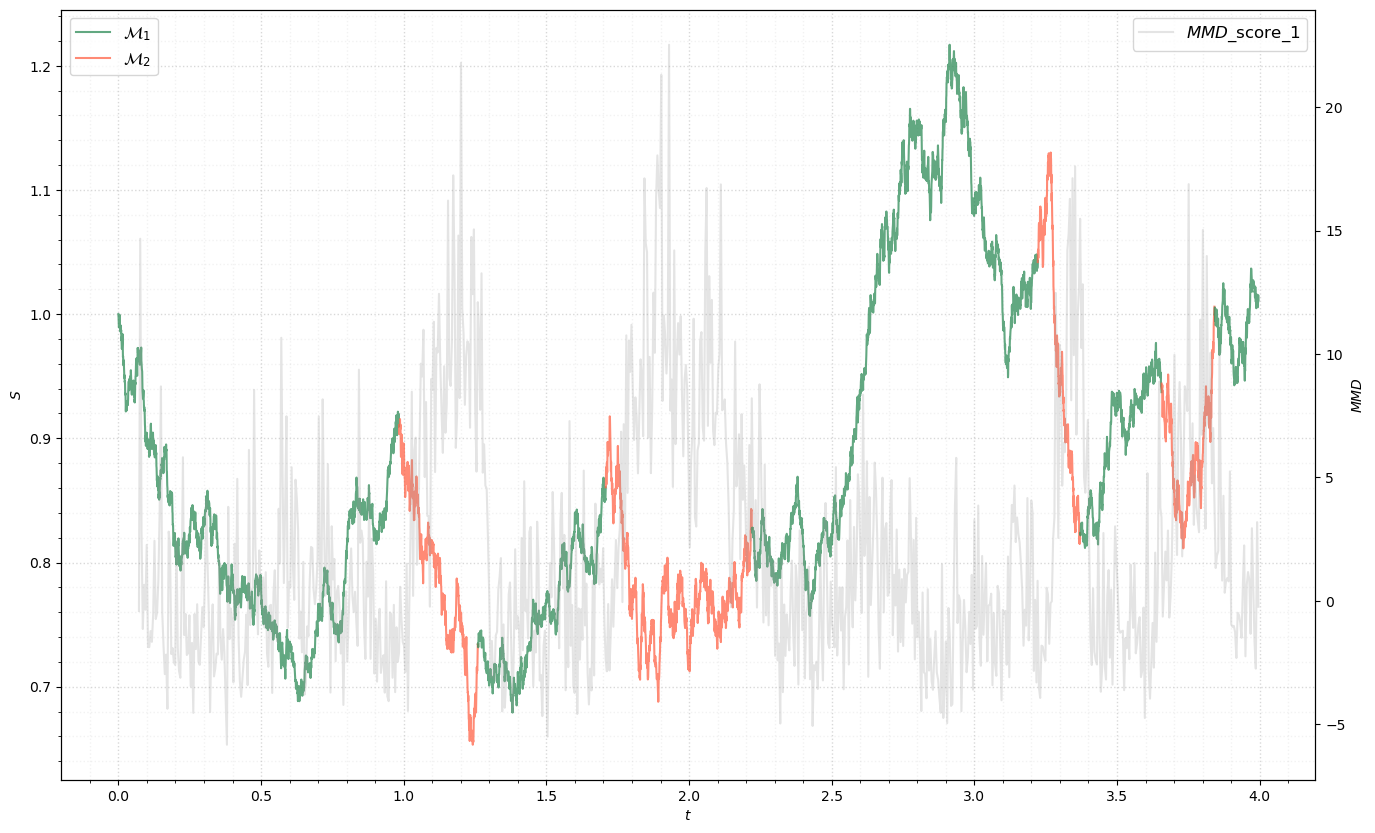}
        \caption{$\mathcal{D}^1_\text{sig}$ scores.}
        \label{fig:sigkeranomalyscores}
    \end{subfigure}%
    \begin{subfigure}[t]{0.33\linewidth}
        \centering
        \includegraphics[width=\textwidth]{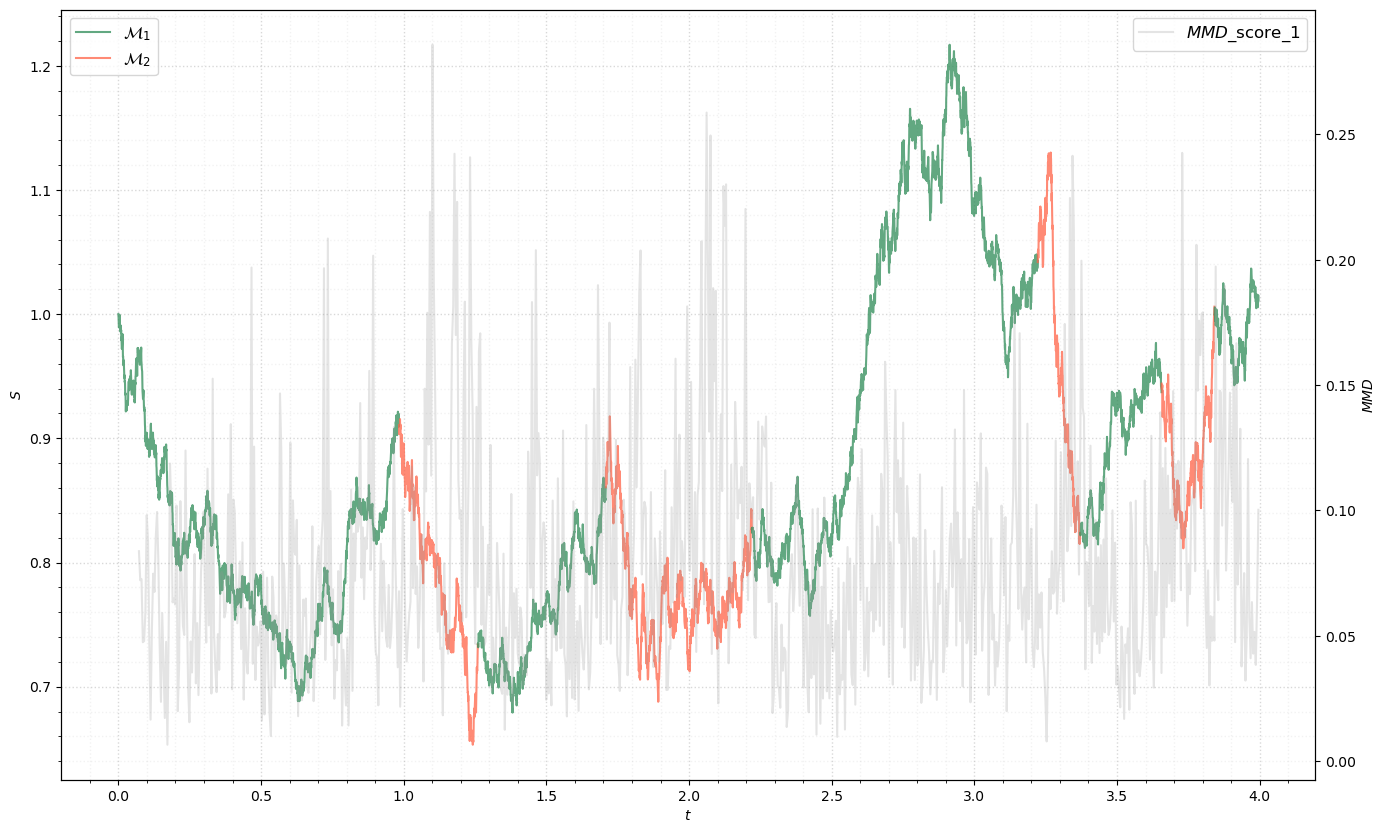}
        \caption{$\mathcal{D}^N_\text{sig}$ scores.}
        \label{fig:truncatedanomalyscores}
    \end{subfigure}%
    \begin{subfigure}[t]{0.33\linewidth}
        \centering
        \includegraphics[width=\textwidth]{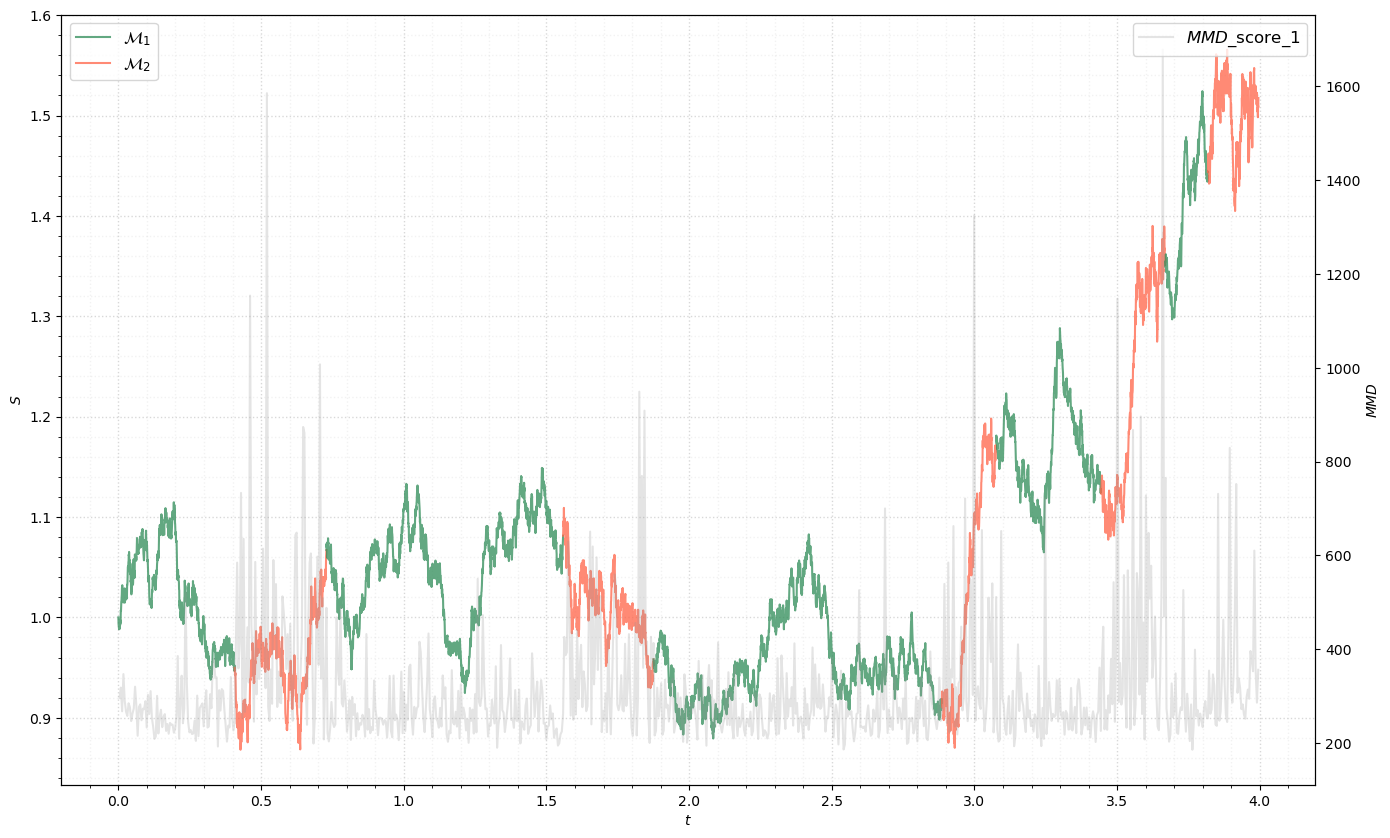}
        \caption{SIG-CON scores.}
        \label{fig:conformanceanomalyscores}
    \end{subfigure} %
    \begin{subfigure}[t]{0.33\linewidth}
        \centering
        \includegraphics[width=\textwidth]{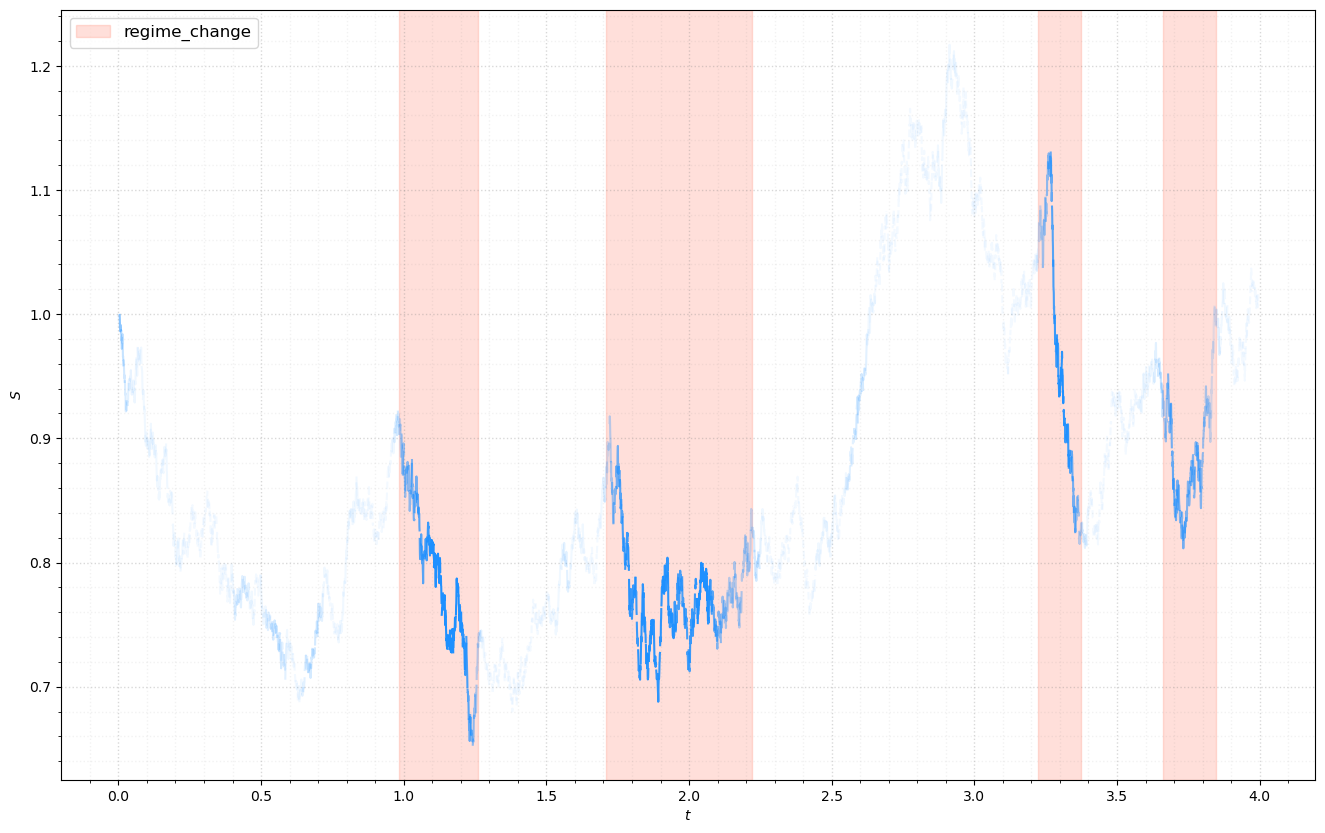}
        \caption{$\mathcal{D}^1_\text{sig} > c_\alpha$ plot.}
        \label{fig:mmddetectoranomalytoythreshold}
    \end{subfigure}%
    \begin{subfigure}[t]{0.33\linewidth}
        \centering
        \includegraphics[width=\textwidth]{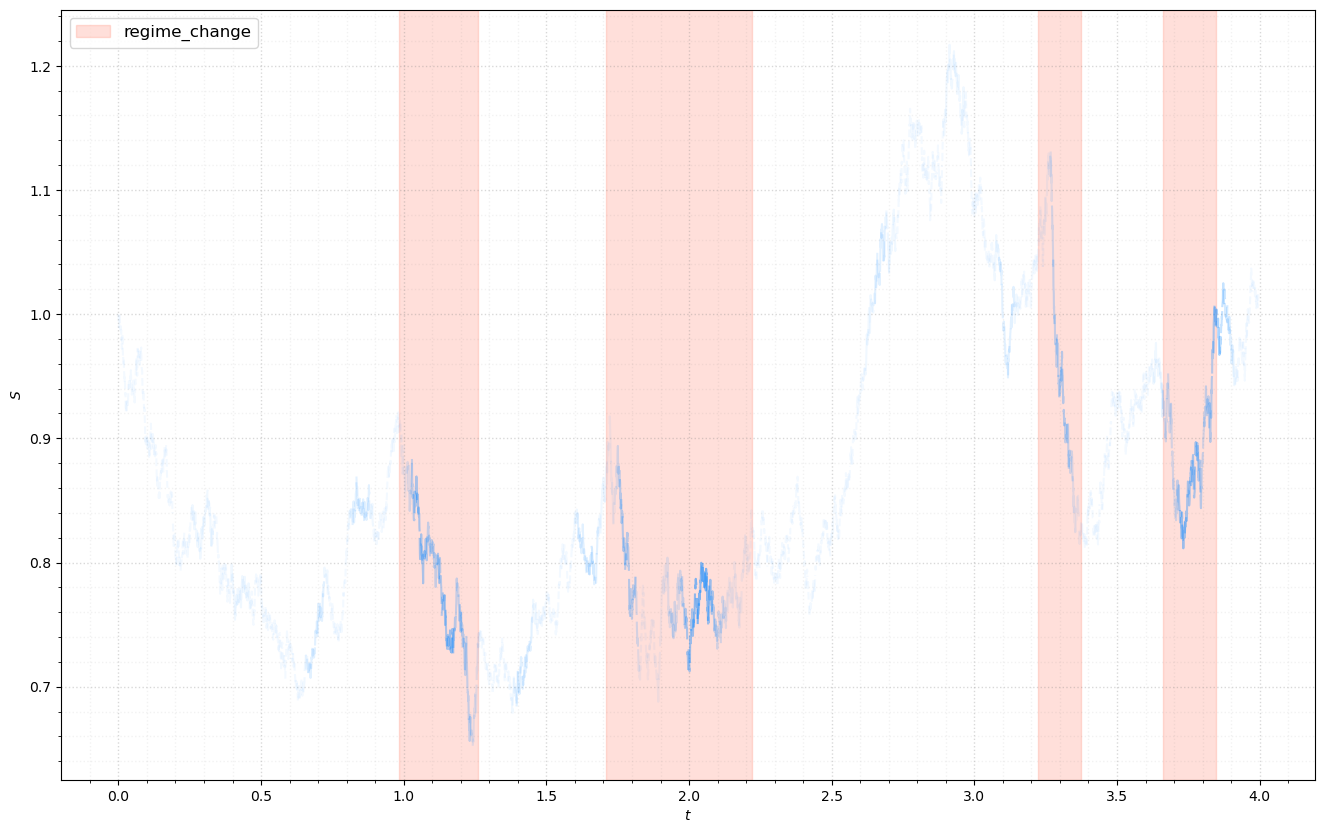}
        \caption{$\mathcal{D}^N_\text{sig} > c_\alpha$ plot.}
        \label{fig:truncatedmmdthreshold}
    \end{subfigure}%
    \begin{subfigure}[t]{0.33\linewidth}
        \centering
        \includegraphics[width=\textwidth]{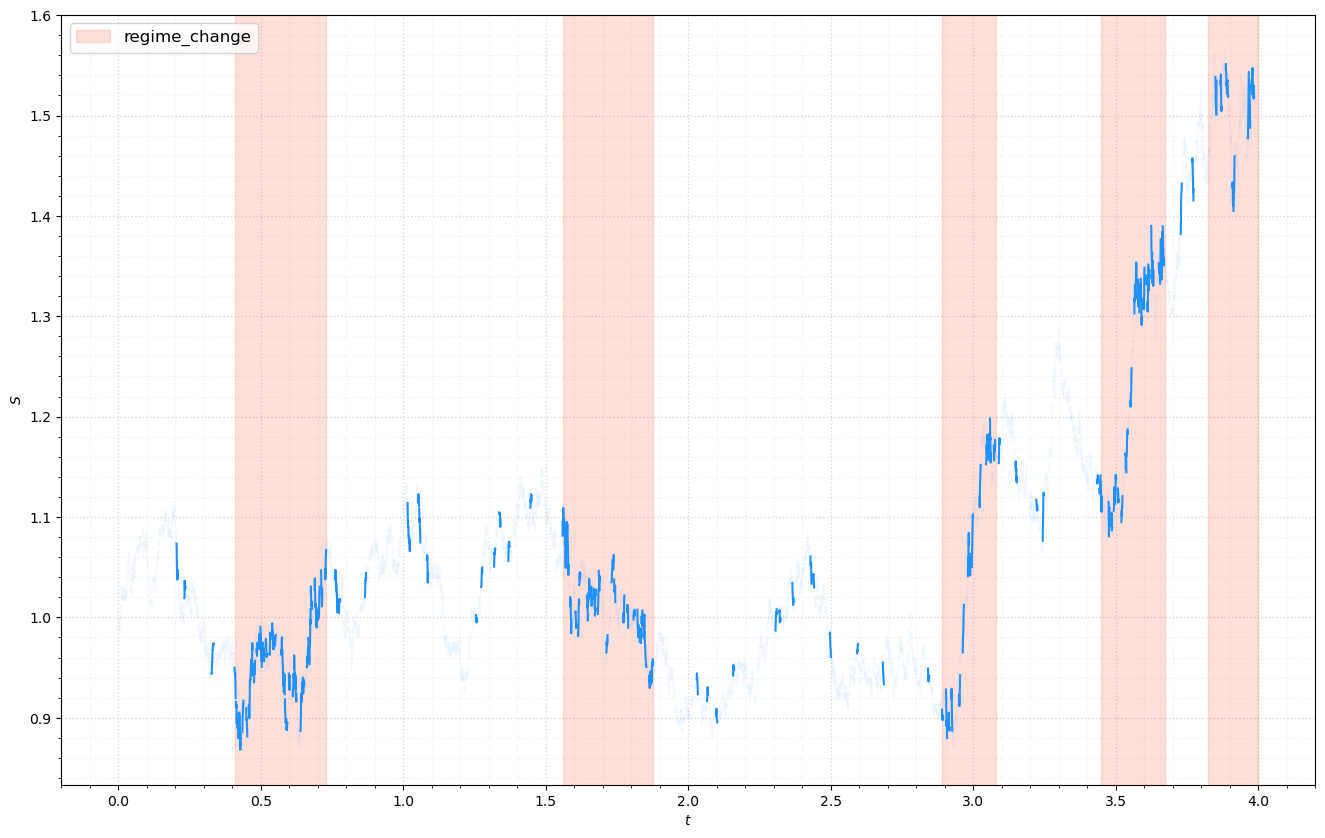}
        \caption{SIG-CON threshold plot.}
        \label{fig:conformanceanomalythreshold}
    \end{subfigure}
    \caption{An example run with each of the three detection algorithms. Both the non-truncated signature kernel and signature conformance method perform well, with the non-truncated signature method performing slightly better. The truncated signature method fails to detect changes in regime.}
    \label{fig:anomalytoyresults}
\end{figure}

Visual results are given in Figure \ref{fig:anomalytoyresults}. By considering path ensembles, the MMD-based anomaly detector gives more ``graded'' results than the conformance-based method. This is made clear in the threshold plots from Figures \ref{fig:mmddetectoranomalytoythreshold} and \ref{fig:conformanceanomalythreshold}. Furthermore, it is interesting to note that even with a relatively large level of truncation $N=5$, the truncated MMD is not able to distinguish between regimes at all. This is likely due to the fact that necessary information is embedded in higher-order terms which are not considered by construction. We summarize the results in Table \ref{tab:anomalytable}. 
    
\begin{table}[ht]	
    \centering
    \begin{center}
        \footnotesize
        \begin{tabular}{cccccc}
            \toprule
            \textbf{Algorithm}                 &                        & Accuracy               &                        & ROC AUC             & Runtime            \\ 
                                               & Regime on              & Regime off             & Total                  &                     &                         \\ \midrule
            $\mathcal{D}^N_{\text{sig}}$ & $9.2\pm 2.7$\% & $\boldsymbol{99.8\pm 0.2\%}$ & $72.7
            \pm 3.3$\% & $0.875\pm 0.019$ & $5.135\pm 0.207$s \\ \addlinespace
            SIG-CON & $50.0\pm 1.5$\% & $94.5\pm 3.4$\% & $82.2\pm 1.5$\% & $0.835 \pm 0.006$ & $45.004\pm 1.457$s \\ \addlinespace
            $\mathcal{D}^1_{\text{sig}}$                   & $\boldsymbol{77.1\pm 3.5\%}$ & $99.7\pm 0.2$\% & $\boldsymbol{93.6\pm 1.0\%}$ & $\boldsymbol{0.992 \pm 0.002}$ & $\boldsymbol{3.888\pm 0.346}$s \\ \midrule
        \end{tabular}
    \end{center}
    \setlength\tabcolsep{4pt}
    \caption{Algorithm performances, toy example, $n=100$ runs. The detector using the maximum mean discrepancy associated to $k^1_{\text{sig}}$ performs the best. It is also the fastest, since the kernel evaluation times are linear in $d$.}
    \label{tab:anomalytable}
\end{table}

We conclude this section with an explicit comparison between the true signature kernel $k^1_\text{sig}$ and the truncated variant $k^N_{\text{sig}}$. We do so by studying a process with jumps, which will introduce increased kurtosis on the marginal distributions of the path. Higher order terms of the signature will be required to effectively detect changes in regime (it would require a signature up to order 4 at the very least), and in theory the MMD associated to $k^N_{\text{sig}}$ cannot pick up on such differences. We are again in the random-time regime switching problem with two possible states: $\mathbb{P}_{\theta} = \mathrm{gBm}(\theta)$ and $\mathbb{P}_{\phi} = \mathrm{MJD}(\phi)$, where MJD (Merton jump diffusion) is the model corresponding to the stochastic differential equation

\begin{equation*}
    dX_t = \mu X_t dt + \sigma X_t dW_t + S_{t-}dJ_t, \quad X_0 = 1,
\end{equation*} 

where $J_t = \sum_{j=1}^{N_t} (V_j - 1)$, $N_t \sim \text{Po}(\lambda)$ and $\ln(1+V_j) \sim N(\gamma, \delta^2)$. In this way $\phi = (\mu, \sigma, \lambda, \gamma, \delta)$ is the associated parameter vector. As always we have that $\theta \in \mathbb{R}^2$ is the parameter vector corresponding to the drift and diffusion coefficients in the geometric Brownian motion model. Here we set $\theta = (0, 0.2)$ and $\phi = (0, 0.05, 100, 0, 0.025)$. Beliefs $\mathfrak{P}$ were (in both cases) given by $\mathbb{P}_\theta$, and null distributions were bootstrapped with $1000$ evaluations of the corresponding rank 1 MMD under $H_0$, and critical values were chosen at the $95\%$ test threshold. Regarding the metric associated to each detector, the MMD associated to $k_{\text{sig}}$ was chosen to be the RBF-lifted rank 1 variant from eq. (\ref{eqn:rankrmmdrbf}). We chose the RBF smoothing hyperparameter to be $\sigma = 0.1$. The truncated MMD was defined with $N=3$, and the associated smoothing parameter was given by $\sigma = 10$. Both parameters were bootstrapped to be optimal given the problem setting.

Regarding other experimental parameters: we choose $T=20$, $\Delta = 1/252$, and $h=(21, 10)$. In this way each path roughly represents a month's worth of price data. We then generate stopping times $\tau_1, \dots, \tau_M$ over the interval $[0, T]$ corresponding to when regime-switching events occur, with entry times given by $X \sim \text{Po}(0.25)$ and exit times by $Y \sim \text{Po}(5)$. Finally we set $\Phi = \varphi_{\text{time}} \circ \varphi_{\text{norm}}$, the standard path transformations.  
\begin{figure}[ht]
    \centering
    \begin{subfigure}{0.5\linewidth}
        \centering
        \includegraphics[width=\textwidth]{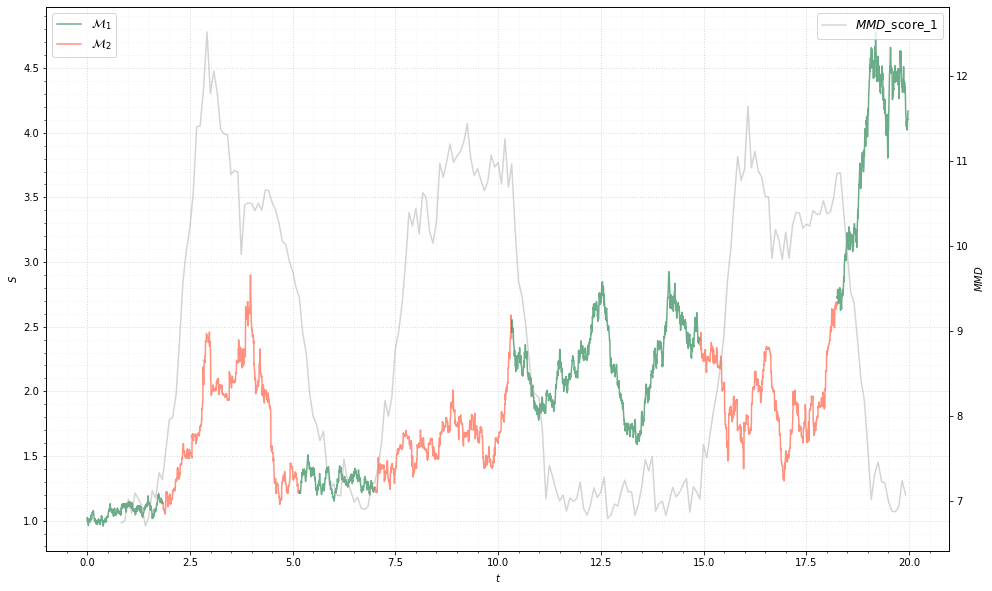}
        \caption{$\mathcal{D}^1_\text{sig}$ scores.}
        \label{fig:truncvstotal_total_mmd}
    \end{subfigure}%
    \begin{subfigure}{0.5\linewidth}
        \centering
        \includegraphics[width=\textwidth]{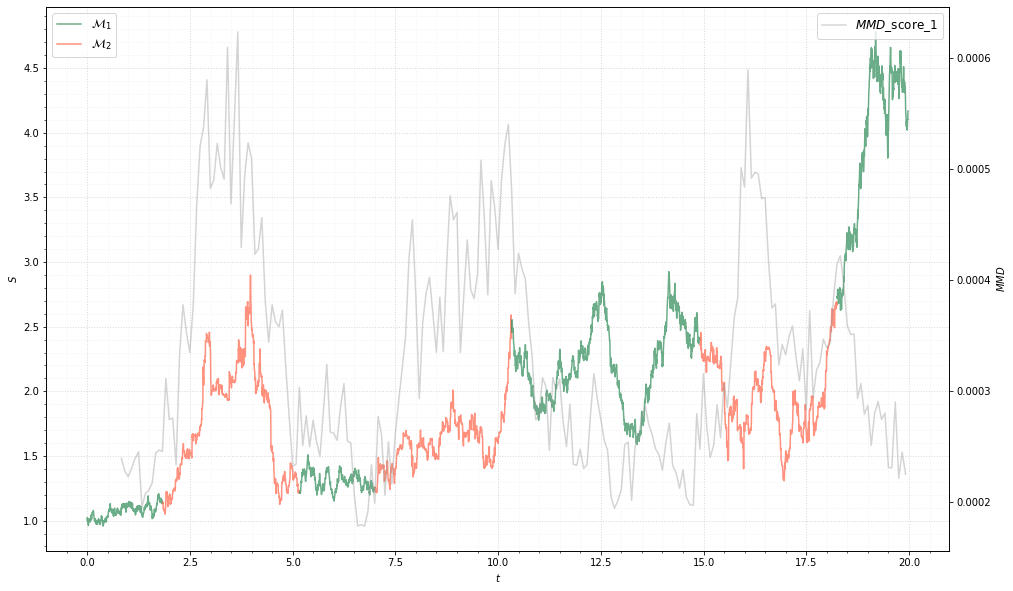}
        \caption{$\mathcal{D}^N_\text{sig}$ scores.}
        \label{fig:truncvstotal_trunc_mmd}
    \end{subfigure}
    \caption{Example run, gBm vs MJD, MMD scores. Both algorithms are largely able to detect regime changes.}
    \label{fig:truncvstotalmmd}
\end{figure}

\begin{figure}[ht]
    \centering
    \begin{subfigure}{0.5\linewidth}
        \centering
        \includegraphics[width=\textwidth]{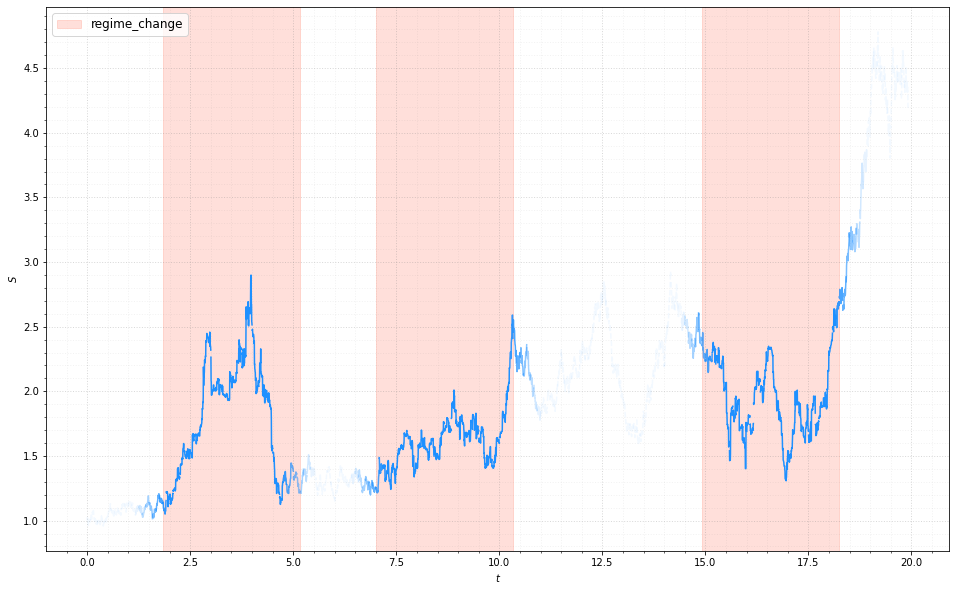}
        \caption{$\mathcal{D}^1_\text{sig} > c_\alpha$.}
        \label{fig:truncvstotal_total_alpha}
    \end{subfigure}%
    \begin{subfigure}{0.5\linewidth}
        \centering
        \includegraphics[width=\textwidth]{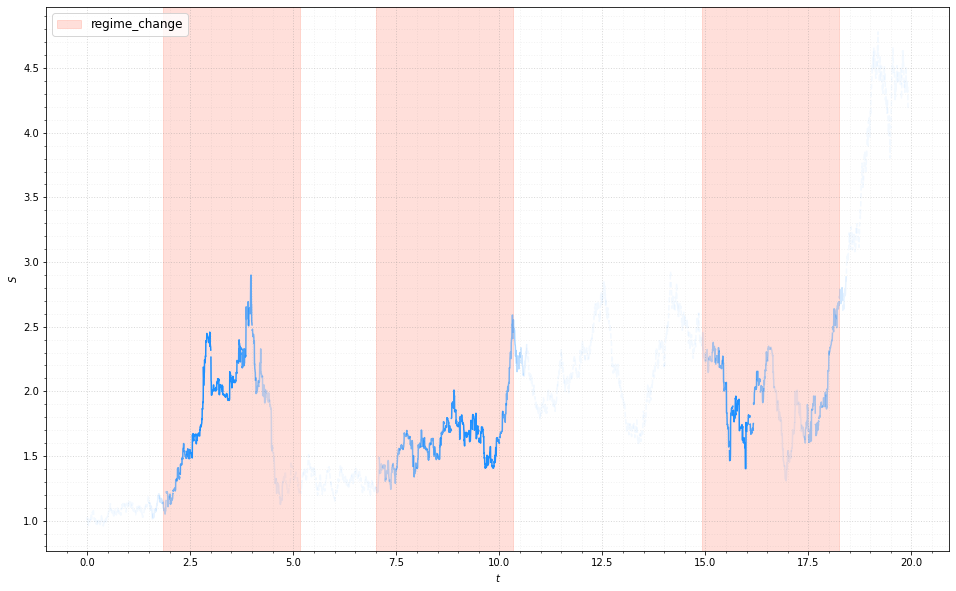}
        \caption{$\mathcal{D}^N_\text{sig} > c_\alpha$.}
        \label{fig:truncvstotal_trunc+_alpha}
    \end{subfigure}
    \caption{Example run, gBm vs MJD, $\mathrm{MMD} > c_\alpha$ plots. Here we can see that the final regime change was (incorrectly) not considered significant enough under $\mathcal{D}^N_{\text{sig}}$ to warrant a regime change signal.}
    \label{fig:truncvstotalalpha}
\end{figure}

A sample run with attached MMD scores is given in Figure \ref{fig:truncvstotalmmd}. Visually one can verify that although both detectors performed well, the true signature kernel technique is better able to distinguish between the two different regimes. This may be due to the fact that the marginal distributions of stochastic processes associated to $\mathbb{P}_\phi$ exhibit more kurtosis than those associated to $\mathbb{P}_\theta$. Weaker performance could be due to the truncated signature kernel not ``seeing'' fourth-order effects with $N=3$. For further evidence we give a summary of $n=100$ runs in Table \ref{tab:truncvstotal}. 

\begin{table}[ht]	
    \begin{center}
        \footnotesize
        \begin{tabular}{cccccc}
            \toprule
            \textbf{Algorithm}  &                        & Accuracy               &                     & ROC AUC             & Runtime           \\ 
                                & Regime on              & Regime-off             & Total               &                     &                        \\ \midrule
            $\mathcal{D}^N_{\text{sig}}$               & $62.4\pm 6.8$\% & $\boldsymbol{96.3\pm 4.5\%}$ & $81.5\pm 5.5\%$ & $0.946 \pm 0.062$ & $\boldsymbol{9.823\pm 0.077}$s \\ \addlinespace
            $\mathcal{D}^1_{\text{sig}}$             & $\boldsymbol{99.3\pm 2.6\%}$ & $86.2\pm 3.9\%$ & $\boldsymbol{92.9\pm 2.7\%}$ & $\boldsymbol{0.973 \pm 0.028}$ & $16.030\pm 0.070$s \\ \midrule
        \end{tabular}
    \end{center}
    \setlength\tabcolsep{4pt}
    \caption{Algorithm performances, toy example, $n=100$ runs. Again, the detector using the untruncated MMD is better able to detect regime changes than the truncated variant, especially when higher-order moments become more relevant.}
    \label{tab:truncvstotal}
\end{table}

    \subsection{Evaluation of non-Markovian data}\label{subsec:nonmarkovian}

Suppose you have a probability space $(\Omega, \mathcal{F}, (\mathcal{F}_t), \mathbb{P})$ and an $\mathcal{F}_t$-adapted stochastic process $X: [0, T] \to \mathbb{R}$. Suppose you observe a sample $x \sim \mathcal{L}(X)$, necessarily in discrete time over a grid $\Delta = \{0 = t_0 < t_1 < \dots < t_N = T\}$, so in fact one works with $\hat{\mathsf{x}} \in \mathcal{T}_\Delta([0, T]; \mathbb{R})$. As we have mentioned in Subsection \ref{subsec:partitioning}, we are interested in fusing sub-paths of the stream $\hat{\mathsf{x}}$ into collections of path ensembles. In a continuous-time setting, ensemble paths can be extracted by first defining a mesh $\Delta' = \{0 = u_0 < u_1 < \dots < u_N = T\}$ and considering the collection of paths $\mathcal{X} = \{x_{|[u_{i-1}, u_i]} : i = 1, \dots, M\}$. If $|u_i - u_{i-1}|$ is equal for all $i=1, \dots, M$, then (given certain conditions, which we will discuss) one can set $\tau := u_i-u_{i-1}$ and we have that the collection $\mathcal{X}$ defines an empirical measure $\nu$ on a compact subset $\mathcal{K}$ of path space $C([0, \tau]: \mathbb{R})$ where $\mathcal{X} \subseteq \mathcal{K}$. In particular we assume all paths are of bounded variation; as mentioned in Subsection \ref{subsec:pathsignatures}, in our setting this is a fine assumption to make.

In our work, we use $\mathcal{D}^r_{\text{sig}}$ to compare the measure $\nu$ generated by $\mathcal{X}$ with another measure $\mu \in \mathcal{P}(\mathcal{K})$, usually given by samples drawn from $\mathfrak{P}$. In order for the finite-sample (unbiased) estimator associated to $\mathcal{D}^r_{\text{sig}}$ to be asymptotically consistent, one requires that each element $x \in \mu, y \in \nu$ is 1) independent and 2) identically distributed (i.i.d), see Definition \ref{def:maximummeandiscrepancy}. In practice, the generator associated to $\mu$ is explicitly known and thus one can indiscriminately draw i.i.d samples from $\mu$. 

Let us now consider the set $\mathcal{X}$, and by extension the measure $\nu$. We want to derive conditions for when we can assume that each element of $\mathcal{X}$ \emph{could} correspond to i.i.d. samples. Even if we assume that all path segments drawn from $X$ have been generated by the same data-generating measure, we still cannot necessary conclude that this is true. To illustrate this, we consider two cases: either the underlying process $X$ is a) Markovian or b) non-Markovian. 

Recall that a stochastic process $X$ on $(\Omega, \mathcal{F}, (\mathcal{F}_t, \mathbb{P})$ is \emph{Markovian} if $\mathbb{P}(X_t|\mathcal{F}_s) = \mathbb{P}(X_t | X_s)$ for all $0 \le s \le t$. Else, it is non-Markovian, which is to say that the transition density of $X_s$ to $X_t$ depends on the entire history of $X$ up until time $s$. 

Extending the definition of Markovianity to path space, one can see that if $X$ is Markov then the path space measures $\mathbb{P}(X_{|[s, t]}\in \cdot |\mathcal{F}_s)$ and $\mathbb{P}(X_{|[s, t]}\in \cdot |X_s)$ agree. Therefore, the only piece of information required to compare measures on path space where the underlying process $x \sim \mathcal{L}(X)$ is Markovian is the initial data $x_s$. 

Returning to our original problem, each sample $x \in \mathcal{X}$ is \emph{a priori} itself a sample from a conditional distribution: $x^i \sim \mathbb{P}(X_{|[u_{i-1}, u_i]} \in\cdot | \mathcal{F}^X_{u_{i-1}})$. If the underlying process is Markovian then it is clear that (after adjusting for initial data) a) each sample is independent (Markov processes are memoryless) and b) they are identically distributed, since $$\mathbb{P}(X_{|[u_{i-1}, u_i]}\in\cdot | \mathcal{F}^X_{u_{i-1}}) =  \mathbb{P}(X_{|[u_{i-1}, u_i]} \in \cdot| X_{u_{i-1}})$$ under our initial assumptions. Therefore, by a simple path re-scaling, we can study the processes $\hat{x}^i = x^i/x^i_0$, which will comply with our stated assumptions.

Obviously if $X$ is not Markovian then the preceding analysis fails. In this case samples are not guaranteed to be independent; they are not even guaranteed to be identically distributed, since each $x^i$ is distributed conditional on the progressively enlarging filtration $\mathcal{F}^X_{u_{i-1}}$ for $i=1,\dots,M$.

If one assumes strict non-Markovianity of the observed sample path $x$, then (by definition) one cannot collect i.i.d. samples from $x$, as no matter how the path is partitioned each section will be conditional on a different filtration. This forces a path-by-path approach. In particular it is clear that we need to evaluate the whole path in order to incorporate filtration information.

A simple experiment is the following: consider two model/parameter pairs $(\mathbb{P}, \theta)$ and $(\mathbb{Q}, \phi)$. Suppose we know that $\mathbb{P}$ is non-Markovian and $\mathbb{Q}$ is Markovian; however, their marginal distributions are similar enough that naive methods will not be able to conclude that they are generated by different distributions. Suppose we observe a path $x: [0, T]\to \mathbb{R}$ over a discrete mesh $\Delta$. We know that the dynamics of $x$ are governed either by $\mathbb{P}$ or $\mathbb{Q}$, but we do not know over which intervals on $\Delta$ that this is the case. We then do the following: At given points $\Delta' = \{0 < u_1 < u_2 < \dots < u_M = T\}$, for $j=1,\dots,M$ we can study $x^j = x_{|[u_{j-1}, u_j]}$ under the similarity score function $\Sigma^{\mathbb{P}_{j}, \mathbb{Q}_{j}}$ where $\mathbb{P}_{j} = \{x_{|[u_{j-1}, u_j]}: x \sim \mathbb{P}\}$. The same notation holds for $\mathbb{Q}$. This allows us to directly evaluate our observed, conditional path sample in a like-for-like manner against our beliefs $\mathbb{P}, \mathbb{Q}$. Negative sample scores imply conformance to the model $\mathbb{P}$; positive score imply conformance to the model $\mathbb{Q}$.

\begin{remark}[Rank of signature mapping used in $\Sigma$]
    Ideally, we would like to use the rank-2 signature $S^2$ to define the constituent scoring rules of $\Sigma^{\mathbb{P}, \mathbb{Q}}$, so the filtration information over $[u_{j-1}, u_j]$ can be included when conducting inference. However, the methodology used to construct the conditional embeddings (from \cite{salvi2021higher}) cannot be used in the context of many-to-one comparisons. Thus, we must use the rank 1 signature in our experiments.
\end{remark}

In what follows we take $$(\mathbb{P}, \theta) = (\text{rBergomi}, (0.05, 0.05, -0.7, 0.4)) \text{ and } (\mathbb{Q}, \phi) = (\text{gBm}, (0, 0.25))$$ to be our non-Markovian and Markovian models respectively. These parameters were chosen so that the marginal distributions associated to either model qualitatively similar. We set $dt = 1/1764$ and generated a path $\hat{\mathsf{x}} \in \mathcal{T}_\Delta([0, 4], \mathbb{R}^2)$ where the dynamics of the simulated path is given by $(\mathbb{P}, \theta)$ till $T/2$ and $(\mathbb{Q}, \phi)$ afterwards. 

To construct our beliefs $\mathfrak{P} = (\mathfrak{P}_1, \mathfrak{P}_2)$, we first simulated two banks of paths over $[0, 4]$, where $\mathfrak{P}_1$ corresponds to $(\mathbb{P}, \theta)$ and $\mathfrak{P}_2$ corresponding to $(\mathbb{Q}, \phi)$. We then partitioned our path data into $\mathcal{SP}_h(\hat{\mathsf{x}})$, with $h=(32, 0)$. In this way, $$\Delta' = \{0 = u_0 < u_1 < \dots < u_M = T \}$$ where $u_j = j\,dt$. For each $x \in \mathcal{SP}_h(\hat{\mathsf{x}})$, we calculated $\Sigma^{\mathfrak{P}_1, \mathfrak{P}_2}(x)$ by sampling $128$ paths from both $\mathfrak{P}_1, \mathfrak{P}_2$ restricted to the interval $[u_{j-1}, u_j]$. In this way, each sample $x^i_{[u_{j-1}, u_j]} \in \mathcal{P}_i$ is conditioned on $\mathcal{F}^x_{u_{j-1}}$. We report the similarity scores associated to the rank-1 RBF-lifted kernel from eq. (\ref{eqn:rankrmmdrbf}) with $\sigma = 1$. The path transformer $\Phi$ was given by $\Phi = \varphi_{\text{norm}} \circ \varphi_{\text{time}} \circ \varphi_{\text{incr}}$. 

\begin{figure}[ht]
    \centering
    \begin{subfigure}{0.5\linewidth}
        \centering
        \includegraphics[width=\textwidth]{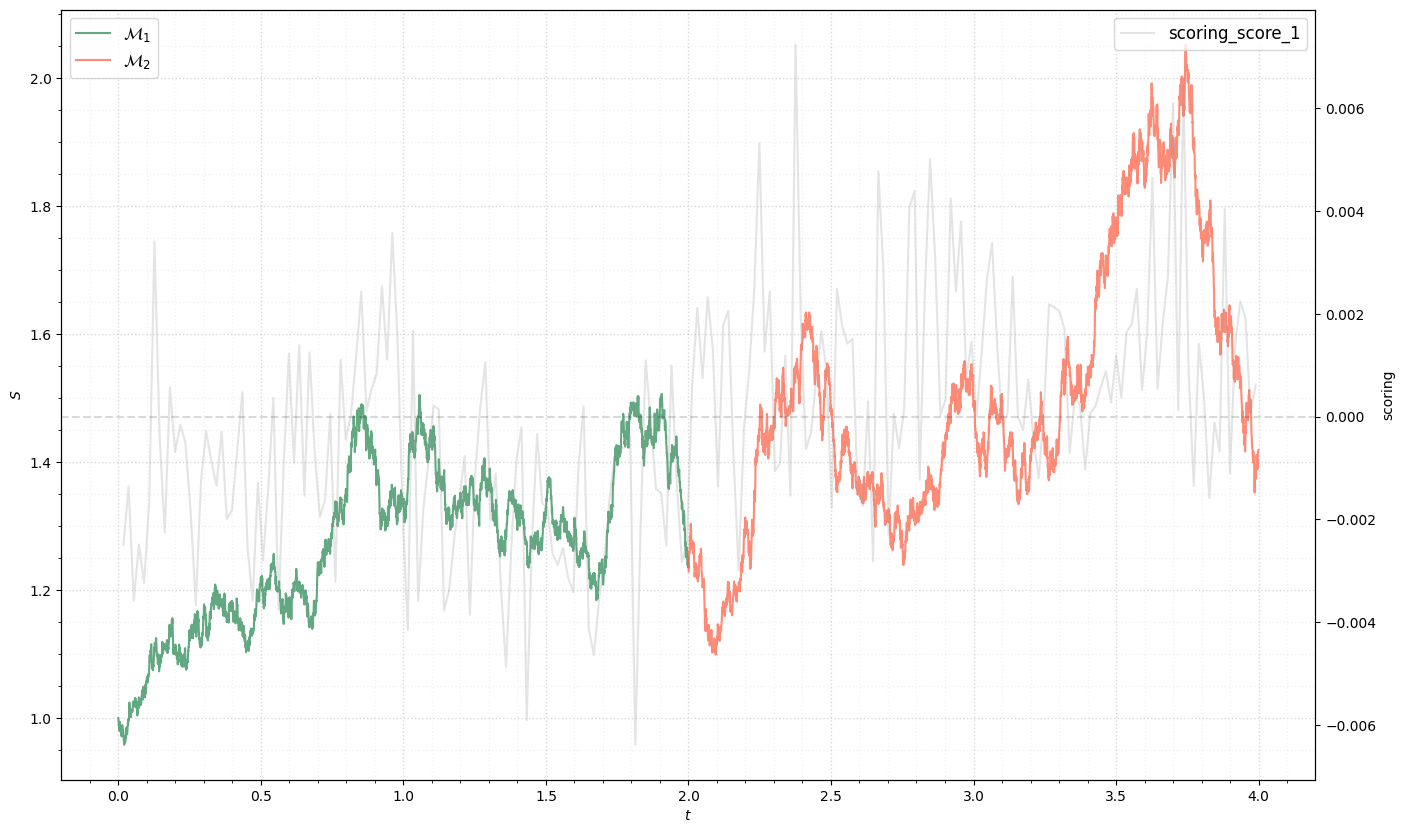}
        \caption{Regime-changed path and $\Sigma^{\mathfrak{P}_1, \mathfrak{P}_2}$.}
        \label{fig:nonmarkovianscore}
    \end{subfigure}%
    \begin{subfigure}{0.5\linewidth}
        \centering
        \includegraphics[width=\textwidth]{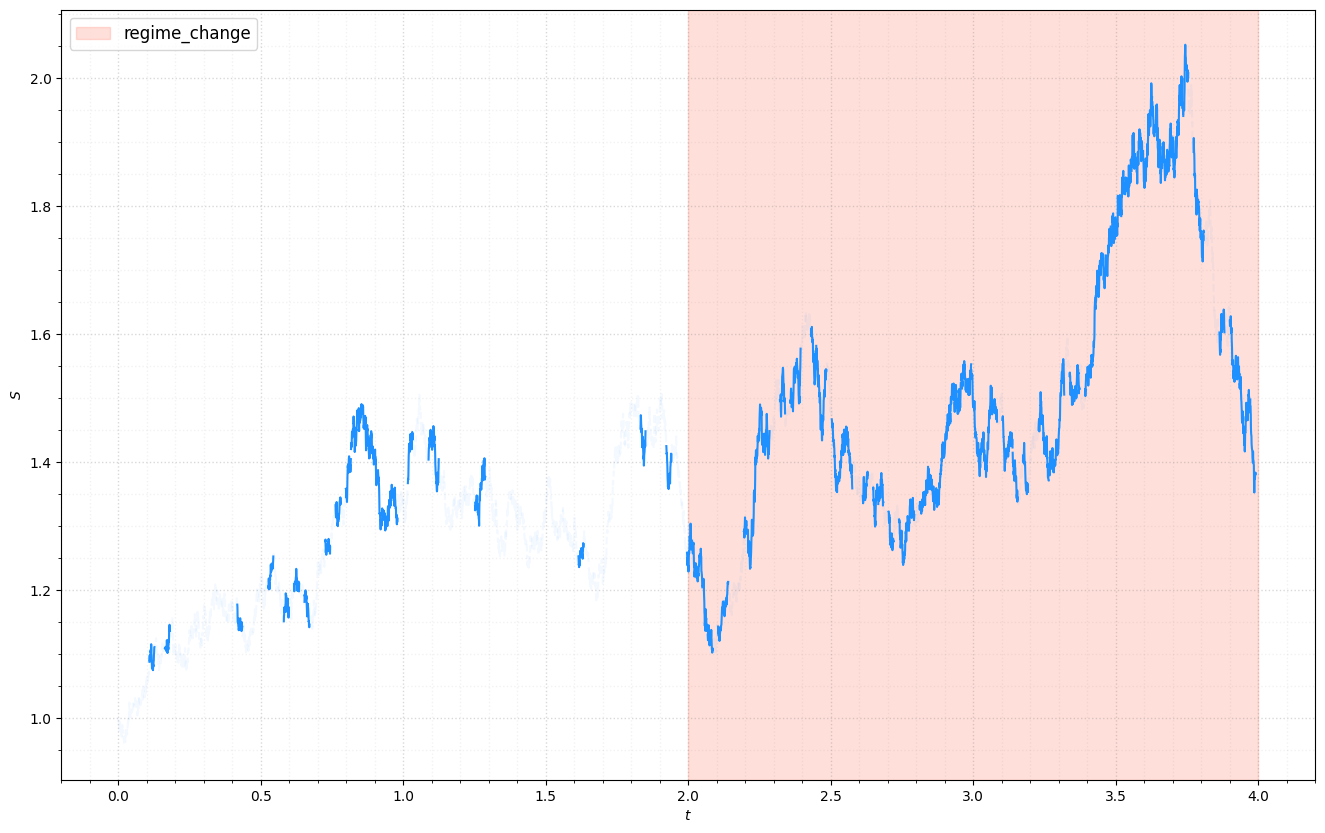}
        \caption{$\Sigma^{\mathfrak{P}_1, \mathfrak{P}_2} > 0$ plot.}
        \label{fig:nonmarkovianalphas}
    \end{subfigure}
    \caption{An example run of non-Markovian switching experiment. At $T=2$, path dynamics switch from being driven by a non-Markovian model to a Markovian one. The similarity score is able to identify this change; notice the jump in the average score after the regime change point.}
    \label{fig:nonmarkovianplot}
\end{figure}

Figure \ref{fig:nonmarkovianscore} gives an illustration of an example run with this experiment. We can see that the the conformance score is able to detect a shift in regime dynamics from a single observed path without violating any of the underlying assumptions of the kernel methods used.

    \section{The regime classification problem}\label{sec:mrcp}

    In this section we give results for our approach to the market regime classification problem (MRCP) outlined in Section \ref{subsec:mrcpmethods}. 

    In this section, we compare our clustering technique to one explored is a previous work \cite{horvath2021clustering}, in which a modification of the classical $k$-means algorithm was used to cluster regimes. For a price path $\mathsf{s} \in \mathcal{S}(\mathbb{R})$, regimes in this context were defined as empirical measures $\mu \in\mathcal{P}_p(\mathbb{R})$ extracted from the steam of log-returns associated to $\mathsf{s}$. This motivated the use of the Wasserstein distance $\mathcal{W}_p$ as the metric between elements to be clustered, and the Wasserstein barycenter was used to aggregate measures into centroids. This method (dubbed \emph{Wasserstein $k$-means}) was shown to perform strongly over other classical regime classification methodologies for streaming data over $\mathbb{R}$.

However, this method was not without drawbacks. First, it is difficult to run the Wasserstein $k$-means algorithm for $d >1$ in a manner which is computationally feasible. There do exist methodologies to make the calculation simpler by employing the (max) sliced Wasserstein distance and barycenter from \cite{rabin2011wasserstein, kolouri2019generalized}, however in practice the authors have found this approach unstable and computationally expensive. The second major drawback is that in aggregating the log-returns associated to $\mathsf{s}$, one loses information about the order in which the returns appeared. Thus, on large enough timescales, changes in auto-correlative dynamics can be lost, which may be of keen interest to a practitioner. Given that returns in financial markets exhibit such properties, an algorithm which was able to capture these effects at little to no cost would be preferable. 

The method outlined in Section \ref{subsec:mrcpmethods} is indeed able to make up for the shortcomings of the Wasserstein approach. We are able to cluster higher-dimensional paths at little computational cost, and by using the signature kernel $k_{\text{sig}}$ from eq. (\ref{eqn:signaturekernel}), the order in which path elements appear is captured. 

We begin (again) with the toy example presented in Section \ref{subsec:toyexample}. Here, we are in a simplified case of the MRCP where there are only two regimes to consider, $\mathbb{P}_{\theta_1} = \mathrm{gBm}(0, 0.2)$ and $\mathbb{P}_{\theta_2} = \mathrm{gBm}(0, 0.3)$. Again we generate a time-augmented regime-changed path $\hat{\mathsf{s}} \in \mathcal{T}_\Delta([0, T]; \mathbb{R}^d)$ with $T=2$, $d=1$ and $\left(\tau_i\right)_{i\ge 0}$ being the regime-switching times associated to $\hat{\mathsf{s}}$. 

The goal now is to assign regime labels to segments of the path $\hat{\mathsf{s}}$. We do this by apportioning $\hat{\mathsf{s}}$ into the set of (transformed) sub-paths $\mathcal{SP}^\Phi_h(\hat{\mathsf{s}})$ and corresponding set of ensemble paths $\mathcal{EP}^\Phi_h(\hat{\mathsf{s}})$ with $h=(7,10)$ and $\Phi$ as in Section \ref{subsec:toyexample}, so $\Phi = \phi_{\text{incr}} \circ \phi_{\text{time}} \circ \phi_\text{norm}$. We then classify the path $\hat{\mathsf{s}}$ under the three methods we are considering: our rank-$r$ MMD detector (MMD-DET), the Wasserstein $k$-means method, and the MMD associated to the truncated signature kernel (MMD-T). For the MMD-based methods, we calculate the pairwise distance matrix $D \in \mathbb{R}^{N_2 \times N_2}$ which (when using the signature kernel $k^r_{\text{sig}}$ is given by
\begin{equation}
    D_{ij} = \mathcal{D}_{\text{sig}}^r(\boldsymbol{s}^i, \boldsymbol{s}^j) \quad \text{for }i,j=1,\dots,N_2,
\end{equation}
where $\boldsymbol{s}^i \in \mathcal{EP}^\Phi_h(\hat{\mathsf{s}})$ for $i=1,\dots,N_2$. For both MMD-DET and MMD-T, we initially set $\sigma = 0.025$ and consider the rank $1$ MMD. We then pass $D$ to the $\mathtt{AgglomerativeClustering}$ class, from the $\mathtt{cluster}$ module within the $\mathtt{scikit}$-$\mathtt{learn}$ package. There are only two hyperparameter choices to make: the linkage criterion, which we use the average, and the number of clusters $k=2$. For the Wasserstein $k$-means approach, we first built the vector of log-returns associated to $\hat{\mathsf{s}}$, which is given by
\begin{equation*}
    \mathsf{r}_i = \log(\mathsf{s}_{i}) - \log(\mathsf{s}_{i-1}) \qquad \text{for }i=1,\dots,N,
\end{equation*}
and set $\mathsf{r}_0 := 0$. We then segmented $\mathsf{r}$ into sub-paths of length $n_1=70$, with an overlap parameter equal to $n_2=63$. Our clustering space is thus similar to the set $\mathcal{SP}_h(\hat{\mathsf{s}})$, except 1) we are considering log-returns instead of raw path values which 2) are not time-augmented. Finally we set the Wasserstein exponent $p=2$. For a more detailed explanation of the algorithm, we refer the reader to \cite{horvath2021clustering}.

Each clustering method outputs a vector of labels $\ell = (\ell_1, \dots, \ell_{N_2})$ associated to each element $\boldsymbol{s} \in \mathcal{EP}_h(\hat{\mathsf{s}})$. With this data we again calculated the average cluster label associated to each sub-path $s \in \mathcal{SP}_h(\hat{\mathsf{s}})$. These average scores are then evaluated against the vector of true labels $\ell^{\mathrm{true}} = (\ell^{\mathrm{true}}_1, \dots, \ell^{\mathrm{true}}_{N_1})$. 

\begin{figure}[ht]
    \centering
    \begin{subfigure}[t]{0.5\linewidth}
        \centering
        \includegraphics[width=\textwidth]{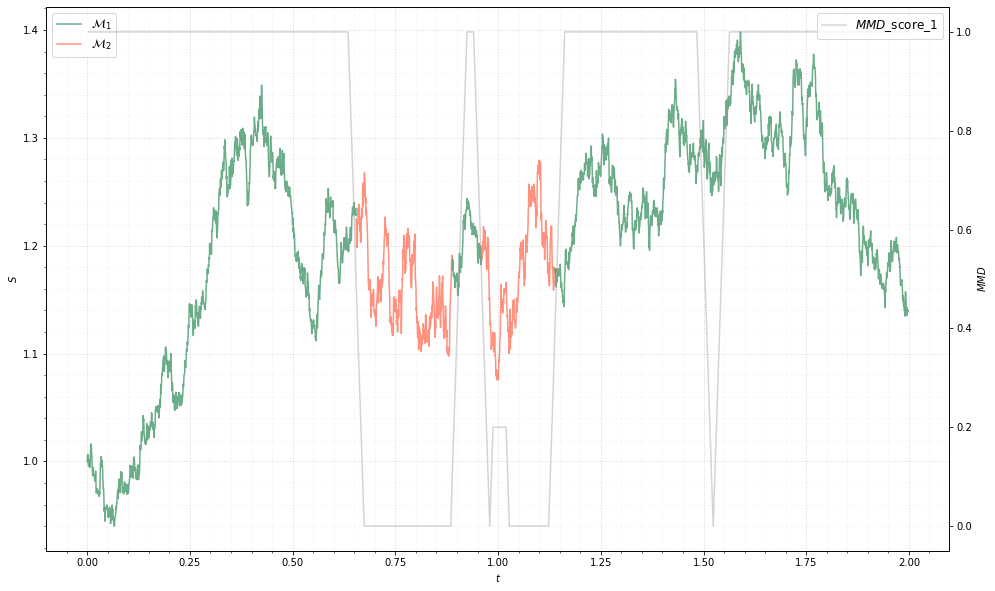}
        \caption{$\mathcal{D}^1_{\text{sig}}$}
        \label{fig:toyclusteringdet}
    \end{subfigure}%
    \begin{subfigure}[t]{0.5\linewidth}
        \centering
        \includegraphics[width=\textwidth]{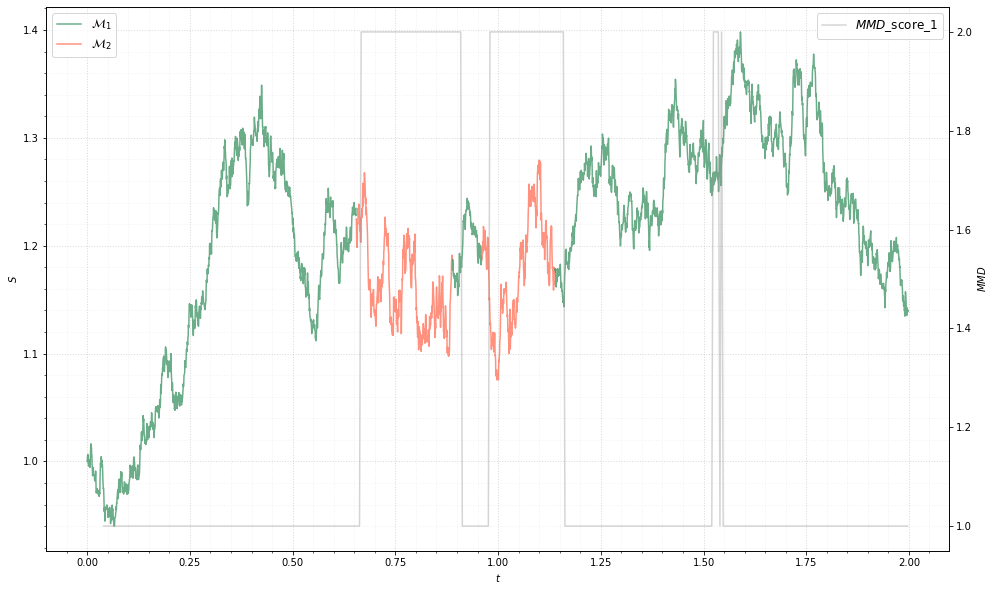}
        \caption{Wasserstein $k$-means.}
        \label{fig:toyclusteringwass}
    \end{subfigure}
    \caption{Example clustering, toy runs, $d=1$. Both algorithms perform well in this context.}
    \label{fig:toyclustering}
\end{figure}

From Figure \ref{fig:toyclustering}, we can visually confirm that the MMD-based algorithms on average perform well. The advantage of the MMD-based techniques, however, is their ability to easily study paths of higher dimensionality. To illustrate this, we constructed a regime-changed path $\hat{\mathsf{s}} \in \mathcal{T}_\Delta([0, T]; \mathbb{R}^d)$ and set $d=10$. Here, we cannot easily apply the Wasserstein algorithm. One might expect that increasing the dimensionalty renders results less stable or less accurate. However, Figure \ref{fig:toyclustering10} confirms that (in this setting) this is not the case.

\begin{figure}[ht]
    \centering
    \includegraphics[scale=0.4]{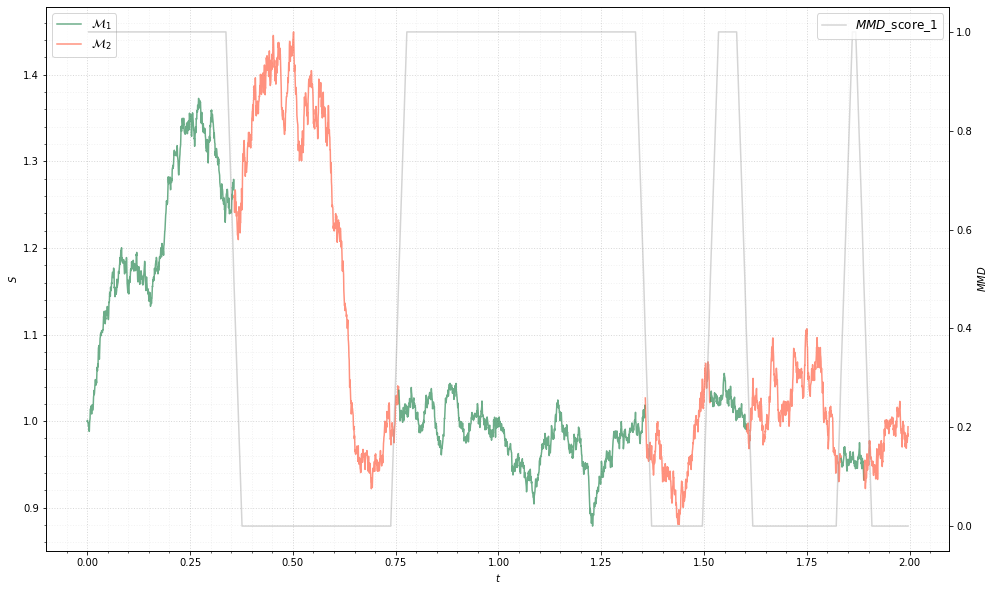}
    \caption{Example clustering, toy runs, $d=10$. The Wasserstein approach is not shown as it can only be used when $d=1$. With $\mathcal{D}^1_{\text{sig}}$, it is computationally feasible to increase dimensionality without harming performance.}
    \label{fig:toyclustering10}
\end{figure}

    \section{Real data experiments}\label{sec:realdata}

    In this section we provide experiments on real market data using our anomaly detection technique. We seek to leverage the ease at which we can study higher-dimensional paths by tackling both the MRDP and MRCP on a basket of equities and cryptocurrency pairs respectively. We conclude the section with an example of how one can employ our approach as part of a wholly data-driven detection pipeline, operating on a path-by-path basis.

    \subsection{Basket of equities}\label{subsec:basket}

We begin by studying the price path associated to a basket of equities $\mathsf{s} = (\mathsf{s}_1, \dots, \mathsf{s}_d)$. Here we focused on US equities, and initially took the daily close prices from Yahoo Finance of Coca-Cola (NYSE:KO), IBM (NYSE:IBM), General Electric (NYSE:GE), Proctor and Gamble (NYSE:PG), Exxon Mobil (NYSE:XOM), JP Morgan (NYSE:JPM), and the S\&P 500 (NYSE:\^{}GSPC), giving a (time-augmented) price path $\hat{\mathsf{s}} \in \mathcal{T}_\Delta([a, b]; \mathbb{R}^{8})$. Here, $\Delta$ represents the mesh-grid associated to daily close prices. In this example we have $a=$ 1980-01-01 and $b=$ 2021-12-20. 

We apply a wholly non-parametric approach by using the auto evaluator from Subsection \ref{subsubsec:nonparametricevaluation} to calculate the score vector $A_L(\hat{\mathsf{s}})$ with $L= \{4, 8, 12\}$. We use the path transformer function $\Phi = \phi_{\text{time}} \circ \phi_{\text{norm}} \circ \phi^\lambda_{\text{scale}}$ with $\lambda = dt^{-1/2}$ (where $dt =1/252$ as we are studying daily returns). We set $h=(8, 8)$. The choice of detector is given by the rank 1 MMD associated to the RBF-lifted signature kernel with $\sigma=1$. Finally, we set $\alpha = 0.95$ and allow for the empirical prior distribution to be comprised of the previous $200$ MMD scores.

\begin{figure}[ht]
    \centering
    \begin{subfigure}[t]{0.4\linewidth}
        \centering
        \includegraphics[width=\textwidth]{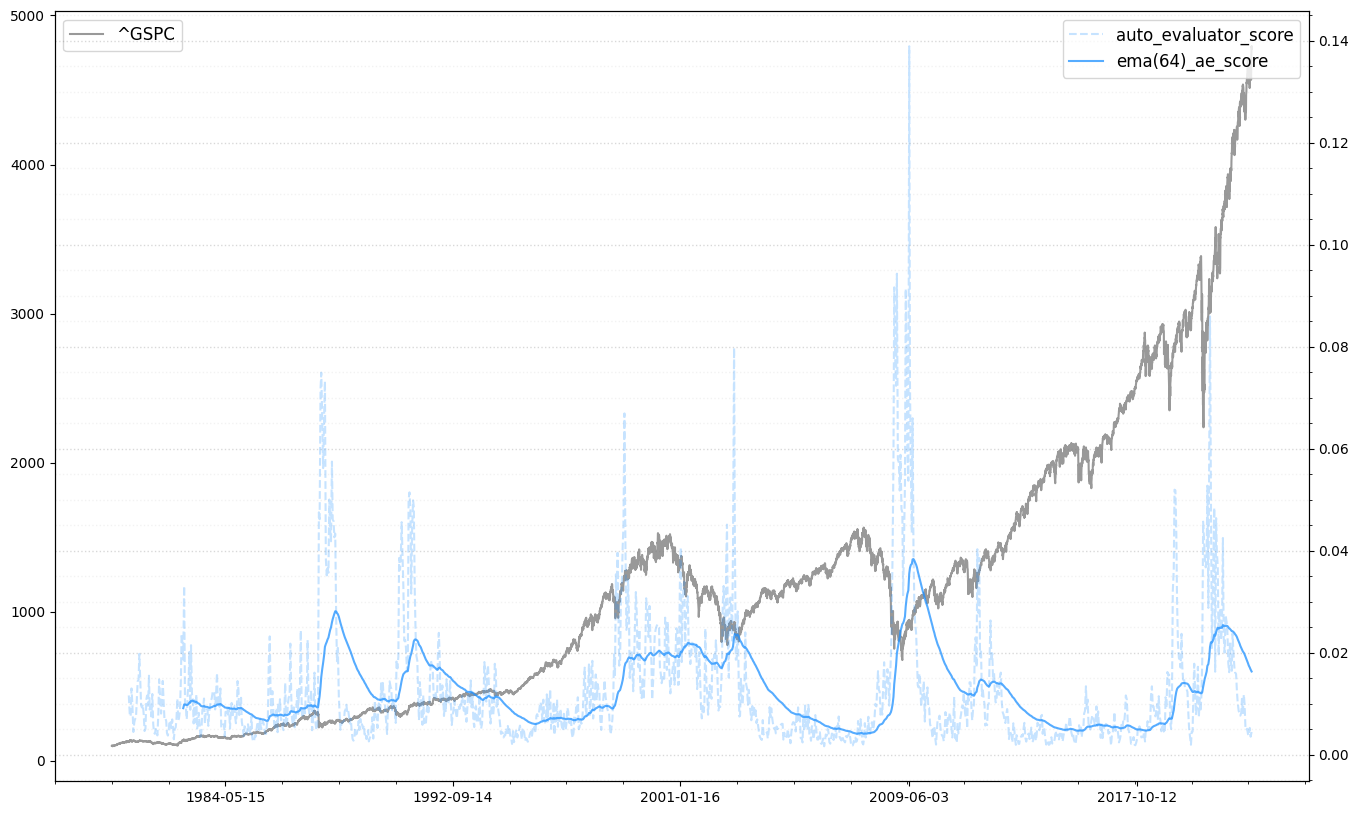}
        \caption{NYSE:SPY}
        \label{fig:spyscores}
    \end{subfigure}
    \begin{subfigure}[t]{0.4\linewidth}
        \centering
        \includegraphics[width=\textwidth]{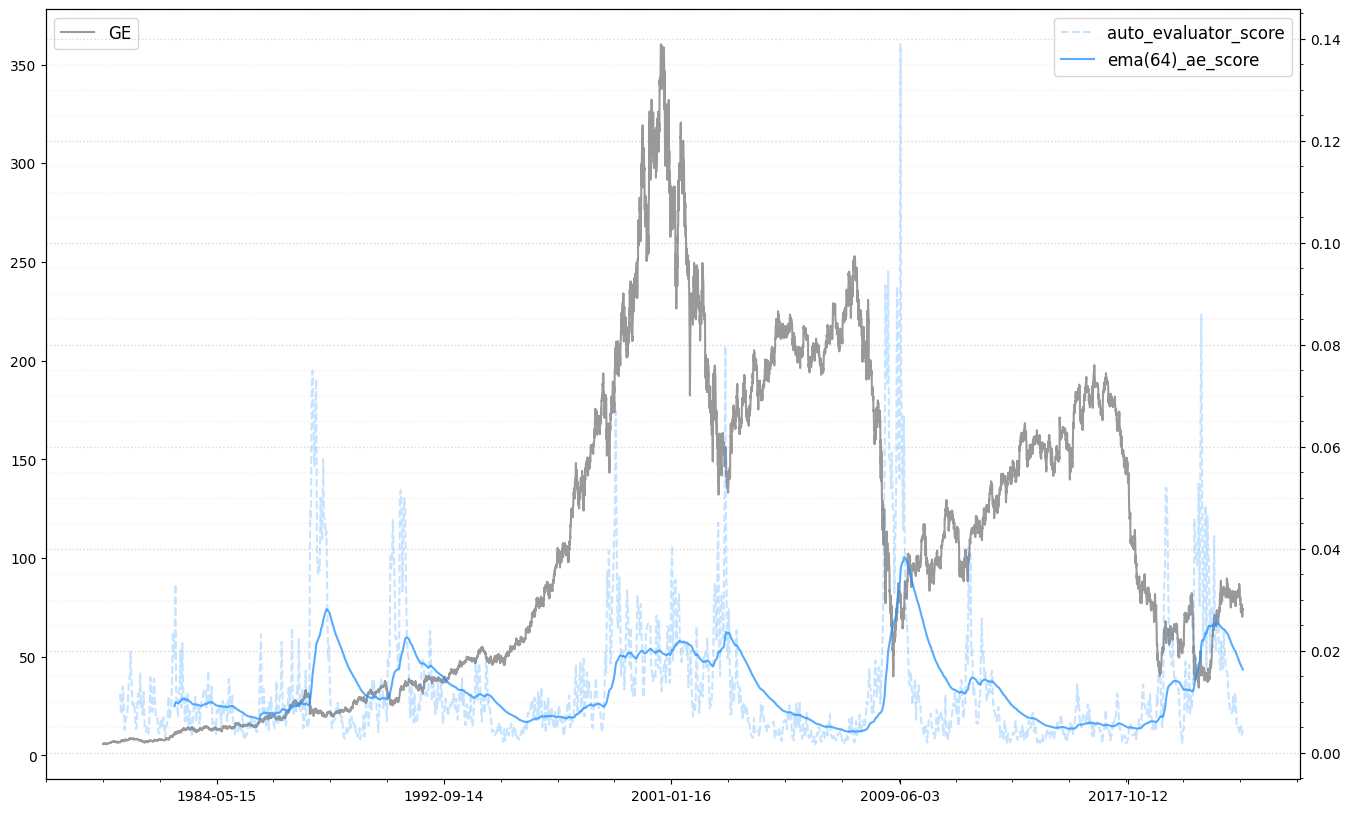}
        \caption{NYSE:GE}
        \label{fig:gescores}
    \end{subfigure}
    \medskip
    \begin{subfigure}[t]{0.4\linewidth}
        \centering
        \includegraphics[width=\textwidth]{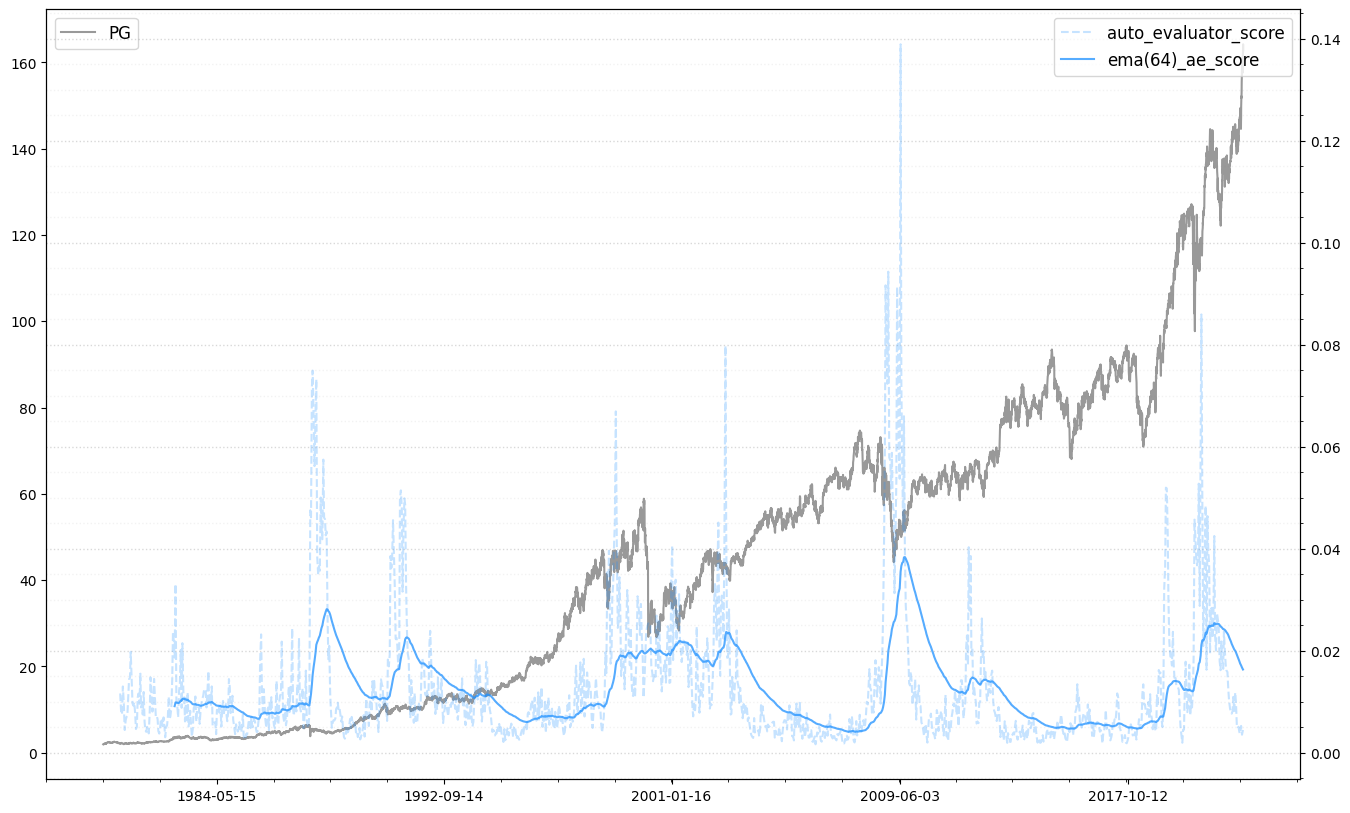}
        \caption{NYSE:PG}
        \label{fig:pgscores}
    \end{subfigure}
    \begin{subfigure}[t]{0.4\linewidth}
        \centering
        \includegraphics[width=\textwidth]{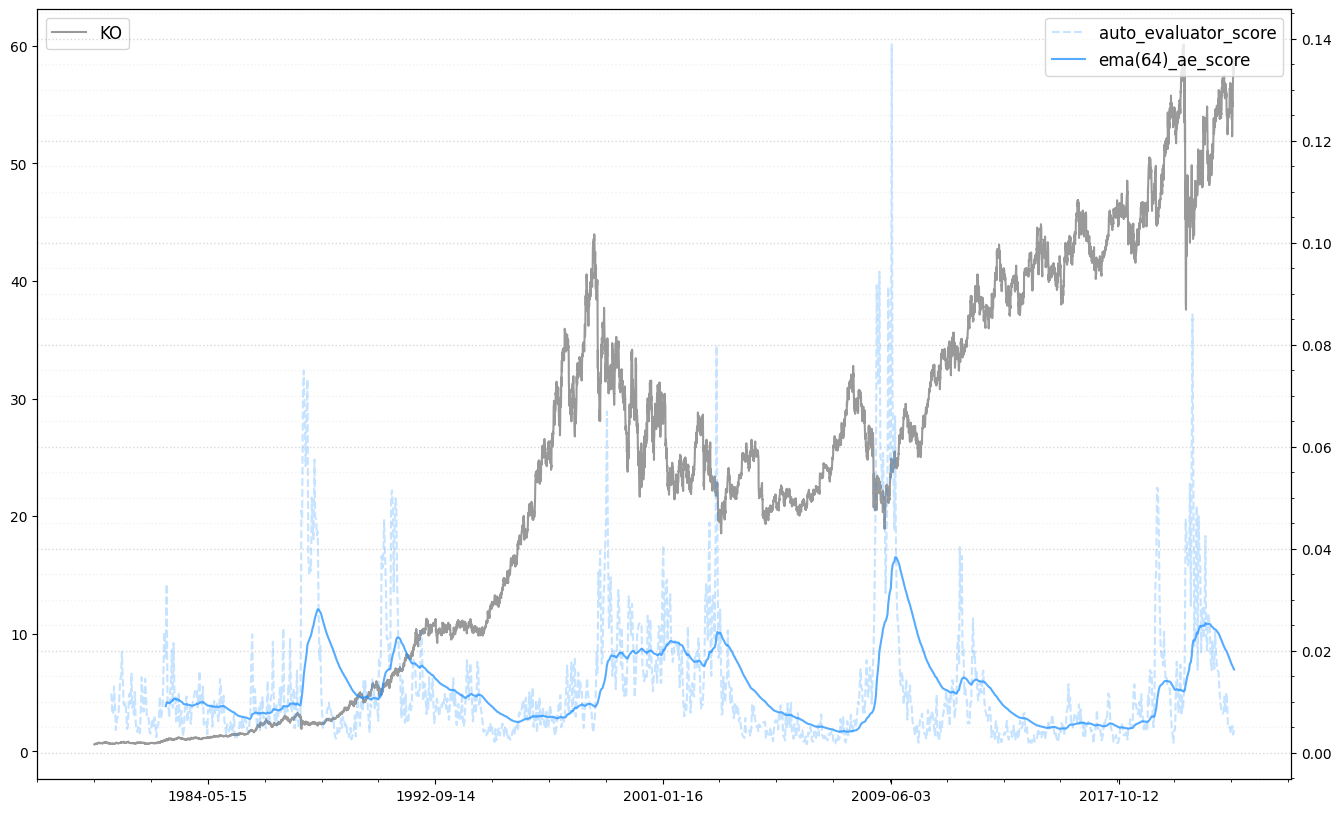}
        \caption{NYSE:KO}
        \label{fig:koscores}
    \end{subfigure}
    \caption{MMD scores projected against different equity paths contained in the basket. From Figure \ref{fig:equitiesmmd} we can see that the MMD score peaks at periods of known market turmoil: notably, the late 1980s financial crisis, the dot com bubble and subsequent crash in 2001, the Global Financial Crisis (GFC) and the period of instability in European markets afterwards, and more recently the increased volatility due to the SARS-COV-2 pandemic. }
    \label{fig:equitiesmmd}
\end{figure}

Figure \ref{fig:equitiesmmd} gives a plot of $A_L(\hat{\mathsf{s}})$ alongside daily close prices of some basket constituents. The dashed blue line is the true score vector $A_{L}(\hat{\mathsf{s}})$ for either detector, and the solid blue line is an exponential moving average. For validation purposes we give a plot of the same MMD score vector from Figure \ref{fig:equitiesmmd} against the VIX index, from January 2018 till December 2021. Up to a lag (due to the sampling frequency of path objects versus single return values) the auto-evaluator largely tracks the VIX well. Interestingly, periods of similarity in the VIX process do not necessary correlate to similar lagged MMD scores.

\begin{figure}[ht]
    \centering
    \includegraphics[scale=0.3]{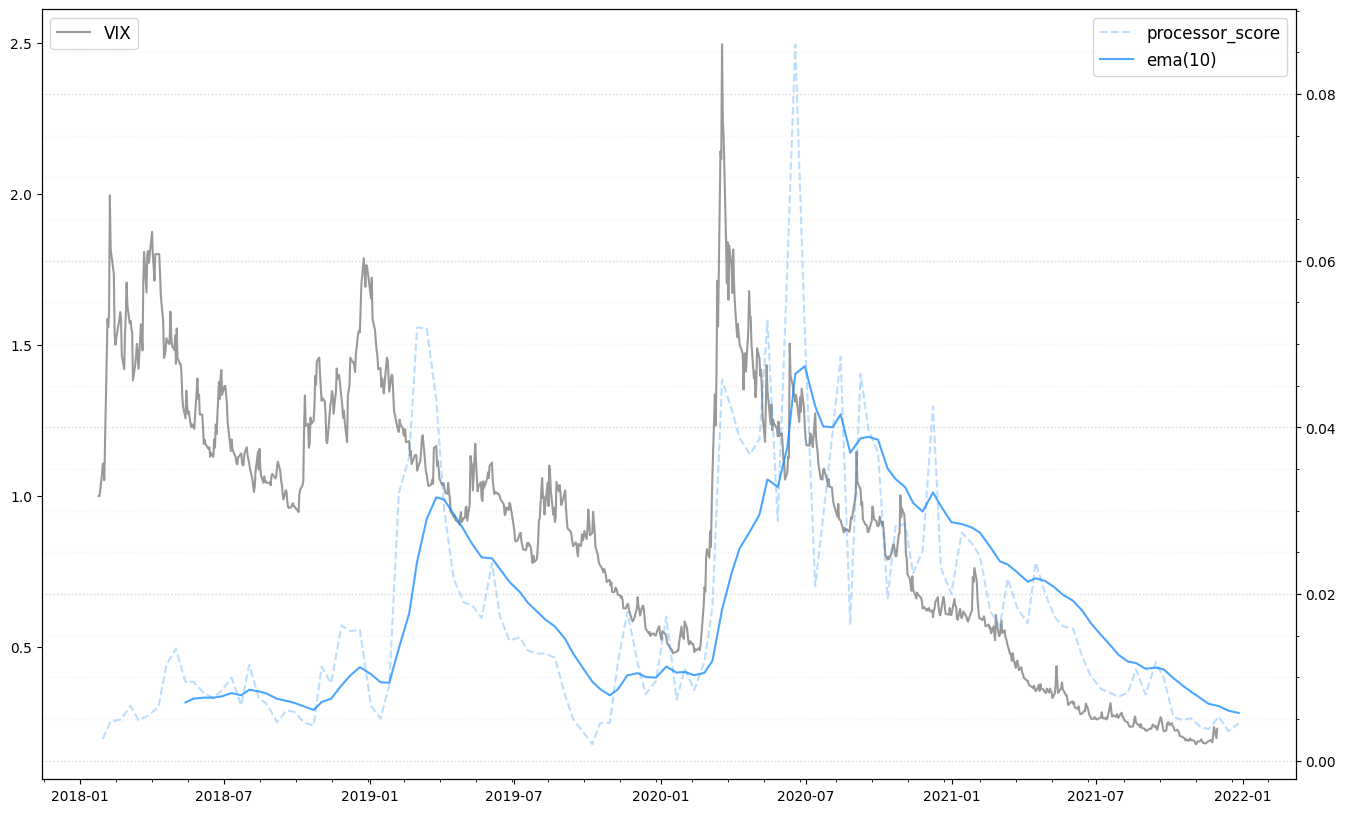}
    \caption{Auto-evaluator MMD scores (blue) against VIX (grey). The MMD score tracks the VIX well.}
    \label{fig:vixmmd}
\end{figure}

We conclude this section by clustering the price path $\hat{\mathsf{s}}$ via our method described in Section \ref{subsec:mrcpmethods}. More concretely, we apply the hierarchical clustering used in Section \ref{sec:mrcp} with maximum linkage, and initially set the number of clusters $k=2$. The results of running the clustering algorithm are presented in Figure \ref{fig:equitiesrealcluster}. Here, the S\&P 500 is plotted for reference, coloured according to the assigned cluster label. More translucent colouring means that a path segment had different labels assigned to it depending on what ensemble it was a part of. We plot the results of choosing $k=2$ or $k=4$ clusters.

\begin{figure}[ht]
    \centering
    \begin{subfigure}{0.5\linewidth}
        \includegraphics[width=\textwidth]{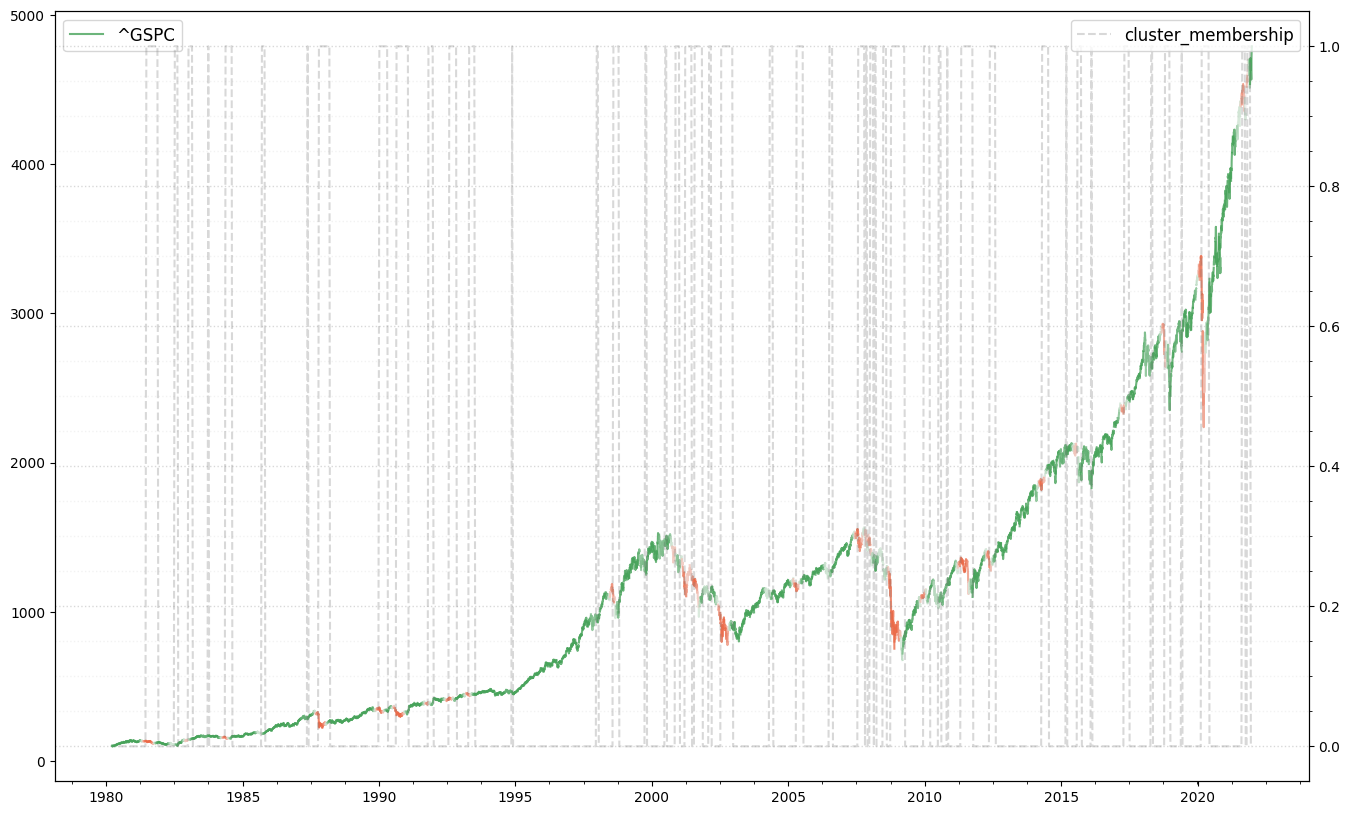}
        \caption{$k=2$.}
        \label{fig:equitiesrealcluster2}
    \end{subfigure}%
    \begin{subfigure}{0.5\linewidth}
        \centering
        \includegraphics[width=\textwidth]{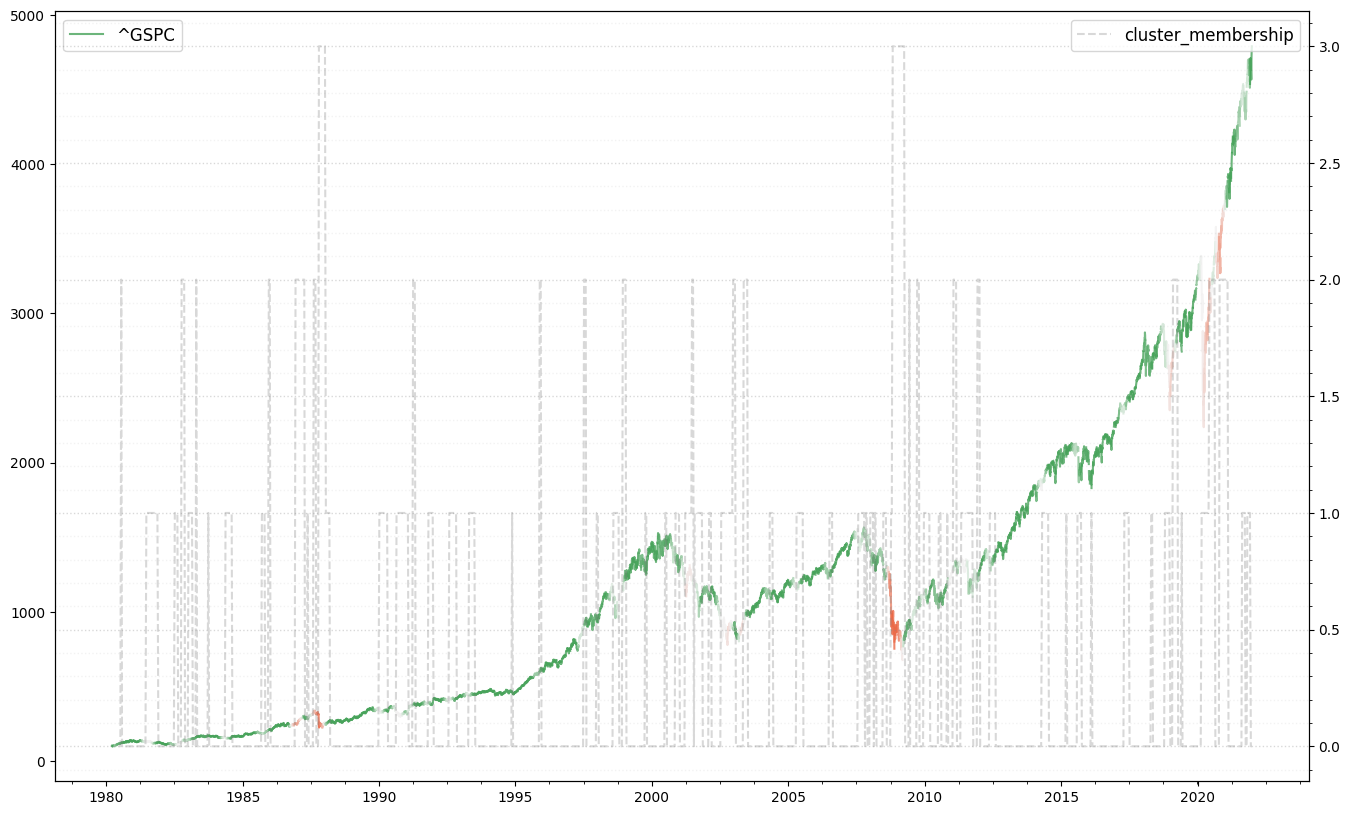}
        \caption{$k=4$.}
        \label{fig:equitiesrealcluster4}
    \end{subfigure}
    \caption{Hierarchical clustering of $A_L(\hat{\mathsf{s}})$. The average cluster membership is given by the grey line. As the number of clusters increases, the demarcations between regime states become more granular, whilst still maintaining some sense of continuity between clusters.}
    \label{fig:equitiesrealcluster}
\end{figure}

The results of either clustering algorithm are very similar to those obtained in Figure \ref{fig:equitiesmmd}. The grey line represents the average cluster membership of the given sub-path segment $s \in \mathcal{SP}_h(\hat{\mathsf{s}})$. The main observation here is that the average cluster membership does not oscillate too wildly. The same is true even with an increase in the number of clusters chosen: Figure \ref{fig:equitiesrealcluster4} gives results if we set $k=4$. Periods of market turmoil are also associated with periods of oscillating cluster membership. This is most evidenced by the periods leading up to and including the GFC, the dot-com bubble, and the recent market turbulence due to the pandemic.

    \subsection{Cryptocurrency data}\label{subsec:crypto}

In this section, we run our auto MMD detector over a high-dimensional path of cryptocurrency pairs, all denominated in USDT (Tether). These include Bitcoin (BTC), Ethereum (ETH), Litecoin (LTC), Ripple (XRP), Binance Coin (BNB), and Polygon (MATIC). Our path we are interested in is given by $\hat{\mathsf{s}} \in \mathcal{T}_\Delta([a, b]; \mathbb{R}^6)$ with $\Delta$ the mesh grid associated to hourly close prices, $a=$2019-06-01 and $b=$2022-12-01. Due to increased frequency of the data, we study paths of size $h=(8, 32)$, and apply the transformations $\phi_{\text{norm}} \circ \phi_{\text{time}} \circ \phi_{\text{incr}}$. We again use the auto-evaluator from Definition \ref{def:autoevaluator} equipped with the rank-1 RBF-lifted signature maximum mean discrepancy, and chose the smoothing parameter $\sigma = 1$. We calculate $A_L(\hat{\mathsf{s}})$ with the lags $L = \{8, 16\}$, representing the two most recent path ensembles that do not contain any sub-paths located in the currently evaluated sample. We set the memory of the auto-evaluator to the previous 250 MMD scores.

\begin{figure}[ht]
    \centering
    \begin{subfigure}{0.5\linewidth}
        \centering
        \includegraphics[width=\textwidth]{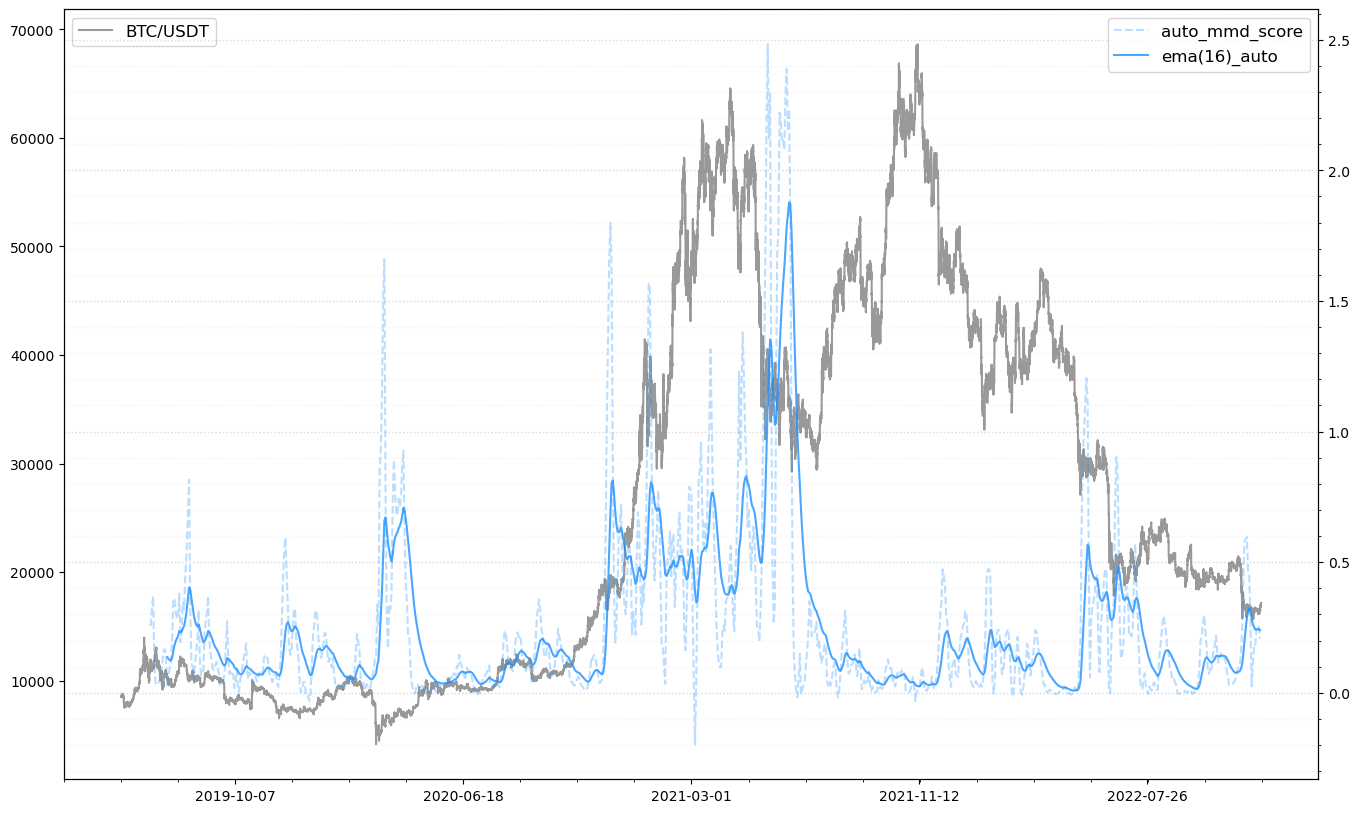}
        \caption{$A_L(\hat{\mathsf{s}})$ against BTC hourly close prices.}
        \label{fig:cryptommd}
    \end{subfigure}%
    \begin{subfigure}{0.5\linewidth}
        \centering
        \includegraphics[scale=0.225]{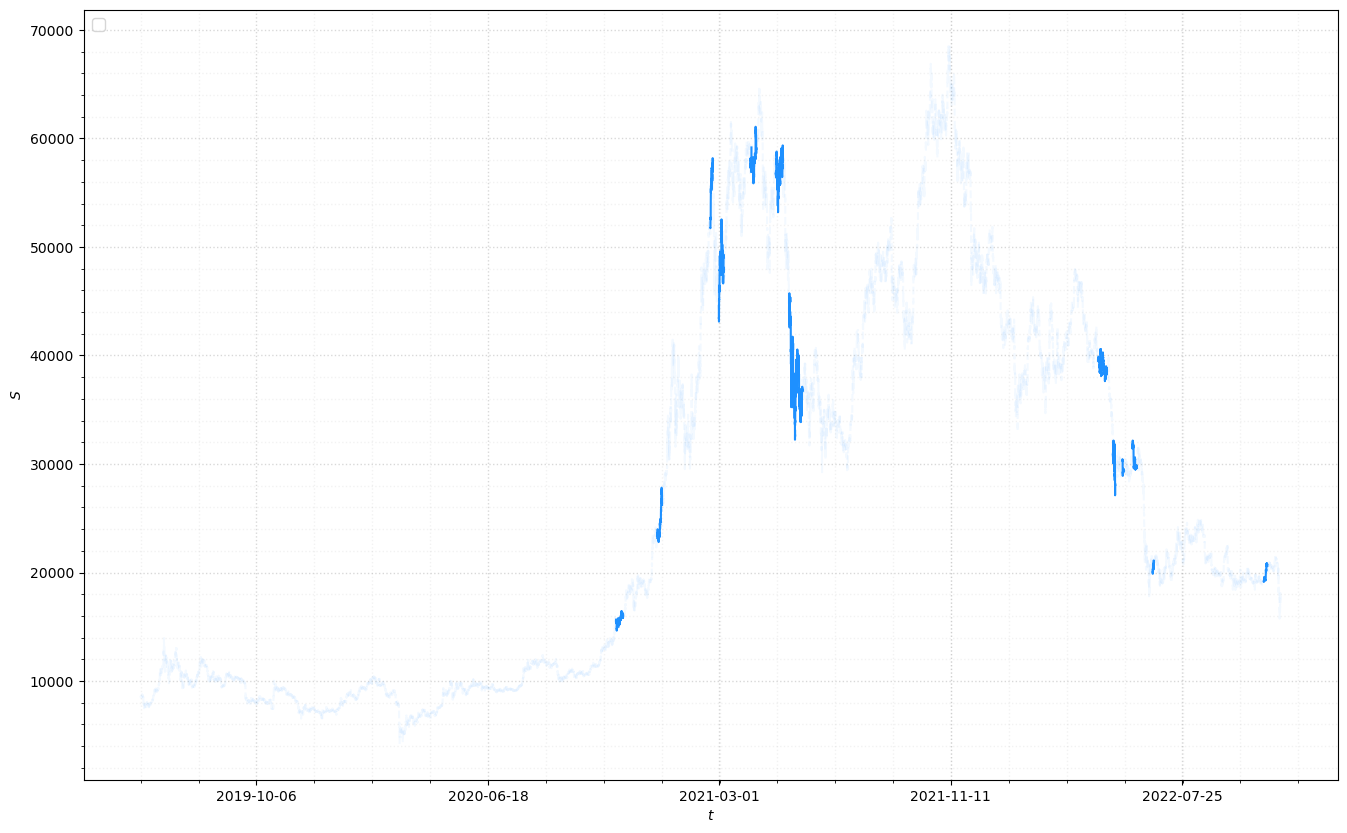}
        \caption{$\mathcal{D}^1_{\text{sig}} > c_\alpha$ threshold plot.}
        \label{fig:cryptoalphas}
    \end{subfigure}
    \caption{$A_L(\hat{\mathsf{s}})$ score for basket of cryptocurrencies and associated threshold plot. The detector identifies the beginning of the bull run and its subsequent cessation. The detector has long enough memory that it does not identify the subsequent period of volatility as anomalous. The exit from this period, however, is considered a regime change event.}
    \label{fig:mrdpcrypto}
\end{figure}

Figure \ref{fig:mrdpcrypto} shows the rank 1 MMD score over $\mathcal{EP}_h(\hat{\mathsf{s}})$ against the price of Bitcoin over the same period. We see three main regime change periods: in 2019, towards the end of 2020 and into 2021, and during mid-2022. Interestingly, the period during 2021 was not considered a regime change, which makes sense, as it closely resembles the period preceding it. Clustering results are given in Figure \ref{fig:cryptoclustering2}. Here the price plot is the BTC/USDT hourly close prices, coloured according to the average cluster membership of each $s \in \mathcal{SP}_h(\hat{\mathsf{s}})$. 

\begin{figure}[ht]
    \centering
    \begin{subfigure}{0.5\linewidth}
        \centering
        \includegraphics[width=\textwidth]{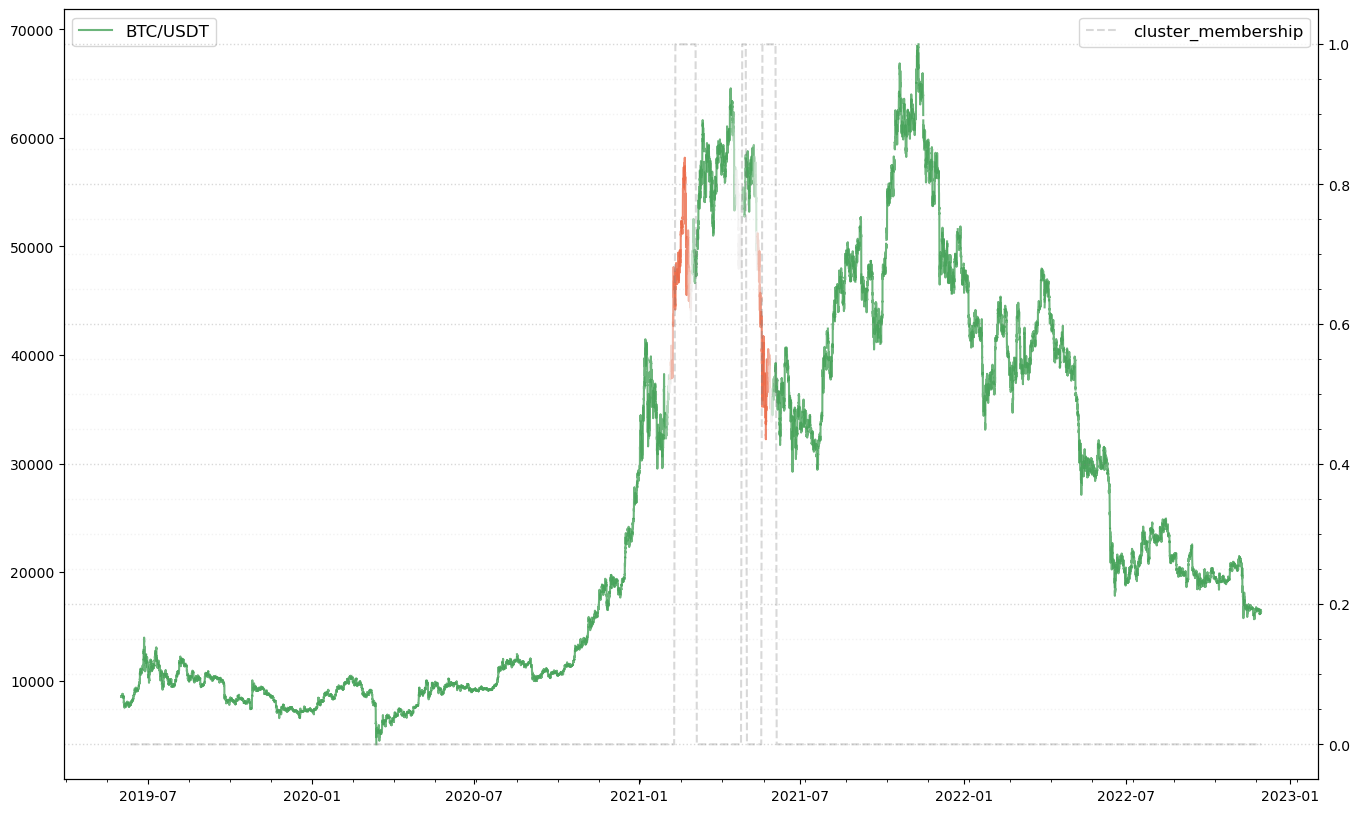}
        \caption{$k=2$.}
        \label{fig:cryptoclustering2}
    \end{subfigure}%
    \begin{subfigure}{0.5\linewidth}
        \centering
    \includegraphics[width=\textwidth]{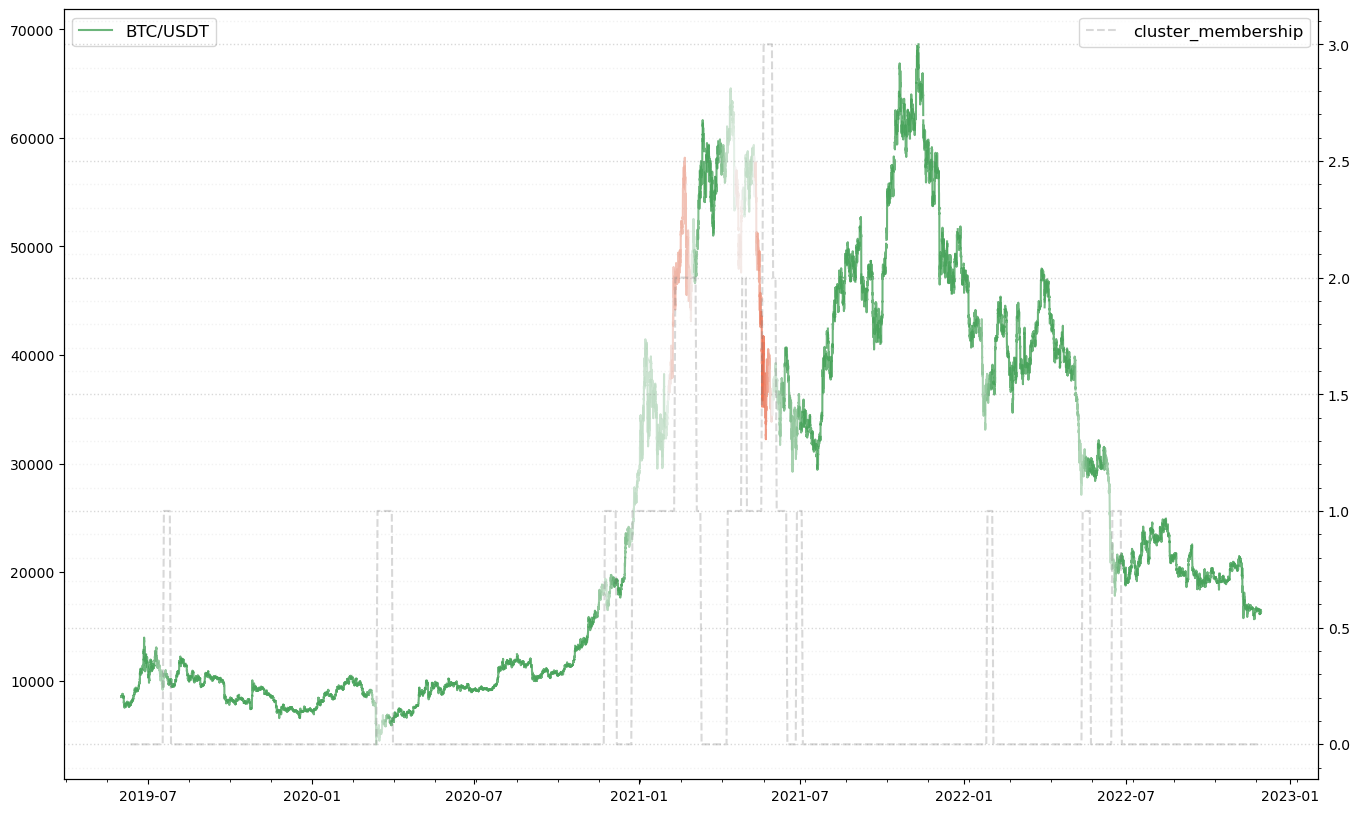}
    \caption{$k=4$.}
    \label{fig:cryptoclustering4}
    \end{subfigure}
    
    \caption{Hierarchical clustering of $A_L(\hat{\mathsf{s}})$.}
    \label{fig:cryptoclustering}
\end{figure}

With only two clusters chosen, we see that the periods of sustained increase and subsequent decrease during 2021 are the only members of the second, more volatile cluster. Interestingly, if you increase the number of clusters to $k=4$, we still see a similar segmentation.

    \subsection{A wholly data-driven pipeline}\label{subsec:realdatapipeline}

In this final section, we showcase how both problems considered in this paper - online market regime detection and offline market regime classification - can be employed to build a wholly data-driven market regime detection pipeline, with beliefs, in a single path-by-path (non-ensemble) setting. We include here a real data example to test (in a path-by-path manner) for periods of market turmoil versus periods of relative calm. The price path under consideration is again $\hat{\mathsf{s}} \in \mathcal{T}_\Delta([a, b'], \mathbb{R}^8)$ from Section \ref{subsec:basket}, except here we include recent observations up until $b'=$ 2023-05-01. 

\begin{figure}[h]
    \centering
    \includegraphics[scale=0.4]{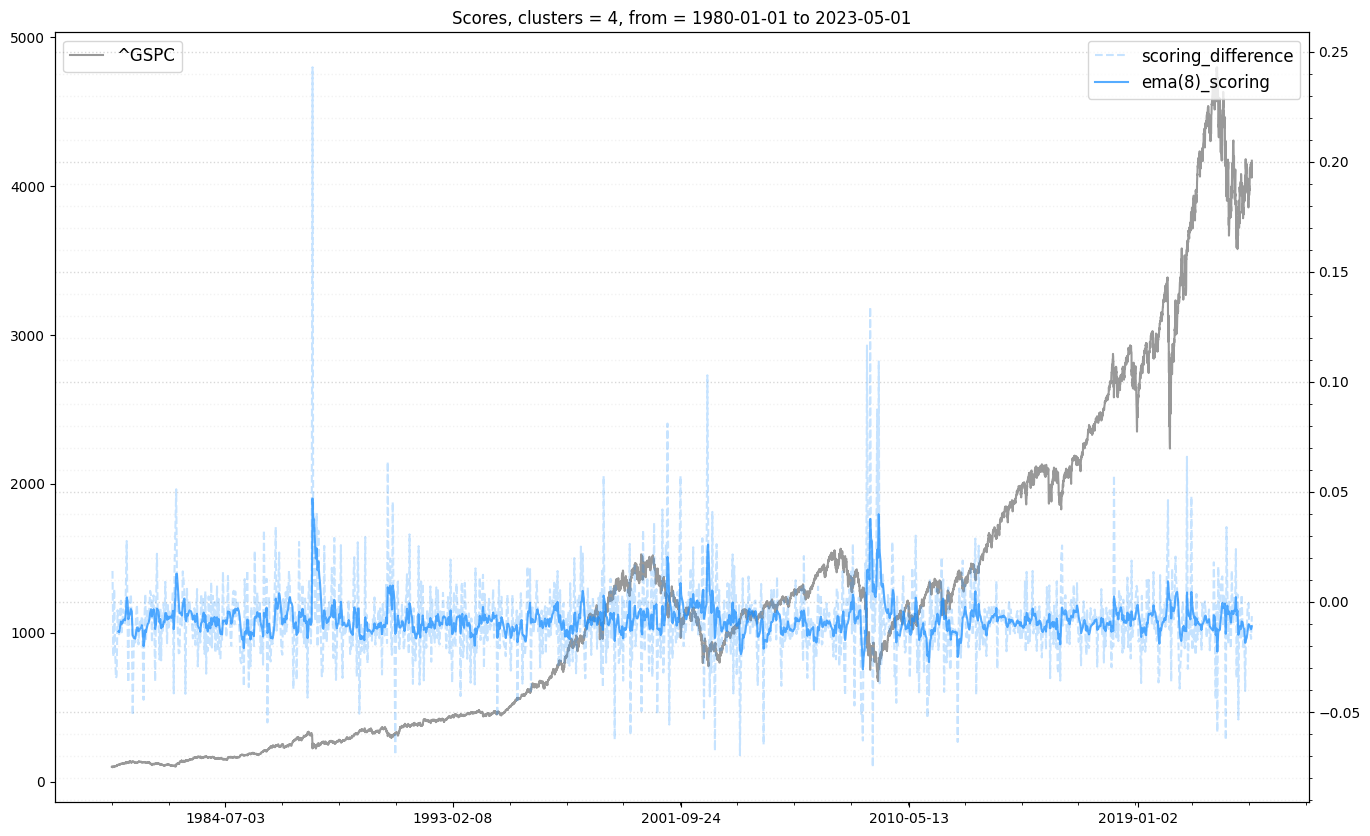}
    \caption{Conformance function evaluated from non-parametric beliefs. Even when evaluating path-by-path, we are still able to detect shifts in regime. The dashed blue line are the raw similarity scores $\Sigma^{\mathfrak{P}}(x)$. The solid blue line is an exponential moving average with $n=8$.}
    \label{fig:realdatapipelinescores}
\end{figure}

We wish to use the similarity score function $\Sigma^{\mathbb{P}, \mathbb{Q}}$ to evaluate path-by-path as opposed to taking ensembles. We calculate $\mathcal{SP}_h^\Phi(\hat{\mathsf{s}})$ with $h=(8,1)$ and take the standard time normalisation and increment transforms. To define our beliefs, we take here to be the clusters (cf. Sections \ref{subsec:mrcpmethods} and \ref{subsec:basket}) derived from the same price path over the date range $[a, b]$ where $a=$ 1980-01-01 and $b=$ 2021-12-20. We take the case where $k=4$ and chose our first set of beliefs to be all paths whose average cluster membership is less than (or equal to) $1$; these represent more stable, persistent market conditions. Naturally the second set of beliefs represent periods of greater turmoil. We then calculate the similarity score vector for each sub-path extracted from $\hat{\mathsf{s}}$, where we use 64 paths sampled from each belief to calculate the unbiased estimators of the constituent scoring rules. 

\begin{figure}[h]
    \centering
    \includegraphics[scale=0.4]{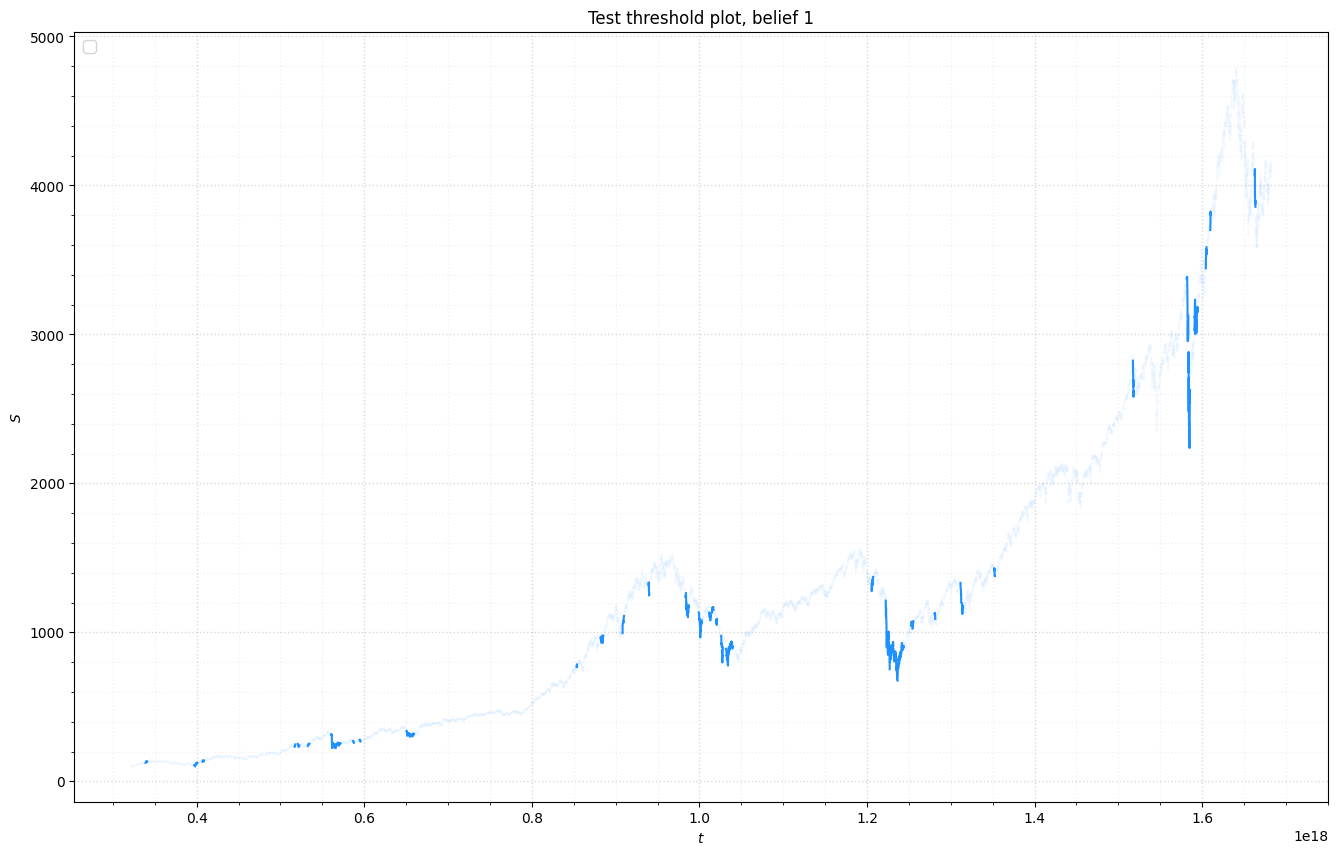}
    \caption{$\Sigma^{\mathfrak{P}}(x) \ge 0$ plot. We used the smoothed value of the similarity score to ascertain conformance to our beliefs. The classical periods of market turmoil are highlighted, indicating that these periods conformed more to the second set of beliefs (chaos).}
    \label{fig:realdatapipelinealphas}
\end{figure}

Figure \ref{fig:realdatapipelinescores} gives the value of $\Sigma^{\mathbb{P}, \mathbb{Q}}(s)$ for each sub-path extracted from $\mathcal{SP}_h^\Phi(\hat{\mathsf{s}})$. It is clear that, generally, we see similar results as per those in Section \ref{subsec:basket}, with the added advantage that we do not need to take ensembles of paths, and we are comparing our results directly to reference measures as opposed to those extracted from the path itself. With our selection of hyperparameters, we see that the conformance score often does not breach above $0$ (indicating periods of increased volatility or market turmoil). Towards the middle of last year, we see that the score was increasing. As of writing, the score is decreasing again, indicating a return to periods more similar to the standard regime. These observations are validated by the results seen in  Figure \ref{fig:realdatapipelinealphas}.

    \section{Conclusion}\label{sec:conclusion}

    In this work, we show that the (higher-rank) signature maximum mean discrepancy can be applied to a wide variety of online regime detection problems, and offline regime clustering problems. We showed that online regime detection could be performed in a parametric, beliefs-based setting and in a non-parametric setting. Common issues faced when using the MMD as a detection tool (ensemble evaluation, non-Markovianity of input data) were handled via the use of signature kernel scoring rules, and an associated similarity function. We showed that our methods worked well on both synthetic and real market data. 

    Topics for future research include pursing a deeper understanding of the two-sample test associated to $\mathcal{D}^r_{\text{sig}}$. In particular it is of interest to determine how the terms of the signature contribute to the value of the MMD between two sets of paths $\mathbb{P}, \mathbb{Q}$. If $\mathbb{P}, \mathbb{Q}$ are comprised of only a small number of path objects, sampling variance can cause a higher incidence of Type II error than would be expected. This has implications not only to regime detection, but to any work which utilizes the signature kernel MMD as a two-sample testing tool (generative modelling, calibration, and so on). In this work we have seen that this issue can be circumvented by using an appropriate path transformation or path scaling ($\sigma$ bandwith parameter in the static RBF kernel). It is thus of interest to understand how each of these concepts are related to each other. 
    
    {\renewcommand{\addcontentsline}[3]{}
    \section*{Acknowledgements}
    ZI was supported by EPSRC grant EP/R513064/1.
    }
    \newpage
	
	\bibliographystyle{halpha-abbrv}
	\bibliography{references}

    \newpage

    \appendix
    \section*{Appendix}
    \addtocontents{toc}{\protect\setcounter{tocdepth}{0}} 
    \section{Signatures}\label{appendix:signatures}

In this section, we give additional details regarding the path signature. We recall the definition of the set of formal tensor series over a vector space $E$.

\begin{definition}\label{def:tensoralgebraappendix}
    Let $E = \mathbb{R}^d$ be a vector space over a field $\mathbb{F}$ with basis $\{e_1, e_2, \dots, e_d\}$. Denote by $E^{\otimes n}$ the order $n \in \mathbb{N}$ tensor power of $E$, and define $E^{\otimes 0} = \mathbb{R}$ Then, the space 
    \begin{equation*}
        T\left((E)\right) = \{(a_0, a_1, \dots, a_n, \dots) : a_k \in E^{\otimes k}, n \in \mathbb{N} \} = \prod_{n=1}^\infty E^{\otimes k}
    \end{equation*}
    
    to be the space of all formal $E$-tensor series. 
\end{definition}
The space $T((E))$ is an algebra with sum $\oplus$ and product $\cdot$. For $a, b \in T((V))$, these are defined by 
\begin{equation*}
    a \oplus b = \sum_{i \ge 0} a_i + b_i, \quad z_i = \sum_{l=0}^i a_l b_{i-l} \in E^{\otimes i},
\end{equation*}
where $a\cdot b = (z_0, z_1, \dots)$. Often we consider the subalgebra 
\begin{equation*}
    T\left(E\right) = \left\{(a_i)_{i\ge 0} : a_i \in E^{\otimes i} \text{ and there exists }N \in \mathbb{N} \text{ such that }a_i=0 \text{ for all } i\ge N\right\}.
\end{equation*}
The following algebra is also of relevance when it comes to defining the truncated signature mapping.
\begin{definition}\label{def:truncatedtensoralgebraappendix}
    Let $N \in \mathbb{N}$. Then the \emph{truncated tensor algebra} of order $N$ is given by
    \begin{equation*}
        T^N(E) = \left\{(a_i)_{i=1}^N : a_i \in E^{\otimes i} \right\}.
    \end{equation*}
    
    We have that $T^N(E)$ is a subalgebra of $T(E)$ and thus of $T((E))$. The canonical homomorphism $T((E)) \to T^N(E)$ is denoted $\pi_N$. 
\end{definition}
Finally, we recall the following notion of path regularity, which must be considered when deciding how to define the iterated integrals defining the signature mapping.
\begin{definition}[$p$-variation]\label{def:pvariationappendix}
    Let $p \ge 1$ and $X: [0, T] \to E$ be a path. Denote by $\Pi$ the set of partitions over the interval $[0, T]$. Then, the $p$-variation of $X$ is given by 
    \begin{equation*}
        \norm{X}_p := \left(\sup_{\pi \in \Pi} \sum_{[u, v] \in \pi} |X_{v} - X_{u}|^p \right)^{1/p}.
    \end{equation*}
    
    Denote by $C_p([0, T]; E)$ as the set of paths with finite $p$-variation. Equipped with the norm 
    \begin{equation*}
        \norm{X}_{\mathcal{V}^p} := \norm{X}_p + \norm{X}_\infty, 
    \end{equation*}
    
    one has that $(C_p, \norm{\cdot}_{C_p})$ is a Banach space.
\end{definition}	
With this in hand, we can make the following definition.
\begin{definition}[Signature of a path]\label{def:signatureappendix}
    Let $p\ge 1$ and $X \in C_p([0, T]; E)$. The \emph{signature} $S(X) \in T\left((E)\right)$ of the path $X$ is defined as 
    \begin{equation}\label{eqn:signatureappendix}
        S(X) := (1, \mathbb{X}^1_T, \dots, \mathbb{X}^N_T, \dots),
    \end{equation}
    
    where
    \begin{equation}\label{eqn:signatureintegralappendix}
        \mathbb{X}^k_T = \idotsint\limits_{0<t_1<\dots< t_k < T} dX_{u_1} \otimes \dotsm \otimes dX_{u_k} \in E^{\otimes k}.
    \end{equation}
    
    The \emph{truncated signature} $S^N(X) \in T^N(E)$ is similarly defined as
    \begin{equation*}
        S^N(X) := (1, \mathbb{X}^1_T, \dots, \mathbb{X}^N_T, 0, 0, \dots).
    \end{equation*}
    
\end{definition}
One can think of $S(X)$ as encoding all relevant information required to reconstruct $X$ (as stated in the body of the paper, see Proposition \ref{prop:signatureproperties}, the signature mapping is injective up to an equivalence relation on the space of paths). For example, if $X \in C_p([0, T], \mathbb{R}^d)$, the level-one terms $\mathbb{X}^1_T \in \mathbb{R}^d$ are of the form
\begin{equation*}
    \mathbb{X}^1_T = \left(\int_{0}^T dX^i_u\right)_{i=1}^d,
\end{equation*}
and since $\int_{0}^T dX^i_u = X^i_T - X^i_0$, we have that $\mathbb{X}^1_T$ corresponds to the vector of increments for each channel of $X$ over the interval $[0, T]$. The second term $\mathbb{X}^2_T$ contains terms that correspond the signed Levy area of the path $X$. Higher-order terms capture more intricate properties of the path $X$, but often do not have a nice geometric interpretation.
\begin{remark}
    The value of $p \ge 1$ determines the integration theory required to define the elements of eq. (\ref{eqn:signatureintegralappendix}). For instance, if $p \in [1, 2)$, they can be thought of in the sense of Young integration. If $X : [0, T] \to E$ is of bounded variation, then the integrals can be thought of in the Riemann-Stieltjes sense. Larger values of $p$ require rough path theory. As stated in the body, paths passed to the signature mapping are thought of as piecewise linear interpolants of discrete, time-augmented observations $\hat{\mathsf{x}} = ((t_0, x_0), \dots, (t_N, x_N))$. Thus they are always of bounded variation and the signature is always well-defined. 
\end{remark}

We now outline some some analytic and algebraic properties of the signature. The first provides an upper bound on the constituent terms of the signature at level $k$. 

\begin{proposition}[\cite{lyons1998differential}]\label{prop:factorialdecayappendix}
    Let $X$ be a path in $C_1([a, b]; E)$. Denote by $S(X) = (\mathbb{X}^k_{[a,b]})_{k\ge 0}$ the signature associated to $X$. Then, for any $k\in \mathbb{N}$ one has that 
    \begin{equation}\label{eqn:factorialdecay}
        \norm{\mathbb{X}^k_{[a, b]}}_{E^{\otimes k}} \le \frac{\norm{X}_{1, [a,b]}^k}{k!}.
    \end{equation}
\end{proposition}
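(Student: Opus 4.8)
The plan is to prove \eqref{eqn:factorialdecay} by induction on $k$, exploiting the recursive structure of the iterated integrals together with the continuity of the total-variation function of $X$. Throughout I will use, as is standard (cf. \cite{lyons1998differential}), that each tensor power $E^{\otimes k}$ carries an admissible norm, so that $\norm{a \otimes b}_{E^{\otimes (j+k)}} \le \norm{a}_{E^{\otimes j}}\,\norm{b}_{E^{\otimes k}}$ for $a \in E^{\otimes j}$ and $b \in E^{\otimes k}$. For $a \le s \le t \le b$ write $\omega(s,t) := \norm{X}_{1,[s,t]}$ for the total variation of $X$ over $[s,t]$, and let $\mathbb{X}^k_{[s,t]}$ denote the level-$k$ iterated integral from \eqref{eqn:signatureintegralappendix} taken over $[s,t]$.

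First I would record the two facts that drive the argument. Fixing $s$, the map $t \mapsto \omega(s,t)$ is non-decreasing, vanishes at $t=s$, and --- because $X \in C_1$ is \emph{continuous} of bounded variation --- is itself continuous on $[s,b]$. Second, peeling off the last integration variable yields the recursion
\[
    \mathbb{X}^k_{[s,t]} = \int_s^t \mathbb{X}^{k-1}_{[s,u]} \otimes dX_u, \qquad k \ge 1,
\]
where the right-hand side is a Riemann--Stieltjes integral of the continuous bounded-variation integrand $u \mapsto \mathbb{X}^{k-1}_{[s,u]}$ against $X$; this identity follows from Fubini for the associated Lebesgue--Stieltjes measures (equivalently, from approximating $X$ by its piecewise-linear interpolants and passing to the limit).

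The induction then proceeds as follows. For $k=0$ the claim is the trivial identity $\norm{1}=1=\omega(s,t)^0/0!$, and for $k=1$ it reads $\norm{X_t - X_s} \le \omega(s,t)$, which holds by definition of the $1$-variation. Assuming the bound at level $k-1$ holds uniformly over all subintervals $[s',t'] \subseteq [a,b]$, take norms in the recursion and use admissibility of the tensor norm together with the inductive hypothesis:
\[
    \norm{\mathbb{X}^k_{[s,t]}}_{E^{\otimes k}} \le \int_s^t \norm{\mathbb{X}^{k-1}_{[s,u]}}_{E^{\otimes (k-1)}} \, d\omega(s,u) \le \frac{1}{(k-1)!}\int_s^t \omega(s,u)^{k-1}\, d\omega(s,u).
\]
Since $u \mapsto \omega(s,u)$ is continuous, non-decreasing and vanishes at $u=s$, the substitution rule for Stieltjes integrals applied to the primitive $x \mapsto x^k/k$ gives $\int_s^t \omega(s,u)^{k-1}\,d\omega(s,u) = \omega(s,t)^k/k$, hence $\norm{\mathbb{X}^k_{[s,t]}}_{E^{\otimes k}} \le \omega(s,t)^k/k!$. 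Taking $s=a$ and $t=b$ gives \eqref{eqn:factorialdecay}.

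The main (and essentially only) delicate point is the precise justification of the recursion as a bona fide Riemann--Stieltjes identity and the continuity of $\omega(s,\cdot)$ --- both of which are comfortably available here, since the paths fed to the signature are piecewise-linear interpolants of discrete observations, hence Lipschitz in particular. I note in passing that if one dropped continuity and worked with c\`adl\`ag bounded-variation paths, the equality $\int_s^t \omega^{k-1}\,d\omega = \omega(s,t)^k/k$ would merely have to be weakened to ``$\le$'', which still suffices for \eqref{eqn:factorialdecay}.
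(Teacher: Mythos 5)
Your proposal is correct. The paper itself states this proposition without proof, quoting it from the cited reference \cite{lyons1998differential}, and your argument — induction on $k$ via the recursion $\mathbb{X}^k_{[s,t]}=\int_s^t \mathbb{X}^{k-1}_{[s,u]}\otimes dX_u$, domination of the Riemann--Stieltjes integral by the total-variation measure, and the substitution identity $\int_s^t \omega(s,u)^{k-1}\,d\omega(s,u)=\omega(s,t)^k/k$ for the continuous non-decreasing variation function — is precisely the classical proof of this estimate, including the correct flagging of the two hypotheses it needs (admissible tensor norms and continuity of $t\mapsto\norm{X}_{1,[s,t]}$, both available here since the inputs are piecewise-linear interpolants).
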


Truncation of the signature is often justified due to Proposition \ref{prop:factorialdecayappendix}. However, a smaller numerical value associated to a term in the signature of $X$ does not always imply that the term is less relevant, or less important, to the reconstruction of $X$. This is why path scalings are important tools when performing inference with signatures. The subject of optimal path scalings is a topic of future research. 

An important algebraic identity is the following, which we use in practice to construct the signature of a piecewise linear path, see Appendix B for details.

\begin{theorem}[Chen's relation \cite{levin2013learning}, Theorem 2.10]\label{thm:chenappendix}
    Suppose $1 \le p < 2$ and suppose that $X \in C_p([a, b]; E)$ and $Y \in C_p([b, c]; E)$. Denote by $*$ the concatenation operation between two paths $X, Y$, whereby \begin{equation*}
        (X*Y)_t = \begin{cases}
            X_t, &\quad t\in[a, b], \\ 
            X_a + Y_t - Y_b, &\quad t \in [b, c]
        \end{cases}.
    \end{equation*}
    Then, we have that 
    \begin{equation}\label{eqn:chen}
        S(X * Y)_{[a, c]} = S(X)_{[a, b]} \cdot S(Y)_{[b, c]}.
    \end{equation}
\end{theorem}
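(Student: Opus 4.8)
The plan is to establish \eqref{eqn:chen} componentwise in the tensor algebra $T((E))$, using that, by Definition~\ref{def:tensoralgebraappendix}, the degree-$k$ part of a product $a\cdot b$ equals $\sum_{j=0}^k a_j\otimes b_{k-j}$. Thus it suffices to show, for every $k\ge 0$, that the degree-$k$ iterated integral of $Z:=X*Y$ over $[a,c]$ is $\sum_{j=0}^k \mathbb{X}^j_{[a,b]}\otimes\mathbb{Y}^{k-j}_{[b,c]}$; the case $k=0$ is the trivial identity $1=1\cdot 1$.

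Fix $k\ge 1$. By \eqref{eqn:signatureintegralappendix} the relevant quantity is the integral of $dZ^{\otimes k}$ over the ordered simplex $\Delta_k=\{a<t_1<\dots<t_k<c\}$. The first step is to split $\Delta_k$, up to the null set $\bigcup_i\{t_i=b\}$, into the $k+1$ disjoint pieces $\Delta_k^{(j)}=\{a<t_1<\dots<t_j<b<t_{j+1}<\dots<t_k<c\}$, $j=0,\dots,k$, indexed by how many of the (ordered) times fall in $(a,b)$; this is exhaustive precisely because the $t_i$ are strictly increasing, so once $t_j<b<t_{j+1}$ is fixed the remaining inequalities are forced. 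On $(a,b)$ one has $dZ_t=dX_t$ and on $(b,c)$ one has $dZ_t=dY_t$, since the additive constant in the definition of $X*Y$ does not affect the differential. Hence the integral over $\Delta_k^{(j)}$ factorises, by Fubini, into
\[
\Bigl(\ \idotsint\limits_{a<t_1<\dots<t_j<b} dX_{t_1}\otimes \dotsm \otimes dX_{t_j}\Bigr)\otimes\Bigl(\ \idotsint\limits_{b<t_{j+1}<\dots<t_k<c} dY_{t_{j+1}}\otimes \dotsm \otimes dY_{t_k}\Bigr)=\mathbb{X}^j_{[a,b]}\otimes\mathbb{Y}^{k-j}_{[b,c]}.
\]
Summing over $j=0,\dots,k$ and using additivity of the integral over the pieces yields the claimed degree-$k$ identity, and letting $k$ range over $\mathbb{N}$ gives \eqref{eqn:chen}.

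The only genuinely delicate point is the displayed factorisation together with the additivity over the $\Delta_k^{(j)}$: these amount to additivity of the iterated integral over the adjacent intervals $[a,b]$, $[b,c]$ and a Fubini-type splitting of the multiple integral at the level $t=b$. In the setting actually used in this paper — piecewise-linear interpolants of streamed data, hence bounded variation (cf.\ the remark following Definition~\ref{def:signatureappendix}) — these are classical facts for Riemann--Stieltjes integrals, so I would simply reduce to that case; for general $1\le p<2$ one instead invokes the corresponding continuity and additivity properties of Young integration. An equivalent, slightly slicker route that avoids the explicit bookkeeping is induction on $k$: conditioning on whether $t_1\le b$ or $t_1>b$ expresses $\mathbb{Z}^k_{[a,c]}$ in terms of $\mathbb{X}^1_{[a,b]}$ and lower-level signatures of $X$ on $[a,b]$ and of $Z$ on sub-intervals of $[a,c]$, and the inductive hypothesis closes the argument. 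I would present whichever version is shorter in the final write-up.
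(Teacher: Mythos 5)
Your argument is correct and is essentially the standard proof of Chen's relation; the paper does not prove this statement itself but cites \cite{levin2013learning}, where the same argument is used, namely splitting the ordered simplex $\{a<t_1<\dots<t_k<c\}$ according to how many of the times fall in $(a,b)$, using $dZ=dX$ on $(a,b)$ and $dZ=dY$ on $(b,c)$, and factorising each piece by Fubini to obtain the degree-$k$ identity $\mathbb{Z}^k_{[a,c]}=\sum_{j=0}^k \mathbb{X}^j_{[a,b]}\otimes\mathbb{Y}^{k-j}_{[b,c]}$. One minor remark: the concatenation as printed reads $X_a+Y_t-Y_b$ on $[b,c]$, which is discontinuous at $b$ unless $X_a=X_b$ (almost certainly a typo for $X_b+Y_t-Y_b$); your proof implicitly, and correctly, treats the continuous concatenation, for which the splitting at $t=b$ and the Riemann--Stieltjes/Young additivity you invoke go through without issue.
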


Other important properties of the signature are outlined in Subsection \ref{subsec:pathsignatures}.

\section{Streaming data}\label{appendix:streamingdata}

The following gives our choice of embedding streaming data into a continuous path evolving over the same time interval.

\begin{definition}[Linear interpolation embedding]\label{def:pathembedding}
    Recall that $\mathcal{T}_\Delta(I, E)$ denotes the space of time-augmented streams over a grid $\Delta$. Then, let
    \begin{equation*}
        \pi: \mathcal{T}_\Delta (I, E) \to C_1(I, E)
    \end{equation*}
    be the embedding of a time-augmented path $\hat{\mathsf{x}} \in \mathcal{T}_\Delta (I, E)$ into $X\in C_1(I, E)$ via
    \begin{equation*}
        X_t := \pi(\hat{\mathsf{x}})_t =
            \frac{x_{t_{i+1}}-x_{t_i}}{t_{i+1}-t_i}(t-t_i) + x_{t_i}, \qquad t \in (t_i, t_{i+1}],
    \end{equation*}
    for $i=1,\dots, n-1$.
\end{definition}

In general, the choice of interpolation is not too important, see for instance \cite{morrill2022choice} for a comprehensive summary on different choices of interpolation in the context of neural controlled differential equations (CDEs). If $X = \pi(\hat{\mathsf{x}})$ is a piecewise linear path embedded in $C_1(I, \mathbb{R}^d)$, we can explicitly construct its signature $S(X)$ via the following algorithm. First, decompose $X$ via its knot points $X_{t_i}$ where $t_i$ is a point on the grid $\Delta$ for $i=1,\dots,N$. We know that $X_t$ is piecewise linear on $[t_{i-1}, t_i]$. Therefore, by Theroem \ref{thm:chenappendix} (Chen's relation), we know that 

\begin{equation}\label{eqn:piecewisesignature}
    S(X)_{[0, T]} = S(X)_{[0, t_1]}\cdot S(X)_{[t_1, t_2]} \cdot \dots \cdot S(X)_{[t_{N-1}, T]}.
\end{equation}

On $[t_{i-1}, t_i]$, we have that $$X_t = x_{t_{i-1}} + \frac{t-t_{i-1}}{t_i - t_{t-1}}(x_{t_{i}} - x_{t_{i-1}}),$$ and therefore $dX_t = x_{t_{i-1}, t_i}/(t_{i} - t_{i-1})$, where $x_{t_{i-1}, t_i} = (0, x_{t_i} - x_{t_{i-1}}, 0, 0, \dots) \in T((\mathbb{R}^d))$. Therefore, it follows that 
\begin{align*}
    \mathbb{X}^k_{[t_{i-1}, t_i]} &= \idotsint_{t_{i-1}<u_1 < \dots < u_k < t_i} dX_{u_1}\otimes \dotsm \otimes dX_{u_k} \\
    &= \frac{(x_{t_{i-1}, t_i})^{\otimes k}}{(b-a)^k} \idotsint_{t_{i-1}<u_1 < \dots < u_k < t_i} du_1 \otimes \dotsm \otimes du_k \\
    &= \frac{(x_{t_{i-1}, t_i})^{\otimes k}}{(b-a)^k} \frac{(b-a)^k}{k!} \\
    &= \frac{(x_{t_{i-1}, t_i})^{\otimes k}}{k!},
\end{align*}
and thus we write $S(X)_{[t_{i-1}, t_i]} = \exp(x_{t_{i-1}, t_i})$, where 
\begin{equation*}
    \exp: T((\mathbb{R}^d)) \to T_1((\mathbb{R}^d)), \quad \exp(v) = \sum_{k=0}^\infty \frac{v^{\otimes k}}{k!},
\end{equation*}
and $T_1((\mathbb{R}^d)) = \{v \in T((\mathbb{R}^d)): \pi_0 v = 1\}$. Therefore, we can calculate the signature of the full, linearly interpolated path $X$ via eqn. (\ref{eqn:piecewisesignature}), which is given by
\begin{equation*}
    S(X)_{[0, T]} = \exp(x_{t_0, t_1}) \otimes \dots \otimes \exp(x_{t_{N-1}, T}).
\end{equation*}

Before we calculate the signature of a given path $\hat{\mathsf{x}} \in \mathcal{T}_\Delta(I, E)$, we may wish to apply stream transformer(s), which can aid in future inference procedures. Here, we give a detailed overview of the stream transformers used in the paper. The first is a normalisation-type transformation which aims to remove level effects from path dynamics.

\begin{definition}[State-normalisation transform]\label{def:statenormalisationtransform}
    Suppose $\hat{\mathsf{x}} \in \mathcal{T}_\Delta (I, \mathbb{R}^d)$ is a time augmented stream of length $n \in \mathbb{N}$. Then, the transformation
    \begin{equation*}
        \phi_{\text{norm}} : \mathcal{T}_\Delta (I, \mathbb{R}^d) \to \mathcal{T}_\Delta (I, \mathbb{R}^d)
    \end{equation*}
    given by 
    \begin{equation}\label{eqn:statenormalisationtransform}
        \phi_\text{norm}(\hat{\mathsf{x}}) = \left\{(t_i, \tfrac{x_i}{x_0}) \right\}_{i=1}^n
    \end{equation}
    is called the \emph{state-normalisation transform}.
\end{definition}
\begin{remark}
    The state-space division in the expression (\ref{eqn:statenormalisationtransform}) is performed component-wise. Other candidate normalisations exist (mean-variance, minimum values, maximum values, and so on) but we found that the initial point normalisation was the easiest and gave better if not comparable results to alternatives.
\end{remark}

One also needs to take care with the sequence of time components present in each $\hat{\mathsf{x}} \in \mathcal{T}_\Delta(I, E)$. If one is only comparing data which is expected to be regularly sampled (for example, daily price values) then one should ensure that this is reflected in the time component of each associated price path. As will be explained later, minor differences (due to data corruption or re-casting of variable types when loading or saving, for example) can result in erroneous conclusions regarding the similarity of paths.  

\begin{definition}[Time-normalisation transformation]
    Let $\hat{\mathsf{x}} \in \mathcal{T}_\Delta (I, \mathbb{R}^d)$ be as in Definition \ref{def:statenormalisationtransform}. Furthermore, let
    \begin{equation*}
        \Delta^{\text{id}} = \bigcup_{i=1}^n \left[\frac{i-1}{n}, \frac{i}{n} \right)
    \end{equation*} 
    be a partition of the interval $[0, 1]$ with mesh-size $1/n$. The transformation 
    \begin{equation*}
        \phi_\text{time}: \mathcal{T}_\Delta(I, \mathbb{R}^d) \to  \mathcal{T}_{\Delta^{\text{id}}}([0, 1], E)
    \end{equation*}
    given by 
    \begin{equation}\label{eqn:timenormalisation}
        \phi_\text{time}(\hat{\mathsf{x}}) = \left\{(\tfrac{i}{n}, x_i) \right\}_{i=1}^n.
    \end{equation}
    is called the \emph{time-normalisation transform}.
\end{definition}
\begin{remark}
    Another prominent example in the literature of a time-component transformation include the \emph{time difference transform}, where one maps $(t_1, \dots, t_n)$ to $(0, t_2-t_1, \dots, t_n-t_{n-1})$. This does not absolve the issue mentioned in the prelude, however.
\end{remark}

Often, including the lagged process can be useful as a way of encoding extra information about $\hat{\mathsf{x}}$, at the cost of increasing the dimensionality of the objects one is working with. This can be achieved by applying the \emph{Hoff lead-lag transformation}. Decomposing a path into its lead and lag components is a natural way to embed a univariate sequence of prices corresponding to a financial asset. For more details, we refer the reader to \cite{Flint_2016}.

\begin{definition}[Hoff lead-lag transformation, \cite{Flint_2016}, Definition 2.1]\label{def:leadlagappendix}
    Let $\Delta = \{t_i\}_{i=0}^n$ be a partition of $[0, T]$, and $\hat{\mathsf{x}} \in \mathcal{T}_\Delta([0, T], \mathbb{R}^d)$ be a time-augmented path over $\mathbb{R}^d$. Then, the \emph{Hoff lead-lag transformation}
    \begin{equation*}
        \phi_{ll}: \mathcal{T}_\Delta([0, T], \mathbb{R}^d) \to \mathcal{S}(\mathbb{R}^{2d})
    \end{equation*}
    is given by
    \begin{equation*}
        \phi_{ll}(\hat{\mathsf{x}}) := \begin{cases}
            (x_{t_k}, x_{t_{k+1}}), &t \in \left[\tfrac{2k}{2nT}, \tfrac{2k + 1}{2nT} \right), \\
            (x_{t_k}, x_{t_{k+1}} + 2(t-(2k+1))(x_{t_{k+2}} - x_{t_{k+1}})), &t \in \left[\tfrac{2k+1}{2nT}, \tfrac{2k + 3/2}{2nT} \right),\\
            (x_{t_{k}} + 2(t-(2k+\tfrac{3}{2}))(x_{t_{k+1}} - x_{t_{k}})), x_{t_{k+2}}), &t \in \left[\tfrac{2k+3/2}{2nT}, \tfrac{2k + 2}{2nT} \right).
        \end{cases}
    \end{equation*}
\end{definition}
\begin{remark}
    If $\mathsf{x} \in \mathcal{S}(\mathbb{R}^d)$ is instead a discrete path of length $n \in \mathbb{N}$, the lead-lag transformation $\varphi_{ll}(\mathsf{x}) \in \mathcal{S}(\mathbb{R}^{2d})$ is given by (see \cite{cochrane2020anomaly}, Definition 2.3.2)
    \begin{equation*}
        \phi_{ll}(\mathsf{x})_{2i} = (x_i, x_i), \qquad\qquad \phi_{ll}(\mathsf{x})_{2i+1} = (x_i, x_{i+1})
    \end{equation*}
    for $i=1,\dots,n$.
\end{remark}

Another useful transformation is one which represents a stream of data via its absolute increments in state space. The idea behind this transformation is to emphasise the realised volatility of $\hat{\mathsf{x}}$, for reasons which we will discuss in Section \ref{sec:experiments}.

\begin{definition}[Increment transform]
    For $\hat{\mathsf{x}} \in \mathcal{T}_\Delta (I, \mathbb{R}^d)$, the \emph{increment transform}
    \begin{equation*}
        \phi_{\mathrm{inc}} : \mathcal{T}_\Delta (I, \mathbb{R}^d) \to \mathcal{T}_\Delta (I, \mathbb{R}^d)
    \end{equation*}
    is given by 
    \begin{equation*}
        \phi_{\mathrm{inc}}(\hat{\mathsf{x}})_0 = x_0, \qquad\qquad \phi_{\mathrm{inc}}(\hat{\mathsf{x}})_{t_k} = x_0 + \sum_{i=1}^k |x_i - x_{i-1}|. 
    \end{equation*}
\end{definition}

The final transform we will introduce is another kind of normalising transform, which aims to reduce the effect of the lower-order terms of the signature. More information will be provided in Section \ref{sec:experiments}.

\begin{definition}[Scaling transform]
    Let $\hat{\mathsf{x}} \in \mathcal{T}_\Delta(I, \mathbb{R}^d)$ be a time-augmented stream of data comprised of $n \in \mathbb{N}$ observations. For a given $\gamma \in \mathbb{R}$, the transformation written
    
    \begin{equation}\label{eqn:scalingtransform}
        \phi_{\mathrm{scale}}(\hat{\mathsf{x}}) = \left\{\left(t_i, \tfrac{x_i}{\gamma}\right) \right\}_{i=0}^n
    \end{equation}
    is called the \emph{scaling transform}.
\end{definition}

\section{Adapted processes and higher rank signatures}\label{appendix:adaptedprocesses}

In this section, we introduce how one can encode the information present in the filtration of a stochastic process via the signature mapping. This will be become relevant in the following section when we show that, from this higher rank mapping, one can compute a distance on path space that takes into account the filtration of a stochastic process and its law.

In what follows, let $(\Omega, \fil, \mathbb{F} = (\fil_t)_{t\in I}, \mathbb{P})$ be a filtered probability space, where $I = [0, T]$ is a time interval partitioned by $\Delta = \{0 = t_0 < t_1 < \dots < t_n = T\}$. Let $X = (X_t)_{t\in I}$ be a stochastic process on $(\Omega, \fil, \mathbb{F} = (\fil_t)_{t\in I})$ taking values in $\mathbb{R}^d$ with $\mathbb{F}$ being the right-continuous filtration generated by $X$. We call
\begin{equation*}
    \boldsymbol{X} = \left(\Omega, \fil, \mathbb{F} = (\fil_t)_{t\in T}, \mathbb{P}, X\right)
\end{equation*}
a filtered (adapted) process. Write $\mathcal{FP}_I$ for the set of all filtered processes indexed by $I$.

Theorem \ref{thm:expectedsignature} tells us that $\mathbb{E}_\mathbb{P}[S(X)]$ completely characterises the distribution of $X$, which is to say that if for another filtered process $\boldsymbol{Y}$ we have that $\mathbb{E}_\mathbb{P}[S(X)] = \mathbb{E}_\mathbb{Q}[S(Y)]$, it follows that $X$ and $Y$ have the same law, $\mathcal{L}(X) = \mathcal{L}(Y)$. However, this is not necessarily true for conditional distributions with respect to the filtration generated by the process $X$. In fact one needs to consider a different process altogether in order to conduct inference on the space of conditional distributions. The key object one needs to consider is the following which was first introduced by Aldous \cite{aldous1981weak}. 

\begin{definition}[Prediction process, \cite{aldous1981weak}, Ch.4, 13]
    Suppose $\boldsymbol{X} \in \mathcal{FP}_I$. Then, the $\mathcal{P}\left((\mathbb{R}^d)^I\right)$-valued discrete process $\hat{X} = (\hat{X}_t)_{t\in I}$ defined by 
    \begin{equation}\label{eqn:predictionprocessappendix}
        \hat{X}_t = \mathbb{P}\left[X \in \cdot \ |\mathcal{F}_t\right]
    \end{equation}
    is called the \emph{prediction process} of $X$.
\end{definition}

If one is considering problems which are dependent on the filtration generated by $X$ over $I$, then the prediction process becomes the object of relevance to study. A canonical example is the problem of pricing an American option, which is an optimal stopping problem over all $\mathbb{F}$-stopping times $\mathcal{T}_{\boldsymbol{X}}$ for a given payoff function $\gamma \in C_b(\mathbb{R} \times \mathbb{R}^d;\mathbb{R}$). It is defined by
\begin{equation*}
    v(\boldsymbol{X}) := \sup_{\tau \in \mathcal{T}_{\boldsymbol{X}}}\ex\left[\gamma(\tau, X_\tau)\right],
\end{equation*}
and under the usual (weak) topology, the value function $v$ is not continuous. However, continuity is achieved under the following extended notion of weak convergence: 
\begin{definition}[Convergence of prediction processes]\label{def:predictionprocessconvergence}
    In what follows, let $\boldsymbol{X}, \boldsymbol{Y}\in \mathcal{FP}_I$.
    \begin{enumerate}
        \item Two adapted processes $\boldsymbol{X}, \boldsymbol{Y}$ are called \emph{synonymous} if $\mathcal{L}(\hat{X}) = \mathcal{L}(\hat{Y})$.
        \item Let $(\hat{X}^n)_{n\ge 1} \subset \mathcal{FP}_I$ be a sequence of filtered processes. We say that $\hat{X}^n$ converges to $\hat{X}$ in the \emph{extended weak sense} if 
        \begin{equation*}
            \mathcal{L}(\hat{X}^n) \to \mathcal{L}(\hat{X})
        \end{equation*}
        as $n\to\infty$. 
    \end{enumerate}
\end{definition}
We reserve the more rigorous discussions of optimal stopping problems with the extended weak topology to \cite{bonnier2020adapted}. The process of iterating conditioning can be repeated and was generalized in \cite{hoover1984adapted} by the following.
\begin{definition}[Rank $r$ prediction process, \cite{hoover1984adapted}]
    For $\boldsymbol{X} \in \mathcal{FP}_I$, the rank $r$ prediction process $\hat{X}^r$ is defined recursively via 
    \begin{equation*}
        \hat{X}^r_t = \mathbb{P}\left[\hat{X}^{r-1} \in \cdot \ |\fil_t\right] \qquad \text{for }t \in I,
    \end{equation*}
    and we set $\hat{X}^0 := X$.
\end{definition}

Similarly to Definition \ref{def:predictionprocessconvergence}, we say that a sequence of filtered processes $\left(\boldsymbol{X}^n\right)_{n\ge 1} \subset \mathcal{FP}_I$ converges in the rank $r$ adapted topology to $\boldsymbol{X} \in \mathcal{FP}_I$ if $\mathcal{L}(\hat{X}^{n, r}) \to \mathcal{L}(\hat{X}^r)$ in the weak sense. If $\boldsymbol{X}, \boldsymbol{Y} \in \mathcal{FP}_I$ have the same rank $r$ adapted distribution we write $\boldsymbol{X} \sim_r \boldsymbol{Y}$. Again $\boldsymbol{X} \sim_0 \boldsymbol{Y}$ is equivalent to the standard weak convergence regarding laws of stochastic processes. 

\begin{definition}[Adapted distribution, \cite{hoover1984adapted}, Definition 2.6]
    Suppose $\boldsymbol{X}, \boldsymbol{Y} \in \mathcal{FP}_I$ are filtered processes. We say that $\boldsymbol{X}, \boldsymbol{Y}$ have the same \emph{adapted distribution} (written $\boldsymbol{X}\sim \boldsymbol{Y}$) if $\boldsymbol{X} \sim_r \boldsymbol{Y}$ for all $r \ge 0$.
\end{definition}

Again the relation that $\boldsymbol{X}\sim_0 \boldsymbol{Y}$ is equivalent to the stochastic processes $X, Y$ having the same finite dimensional distribution. It turns out that if $X$ and $Y$ are Markov processes, then this relation is enough to fully characterise the equality of all adapted distributions from the two processes. In our paper, we use the following theorem to motivate our approach to anomaly detection on path space. 

\begin{theorem}[\cite{hoover1984adapted}, Theorem 2.8]\label{thm:rankmarkov}
    Suppose that $\boldsymbol{X}, \boldsymbol{Y} \in \mathcal{FP}_I$ are Markov. Then, $\boldsymbol{X} \sim_0 \boldsymbol{Y}$ if and only if $\boldsymbol{X} \sim \boldsymbol{Y}$.
\end{theorem}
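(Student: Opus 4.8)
The plan is to follow the line of argument of Hoover and Keisler. One direction is immediate: the relation $\boldsymbol{X}\sim\boldsymbol{Y}$ means $\boldsymbol{X}\sim_r\boldsymbol{Y}$ for every $r\ge 0$, so in particular it implies $\boldsymbol{X}\sim_0\boldsymbol{Y}$, and no Markov assumption is needed here. The substance is the converse: assuming $\boldsymbol{X},\boldsymbol{Y}$ Markov and $\boldsymbol{X}\sim_0\boldsymbol{Y}$ — equivalently $\mathcal{L}(X)=\mathcal{L}(Y)$, since on a finite partition $\Delta$ equality of finite-dimensional distributions is equality of laws — one must show $\mathcal{L}(\hat{X}^r)=\mathcal{L}(\hat{Y}^r)$ for all $r\ge 0$.

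The key is a lemma, proved by induction on $r$: \emph{if $\boldsymbol{X}$ is Markov then $\hat{X}^r=\Phi^r(X)$ almost surely for a Borel map $\Phi^r$ that depends on $\boldsymbol{X}$ only through the finite-dimensional distributions of $X$, and that is adapted in the sense that $\Phi^r(w)_t$ depends only on $(w_u)_{u\le t}$.} The base case $r=0$ is $\Phi^0=\mathrm{id}$. For the step, write $\hat{X}^r_t=\mathbb{P}[\hat{X}^{r-1}\in\cdot\mid\mathcal{F}_t]$ and insert $\hat{X}^{r-1}=\Phi^{r-1}(X)$. Split the coordinates of $\Phi^{r-1}(X)$ into those indexed by $s\le t$, which are $\mathcal{F}_t=\sigma(X_u:u\le t)$-measurable and hence contribute a Dirac mass at the observed values, and those indexed by $s>t$, which are a fixed Borel function of the (known) history $(X_u)_{u\le t}$ together with the (unknown) future $(X_u)_{u>t}$. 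The Markov property gives that the conditional law of $(X_u)_{u>t}$ given $\mathcal{F}_t$ equals $\kappa_t(X_t,\cdot)$, a kernel determined by the transition structure of $X$; pushing $\kappa_t(X_t,\cdot)$ through that Borel function produces the conditional law of the future coordinates. Combining the two factors, $\hat{X}^r_t$ is a Borel function of $(X_u)_{u\le t}$ assembled from $\Phi^{r-1}$ and the transition kernels of $X$ — exactly the inductive claim. (Existence of regular conditional distributions and Borel selections is legitimate because every space in which $\hat{X}^r$ lives is standard Borel: $\mathcal{P}$ of a Polish space is Polish and $\mathbb{R}^d$ is Polish.)

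Granting the lemma the theorem is immediate: $\mathcal{L}(X)=\mathcal{L}(Y)$ forces $X$ and $Y$ to have the same finite-dimensional distributions, hence (up to null sets) the same transition kernels, so $\Phi^r_{\boldsymbol{X}}=\Phi^r_{\boldsymbol{Y}}=:\Phi^r$ for every $r$. Then $\mathcal{L}(\hat{X}^r)=(\Phi^r)_\#\mathcal{L}(X)=(\Phi^r)_\#\mathcal{L}(Y)=\mathcal{L}(\hat{Y}^r)$, i.e. $\boldsymbol{X}\sim_r\boldsymbol{Y}$, for all $r\ge 0$; that is $\boldsymbol{X}\sim\boldsymbol{Y}$. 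Morally, what is used is that for a Markov process the whole filtration $\mathcal{F}_t$ is, for the purposes of iterated prediction, interchangeable with $\sigma(X_t)$ once the deterministic record of the past is carried along.

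The step I expect to be delicate is the inductive one, specifically the clean separation of $\hat{X}^{r-1}$ into an $\mathcal{F}_t$-measurable ``past'' part and a ``future'' part to which the Markov collapse $\mathbb{P}[\cdot\mid\mathcal{F}_t]=\mathbb{P}[\cdot\mid X_t]$ applies, together with the verification that the resulting conditional distribution can be realized as a Borel function of the path and that the chain of spaces $\mathcal{P}(\mathcal{P}(\cdots))$ stays standard Borel at every rank. Getting the inductive hypothesis phrased strongly enough — ``$\hat{X}^r$ is a deterministic, f.d.d.-computable functional of $X$'' rather than merely ``$\hat{X}^r_t$ is a function of $X_t$'', which is false because of the retained past — is what makes the induction close.
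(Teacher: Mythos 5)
The paper does not prove this statement: it is imported directly from Hoover and Keisler (\cite{hoover1984adapted}, Theorem 2.8), so there is no internal proof to compare against. Your sketch is a correct reconstruction along the lines of the original argument: the direction $\boldsymbol{X}\sim\boldsymbol{Y}\Rightarrow\boldsymbol{X}\sim_0\boldsymbol{Y}$ is trivial, and the converse follows from your inductive lemma that for a Markov process $\hat{X}^r=\Phi^r(X)$ a.s.\ for an adapted Borel functional $\Phi^r$ built from $\Phi^{r-1}$ and the transition kernels, hence determined (up to $\mathcal{L}(X)$-null sets) by the finite-dimensional distributions, so that $\mathcal{L}(\hat{X}^r)=(\Phi^r)_\#\mathcal{L}(X)=(\Phi^r)_\#\mathcal{L}(Y)=\mathcal{L}(\hat{Y}^r)$ for every $r$. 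The points you flag as delicate are the genuine technical content, but in the setting in which the paper uses the result (a finite time partition, as in the higher-rank signature construction) they are routine: the past/future split is the standard freezing lemma once one has the path-level Markov property $\mathbb{P}[(X_u)_{u>t}\in\cdot\,|\,\mathcal{F}_t]=\kappa_t(X_t,\cdot)$, which follows from the one-step property by the tower rule in discrete time; the spaces $\mathcal{P}(\mathcal{P}(\cdots))$ stay standard Borel; and the a.e.-uniqueness of transition kernels is harmless since a single version constructed from the common law serves both processes. Only in genuinely continuous time would you need to add measurability of $t\mapsto\hat{X}^r_t$ and the path-level Markov property explicitly, and nothing in your argument breaks there.
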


From Theorem \ref{thm:expectedsignaturelaws}, we know that for two stochastic processes $X, Y$ if $\mathcal{L}(X) = \mathcal{L}(Y)$ we have that $\mathbb{E}S(X) = \mathbb{E}S(Y)$. It follows that if $X, Y$ are Markov, then they have the same adapted distribution. If they are not, then studying their finite-dimensional distributions is not enough to conclude that other, more sophisticated properties hold for the filtered processes $\boldsymbol{X}, \boldsymbol{Y}\in \mathcal{FP}_I$. A prime example is the following. 
\begin{theorem}[\cite{aldous1981weak}]\label{thm:rankmartingale}
    Suppose that $\boldsymbol{X}, \boldsymbol{Y} \in \mathcal{FP}_I$ are filtered processes. Then, if $X$ is a $\mathcal{F}_t$-martingale and $\boldsymbol{X} \sim_1 \boldsymbol{Y}$, then $Y$ is a $\mathcal{G}_t$-martingale.  
\end{theorem}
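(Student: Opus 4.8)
The plan is to unwind what $\boldsymbol{X} \sim_1 \boldsymbol{Y}$ actually says about the rank-$1$ prediction processes and then translate the martingale property of $X$ into a statement purely about the law $\mathcal{L}(\hat X^1)$, so that it transfers verbatim to $\mathcal{L}(\hat Y^1)$. Recall that $\hat X^1_t = \mathbb{P}[X \in \cdot \mid \mathcal{F}_t]$ is a measure-valued process, and $\boldsymbol{X}\sim_1\boldsymbol{Y}$ means $\mathcal{L}(\hat X^1) = \mathcal{L}(\hat Y^1)$ as laws on path space over the measure-valued state space $\mathcal{P}((\mathbb{R}^d)^I)$. First I would observe that the martingale property of $X$ with respect to $(\mathcal{F}_t)$ can be written as: for every $s \le t$ and every bounded continuous $g$, $\mathbb{E}[(X_t - X_s)\,g(X_u, u\le s)] = 0$, equivalently $\int x_t\, \hat X^1_s(dx) = X_s$ almost surely, i.e. the barycentre (first moment) of the conditional law $\hat X^1_s$ evaluated at coordinate $t$ equals the coordinate-$s$ evaluation of the same (or, via the tower property over $\hat X^0$, a relation that lives entirely at the level of the prediction process). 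The key reduction is that ``$X$ is an $\mathcal{F}_t$-martingale'' is a property of the {\it joint} law of $(\hat X^0, \hat X^1)$; but $\hat X^0 = X$ is recoverable from $\hat X^1$ (the rank-$0$ process is a deterministic functional of the rank-$1$ process, since $X_t$ is the barycentre of $\hat X^1_t$ at the appropriate coordinate, or more cleanly $\hat X^1_T = \delta_X$), so in fact it is a property of $\mathcal{L}(\hat X^1)$ alone.

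Concretely, the steps in order are: (1) State the martingale property in the ``prediction-process-measurable'' form $\Phi_{s,t}(\hat X^1) = 0$ a.s., where for fixed $s\le t$ the functional $\Phi_{s,t}$ on the path of measure-valued states is defined by $\Phi_{s,t}(\omega) = \int \big(x_t - x_s\big)\,\omega_s(dx)$ — more precisely one tests against $\mathcal{F}_s$-measurable bounded continuous functions, each of which is itself a continuous functional of $\hat X^1_{|[0,s]}$, so the full statement $X$ is a martingale is equivalent to a {\it countable} family of identities $\mathbb{E}_{\mathcal{L}(\hat X^1)}[\Psi] = 0$ indexed by a separating family of test functionals $\Psi$. (2) Invoke $\mathcal{L}(\hat X^1) = \mathcal{L}(\hat Y^1)$ to conclude $\mathbb{E}_{\mathcal{L}(\hat Y^1)}[\Psi] = 0$ for the same family. (3) Read this back as: $Y$ is a martingale with respect to $\mathcal{G}_t := \mathcal{F}^Y_t$, the filtration generated by $Y$ (note the theorem writes $\mathcal{G}_t$ precisely because $Y$ carries its own filtration), using that $\hat Y^1$ determines $Y$ and the increments $Y_t - Y_s$ and the conditioning $\sigma$-algebra $\mathcal{G}_s$ in the same functional way.

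The main obstacle — and the part I would be most careful about — is step (1): making rigorous the claim that the martingale property is measurable with respect to (a separating countable family of continuous functionals of) the rank-$1$ prediction process, in particular handling the conditioning. The subtlety is that $\mathcal{F}_s$ is generated by $\{X_u : u\le s\}$, and one must check that $X_u$ for $u \le s$ is recovered from $\hat X^1_{|[0,s]}$ in a way that is continuous (or at least Borel-measurable) on the relevant Polish space $\mathcal{P}((\mathbb{R}^d)^I)$ equipped with the topology under which extended weak convergence is defined; this is exactly the kind of measurability bookkeeping that underlies Aldous's original result, and I would either cite \cite{aldous1981weak} (Ch.~4, around \S13) for the requisite lemma that martingality is an ``adapted-distribution invariant'' or, if a self-contained argument is wanted, approximate $\mathbf{1}_A$ for $A\in\mathcal{F}_s$ by bounded continuous cylinder functions $g(X_{u_1},\dots,X_{u_m})$ with $u_i \le s$ and pass to the limit, noting each such $g$ composed with the evaluation map is a bona fide continuous functional of $\hat X^1$. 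Everything else — the tower property manipulations and the symmetry between $\boldsymbol{X}$ and $\boldsymbol{Y}$ — is routine once this measurability point is pinned down.
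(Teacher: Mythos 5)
The paper does not actually prove this statement: it is quoted verbatim from Aldous, so there is no in-house argument to compare against. Your sketch supplies the standard proof and is sound in outline. The cleanest form of your step (1) is the barycentre identity you already write down: since $X_s$ is $\mathcal{F}_s$-measurable, $\mathbb{E}[X_t - X_s \mid \mathcal{F}_s] = \int (x_t - x_s)\, \hat{X}^1_s(dx)$, so martingality of $X$ is exactly the a.s.\ vanishing of the Borel functional $\mu \mapsto \int (x_t - x_s)\,\mu(dx)$ evaluated along the marginals of the prediction process. This makes the measurability worry you flag largely moot: no test functions and no recovery of $X_u$, $u \le s$, from $\hat{X}^1_{|[0,s]}$ are needed, because the conditioning is already built into the definition of $\hat{X}^1_s$; equality of the laws of $\hat{X}^1$ and $\hat{Y}^1$ transfers the a.s.\ identity marginal by marginal, and reading it back gives $\mathbb{E}[Y_t - Y_s \mid \mathcal{G}_s] = 0$ directly. (Borel measurability of the barycentre functional on $\mathcal{P}\bigl((\mathbb{R}^d)^I\bigr)$ is obtained by truncating $x_t - x_s$ and passing to the limit.)

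Two refinements you should record. First, your alternative route via cylinder test functions $g(X_{u_1},\dots,X_{u_m})$, $u_i \le s$, only sees the law of $X$, hence only yields the martingale property with respect to the \emph{natural} filtration of $Y$; it is adequate here because the paper takes $\mathbb{F}$ to be the filtration generated by $X$, but the statewise functional $\Phi_{s,t}$ is what proves the statement for arbitrary filtrations in $\mathcal{FP}_I$, which is the actual content of Aldous's result (rank-$0$ equality would already suffice in the purely natural-filtration case). Second, complete the bookkeeping: $Y_t$ is integrable because $\sim_1$ implies $\sim_0$ and hence equal marginals with the integrable $X_t$; $Y$ is $\mathcal{G}$-adapted by definition of a filtered process, so $\mathbb{E}[Y_s \mid \mathcal{G}_s] = Y_s$; and in the paper's discrete-time setting the quantifier ``for all $s \le t$'' ranges over finitely many partition points, so no union-of-null-sets or right-continuity argument is required.
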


Although the expected signature map from Definition \ref{def:expectedsignature} gives us a way of comparing rank $0$ (finite-dimensional) distributions of stochastic processes, it does not allow for comparisons between the associated adapted processes. This means that a new object is required if we are to compare (in particular) non-Markovian processes: that is, we wish to find a map $S^{2}$ analogous to the classical signature mapping that gives us
\begin{equation}\label{eqn:higherrankexpectedsignature}
    \boldsymbol{X} \sim_1 \boldsymbol{Y} \text{ if and only if }\mathbb{E}S^{2}(X) = \mathbb{E}S^{2}(Y),
\end{equation}
which would then imply that $\mathcal{L}(\hat{X}^1) = \mathcal{L}(\hat{Y}^1)$. This motivates the definition of the following.
\begin{definition}[Rank $2$ signature, \cite{bonnier2020adapted}, Definition 10]\label{def:rank2signature}
    Recall the signature map $S: \mathbb{R}^d \to T((E))$ from Definition \ref{def:signature}. Suppose that $(\mu_t)_{t\in I}$ is a measure-valued discrete time path evolving in $\mathcal{P}((\mathcal{K})^I)$, so each $\mu_t\in \mathcal{P}((\mathcal{K})^I)$ for $t \in I$. Set
    \begin{equation*}
        \overline{\mu}_t := \int S(x)\mu_t(dx) \in T((E))
    \end{equation*}
    to be the expected signature $\mathbb{E}S(\mu_t)$ of the distribution on path space $\mu_t$. Then, the \emph{rank $2$ signature} is the mapping $S^2: \mathcal{P}((\mathcal{K})^I)^I \to T^2((E))$ is given by 
    \begin{equation}\label{eqn:rank2signature}
        S^2(\mu):= S(t \mapsto \overline{\mu}_t),
    \end{equation}
    where $T^2((E)) := T((T((E))))$.
\end{definition} 	
\begin{remark}
    Although the definition of the signature map was originally for paths $X \in C_p(I, E)$, any discrete path evolving in $\mathbb{R}^d$ can be embedded into a path of finite $p$-variation, see Definition \ref{def:pathembedding}.
\end{remark}
In this way, the rank $2$ signature map is the signature of the evolving expected signature of a given path. Regarding (\ref{eqn:higherrankexpectedsignature}), we have the following.
\begin{theorem}[\cite{salvi2021higher}, Theorem 2]
    Let $\boldsymbol{X}, \boldsymbol{Y}\in \mathcal{FP}_I$ with $I=\{0=t_0 < t_1 < \dots < t_N = T\}$. Then, 
    \begin{equation*}
        \boldsymbol{X} \sim_1 \boldsymbol{Y} \iff \mathcal{L}(\hat{X}^1) = \mathcal{L}(\hat{Y}^1) \iff \mathbb{E}S^2(X) = \mathbb{E}S^2(Y).
    \end{equation*}
\end{theorem}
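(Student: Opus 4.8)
The first equivalence, $\boldsymbol{X}\sim_1\boldsymbol{Y}\iff\mathcal{L}(\hat{X}^1)=\mathcal{L}(\hat{Y}^1)$, is nothing more than the definition of the rank-$1$ adapted distribution (cf. Definition~\ref{def:predictionprocessconvergence} and the adapted-distribution definition following it): $\boldsymbol{X}\sim_1\boldsymbol{Y}$ says precisely that the $\mathcal{P}((\mathbb{R}^d)^I)$-valued prediction processes $\hat{X}^1$ and $\hat{Y}^1$ have the same law. So the work is in the second equivalence, and within it in the direction $\mathbb{E}S^2(X)=\mathbb{E}S^2(Y)\implies\mathcal{L}(\hat{X}^1)=\mathcal{L}(\hat{Y}^1)$; the converse is immediate, since equal laws of $\hat{X}^1,\hat{Y}^1$ force equal expectations of every bounded measurable functional, and $S^2$ is bounded on the compact support of the processes (factorial decay, Proposition~\ref{prop:factorialdecayappendix}). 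Throughout I work under the standing assumption, used everywhere else in the paper, that $X$ and $Y$ are supported on a compact set $\mathcal{K}$ of tree-like reduced paths, so that all conditional laws $\hat{X}^1_t=\mathbb{P}[X\in\cdot\mid\mathcal{F}_t]$ lie in $\mathcal{P}(\mathcal{K})$.

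The plan is to unfold the two layers in the definition of $S^2$ (Definition~\ref{def:rank2signature}) and apply the injectivity of the expected signature (Theorem~\ref{thm:expectedsignature}) once to each layer. First observe that, for the regular conditional law $\hat{X}^1_t$, one has $\overline{\hat{X}^1_t}:=\int S(x)\,\hat{X}^1_t(dx)=\mathbb{E}[S(X)\mid\mathcal{F}_t]$, so the $T((E))$-valued path $\bar{X}:=(\overline{\hat{X}^1_t})_{t\in I}$ is simply the Doob martingale of $S(X)$. Since $I$ is finite, $\bar{X}$ is a discrete-time path which we embed by linear interpolation (Definition~\ref{def:pathembedding}, applied with $E$ replaced by $T((E))$), and by Definition~\ref{def:rank2signature} one has $S^2(\hat{X}^1)=S(\bar{X})$; hence $\mathbb{E}S^2(X)=\mathbb{E}\big[S(\bar{X})\big]$ is the expected (rank-$1$) signature of the random path $\bar{X}$.

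Step 1 (recover $\mathcal{L}(\hat{X}^1)$ from $\mathcal{L}(\bar{X})$): by Theorem~\ref{thm:expectedsignature} the map $\mu\mapsto\mathbb{E}S(\mu)$ is injective on $\mathcal{P}(\mathcal{K})$, and it is continuous for the weak topology on $\mathcal{P}(\mathcal{K})$ and the product topology on $T((E))$. Applying it coordinatewise over the finitely many $t\in I$ gives a continuous injection $\Psi$ of the Polish space $\mathcal{P}(\mathcal{K})^I$ onto its image in $T((E))^I$; by the Lusin--Souslin theorem $\Psi$ is a Borel isomorphism onto its (Borel) image, so $\Psi_{\#}$ is injective on probability measures and $\mathcal{L}(\hat{X}^1)=\mathcal{L}(\hat{Y}^1)\iff\mathcal{L}(\bar{X})=\mathcal{L}(\bar{Y})$. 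Step 2 (recover $\mathcal{L}(\bar{X})$ from $\mathbb{E}S(\bar{X})$): apply Theorem~\ref{thm:expectedsignature} a second time, now to paths valued in $T((E))$, to get $\mathbb{E}[S(\bar{X})]=\mathbb{E}[S(\bar{Y})]\iff\mathcal{L}(\bar{X})=\mathcal{L}(\bar{Y})$. Chaining Step 2, Step 1 and the unfolding above gives $\mathbb{E}S^2(X)=\mathbb{E}S^2(Y)\implies\mathcal{L}(\hat{X}^1)=\mathcal{L}(\hat{Y}^1)$, which together with the first paragraph closes both equivalences.

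The main obstacle is Step 2: Theorem~\ref{thm:expectedsignature} is stated for $\mathbb{R}^d$-valued (or finite-dimensional Banach-space-valued) paths, while $\bar{X}$ takes values in the infinite-dimensional $T((E))$. One must verify that the expected-signature characterisation, together with its supporting ingredients --- existence of the iterated integrals of $\bar{X}$, compactness of the range of $\bar{X}$ (the closed convex hull of $S(\mathcal{K})$ in the Fr\'echet space $T((E))$), and tree-like reducedness of the interpolant after the usual time-augmentation --- carries over to this setting. Two routes seem viable: prove the statement at each truncation level, i.e. for the finite-dimensional $T^n(E)$-valued path $\pi_n\circ\bar{X}$, and pass to the limit $n\to\infty$; or invoke directly the Banach/Fr\'echet-space versions of the signature uniqueness results. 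A secondary, routine point is measurability bookkeeping --- that $\hat{X}^1$ and $\bar{X}$ are Borel maps into the relevant Polish spaces, so the pushforward manipulations in Step 1 are legitimate --- for which the finiteness of $I$ and the compact support of $X$ do most of the work.
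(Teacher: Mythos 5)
First, note that the paper does not prove this statement at all: it is imported verbatim from \cite{salvi2021higher} (Theorem 2), so there is no internal argument to compare yours against; the relevant benchmark is the proof in \cite{salvi2021higher} and its companion \cite{bonnier2020adapted}. Your reduction is structurally the right one and matches the idea behind that proof: the first equivalence is indeed just the definition of $\sim_1$, the implication $\mathcal{L}(\hat{X}^1)=\mathcal{L}(\hat{Y}^1)\Rightarrow \mathbb{E}S^2(X)=\mathbb{E}S^2(Y)$ is a pushforward triviality, the identity $S^2(\hat X^1)=S(\bar X)$ with $\bar X_t=\mathbb{E}[S(X)\mid\mathcal{F}_t]$ is a correct unfolding of Definition~\ref{def:rank2signature}, and your Step 1 (coordinatewise injectivity of the rank-1 KME on $\mathcal{P}(\mathcal{K})^I$ plus Lusin--Souslin to push injectivity to laws) is sound for finite $I$.

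The genuine gap is your Step 2, and it is not a peripheral technicality: it is essentially the whole content of the cited theorem. You need that for random paths $\bar X,\bar Y$ evolving in $T((E))$ (equivalently, measure-valued paths embedded via the rank-1 KME), $\mathbb{E}[S(\bar X)]=\mathbb{E}[S(\bar Y)]$ forces $\mathcal{L}(\bar X)=\mathcal{L}(\bar Y)$. Theorem~\ref{thm:expectedsignature} does not cover this: it is stated for paths in a finite-dimensional $E$, while $T((E))$ with the product topology is an infinite-dimensional Fr\'echet space, so the iterated integrals of $\bar X$, the choice of tensor norms on $T((E))^{\otimes k}$, uniqueness of the signature up to tree-like equivalence in this setting, and characteristicness of the expected signature on compacta all have to be re-established rather than quoted. (Relatedly, your integrability argument via ``factorial decay'' of Proposition~\ref{prop:factorialdecayappendix} does not literally apply, since that bound is phrased in a norm that $T((E))$ does not carry; one has to argue levelwise/graded.) You flag this as ``the main obstacle'' and propose two routes (truncate to $T^N(E)$ and pass to the limit, or invoke Banach/Fr\'echet versions of signature uniqueness), but you do not carry either out, and the passage to the limit is exactly where care is needed: injectivity at every truncation level does not by itself transfer to the projective limit without a compactness/characteristicness argument for the limiting family. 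This is precisely the work done in \cite{bonnier2020adapted} and \cite{salvi2021higher} (graded arguments and characteristicness of the higher-rank embeddings on compact sets of measure-valued paths), so as written your proposal reduces the theorem to the statement it was meant to prove rather than proving it.
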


\begin{remark}
    For a more general discussion of path signatures for any rank $r \ge 1$ we refer the reader to \cite{bonnier2020adapted}, Section 3. 
\end{remark}

\section{Inference in reproducing kernel Hilbert spaces}\label{appendix:rkhsmaterial}

In this section we give more detailed definitions and background information on reproducing kernel Hilbert spaces, the maximum mean discrepancy, and two-sample testing. 

Recall again the definition of the MMD as an integral probability metric: for two measures $\mathbb{P}, \mathbb{Q} \in \mathcal{P}(\mathcal{X})$ and a given space of bounded, measurable functions $\mathcal{F} \subset \{f : \mathcal{X} \to \mathbb{R}\}$, we write
\begin{equation}\label{eqn:appendixmmd}
    \mathcal{D}^\mathcal{F}(\mathbb{P}, \mathbb{Q}) = \sup_{f \in \mathcal{F}} \left( \mathbb{E}_{x \sim \mathbb{P}}[f(x)] - \mathbb{E}_{y \sim \mathbb{Q}}[f(y)]\right).
\end{equation}

The question remains: how does one appropriately choose $\mathcal{F}$? At a minimum one requires that the MMD associated to  $\mathcal{F}$ is a metric on $\mathcal{P}(\mathcal{X})$. However, one would also like $\mathcal{F}$ to be tractable, in the sense that eq. (\ref{eqn:appendixmmd}) is not too difficult to calculate. 

A fine choice that balances these considerations comes from the field we will generically call \emph{kernel methods}, where kernels in this context are continuous, positive semi-definite functions $\kappa: \mathcal{X} \times \mathcal{X} \to \mathbb{R}$. Oftentimes kernels are defined through a \emph{feature map} $\varphi: \mathcal{X} \to \mathcal{H}$ which maps elements $x \in \mathcal{X}$ to an (often higher-dimensional) Hilbert space $\mathcal{H}$. Given $\varphi$, the corresponding kernel $\kappa$ is given by
\begin{equation}\label{eqn:kernelthroughfeaturemap}
    \kappa(x, y) = \langle \varphi(x), \varphi(y)\rangle_\mathcal{H}.
\end{equation}
Clearly kernels are not defined uniquely through feature maps. The same is true for pairs $(\mathcal{H}, \kappa)$; that is, every Hilbert space is not uniquely associated to a given kernel $\kappa$. However, certain pairs enjoy the following characteristic which is integral to being able to define a MMD via a given kernel.

\begin{definition}[Reproducing kernel Hilbert space, \cite{aronszajn1950theory}, Section 1.1]\label{def:appendixreproducingkernelhilbertspace}
    Suppose $\mathcal{X}$ is a non-empty set and let $(\mathcal{H}, \langle \cdot, \cdot \rangle_\mathcal{H})$ be a Hilbert space of functions $f: \mathcal{X} \to \mathbb{R}$. We call a positive definite function $\kappa: \mathcal{X} \times \mathcal{X} \to \real$ a \emph{reproducing kernel} of $\mathcal{H}$ if 
    \begin{enumerate}[label=(\roman*)]
        \item For all $x \in \mathcal{X}$, we have that $\kappa(\cdot, x) \in \mathcal{H}$, and
        \item For all $x \in \mathcal{X}$ and $f \in \mathcal{H}$, one has that 
        \begin{equation}
            f(x) = \langle f(\cdot), \kappa(\cdot, x) \rangle_\mathcal{H},
        \end{equation}
        referred to as the \emph{reproducing property}.
    \end{enumerate}
    We call the Hilbert space $\mathcal{H}$ associated to $\kappa$ a \emph{reproducing kernel Hilbert space} (RKHS). 
\end{definition}
We can associate to each RKHS $\mathcal{H}$ the \emph{canonical feature map} given by $\varphi(x) = \kappa(\cdot, x)$. Thus
\begin{equation*}
    \kappa(x,y) = \langle \kappa(\cdot, x), \kappa(\cdot, y) \rangle_\mathcal{H} = \langle \varphi(x), \varphi(y) \rangle_\mathcal{H} \qquad \text{for all }x, y \in \mathcal{X}.
\end{equation*}

Examples of reproducing kernels on $\mathcal{X} = \mathbb{R}^d$ are the Gaussian kernel $$\kappa_G(x,y) = \exp(-\norm{x-y}^2/2\sigma^2),$$ or the Laplace kernel $$\kappa_L(x,y) = \exp(-c\norm{x-y}).$$ Often it is infeasible to directly evaluate equation (\ref{eqn:kernelthroughfeaturemap}) (for instance, $\mathcal{H}$ may be infinite dimensional). If exclusively performing pairwise evaluations with a given kernel $\kappa$, one can leverage the ``kernel trick'' (if it exists), referring to the fact that evaluating $\kappa(x, y)$ directly is often much easier compared to directly evaluating the equivalent expression $\langle \varphi(x), \varphi(y) \rangle_\mathcal{H}$.

We continue to build on why kernel methods are useful tools for analysis with the following concept: namely, how probability measures on $\mathcal{X}$ can be embedded in $\mathcal{H}$. This means we can shift a given inference problem from one on $\mathcal{P}(\mathcal{X})$ to one on the simpler, linear, inner product space $\mathcal{H}$.

\begin{definition}[Mean embedding]\label{def:meanembedding}
    Let $\mathcal{X}$ be a non-empty set and $\kappa$ a kernel on $\mathcal{X} \times \mathcal{X}$. Given a measure $\mathbb{P} \in \mathcal{P}(\mathcal{X})$, we call the mapping 
    \begin{gather*}
        m : \mathcal{P}(\mathcal{X}) \to \mathcal{H} \\
        \mathbb{P} \mapsto \mathbb{E}_{X \sim \mathbb{P}}[\kappa(\cdot, X)]
    \end{gather*}
    the \emph{mean embedding} of $\mu$ in $\mathcal{H}$.
\end{definition}

Definition \ref{def:meanembedding} leads to the natural question: is $m$ injective? If so, a candidate distance between measures on $\mathcal{P}(\mathcal{X})$ becomes apparent: the distance between their mean embeddings in $\mathcal{H}$. Kernels with this property are called \emph{characteristic} (\cite{NIPS2008_d07e70ef}, Section 2). Characteristicness of the Gaussian and Laplacian kernels on $\mathbb{R}^d$ has been shown in \cite{fukumizu2007kernel}. Associated to characteristicness is the concept of \emph{universality}: that for kernels on (compact) $\mathcal{X}$, one has that the associated RKHS $\mathcal{H}$ is dense in $C_b(\mathcal{X})$ with respect to the $L_\infty$ norm. As noted in \cite{chevyrev2018signature}, these two conditions are often equivalent, and we refer the reader to \cite{simon2018kernel} for more information. We can now conclude the following.

\begin{theorem}[\cite{gretton2012kernel}, Theorem 5]\label{theroem:mmdmetric}
    Let $\mathcal{F}$ be the unit ball of a universal RKHS $(\mathcal{H}, \kappa)$ comprised of $\mathbb{R}$-functions on a compact space $\mathcal{X}$. Suppose $\mathbb{P}, \mathbb{Q} \in \mathcal{P}(\mathcal{X})$ are Borel. Then $\mathcal{D}^\mathcal{F}(\mathbb{P}, \mathbb{Q}) = 0$ if and only if $\mathbb{P} = \mathbb{Q}$.
\end{theorem}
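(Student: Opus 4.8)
The plan is to prove both directions of the equivalence; the forward implication is immediate, and the converse proceeds through the mean embedding of Definition \ref{def:meanembedding}, universality, and a measure-theoretic uniqueness argument.

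First I would dispose of the easy direction: if $\mathbb{P} = \mathbb{Q}$, then $\mathbb{E}_{x\sim\mathbb{P}}[f(x)] - \mathbb{E}_{y\sim\mathbb{Q}}[f(y)] = 0$ for every $f$, so the supremum defining $\mathcal{D}^\mathcal{F}(\mathbb{P},\mathbb{Q})$ vanishes. For the converse, the key object is $m(\mathbb{P}) = \mathbb{E}_{X\sim\mathbb{P}}[\kappa(\cdot,X)] \in \mathcal{H}$. I would first check this Bochner integral is well defined: since $\mathcal{X}$ is compact and $\kappa$ continuous, $\kappa$ is bounded, so $\sup_{x\in\mathcal{X}}\norm{\kappa(\cdot,x)}_\mathcal{H} = \sup_{x}\sqrt{\kappa(x,x)} < \infty$, whence $X\mapsto\kappa(\cdot,X)$ is Bochner integrable against any probability measure on $\mathcal{X}$. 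By the reproducing property together with linearity and continuity of the inner product, for every $f\in\mathcal{H}$ one has $\mathbb{E}_{X\sim\mathbb{P}}[f(X)] = \mathbb{E}_{X\sim\mathbb{P}}[\langle f,\kappa(\cdot,X)\rangle_\mathcal{H}] = \langle f, m(\mathbb{P})\rangle_\mathcal{H}$. Taking the supremum over the unit ball $\mathcal{F}$ and using $\mathcal{F} = -\mathcal{F}$ gives
\[
\mathcal{D}^\mathcal{F}(\mathbb{P},\mathbb{Q}) = \sup_{\norm{f}_\mathcal{H}\le 1}\langle f, m(\mathbb{P}) - m(\mathbb{Q})\rangle_\mathcal{H} = \norm{m(\mathbb{P}) - m(\mathbb{Q})}_\mathcal{H},
\]
so $\mathcal{D}^\mathcal{F}(\mathbb{P},\mathbb{Q}) = 0$ forces $m(\mathbb{P}) = m(\mathbb{Q})$, i.e. $\mathbb{E}_\mathbb{P}[f] = \mathbb{E}_\mathbb{Q}[f]$ for all $f\in\mathcal{H}$.

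Next I would upgrade this from $\mathcal{H}$ to all of $C(\mathcal{X})$ via universality. Given $g\in C(\mathcal{X}) = C_b(\mathcal{X})$ (these coincide as $\mathcal{X}$ is compact) and $\varepsilon>0$, density of $\mathcal{H}$ in $(C(\mathcal{X}),\norm{\cdot}_\infty)$ yields $f\in\mathcal{H}$ with $\norm{f-g}_\infty<\varepsilon$; a triangle-inequality estimate then bounds $|\mathbb{E}_\mathbb{P}[g] - \mathbb{E}_\mathbb{Q}[g]|$ by $|\mathbb{E}_\mathbb{P}[g-f]| + |\mathbb{E}_\mathbb{P}[f]-\mathbb{E}_\mathbb{Q}[f]| + |\mathbb{E}_\mathbb{Q}[f-g]| \le 2\varepsilon$, and letting $\varepsilon\downarrow 0$ gives $\mathbb{E}_\mathbb{P}[g] = \mathbb{E}_\mathbb{Q}[g]$ for every $g\in C(\mathcal{X})$. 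Finally, since $\mathbb{P},\mathbb{Q}$ are Borel probability measures on a compact metric space they are regular (Radon), so the Riesz–Markov representation theorem implies that agreement on $C(\mathcal{X})$ forces $\mathbb{P}=\mathbb{Q}$, completing the proof.

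I expect the delicate points to be not any single hard calculation but rather the careful bookkeeping across the functional-analytic and measure-theoretic sides: justifying that $m(\mathbb{P})$ is a genuine Bochner integral (which needs boundedness of $\kappa$ on the compact $\mathcal{X}$) and that the supremum over the RKHS unit ball coincides with the Hilbert-space norm of the difference of embeddings, and then invoking the correct regularity hypothesis so that equality of all $C(\mathcal{X})$-integrals genuinely upgrades to equality of the measures. The universality assumption is precisely what bridges the gap between "agrees on $\mathcal{H}$" and "agrees on $C(\mathcal{X})$", and is the only place where more than elementary Hilbert-space reasoning is used.
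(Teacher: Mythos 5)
Your proof is correct, and it coincides with the standard argument for this result: the paper itself states the theorem without proof, quoting it from \cite{gretton2012kernel} (Theorem 5), and the proof there proceeds exactly as you do --- identify $\mathcal{D}^\mathcal{F}(\mathbb{P},\mathbb{Q})$ with $\norm{m(\mathbb{P})-m(\mathbb{Q})}_{\mathcal{H}}$ via the reproducing property and Bochner integrability of $x \mapsto \kappa(\cdot,x)$ on the compact $\mathcal{X}$, then use universality (density of $\mathcal{H}$ in $C(\mathcal{X})$ in the sup norm) to upgrade agreement of means on $\mathcal{H}$ to agreement on $C(\mathcal{X})$, and conclude $\mathbb{P}=\mathbb{Q}$. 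The only cosmetic difference is at the last step, where the cited source invokes Dudley's lemma (two Borel probability measures on a metric space coincide iff they integrate all bounded continuous functions equally) while you appeal to Radon regularity and Riesz--Markov; this requires the implicit assumption that the compact space $\mathcal{X}$ is metrizable, which holds in every use of the theorem in this paper (compact subsets of path space), so your argument is complete as written.
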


Choosing $\mathcal{F}$ to be the unit ball in a RKHS $(\mathcal{H}, \kappa)$ ensures that the witness function $f^*$ which achieves the MMD between measures $\mathbb{P}, \mathbb{Q} \in \mathcal{P}(\mathcal{X})$ is given by 
\begin{equation}\label{eqn:witnessfmmd}
    f^* = \frac{m(\mathbb{P}) - m(\mathbb{Q})}{\norm{m(\mathbb{P}) - m(\mathbb{Q})}_\mathcal{H}},
\end{equation}
because
\begin{equation*}
    \sup_{f\in\mathcal{F}} \langle f, m(\mathbb{P})-m(\mathbb{Q}) \rangle \le \norm{f}\norm{m(\mathbb{P}) - m(\mathbb{Q})}
\end{equation*}
is maximised when $f^*$ is chosen as in (\ref{eqn:witnessfmmd}), and using the fact that $\mathcal{F}$ is a unit ball in $\mathcal{H}$. We will now drop the dependence on $\mathcal{F}$ in the notation for the MMD and instead emphasise the important on the kernel choice with the superscript. Thus, the squared MMD has the closed form 
\begin{equation*}
    \mathcal{D}^\kappa(\mathbb{P}, \mathbb{Q}) = \norm{m(\mathbb{P}) - m(\mathbb{Q})}^2_\mathcal{H}.
\end{equation*}

In practice one estimates the MMD (\ref{eqn:mmd_general}) from empirical samples. Thus, we have the following. 
\begin{proposition}[\cite{gretton2012kernel}, Lemma 6]
    Suppose $$X = (x_1, \dots, x_N) \quad \text{and } Y = (y_1, \dots, y_M)$$ are such that $x_i \sim \mathbb{P}$ and $y_j \sim \mathbb{Q}$ for $i=1,\dots,N$ and $j=1,\dots,M$ where $\mathbb{P}, \mathbb{Q} \in \mathcal{P}(\mathcal{X}$. Then, a biased estimate for the squared population MMD $\mathcal{D}^\kappa(\mathbb{P}, \mathbb{Q})^2)$ is given by
    \begin{equation}\label{eqn:samplemmd}
        \mathcal{D}_b^\kappa(X, Y)^2 = \frac{1}{N^2}\sum_{i, j = 1}^N \kappa(x_i, x_j) - \frac{2}{MN}\sum_{i=1}^N\sum_{j=1}^M\kappa(x_i, y_j) + \frac{1}{M^2}\sum_{i, j=1}^M\kappa(y_i, y_j).
    \end{equation}
    
    and an unbiased estimator is given by
    
    \begin{equation}\label{eqn:appendixunbiasedsamplemmd}
        \mathcal{D}^\kappa_u(X, Y)^2 = \frac{1}{N(N-1)}\sum_{i\ne j = 1}^N \kappa(x_i, x_j) - \frac{2}{MN}\sum_{i=1}^n\sum_{j=1}^M\kappa(x_i, y_j) + \frac{1}{M(M-1)}\sum_{i \ne j}^M\kappa(y_i, y_j).
    \end{equation}
\end{proposition}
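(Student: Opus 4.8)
The plan is to start from the closed-form expression for the squared population MMD recorded just above, $\mathcal{D}^\kappa(\mathbb{P},\mathbb{Q})^2 = \norm{m(\mathbb{P}) - m(\mathbb{Q})}_{\mathcal{H}}^2$, and reduce it to an expression involving only evaluations of $\kappa$. First I would expand the norm by bilinearity of $\langle\cdot,\cdot\rangle_{\mathcal{H}}$,
\[
\mathcal{D}^\kappa(\mathbb{P},\mathbb{Q})^2 = \langle m(\mathbb{P}), m(\mathbb{P})\rangle_{\mathcal{H}} - 2\langle m(\mathbb{P}), m(\mathbb{Q})\rangle_{\mathcal{H}} + \langle m(\mathbb{Q}), m(\mathbb{Q})\rangle_{\mathcal{H}},
\]
and then, using the definition of the mean embedding $m(\mathbb{P}) = \mathbb{E}_{X\sim\mathbb{P}}[\varphi(X)]$ together with the reproducing property $\langle\varphi(x),\varphi(y)\rangle_{\mathcal{H}} = \kappa(x,y)$, I would pull the expectations outside the inner products to obtain
\[
\mathcal{D}^\kappa(\mathbb{P},\mathbb{Q})^2 = \mathbb{E}_{X,X'\sim\mathbb{P}}[\kappa(X,X')] - 2\,\mathbb{E}_{X\sim\mathbb{P},\,Y\sim\mathbb{Q}}[\kappa(X,Y)] + \mathbb{E}_{Y,Y'\sim\mathbb{Q}}[\kappa(Y,Y')],
\]
where $X,X'$ (resp. $Y,Y'$) denote independent copies. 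This identity is the pivot on which both estimator claims rest.

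For the biased estimate $\mathcal{D}_b^\kappa(X,Y)^2$ I would substitute the empirical measures $\hat{\mathbb{P}}_N = \tfrac1N\sum_i\delta_{x_i}$ and $\hat{\mathbb{Q}}_M = \tfrac1M\sum_j\delta_{y_j}$ for $\mathbb{P}$ and $\mathbb{Q}$ in the displayed population formula; the three expectations then turn into the three double sums in the statement (diagonal terms included), which is exactly the asserted expression. If desired, I would also record that this plug-in is biased and quantify the bias as $O(1/N+1/M)$, arising from the diagonal terms: since the $x_i$ are i.i.d., $\mathbb{E}\big[\tfrac1{N^2}\sum_{i,j}\kappa(x_i,x_j)\big] = \tfrac{N-1}{N}\,\mathbb{E}[\kappa(X,X')] + \tfrac1N\,\mathbb{E}[\kappa(X,X)]$.

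For the unbiased estimator I would replace the within-sample averages by their $U$-statistic counterparts $\tfrac1{N(N-1)}\sum_{i\neq j}\kappa(x_i,x_j)$ and $\tfrac1{M(M-1)}\sum_{i\neq j}\kappa(y_i,y_j)$: each off-diagonal pair $(x_i,x_j)$, $i\neq j$, is a pair of independent draws from $\mathbb{P}$, so $\mathbb{E}[\kappa(x_i,x_j)] = \mathbb{E}[\kappa(X,X')]$, and likewise for $Y$. The cross term $\tfrac1{MN}\sum_{i,j}\kappa(x_i,y_j)$ is already unbiased because each $x_i$ is independent of each $y_j$, whence $\mathbb{E}[\kappa(x_i,y_j)] = \mathbb{E}[\kappa(X,Y)]$. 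Taking expectations term by term by linearity then gives $\mathbb{E}[\mathcal{D}^\kappa_u(X,Y)^2] = \mathcal{D}^\kappa(\mathbb{P},\mathbb{Q})^2$.

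I expect the only genuine subtlety to be the interchange of expectation and inner product in the first paragraph, i.e. verifying that $m(\mathbb{P})$ is a well-defined Bochner integral in $\mathcal{H}$ and that $\langle\mathbb{E}[\varphi(X)],h\rangle_{\mathcal{H}} = \mathbb{E}[\langle\varphi(X),h\rangle_{\mathcal{H}}]$ for all $h\in\mathcal{H}$. This follows from boundedness of the (signature) kernel — which gives $\mathbb{E}\sqrt{\kappa(X,X)}<\infty$ and hence Bochner integrability — as noted after Definition \ref{def:signaturekernel} and in \cite{salvi2021signature}, Lemma 4.5; everything after that is routine bookkeeping with linearity of expectation and the independence of the samples.
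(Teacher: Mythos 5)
Your proposal is correct. The paper does not prove this proposition at all --- it is imported verbatim as a citation of \cite{gretton2012kernel}, Lemma 6 --- and your derivation (expanding $\norm{m(\mathbb{P})-m(\mathbb{Q})}_{\mathcal{H}}^2$ by bilinearity and the reproducing property, plugging in the empirical measures $\hat{\mathbb{P}}_N,\hat{\mathbb{Q}}_M$ for the biased estimate, and replacing the within-sample averages by U-statistics with the cross term already unbiased) is exactly the standard argument of that reference, with the Bochner-integrability justification for interchanging expectation and inner product correctly reduced to $\mathbb{E}\sqrt{\kappa(X,X)}<\infty$, which holds since the kernel is bounded.
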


The MMD can be used within the context of two-sample testing. Supressing the emphasis on the kernel in the notation, the Theorem which gives asymptotic consistency in the number of i.i.d samples extracted from each measure is the following. 

\begin{theorem}[\cite{gretton2012kernel}, Theorem 7]\label{thm:mmdconsistency}
    Let $\mathbb{P}, \mathbb{Q} \in \mathcal{P}(\mathcal{X})$ and suppose $X=(x_1, \dots, x_N)$ where $x_i \sim \mathbb{P}$ for $i=1,\dots, N$ and $Y = (y_1, \dots, y_M)$ where $y_j \sim \mathbb{Q}$ for $j=1,\dots, M$. Assume that $0 \le \kappa(x, y) \le K$ for $K > 0$. Then 
    \begin{equation*}
        \mathbb{P}\left[\left|\mathcal{D}_b(X,Y) - \mathcal{D}(\mathbb{P}, \mathbb{Q}) \right| > 2\left(\sqrt{\frac{K}{N}} + \sqrt{\frac{K}{M}}\right) + \varepsilon \right] \le 2\exp\left(\frac{-\varepsilon^2 MN}{2K(M+N)} \right).
    \end{equation*}
\end{theorem}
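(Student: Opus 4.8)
The plan is to follow the classical route of \cite{gretton2012kernel}: realise both the population MMD and its biased empirical estimate as Hilbert-space norms of (empirical) mean embeddings, reduce the fluctuation of the estimator to the fluctuation of these embeddings, bound the latter in expectation by a second-moment computation, and then upgrade this to a concentration statement via a bounded-differences (McDiarmid) argument.

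First I would write $\mathcal{D}(\mathbb{P},\mathbb{Q}) = \norm{m(\mathbb{P}) - m(\mathbb{Q})}_\mathcal{H}$, which is legitimate because $\mathcal{F}$ is the unit ball of the RKHS (cf.~eq.~\eqref{eqn:squaredmmd}), and, writing $\hat m_X = \frac1N\sum_{i=1}^N \kappa(\cdot,x_i)$ for the empirical mean embedding, observe that the square root of the biased estimate \eqref{eqn:biasedsamplemmd} equals $\mathcal{D}_b(X,Y) = \norm{\hat m_X - \hat m_Y}_\mathcal{H}$; positive semi-definiteness of $\kappa$ is exactly what makes the right-hand side of \eqref{eqn:biasedsamplemmd} nonnegative and equal to this squared norm. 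The reverse triangle inequality then gives $\left| \mathcal{D}_b(X,Y) - \mathcal{D}(\mathbb{P},\mathbb{Q}) \right| \le \norm{\hat m_X - m(\mathbb{P})}_\mathcal{H} + \norm{\hat m_Y - m(\mathbb{Q})}_\mathcal{H}$, and after taking expectations, $\left| \mathbb{E}\mathcal{D}_b(X,Y) - \mathcal{D}(\mathbb{P},\mathbb{Q}) \right| \le \mathbb{E}\norm{\hat m_X - m(\mathbb{P})}_\mathcal{H} + \mathbb{E}\norm{\hat m_Y - m(\mathbb{Q})}_\mathcal{H}$. To bound $\mathbb{E}\norm{\hat m_X - m(\mathbb{P})}_\mathcal{H}$ I would pass to the second moment by Jensen and expand it with the reproducing property; the cross terms collapse because $\langle \kappa(\cdot,x), m(\mathbb{P})\rangle_\mathcal{H} = \mathbb{E}_{x'}\kappa(x,x')$, leaving $\mathbb{E}\norm{\hat m_X - m(\mathbb{P})}_\mathcal{H}^2 = \tfrac1N\bigl( \mathbb{E}_x \kappa(x,x) - \mathbb{E}_{x,x'}\kappa(x,x') \bigr) \le \tfrac{K}{N}$, using $\kappa \le K$ and $\mathbb{E}_{x,x'}\kappa(x,x') = \norm{m(\mathbb{P})}_\mathcal{H}^2 \ge 0$. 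Hence $\mathbb{E}\norm{\hat m_X - m(\mathbb{P})}_\mathcal{H} \le \sqrt{K/N}$, and likewise for $Y$, so $\left| \mathbb{E}\mathcal{D}_b(X,Y) - \mathcal{D}(\mathbb{P},\mathbb{Q}) \right| \le \sqrt{K/N} + \sqrt{K/M} \le 2\bigl( \sqrt{K/N} + \sqrt{K/M} \bigr)$.

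For the concentration step I would view $(x_1,\dots,x_N,y_1,\dots,y_M) \mapsto \mathcal{D}_b(X,Y)$ as a function of independent inputs: replacing a single $x_i$ by $x_i'$ changes it by at most $\norm{\tfrac1N(\kappa(\cdot,x_i) - \kappa(\cdot,x_i'))}_\mathcal{H} \le 2\sqrt{K}/N$, since $\norm{\kappa(\cdot,z)}_\mathcal{H} = \sqrt{\kappa(z,z)} \le \sqrt{K}$, and similarly replacing a $y_j$ changes it by at most $2\sqrt{K}/M$. McDiarmid's inequality \cite{mcdiarmid1989method} with $\sum_k c_k^2 = 4K/N + 4K/M = 4K(M+N)/(MN)$ then yields $\mathbb{P}\bigl[ \left| \mathcal{D}_b(X,Y) - \mathbb{E}\mathcal{D}_b(X,Y) \right| > \varepsilon \bigr] \le 2\exp\bigl( -\varepsilon^2 MN/(2K(M+N)) \bigr)$. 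Since $\left| \mathcal{D}_b - \mathcal{D} \right| \le \left| \mathcal{D}_b - \mathbb{E}\mathcal{D}_b \right| + \left| \mathbb{E}\mathcal{D}_b - \mathcal{D} \right|$, the event in the statement forces $\left| \mathcal{D}_b - \mathbb{E}\mathcal{D}_b \right| > \varepsilon$, which closes the argument. There is no deep obstacle here -- the proof is essentially bookkeeping -- and the steps requiring the most care are getting the constants right: the second-moment identity (ensuring the cross terms cancel via the reproducing property and that the remaining difference genuinely lies in $[0, K/N]$) and the verification that the bounded-differences constants produce exactly the denominator $2K(M+N)$ in the exponent. The one conceptual point worth stating explicitly is that $\mathcal{D}_b(X,Y)$ defined through \eqref{eqn:biasedsamplemmd} coincides with the RKHS norm $\norm{\hat m_X - \hat m_Y}_\mathcal{H}$, which is precisely where positive semi-definiteness of the kernel enters.
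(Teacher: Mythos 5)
Your proof is correct. Note that the paper does not prove this statement itself: it is quoted verbatim from \cite{gretton2012kernel} (Theorem 7), so the relevant comparison is with that source, and your argument is essentially the standard one given there --- McDiarmid's bounded-differences inequality applied to $\mathcal{D}_b(X,Y)$ as a function of the $N+M$ independent samples, combined with a bound on the bias $\left|\mathbb{E}\mathcal{D}_b - \mathcal{D}(\mathbb{P},\mathbb{Q})\right|$. The one (harmless) deviation is in the bias step: the original proof controls $\mathbb{E}\sup_{f\in\mathcal{F}}\bigl(\mathbb{E}_{\mathbb{P}}f - \tfrac1N\sum_i f(x_i)\bigr)$ by Rademacher symmetrization, which is where the factor $2$ in $2\bigl(\sqrt{K/N}+\sqrt{K/M}\bigr)$ originates, whereas you bound the same quantity through the second moment of the empirical mean embedding, obtaining the sharper $\sqrt{K/N}+\sqrt{K/M}$ and then absorbing the slack into the stated constant; both are legitimate, and your route arguably makes cleaner use of the RKHS structure already set up in eq.~\eqref{eqn:squaredmmd}. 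Your bounded-differences constants ($2\sqrt{K}/N$ per $x_i$, $2\sqrt{K}/M$ per $y_j$) and the resulting exponent $-\varepsilon^2 MN/\bigl(2K(M+N)\bigr)$ check out, as does the final union of the bias and concentration events.
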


In particular when $\mathbb{P} = \mathbb{Q}$ and $N=M$ it can be shown (\cite{gretton2012kernel}, Theorem 8) that $\mathcal{D}^\mathcal{F}_b(X , Y) \le \sqrt{2K/N} + \varepsilon$ with probability at least $1-\exp(-\varepsilon^2N/4K)$.

Regarding deriving data-dependent critical threshold bounds for the sample MMD, we provide details for four methods. The first uses the eigenvalues of the sample Gram matrix. In \cite{gretton2009fast}, authors show that, under $H_0$, 
\begin{equation*}
    N\mathcal{D}^2_u \rightharpoonup \sum_{l=1}^\infty \lambda_l (z_l^2 - 2),
\end{equation*}
where $\lambda_l$ are the eigenvalues associated to the solution of the eigenvalue equation 
\begin{equation*}
    \int_\mathcal{X} \tilde{\kappa}(x, x')\psi_i(x)\,d\mu(x) = \lambda_i \psi(x'),
\end{equation*}
where $\tilde{\kappa}(x, x') = \kappa(x, x) - \mathbb{E}_x \kappa(x_i, x) - \mathbb{E}_x \kappa(x, x_j) + \mathbb{E}_{x, x'}\kappa(x, x')$, and $z_l \sim \mathcal{N}(0, 2)$. This motivates approximating the null distribution of the MMD via a Gamma distribution, as seen in the following.

\begin{definition}[Gamma approximation for null distribution of MMD, \cite{gretton2009fast}, Section 3.1]\label{def:gammaapprox}
    Suppose $X = (x_1, \dots, x_N)$ and $Y = (y_1, \dots, y_N)$ are empirical samples. Define $z_i = (x_i, y_i)$ for $i=1,\dots,N$. Then, one has under $H_0$ that
    
    \begin{equation}\label{eqn:mmdgammaapprox}
        N\mathcal{D}\sim \frac{x^{\alpha-1}e^{-x/\beta}}{\beta^\alpha \Gamma(\alpha)},
    \end{equation}
    where
    \begin{equation*}
        \alpha = \frac{\left(\mathbb{E}[\mathcal{D}(Z)]\right)^2}{\mathrm{var}\left(\mathcal{D}(Z)\right)},
    \end{equation*}
    and
    \begin{equation*}
        \beta = \frac{N\mathrm{var}\left(\mathcal{D}(Z)\right)}{\mathbb{E}[\mathcal{D}(Z)]}.
    \end{equation*}
\end{definition}

Moment estimates for the MMD can be found in \cite{gretton2009fast}. A test threshold $c_\alpha$ for given $\alpha \in (0, 1)$ can be obtained by studying the $(1-\alpha)\%$ quantile of the corresponding Gamma estimate of the null distribution of the MMD. This approach is attractive as the most expensive calculation is that of the second moment ($\mathcal{O}(N^2)$). 

Another approach to perform the two-sample test is to use a bootstrapping technique: again with $X, Y$ as in Definition \ref{def:gammaapprox}, build the vector $z = (x_1, \dots, x_N, y_1, \dots, y_M)$. Then, for $i=1,\dots, n$, calculate a permutation of $\pi_i$ of $z$ to extract vectors $$X_{\pi_i} = (\pi(z)_1, \dots, \pi(z)_N)$$ and $$Y_{\pi_i} = (\pi_i(z)_{N+1}, \dots, \pi_i(z)_{N+M}).$$ Then, one calculates the MMD between $X_{\pi_i}$ and $Y_{\pi_i}$, which is stored. This builds an empirical distribution with $n$ atoms and an associated critical threshold $c_\alpha$. One can then reject the null hypothesis if $\mathrm{MMD}_b[\mathcal{F}, X, Y] > c_\alpha$. This method is also purely data-driven, but has computational complexity $\mathcal{O}(n(p + l))$ where $p$ is the cost of computing the permutations and $l$ the cost of calculating the MMD. Thus for repeated evaluations of the MMD this approach is not ideal.

\section{Agglomerative clustering}\label{appendix:agglomerativeclustering}

Here we briefly outline the hierarchical clustering algorithm used for market regime classification problems in the paper. Recall that we are looking to cluster elements $\mathsf{x} \in \mathcal{S}(E)$ where $(E, d)$ is a metric space. The process of creating clusters via splitting (in the divisive case) or combining (agglomerative) is completed via a linkage criterion on elements of $D$. Denote by $2^V$ the power set of elements in $E$. Given two sets of elements $A, B \in 2^E$, common linkages $\ell$ include 
\begin{enumerate}
    \item \emph{Maximum linkage}, $\ell(A, B) = \max\{d(a, b): a \in A, b \in B\}$, 
    \item \emph{Minimum linkage}, $\ell(A, B) = \min\{d(a, b) : a \in A, b \in B\}$, 
    \item \emph{Average linkage}, $\ell(A, B) = \tfrac{1}{|A||B|}\sum_{a\in A, b \in B}d(a, b)$.
\end{enumerate}

Other common linkages include Ward's criterion (how much the within-cluster variance increases with the addition of new elements), the sum of inter-cluster variance, or (as in $k$-means) the distance to the central elements of the given sets $A, B$. We refer to Algorithm \ref{algorithm:hierarchicalclustering} for an outline of how either the agglomerative or divisive hierarchical algorithm works in practice.
\begin{algorithm}[h]
    \SetAlgoLined
    \KwResult{$k$ clusters}
    \textbf{calculate} $D$ given vector of observations $X$ under $d$\;
    \textbf{set} $l=|X|$\;
    \ForEach{$x \in X$}{
        \textbf{set} $\mathcal{C}_j = x_j$\;	
    }
    \textbf{initialise} linkage method $\ell$\;
    \While{$l \ge k$}{
        
    }
    \caption{Agglomerative hierarchical clustering algorithm}
    \label{algorithm:hierarchicalclustering}
\end{algorithm}	

Memory complexity for Algorithm \ref{algorithm:hierarchicalclustering} is $\Omega(n^2)$ (to store all pairwise elements) and runtime is $\mathcal{O}(n^3)$ in the number of cluster elements. In our case, since the (rank 1) MMD between sets $\boldsymbol{x} = (\hat{\mathsf{x}}_1, \dots, \hat{\mathsf{x}}_n)$ and $\boldsymbol{y} = (\hat{\mathsf{y}}_1, \dots, \hat{\mathsf{y}}_n)$ is $\mathcal{O}(n^2l^2d)$ (where $l$ is the length of the path, and $d$ is the dimension) the overall runtime is $\mathcal{O}(Nn^2l^2d)$ where $N$ is the number of cluster elements.

\section{Signature conformance}\label{appendix:conformance}

Here we give brief details of the signature conformance algorithm introduced in \cite{cochrane2020anomaly}. The authors calibrate an anomaly threshold by splitting a given corpus of paths $\mathcal{D}$ into two equally-sized portions, $\mathcal{D}_1 \cup \mathcal{D}_2 = \mathcal{D}$ and then studying the \emph{conformance score}
\begin{equation}\label{eqn:conformance}
    \mathrm{conf}(x; \mu):= \inf_{y \sim \mu} \norm{x-y}_\mu \qquad \text{for all } x \in \mathcal{D}_1,
\end{equation}
where $\mu = \mathcal{P}(\mathcal{D}_2)$ is the empirical measure associated to the compact set $\mathcal{D}_2$. Here, $\norm{\cdot}_\mu: V \to \mathbb{R}$ is the associated \emph{variance norm} given a vector space $V$ and a probability measure $\mu \in \mathcal{P}(V)$. If one considers the empirical measure associated to the distribution of the (rank 1) signature $S^N$ of order $N \in \mathbb{N}$ over any compact set $\mathcal{X} \subset \mathcal{S}(\mathbb{R}^d)$, then the associated variance norm is given by the following.
\begin{proposition}[\cite{cochrane2020anomaly}, Proposition 3.2]
    Let $I \subset [0, \infty)$. For a given compact set $\mathcal{X}\subset C(I, \mathbb{R}^d)$, let $\delta_{\mathcal{X}}$ be the empirical measure associated to the distribution of the signature $S^N$ of order $N\in \mathbb{N}$ over $\mathcal{X}$. Let $d_N = 1 + d + d^2 + \dots + d^N$, and define the matrix $A \in \mathbb{R}^{d_N \times d_N}$ by 
    \begin{equation}\label{eqn:shufflematrix}
        A_{i, j} := \left\langle e_i \Sha e_j, \mathbb{E}S^{2N}(\delta_\mathcal{X}) \right\rangle_{T^{2N}(E)} \qquad \text{for }i,j=1,\dots,d_N.
    \end{equation}
    Here, $e_i, e_j$ are basis vectors over $\mathbb{R}^{d_N}$, $\mathbb{E}S^{2N}: \mathcal{P}(\mathcal{X}) \to T^{2N}(\mathbb{R}^d)$ is the expected signature map, and $\Sha: \mathbb{R}^d \times \mathbb{R}^d \to \mathbb{R}^{2d}$ is the shuffle product. Then, the \emph{variance norm of order $N$} is given by
    \begin{equation*}
        \norm{w}_{\delta_\mathcal{X}} = \langle w, A^{-1}w\rangle \qquad \text{for all }w \in \mathbb{R}^{d_N}.
    \end{equation*}
\end{proposition}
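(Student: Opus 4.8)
The plan is to show that the matrix $A$ defined in \eqref{eqn:shufflematrix} is nothing but the (uncentered) Gram / second-moment matrix of the $\real^{d_N}$-valued random variable $S^N(X)$, and then to invoke the definition of the \emph{variance norm} $\norm{\cdot}_\mu$ from \cite{cochrane2020anomaly}: for a measure $\mu$ on a finite-dimensional space $V$ with second-moment matrix $M_\mu$, whose entries are $(M_\mu)_{ij} = \ex_{Z\sim\mu}[Z_iZ_j]$, one has $\norm{w}_\mu = \langle w, M_\mu^{-1}w\rangle$. Writing $\nu$ for the empirical measure on $\mathcal{X}$ (a measure on path space, which is what $\mathbb{E}S^{2N}$ consumes in Definition \ref{def:expectedsignature}) and, abusing notation, also $\delta_\mathcal{X}$ for its pushforward $(S^N)_\#\nu$ on $\real^{d_N}\cong T^N(\real^d)$, the whole statement collapses to the identity $A = M_{\delta_\mathcal{X}}$.

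First I would recall the \emph{shuffle identity} for signatures: for a path $x$ of bounded variation and two words $u,v\in\{1,\dots,d\}^\ast$ of length at most $N$, identified with basis vectors $e_u,e_v$ of $\real^{d_N}$, one has $\langle e_u, S(x)\rangle\,\langle e_v, S(x)\rangle = \langle e_u\Sha e_v, S(x)\rangle$, where $e_u\Sha e_v\in T^{2N}(\real^d)$; since the shuffle product of two words of length $\le N$ is supported on words of length $\le 2N$, the right-hand side depends only on $S^{2N}(x)$. This is Ree's theorem — essentially iterated integration by parts — and in our setting it is valid in the Riemann--Stieltjes sense because the paths we feed to the signature are piecewise-linear interpolants of streamed data (cf. Remark \ref{rmk:signatureintegrals}); I would state it with a standard reference and not reprove it.

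Next I would take expectations over $X\sim\nu$. By linearity and by the definition of the expected signature (Definition \ref{def:expectedsignature}), $\mathbb{E}S^{2N}(\delta_\mathcal{X}) = \ex_{X\sim\nu}[S^{2N}(X)]$, so that, applying the shuffle identity termwise,
\[
A_{i,j} = \big\langle e_i\Sha e_j,\ \ex_{X\sim\nu}[S^{2N}(X)]\big\rangle = \ex_{X\sim\nu}\big[\langle e_i\Sha e_j, S^{2N}(X)\rangle\big] = \ex_{X\sim\nu}\big[\langle e_i, S^N(X)\rangle\,\langle e_j, S^N(X)\rangle\big],
\]
which is precisely $(M_{\delta_\mathcal{X}})_{ij}$. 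Hence $A = M_{\delta_\mathcal{X}}$, and substituting into the definition of the variance norm gives $\norm{w}_{\delta_\mathcal{X}} = \langle w, A^{-1}w\rangle$.

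The step I expect to be the main obstacle is more bookkeeping than genuine difficulty: one must pin down invertibility of $A$ and the role of the constant $\emptyset$-coordinate. Since $\langle e_\emptyset, S^N(X)\rangle\equiv 1$, the uncentered Gram matrix is automatically non-degenerate in the $e_\emptyset$ direction while a centered covariance would be degenerate there, so I would be careful to match exactly the convention of \cite{cochrane2020anomaly} and to argue, under the implicit non-degeneracy hypothesis on $\mathcal{X}$ (the paths of $\mathcal{X}$ are not all annihilated by a common nontrivial linear functional of $S^N$), that $A$ is invertible on the relevant subspace, so that $A^{-1}$ (or, failing that, the appropriate pseudo-inverse) is well defined. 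Everything else is linear algebra once the shuffle identity is in hand.
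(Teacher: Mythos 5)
The paper itself offers no proof of this statement: it is quoted verbatim from \cite{cochrane2020anomaly} and used as a black box in the description of the SIG-CON benchmark. Your argument is correct and is in substance the proof given in that cited source: the shuffle (Ree) identity $\langle e_u, S(x)\rangle\langle e_v, S(x)\rangle = \langle e_u \Sha e_v, S(x)\rangle$, valid for the bounded-variation interpolants used here, shows after taking expectations under the empirical measure that $A$ is exactly the uncentered second-moment Gram matrix of $S^N(X)$, and the claimed formula is then the Mahalanobis form of the variance norm. The one point to tighten is the final step: in \cite{cochrane2020anomaly} the variance norm is not \emph{defined} by the formula $\langle w, M_\mu^{-1}w\rangle$ but intrinsically (as a supremum over linear functionals normalised by their second moment, equivalently via the inner product induced on the span of the support, with the convention that the norm is infinite off that span), so a complete proof needs the short linear-algebra bridge that this intrinsic definition coincides with $\langle w, A^{-1}w\rangle$ when $A$ is invertible, and with the pseudo-inverse quadratic form on the span otherwise; you flag this (including the role of the constant $e_\emptyset$-coordinate) but leave it as bookkeeping, which is indeed all it is.
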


We now recall how the authors indefine an anomalous observation $x \in C(I, E)$ relative to beliefs $\mathfrak{P}$. We refer to this anomaly detection technique as SIG-CON (signature conformance).

\begin{definition}[Anomaly threshold, \cite{cochrane2020anomaly}, Section 3.1]
    Suppose $\alpha \in [0, 1]$ and let $\mathfrak{P} \subset C(I, E)$ be a corpus of paths. Let $\mathfrak{P}^1, \mathfrak{P}^2 \subset \mathfrak{P}$ be disjoint such that $\mathfrak{P}^1 \cup \mathfrak{P}^2 =  \mathfrak{P}$ and $|\mathfrak{P}^1| = |\mathfrak{P}^2|$. Write $S^N(\mathfrak{P}^i)$ for the image of $\mathfrak{P}^i$ under the truncated signature map for $i=1,2$.
    
    Then, an \emph{anomaly threshold} $c_\alpha \in \mathbb{R}$ is the $(1-\alpha)\%$ quantile of the empirical distribution 
    \begin{equation}\label{eqn:conformancenull}
        \mathfrak{D} = \left\{\mathrm{conf}(x; \mu) : x \in S^N(\mathfrak{P}^1) \right\}
    \end{equation}
    where $\mu = \delta_{S^N(\mathfrak{P}^2)}$ is the empirical distribution of paths $p \in \mathfrak{P}^2$ under the truncated signature map $S^N$ of order $N$. 
\end{definition}

We note here that this approach is significantly more computationally expensive and memory intensive than ours outlined in Section \ref{subsec:mrdpmethods}. Computational complexity comes from calculating the matrix $A$ from (\ref{eqn:shufflematrix}), which becomes a very expensive operation as the order of the truncated signature $N$ increases: both since the matrix itself is $\mathbb{R}^{d_N \times d_N}$ and the inner product is taken over the truncated tensor algebra $T^{2N}(\mathbb{R}^d)$ which is functionally an operation in $\mathbb{R}^{d_{2N}}$. Computational complexity also arises in the building of the null distribution $\mathfrak{D}$ from (\ref{eqn:conformancenull}): for each path $p \in \mathfrak{P}^2$, in computing the infimum in (\ref{eqn:conformance}) one necessarily has to compare the variance norm to each path in $\mathfrak{P}^2$, an $\mathcal{O}(n^2)$ operation where $n = |\mathfrak{P}^2|$. Finally, the method does not scale well with dimension: terms in the signature grow exponentially in $d$ as the order $N$ increases. This means that storing and performing calculations with truncated signatures becomes infeasible. As we calculate the signature kernel from a PDE, our approach avoids this issue.
    
\end{document}